\renewcommand\L{{\mathcal{L}}}
\renewcommand\S{{\mathcal{S}}}
\renewcommand\S{{\mathcal{S}}}
\newcommand\vm{{\boldsymbol{m}}}
\newcommand\vp{{\boldsymbol{p}}}
\newcommand\vx{{\boldsymbol{x}}}
\newcommand\vy{{\boldsymbol{y}}}
\newcommand\vz{{\boldsymbol{z}}}
\newcommand{\inner}[1]{\left<#1\right>}
\newcommand{\norm}[1]{\left\|#1\right\|}
\newcommand{\round}[1]{\left(#1\right)}
\def\rvf{{\mathbf{f}}}
\def\rvg{{\mathbf{g}}}
\def\rvu{{\mathbf{i}}}
\def\rvu{{\mathbf{u}}}
\def\rvv{{\mathbf{v}}}
\def\rvw{{\mathbf{w}}}
\def\vy{{\mathbf{y}}}
\def\rmU{{\mathbf{U}}}
\renewcommand\L{{\mathcal{L}}}
\newcommand\etc{{\eta_{\mathrm{crit}}}}
\newcommand\etm{{\eta_{\mathrm{max}}}}
\newcommand\Lin{{{\mathrm{lin}}}}
\newcommand\Quad{{\mathrm{quad}}}
\newtheorem{lemma}{Lemma}%[section]
\newtheorem{thm}{Theorem}%[section]
\newtheorem{proposition}{Proposition}%[section]
\newtheorem{definition}{Definition}%[section]
\newtheorem{assumption}{Assumption}%[section]
\title{Quadratic models for understanding catapult dynamics of
neural networks}
\author[1,2]{\textbf{Libin Zhu}}
\author[2]{\textbf{Chaoyue Liu}}
\author[3]{\textbf{Adityanarayanan Radhakrishnan}}
\author[1,2]{\textbf{Mikhail Belkin}}
\affil[1]{Department of Computer Science, UC San Diego}
\affil[2]{Halicioğlu Data Science Institute, UC San Diego}
\affil[3]{Harvard \& Broad Institute of MIT and Harvard}
\affil[2]{\texttt{\{libinzhu,ch1212,mbelkin\}@ucsd.edu}}
\affil[3]{\texttt{aradha@mit.edu}}
\begin{document}

\maketitle

\begin{abstract}
While neural networks can be approximated by linear models as their width increases,  certain properties of wide neural networks cannot be captured by linear models.  In this work we show that recently proposed Neural Quadratic Models can exhibit the ``catapult phase'' \citep{lewkowycz2020large} that arises when training such models with large learning rates. We then empirically show that the behaviour of neural quadratic models parallels that of neural networks in generalization, especially in the catapult phase regime. Our analysis further demonstrates that quadratic models can be an effective tool for analysis of neural networks. 

% In this work, we utilize a quadratic model as a tool for understanding properties of wide neural networks in both optimization and generalization.  We show analytically that certain deep learning phenomena such as the ``catapult phase'' from \cite{lewkowycz2020large}, which cannot be captured by linear models, are manifested in the quadratic model for shallow ReLU networks. Furthermore, our empirical results indicate that the behaviour of quadratic models parallels that of neural networks in generalization, especially in the large learning rate regime. 
% We expect that quadratic models will serve as a useful tool for analysis of neural networks. 

\end{abstract}

\section{Introduction}

% {Can all properties of wide neural networks be understood in terms of kernel machines? There  is some discussion whether kernel machines can generalize equally well. An striking  non-linear property of optimization was discovered by  ...} very brief description of Catapult phase -- loss function non-monotonicity. Why are we seeing that even if they are approximated by  linear model? 

% 1. quadratic models capture the properties of not only the optimization but the generalization of neural networks

% 2. we observe that larger learning rates lead to better performance both for neural networks and quadratic models. 

% experiments (two layer fc/cnn for 5-6 different datasets)

% ==================================================

% \adit{Possible Rephrasing: Recent work remarkably identified that as width goes to infinity, a neural network behaves like a linear function of its parameters.  This phenomenon is referred to as a transition to linearity, and is manifested as ...} 

% One of the remarkable phenomena of neural networks is that when the width goes to infinity, a neural network behaves like a linear function of its parameters,
% originally observed as the constancy of the Neural Tangent Kernel in~\cite{jacot2018neural}
% and interpreted as the ``transition to linearity'' in~\cite{liu2020linearity}. 
A recent remarkable finding on neural networks, originating from~\cite{jacot2018neural} and termed as the ``transition to linearity''~\citep{liu2020linearity}, is that, as network width goes to infinity, such models become linear functions in the parameter space.
% Therefore, for finitely wide neural networks, the gradient descent in the parameter space is approximately equivalent to the kernel gradient descent in the function space corresponding to their (empirical) tangent kernels, which are NTKs in the infnite limit of the width. 
% As infinitely wide neural networks are linear functions of their parameters, finitely wide neural networks can be approximated by linear functions, 
Thus, a linear (in parameters) model can be built  to accurately approximate wide neural networks under certain conditions. While this finding has helped improve our understanding of trained neural networks~\citep{du2018gradientdeep,nichani2020increasing,zou2019improved,montanari2020interpolation,ji2019polylogarithmic,chizat2019lazy}, not all properties of finite width neural networks can be understood in terms of linear models, as is shown in several recent works~\citep{yang2020feature,ortiz2021can,long2021properties,fort2020deep}. In this work, we show that properties of finitely wide neural networks in optimization and generalization that cannot be captured by linear models are, in fact, manifested in quadratic models. 
% Unless stated otherwise, we always consider the setting where models are trained with square loss using gradient descent/SGD.

%(clarify ntk;linear function; kernel machine; wide neural network)

 %\misha{unclear -- other people also argued that kernel machines do not describe neural networks. Also how is this related to quadratic mdels?}

% \adit{Start with discussion of optimization for linear models (in general convex models monotonic convergence or divergence); then say neural networks, we observe a different optimization dynamic e.g. catapult phase (Fig. 1a) ; under what models and conditions does this catapult phase happen?  Claim: we propose the quadratic model and show that it exhibits catapult phase ; We also show that large lr leads to better generalization (Fig. 1b).}

% \adit{Next two paragraphs can be made a bit simpler - we first consider a wide neural network $f(w ; x)$, then say that for wide nets, training f(w ; x) is accurately approximated by solving kernel regression with corresponding NTK, $K$.  This approximation holds for learning rate less than $\eta_{crit}$.}

The training dynamics of linear models with respect to the choice of the learning rates\footnote{Unless stated otherwise, we always consider the setting where models are trained with squared loss using gradient descent.} are well-understood~\citep{polyakintroduction}. Indeed, such models exhibit \emph{linear} training dynamics, i.e., there exists a critical learning rate, $\etc$, such that the loss converges monotonically if and only if the learning rate is smaller than $\etc$ (see Figure~\ref{fig:op_regimes}a).
\begin{figure}[!htb]
    \centering
        \begin{subfigure}[t]{0.41\textwidth}
        \centering
        \includegraphics[width=\linewidth]{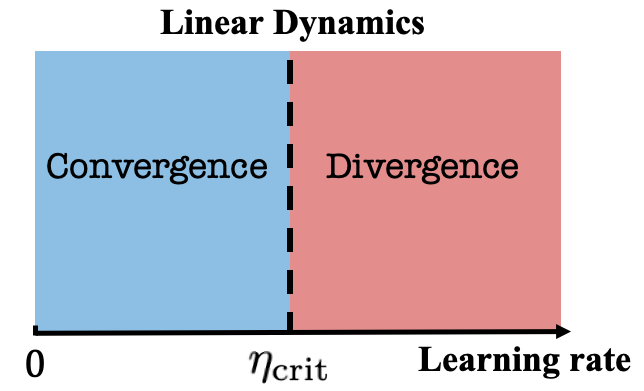} 
        \caption{} 
    \end{subfigure}
        \begin{subfigure}[t]{0.5\textwidth}
        \centering
        \includegraphics[width=\linewidth]{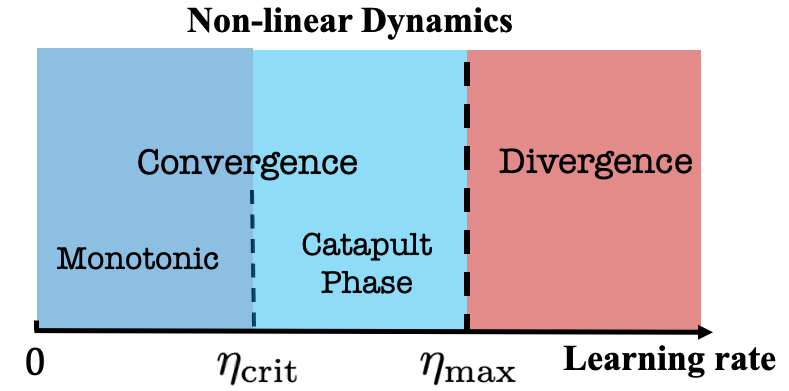} 
        \caption{} 
    \end{subfigure}    
        \caption{{\bf{Optimization dynamics for linear and non-linear models based on choice of learning rate.}} ({\bf{a}}) Linear models either converge monotonically if learning rate is less than $\etc$ and diverge otherwise. ({\bf{b}})  Unlike linear models, {\it finitely wide neural networks} and {\it NQMs Eq.~(\ref{eq:nn_quad}) (or general quadratic models Eq.~(\ref{eq:quadratic}))} can additionally observe a catapult phase when $\etc < \eta <\etm$.  }
        \label{fig:op_regimes}
    \end{figure}
\begin{figure}[!htb]
\vspace{-5pt}
    \centering
        \begin{subfigure}[t]{0.66\textwidth}
        \centering
        \includegraphics[width=\linewidth]{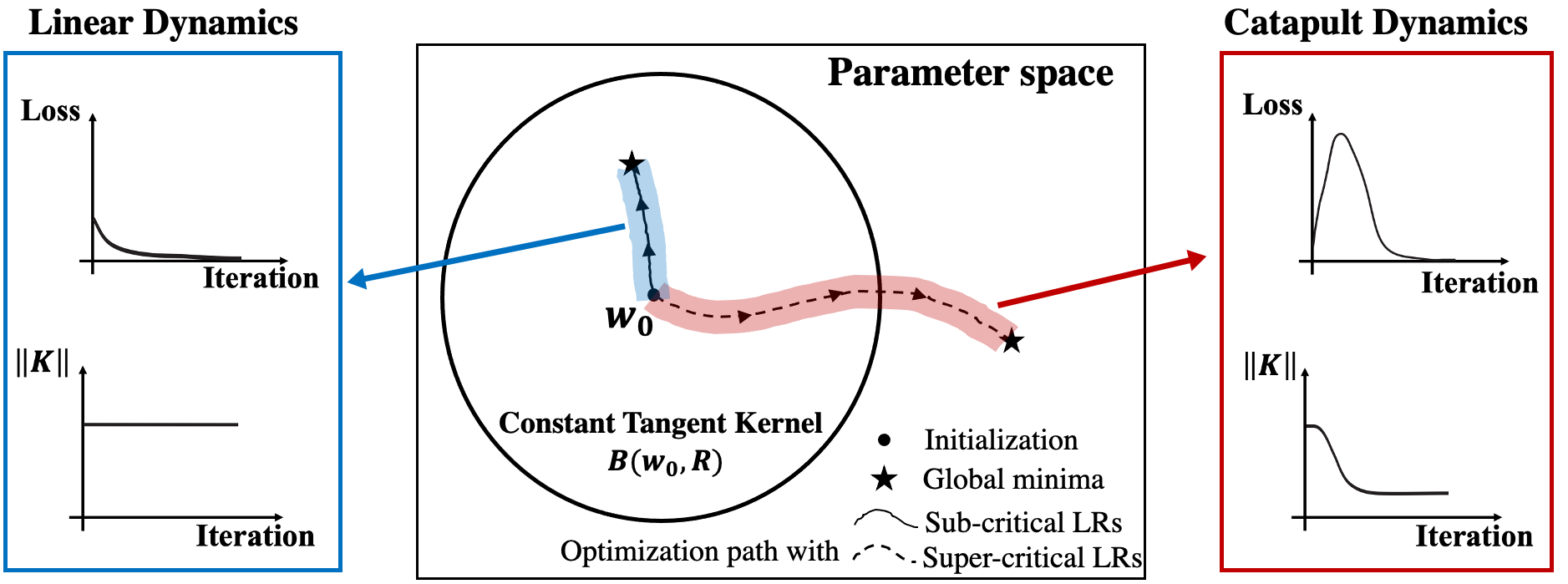} 
            \captionsetup{justification=centering}
        \caption{Optimization dynamics for $f$ (wide neural networks): linear dynamics and catapult dynamics.}
    \end{subfigure}
        \begin{subfigure}[t]{0.33\textwidth}
        \centering
        \includegraphics[width=\linewidth]{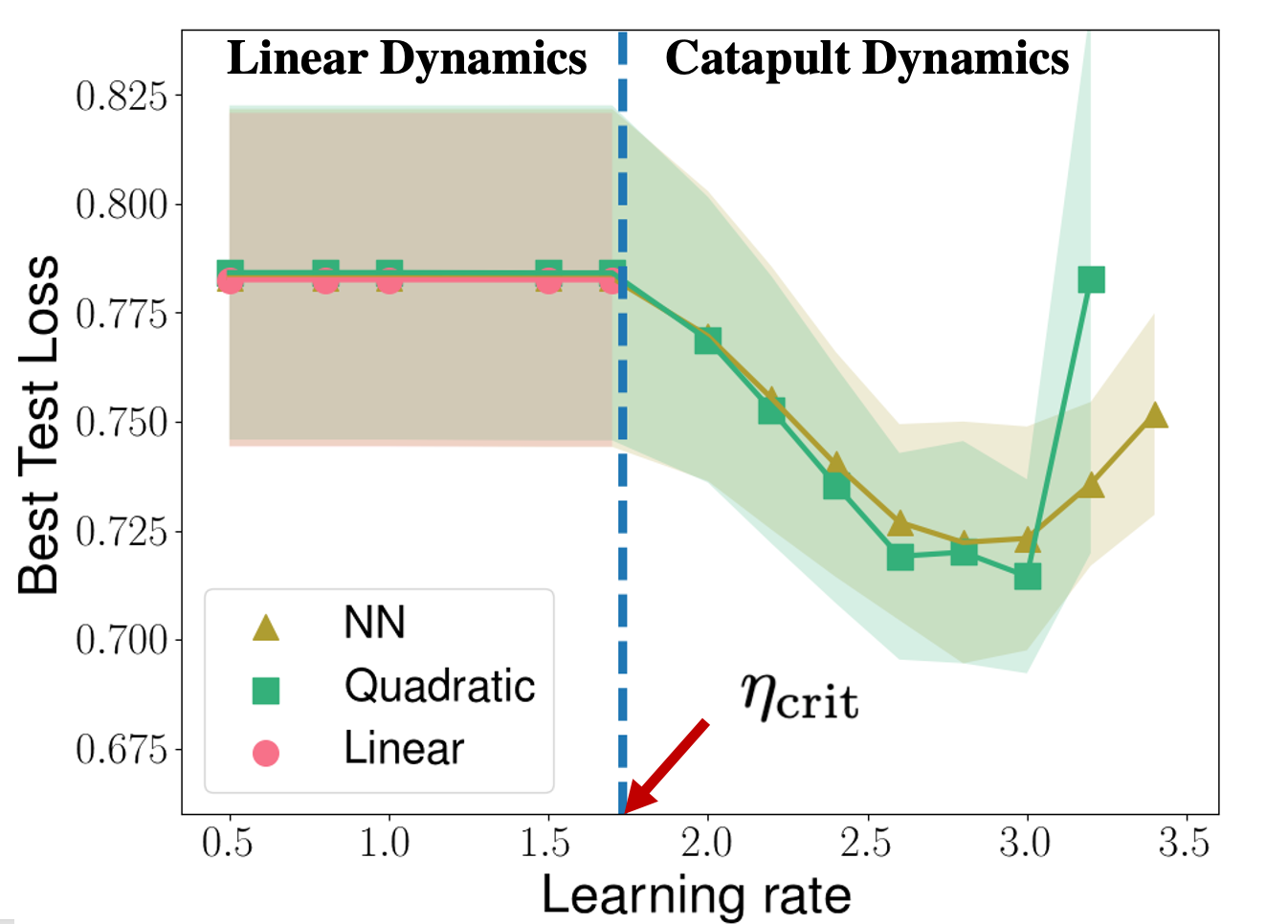} 
            \captionsetup{justification=centering}
        \caption{Generalization performance for $f$, $f_{\Lin}$ and $f_{\Quad}$.} 
    \end{subfigure}    
        \caption{{\bf{(a) Optimization dynamics of wide neural networks with sub-critical and super-critical learning rates.}} With sub-critical learning rates ($0<\eta<\etc)$, the tangent kernel of wide neural networks is nearly constant during training, and the loss decreases monotonically. The whole optimization path is contained in the ball $B(\rvw_0,R):=\{\rvw: \|\rvw - \rvw_0\|\leq R\} $ with a finite radius $R$.  With super-critical learning rates ($\etc<\eta<\etm)$, the catapult phase happens: the loss first increases and then decreases, along with a decrease of the norm of the tangent kernel . The optimization path goes beyond the finite radius ball. {\bf (b) Test loss of $f_{\Quad}$, $f$ and $f_{\Lin}$  plotted against different learning rates.} With sub-critical learning rates, all three models have nearly identical test loss for any sub-critical learning rate. With super-critical learning rates, $f$ and $f_{\Quad}$ have smaller best test loss than the one with sub-critical learning rates. Experimental details are in Appendix~\ref{subsec:exp_quad_setting}.}
        \label{fig:ball_illustration}\vspace{-10pt}
    \end{figure}

Recent work~\cite{lee2019wide} showed that the training dynamics of a wide neural network $f(\rvw;\vx)$ can be accurately approximated by that of a linear model $f_{\Lin}(\rvw;\vx)$:
\begin{align}\label{eq:nn_linear}
    f_{\mathrm{lin}}(\rvw;\vx) = f(\rvw_0;\vx) + (\rvw - \rvw_0)^T \nabla f(\rvw_0;\vx),
\end{align}
where  $\nabla f(\rvw_0;\vx)$ denotes the gradient\footnote{For non-differentiable functions, e.g. neural networks with ReLU activation functions, we define the gradient based on the update rule used in practice. Similarly, we use $H_f$ to denote the second derivative of $f$ in Eq.~(\ref{eq:nn_quad}).} of $f$ with respect to trainable parameters $\rvw$ at an initial point $\rvw_0$ and input sample $\vx$. This approximation holds for learning rates less than $\etc\approx 2/\|\nabla f(\rvw_0;\vx)\|^2$, when the width is sufficiently large.

 However, the training dynamics of finite width neural networks, $f$, can sharply differ from those of linear models when using large learning rates. A striking non-linear property of wide neural networks discovered in~\cite{lewkowycz2020large} is that when the learning rate is larger than $\etc$ but smaller than a certain maximum learning rate, $\etm$, gradient descent still converges but experiences a ``catapult phase.''  Specifically, the loss initially grows exponentially and then decreases after reaching a large value, along with the decrease of the norm of tangent kernel (see Figure~\ref{fig:ball_illustration}a), and therefore, such training dynamics are \emph{non-linear} (see Figure~\ref{fig:op_regimes}b). 
% \misha{How does figure 1b relate to non-linear dynamics?}
% More discussion about the optimization of finitely and infinitely wide neural networks with respect to learning rates can be found in Section~\ref{sec:op_diss}.

% Empirically,  ``catapult phase''  is a common optimization phenomenon for various kinds of neural networks~\cite{lewkowycz2020large}.
% The ``catapult phase'' 
% " extend importance of catapult phase???". As indicated in \cite{lewkowycz2020large}, ``catapult phase'' may also have a positive effect on generalization performance.

% We note that functions that are linear with respect to their parameters, e.g. infinitely wide neural networks in the NTK regime, can {\it never} demonstrate a catapult phase.  In particular,  for such functions, the tangent kernel is constant, and so, fixing the learning rate either leads to monotonic convergence or divergence of the loss when training with gradient descent (see Section~\ref{sec:op_diss}).  \adit{Maybe just cite section here without a sentence.} We have a detailed discussion about the optimization of finitely and infinitely wide neural networks with respect to learning rates in Section~\ref{sec:op_diss}.

As linear models cannot exhibit such a catapult phase, under what models and conditions does this phenomenon arise?  The  work of~\cite{lewkowycz2020large} first observed the catapult phase phenomenon in finite width neural networks and  analyzed this phenomenon for a two-layer linear neural network.  However, a theoretical understanding of this phenomenon for general non-linear neural networks remains open.  In this work, we utilize a quadratic model as a tool to shed light on the optimization and generalization discrepancies between finite and infinite width neural networks.  We define {\it Neural Quadratic Model (NQM)} by the second order Taylor series expansion of $f(\rvw;\vx)$ around the point $\rvw_0$:  
\begin{align}\label{eq:nn_quad}
\vspace{-2pt}
\hspace*{-1cm}\mathbf{NQM:} ~~~~f_{\mathrm{quad}}(\rvw)=  f(\rvw_0) + (\rvw - \rvw_0)^T \nabla f(\rvw_0) + \frac{1}{2}(\rvw - \rvw_0)^T H_f(\rvw_0) (\rvw - \rvw_0).
\vspace{-2pt}
\end{align}
Here in the notation we suppress the dependence on the input data $\vx$, and $H_f(\rvw_0)$ is the Hessian of $f$ with respect to $\rvw$ evaluated at $\rvw_0$. Note that $f_{\mathrm{quad}}(\rvw) = f_{\mathrm{lin}}(\rvw) + \frac{1}{2}(\rvw - \rvw_0)^T H_f(\rvw_0) (\rvw - \rvw_0)$.  %We observe catapult dynamics on certain NQMs as discussed in the following.

% We consistently observe the catapult phase phenomenon on NQMs which approximate  shallow fully-connected and convolutional neural networks with various kinds of activation functions (see Section~\ref{sec:exp_catapult}). We believe for deep neural networks, this property of NQMs still holds.

% In this paper, we focus on NQMs that approximate shallow neural networks with ReLU activation. 

% Similar higher order approximations to neural networks have been studied in~\cite{huang2020dynamics,bai2019beyond,roberts2022principles}, but these works analyzed such models under gradient flow (infinitesimal learning rates) or gradient descent with small learning rate.  Since the catapult phase arises for large learning rates, it cannot be captured by previous analyses.

Indeed, we note that NQMs are contained in a more general class of quadratic models:
\begin{align}\label{eq:quadratic}
  \hspace*{-3.3cm} \mathbf{General~Quadratic~Model:} ~~~~~~~g(\rvw;\vx) = \rvw^T \phi(\vx) + \frac{1}{2}\gamma \rvw^T\Sigma(\vx) \rvw ,
\end{align}
where $\rvw$ are trainable parameters and $\vx$ is input data.
% It is not hard to see that letting $\phi(\vx) = \nabla f(\rvw_0)$,  and $ \gamma\Sigma =  H_f(\rvw_0)$ then $f(\rvw)-f(\rvw_0)$ is in the form of Eq.~(\ref{eq:nn_quad}). 
We discuss the optimization dynamics of such general quadratic models in Section~\ref{subsec:summary_catapult} and show empirically that they exhibit the catapult phase phenomenon in Appendix~\ref{subsec:exp_gqm}. Note that the two-layer linear network analyzed in~\cite{lewkowycz2020large} is a special case of Eq.~(\ref{eq:quadratic}), when $\phi(\vx)=0$ (See Appendix~\ref{sec:phi_x_0}).

\textbf{Main Contributions.} We prove that NQMs, $f_{\Quad}$, which approximate shallow fully-connected ReLU activated neural networks, exhibit catapult phase dynamics.  Specifically, we analyze the optimization dynamics of $f_{\Quad}$ by deriving the evolution of $f_{\Quad}$ and the tangent kernel during gradient descent with squared loss, for a single training example and multiple uni-dimensional training examples.  We identify three learning rate regimes yielding different optimization dynamics for $f_{\Quad}$, which are (1) converging monotonically (linear dynamics); (2) converging via a catapult phase (catapult dynamics); and (3) diverging. We provide a number of experimental results corroborating our theoretical analysis (See Section~\ref{sec:catapult}). 

% \adit{Did you need this here?} Essentially, the second order term in $f_{\Quad}$ induces extra terms in dynamic equations, hence the norm of the tangent kernel can decrease significantly during training, which leads to the convergence of the loss in large learning rate regime.

% and its tangent kernel will interact with each other during training: the dynamics of $f_{\Quad}$ depends on its tangent kernel, but the change of the tangent kernel is largely determined by the scale of $f_{\Quad}$, which in turn will affect the dynamics of $f_{\Quad}$.

% [delete this?] With a specific choice of learning rate, the interactive dynamics of $f_{\Quad}$ and  its tangent kernel can be analyzed, sometimes approximately.  \adit{I think the previous sentence is a bit vague, maybe we need some better word choice.} 

% For generalization, it is unclear whether kernel machines can match the generalization performance of wide neural networks. There have been some discussions about the generalization performance between them~\cite{arora2019exact,lee2020finite}, where it is suggested that finitely wide neural networks possibly outperform corresponding kernel machines, but the conclusion hasn't been reached yet.
% \adit{I might not introduce the question of generalization upfront and rather focus on optimization here to connect with catapult phase earlier, but we can discuss today.}

We then empirically show that NQMs, for the architectures of shallow (see Figure~\ref{fig:ball_illustration}b as an example) and deep networks, have better test performances when catapult dynamics happens.  While this was observed for some synthetic examples of neural networks in~\cite{lewkowycz2020large}, we systematically demonstrate the improved generalization of NQMs across a range of experimental settings.  Namely, we consider fully-connected and convolutional neural networks with ReLU and other activation functions trained with GD/SGD on multiple vision, speech and text datatsets (See Section~\ref{sec:exp_catapult}).

To the best of our knowledge, our work is the first to analyze the non-linear wide neural networks in the catapult regime through the perspective of the quadratic approximation.  While NQMs (or quadratic models) were proposed and analyzed in \cite{roberts2022principles}, our work focuses on the properties of NQMs in the large learning rate regime, which has not been discussed in \cite{roberts2022principles}.  Similarly, the following related works did not study catapult dynamics. \cite{huang2020dynamics} analyzed higher order approximations to neural networks under gradient flow (infinitesimal learning rates). \cite{bai2019beyond} studied different quadratic models with randomized second order terms and \cite{zhang2019algorithmic} considered the loss in the quadratic form, where no catapult phase happens. A recent work showed the existence of the catapult phase in two-layer, homogenous networks~\citep{meltzer2023catapult}.
% Specifically, we compare the generalization performance among three models: $f$, $f_{\Lin}$ and $f_{\Quad}$. (put in main text)
% Through experimental results, we observe that with learning rates smaller than the critical learning rate, the test loss of all three models are very close to each other and is insensitive to learning rates. But in the large learning rate regime,  the test loss of neural networks and quadratic models decreases as the learning rate increases, up to a maximum learning rate for each model (See Figure~\ref{fig:ball_illustration}b).  Note that the loss of $f_{\Lin}$ diverges in the large learning rate regime. 
\vspace{-5pt}
\paragraph{Discontinuity in dynamics transition.} 
In the ball $B(\rvw_0,R):=\{\rvw: \|\rvw - \rvw_0\|\leq R\}$ with constant radius $R>0$, the transition to linearity of a wide neural network (with linear output layer)  is continuous in the network width $m$. That is, the deviation from the network function to its linear approximation within the ball can be continuously controlled by the Hessian of the network function, i.e. $H_f$, which scales with $m$~\citep{liu2020linearity}:
\begin{align}\label{eq:deviation}
\vspace{-5pt}
     \|f(\rvw) - f_{\Lin}(\rvw)\|\leq \sup_{\rvw \in B(\rvw_0,R)}\|H_f(\rvw)\|R^2 =\tilde{O}(1/\sqrt{m}).
     \vspace{-5pt}
\end{align}
 Using the inequality from Eq.~(\ref{eq:deviation}), we obtain $\|f_{\Quad} - f_{\Lin}\| =\tilde{O}(1/\sqrt{m})$, hence $f_{\Quad}$ transitions to linearity continuously as well in $B(\rvw_0,R)$\footnote{For general quadratic models in Eq.~(\ref{eq:quadratic}), the transition to linearity is continuously controlled by $\gamma$.}. Given the continuous nature of the transition to linearity, one may expect that the transition from non-linear dynamics to linear dynamics for $f$ and $f_{\Quad}$ is continuous in $m$ as well. Namely, one would expect that the domain of catapult dynamics, $[\etc,\etm]$, shrinks and ultimately converges to a single point, i.e., $\etc=\etm$, as $m$ goes to infinity, with non-linear dynamics turning into linear dynamics.  However, as shown both analytically and empirically, the transition is {\it{not}} continuous, for both network functions $f$ and NQMs $f_{\Quad}$,  since the domain of the catapult dynamics can be independent of the width $m$ (or $\gamma$).
Additionally, the length of the optimization path of $f$ in catapult dynamics grows with $m$ since otherwise, the  optimization path could be contained in a ball with a constant radius independent of $m$, in which $f$ can be approximated by $f_{\Lin}$. Since the optimization of $f_{\Lin}$ diverges in catapult dynamics, by the approximation, the optimization of $f$ diverges as well, which contradicts the fact that the optimization of $f$ can converge in catapult dynamics (See Figure~\ref{fig:ball_illustration}a).

\section{Notation and preliminary}

We use bold lowercase letters to denote vectors and capital letters to denote matrices.  We denote the set $\{1, 2, \cdots , n\}$ by $[n]$. We use $\|\cdot\|$ to denote the Euclidean norm for vectors and the spectral norm for matrices.  We use $\odot$ to denote element-wise multiplication (Hadamard product) for vectors. 
% For example, given $\va,\vb \in \mathbb{R}^n$, $\va\odot \vb \in\mathbb{R}^n$ with $(\va\odot \vb)_i = a_i b_i$. \chaoyue{maybe we do not need this example} 
We use  $\lambda_{\max}(A)$ and $\lambda_{\min}(A)$ to denote the largest and smallest eigenvalue of a matrix $A$, respectively.

Given a model $f(\rvw; \vx)$, where $\vx$ is input data and $\rvw$ are model parameters, we use $\nabla_\rvw f$ to represent the partial first derivative $\partial f(\rvw; \vx)/\partial \rvw$. When clear from context, we let $\nabla f:=\nabla_{\rvw}f$ for ease of notation.
% We use $\nabla_\rvw f$ to represent the first derivative of $f(\rvw; \vx)$ with respect to its parameters $\rvw$. Without stated otherwise, the derivative is always about the first argument $\rvw$ hence sometimes we hide the subscript $\rvw$, i.e., $\nabla f$. 
We use $H_f$ and $H_{\L}$ to denote the Hessian (second derivative matrix) of the function $f(\rvw;\vx)$ and the loss $\L(\rvw)$ with respect to parameters $\rvw$, respectively.

% We  use $\tilde{O}(\cdot)$ to suppress logarithmic terms of $m$ in Big-O notation. 

% When it is clear in the context, we omit the subscript $\rvw$ in $\nabla_\rvw$.

% The definition of tangent kernel is as follows:
% \begin{definition}\label{def:ntk}
% For a machine learning model $f(\rvw;\vx)$ where $\rvw$ is trainable parameters and $\vx$ is the input, its tangent kernel is defined as:
% \begin{align}
%     K_{(\vx,\vz)}(\rvw) := \nabla_\rvw f(\rvw;\vx)^T \nabla_\rvw f(\rvw;\vx),
% \end{align}
% for~fixed~inputs $ \vx,\vz \in \mathbb{R}^d$.

% We usually suppress the input to simplify the notation, i.e. $K(\rvw)$. 
% \end{definition}

% Without stated otherwise, we assume the output of wide neural networks $f(\rvw;\vx)$ is 1-dimensional. With $n$ training data $\{(\vx_i,y_i)\}_{i\in[n]}$, we use $f_i$ to denote $f(\rvw;\vx_i)$ and  we optimize the square loss i.e., $\L(\rvw) = \frac{1}{2}\|\rvf(\rvw) - \vy\|^2$ where $\vy \in\mathbb{R}^n$. We always assume $|y_i| = O(1)$ for $i\in[n]$.

% We denote the concatenation of feature vectors by $\Phi(X) = [\phi(\vx_1),...,\phi(\vx_n)]^T$.

In the paper, we consider the following supervised learning task: given training data $\{(\vx_i,y_i)\}_{i=1}^n$ with data $\vx_i \in \mathbb{R}^d$ and labels $y_i \in\mathbb{R}$ for $i\in[n]$, we minimize the empirical risk with the squared loss $\L(\rvw) = \frac{1}{2}\sum_{i=1}^n (f(\rvw;\vx_i) - y_i)^2$. Here $f(\rvw;\cdot)$   is a parametric family of models, e.g., a neural network or a kernel machine, with parameters $\rvw \in \mathbb{R}^p$. 
% \chaoyue{I think it is $f$, not $f(\rvw;\cdot)$, that is the machine learning model. Same comment for the rest of the paper, e.g., in the definition of tangent kernel.}
We use full-batch gradient descent to minimize the loss, and we denote trainable parameters $\rvw$ at iteration $t$ by $\rvw(t)$.  With constant step size (learning rate) $\eta$,  the update rule for the parameters is:
\begin{align*}
\vspace{-5pt}
    \rvw(t+1) = \rvw(t) - \eta \frac{d\L(\rvw)}{d\rvw}(t),~~\forall t\geq0. 
\end{align*}

% We define the tangent kernel  of $f(\rvw;\cdot)$: \chaoyue{we don't quite need this line}

% \chaoyue{We use the concepts of tangent kernel and critical learning rate:}

\begin{definition}[Tangent Kernel]\label{def:ntk}
The tangent kernel $K(\rvw;\cdot,\cdot)$ of $f(\rvw;\cdot)$ is defined as 
\begin{align}\label{eq:ntk}
    K(\rvw;\vx,\vz) =\langle \nabla f(\rvw;\vx),\nabla f(\rvw;\vz)\rangle,~~~~\forall \vx,\vz\in\mathbb{R}^d.
    \vspace{-5pt}
\end{align}
\end{definition}
In the context of the optimization problem with $n$ training examples, the tangent kernel matrix $K\in\mathbb{R}^{n\times n}$ satisfies $K_{i,j}(\rvw) = K(\rvw;\vx_i,\vx_j)$, $i,j\in[n]$.  The critical learning rate for optimization is given as follows.

% \chaoyue{Where mentioned? The main text should be self-contained, and should not heavily rely on introduction. There should be some discussion/analysis here to support the following claims.}
% For the optimization of $\L(\rvw)$, there is a maximum learning rate $\etm$ such that when training with learning rates smaller than $\etm$, the optimization can converge (possibly via catapult phase). \chaoyue{I think we don't need to mention $\eta_{max}$ here. We just need the definition of critical learning rate, and we mention the max learning rate wherever needed.} 
% We denote this maximum learning rate by $\etm$. 

\begin{definition}[Critical learning rate]\label{def:crit}
With an initialization of parameters $\rvw_0$, the critical learning rate of $f(\rvw;\cdot)$ is defined as
 \begin{align}\label{eq:crit}
 \vspace{-10pt}
    \etc := 2/\lambda_{\max}(H_{\L}(\rvw_0)).
 \end{align}
 A learning rate $\eta$ is said to be {\it sub-critical} if $0< \eta < \etc $ or {\it super-critical} if $ \etc <\eta <\eta_{\max}$. Here $\etm$ is the maximum leaning rate such that the optimization of $\L(\rvw)$ initialized at $\rvw_0$ can converge.
\end{definition}

% In the definition above, $\lambda_{\max}(H_{\L}(\rvw_0))$ is also called sharpness.
\vspace{-15pt}
\paragraph{Dynamics for Linear models.} When $f$ is linear in $\rvw$, the gradient, $\nabla f$, and tangent kernel are constant: $K(\rvw(t))= K(\rvw_0)$. Therefore, gradient descent dynamics are:
\begin{align}\label{eq:linear}
    F(\rvw(t+1)) - \vy &= (I - \eta K(\rvw_0))(F(\rvw(t))-\vy),~~~\forall t\geq 0,
\end{align}
where $F(\rvw_0) = [f_1(\rvw_0),...,f_n(\rvw_0)]^T$ with $f_i(\rvw_0) = f(\rvw_0;\vx_i)$.

Noting that $H_{\L}(\rvw_0) = \nabla F(\rvw_0)^T \nabla F(\rvw_0)$ and that tangent kernel $K(\rvw_0) = \nabla F(\rvw_0) \nabla F(\rvw_0)^T$ share the same positive eigenvalues, we have $\lambda_{\max}(H_{\L}(\rvw_0)) = \lambda_{\max}(K(\rvw_0))$, and hence,
\begin{equation}
    \etc = 2/\lambda_{\max}(K(\rvw_0)).
\end{equation}
% Interestingly, $\etc$ for linear models only depends on their tangent kernels. Specifically, by definition
% \begin{align}
%     H_{\L}(\rvw_0) = \nabla F(\rvw_0)^T \nabla F(\rvw_0)+ \sum_{i=1}^n H_{f_i} (f_i(\rvw_0) - y_i).
% \end{align} 
% Note that the tangent kernel $K(\rvw_0) = \nabla F(\rvw_0) \nabla F(\rvw_0)^T$ shares the same positive eigenvalues with $\nabla F(\rvw_0)^T \nabla F(\rvw_0)$, and $H_{f_i} =0$, $\forall i\in[n]$ for linear models.  Therefore, we have $\lambda_{\max}(H_{\L}(\rvw_0)) = \lambda_{\max}(K(\rvw_0))$, that is, $\etc = 2/\lambda_{\max}(K(\rvw_0))$. 
Therefore, from Eq.~\eqref{eq:linear}, if $0<\eta<\etc$, the loss $\L$ decreases monotonically and if $\eta>\etc$, the loss $\L$ keeps increasing.  Note that the critical and maximum learning rates are equal in this setting.

\vspace{-10pt}
\section{Optimization dynamics in Neural Quadratic Models}\label{sec:catapult}
\vspace{-5pt}
In this section, we analyze the gradient descent dynamics of the NQM corresponding to a two-layer fully-connected neural network.   We show that, unlike a linear model, the NQM exhibits a catapult dynamics: the loss increases at the early stage of training then decreases afterwards. We further show that the top eigenvalues of the tangent kernel  typically become smaller as a consequence of the catapult.

% \chaoyue{We show that the existence of the extra quadratic term introduces a new style of convergence--{\it catapult convergence}: loss diverges at early stage but become to converge in later stages. Remarkably, this type of convergence happens with super-critical learning rates, which leads to divergence for linear models. Interestingly, the tangent kernels after the catapult are often smaller in magnitude than those after the monotonic convergence in linear dynamics. (we don't need the rest)} We show that the extra second order term in NQM will break the linear optimization dynamics when training with super-critical learning rates, hence induces catapult dynamics. Therefore,   NQMs will have non-linear training dynamics, which are manifested as monotonic convergence, catapult phase and divergence when training with different learning rates.
%  The analysis follows similar principles  for single training example in subsection~\ref{subsec:single} and for  multiple training examples in subsection~\ref{subsec:multiple}. In subsection~\ref{subsec:summary_catapult}, we discuss the connection to neural networks and introduce general quadratic models.

\paragraph{Neural Quadratic Model (NQM).} Consider the NQM that approximates the following two-layer neural network:
\begin{align}\label{eq:2relu-nn}
    f(\rvu,\rvv;\vx) = \frac{1}{\sqrt{m}}\sum_{i=1}^m v_i \sigma\left(\frac{1}{\sqrt{d}}\rvu_i^T \vx\right),
\end{align}
where $\rvu_i \in \mathbb{R}^{d}$, $v_i\in\mathbb{R}$ for $i\in[m]$ are trainable parameters, $\vx \in \mathbb{R}^d$ is the input, and $\sigma(\cdot)$ is the ReLU activation function. We initialize $\rvu_i\sim\mathcal{N}(0,I_d)$ and $v_i \in \mathrm{Unif}[\{-1,1\}]$ for each $i$ independently.
% Each parameter is initialized i.i.d. following  the standard normal distribution, $\mathcal{N}(0,1)$.  
Letting $g(\rvu,\rvv;\vx) := f_{\mathrm{quad}}(\rvu,\rvv;\vx)$, this NQM has the following expression (See the full derivation in Appendix~\ref{sec:derivation_nqm}): 
\begin{align}\label{eq:nn_quad_relu}
    g(\rvu,\rvv;\vx) &= f(\rvu_0,\rvv_0;\vx) + \frac{1}{\sqrt{md}}\sum_{i=1}^m v_{0,i}(\rvu_i - \rvu_{0,i})^T\vx \mathbbm{1}_{\left\{\rvu_{0,i}^T \vx \geq 0\right\}}   
    +\frac{1}{\sqrt{md}}\sum_{i=1}^m (v_i - v_{0,i}) \sigma\left(\rvu_{0,i}^T\vx\right)\nonumber\\ &~~~~+ \frac{1}{\sqrt{md}}\sum_{i=1}^m (v_i-v_{0,i})(\rvu_i - \rvu_{0,i})^T\vx \mathbbm{1}_{\left\{\rvu_{0,i}^T \vx \geq 0\right\}}.
\end{align}
% Now we proceed to compute the expression of the NQM  $f_{\Quad}$ according to Eq.~(\ref{eq:nn_quad}).

Given training data $\{\vx_i,y_i\}_{i=1}^n$, we minimize the empirical risk with the squared loss $\L(\rvw) = \frac{1}{2}\sum_{i=1}^n (g(\rvw;\vx_i) - y_i)^2$ using GD with constant learning rate $\eta$.  Throughout this section, we denote   $g(\rvu(t),\rvv(t);\vx)$ by $g(t)$ and its tangent kernel $K(\rvu(t),\rvv(t))$ by $K(t)$, where $t$ is the iteration of GD.  We assume $\|\vx_i\|=O(1)$ and $|y_i| = O(1)$ for $i\in[n]$, and we assume the width of $f$ is much larger than the input dimension $d$ and the data size $n$, i.e., $m \gg \max\{d,n\}$.  Hence,  $d$ and $n$ can be regarded as small constants.  In the whole paper, we use the big-O and small-o notation with respect to the width $m$.  Below, we start with the single training example case, which already showcases the non-linear dynamics of NQMs.

\subsection{Catapult dynamics with a single training example}\label{subsec:single}
In this subsection, we consider training dynamics of NQM Eq.~(\ref{eq:nn_quad_relu}) with a single training example $(\vx,y)$ where $\vx\in \mathbb{R}^d$ and $y\in\mathbb{R}$.  In this case, the tangent kernel matrix $K$ reduces to a scalar, and we denote $K$ by $\lambda$ to distinguish it from a matrix.

By gradient descent with step size $\eta$, the updates for $g(t)-y$ and $\lambda(t)$, which we refer to as dynamics equations, can be derived as follows (see the derivation in Appendix~\ref{subsec:deri:single}):
\vspace{-5pt}\paragraph{Dynamics equations.}
\begin{align}
    g(t+1) - y &= \left(1-\eta \lambda(t) + \underbrace{\frac{\|\vx\|^2}{md}\eta^2 (g(t)-y)g(t)}_{R_g(t)} \right)(g(t) - y) := \mu(t)(g(t)-y),\label{eq:g_evolve}\\
    \lambda(t+1) &= \lambda(t)- \underbrace{\eta\frac{\|\vx\|^2}{md}   (g(t)-y)^2\left( 4\frac{g(t)}{g(t)-y} -\eta\lambda(t)\right) }_{R_{\lambda}(t)},~~~~\forall t\geq 0.\label{eq:k_evolve}
    \vspace{-5pt}
\end{align}
% \chaoyue{Compared to the linear dynamics Eq.~(\ref{eq:linear}), this non-linear dynamics has extra terms $R_g(t)$ and $R_{\lambda}(t)$, which are induced by non-linear term in the NQM. We will see that the convergence behavior of gradient descent depends on the magnitudes and signs of $R_g(t)$ and $R_\lambda(t)$. For example, for constant learning rate that is slightly larger than $\eta_{crit}$ (which would result in divergence for linear models), $R_\lambda(t)$ keeps negative during training, resulting in a monotonic decrease of tangent kernel $\lambda$ and final convergence of loss function. }

%  We will see that $R_g(t)$ and $R_\lambda(t)$ have a significant effect on the training dynamics. 

Note that as the loss is given by $\L(t) = \frac{1}{2} (g(t)-y)^2$, to understand convergence, it suffices to analyze the dynamics equations above.  Compared to the linear dynamics Eq.~(\ref{eq:linear}), this non-linear dynamics has extra terms $R_g(t)$ and $R_{\lambda}(t)$, which are induced by the non-linear term in the NQM. We will see that the convergence of gradient descent depends on the scale and sign of $R_g(t)$ and $R_\lambda(t)$. For example, for constant learning rate that is slightly larger than $\etc$ (which would result in divergence for linear models), $R_\lambda(t)$ stays positive during training, resulting in both monotonic decrease of tangent kernel $\lambda$ and the loss.

% Note that as the loss is given by $\L(t) = 1/2 (g(t)-y)^2$, to understand convergence, it suffices to analyze the dynamics equations above.  Unlike the linear dynamics in Eq.~(\ref{eq:linear}), the dynamics equations above contain extra terms, $R_g(t)$ and $R_{\lambda}(t)$.  Eq.~(\ref{eq:g_evolve}) implies that the loss at iteration $t$ decreases if $|\mu(t)|$ is smaller than $1$ and increases if $|\mu(t)| > 1$. To understand the dynamics of $\mu(t)$,  it suffices to understand how the scale (and sign) of $R_g(t)$ and $\lambda(t)$ changes during training. If $\lambda_0$ is given,  knowing the dynamics of $R_\lambda(t)$ is sufficient since the change of $\lambda(t)$ depends on $R_\lambda(t)$.

% We will analyze the dynamics equations with different learning rates, which are sufficient to understand the whole picture of optimization dynamics since $\L(t) = 1/2 (g(t)-y)^2$. Compared to the linear dynamics Eq.~(\ref{eq:linear}), notice that there are extra terms $R_g(t)$ and $R_{\lambda}(t)$, which are induced by the second order term in NQMs. From Eq.~(\ref{eq:g_evolve}) we can see that the increase or decrease of the loss at iteration $t$ is determined by whether $|\mu(t)|$ is larger or smaller than $1$. To understand the dynamics of $\mu(t)$,  it is sufficient to understand how the scale (and sign) of $R_g(t)$ and $\lambda(t)$ changes during training. If $\lambda_0$ is given,  knowing the dynamics of $R_\lambda(t)$ is sufficient since the change of $\lambda(t)$ depends on $R_\lambda(t)$.
  As $\lambda(t) = \lambda_0 - \sum_{\tau=0}^{t-1} R_\lambda(\tau)$,  to track the scale of $|\mu(t)|$, we will focus on the scale and sign of $R_g(t)$ and $R_\lambda(t)$ in the following analysis. For the scale of $\lambda_0$, which is non-negative by Definition~\ref{def:ntk},  we can show that with high probability over random initialization, $|\lambda_0| = \Theta(1)$ (see Appendix~\ref{proof:lower_bound_k_single}).  And $|g(0)| = O(1)$ with high probability as well~\citep{du2018gradientshallow}. Therefore the following discussion is with high probability over random initialization. We start by establishing monotonic convergence for sub-critical learning rates.

\vspace{-5pt}
\paragraph{Monotonic convergence: sub-critical learning rates ($\eta<2/\lambda_0=\etc$).} 
% By Proposition~\ref{prop:linear_dynamics}, when $|g(t)| = o(\sqrt{m})$, the optimization follows linear dynamics hence the tangent kernel $\lambda(t)$ almost does not change.  
The key observation is that when $|g(t)|=O(1)$, and $\lambda(t) = \Theta(1)$, $|R_g(t)|$ and $|R_\lambda(t)|$ are of the order $o(1)$. Then, the dynamics equations approximately reduce to the ones of linear dynamics:
\begin{align*}
    g(t+1) - y &= \left(1-\eta \lambda(t) + o(1) \right)(g(t) - y),\\
    \lambda(t+1) &= \lambda(t)+ o(1).
\end{align*}
Note that at initialization,  the output satisfies $|g(0)|=O(1)$, and we have shown $\lambda_0 = \Theta(1)$. With the choice of $\eta$, we have  for all $t\geq 0$, $|\mu(t)| = |1-\eta\lambda(t)+o(1)|<1$; hence, $|g(t)-y|$ decreases monotonically.    The cumulative change on the tangent kernel will be $o(1)$, i.e., $\sum_t |R_{\lambda}(t)| = o(1)$, since for all $t$, $|R_{\lambda}(t)| = O(1/m)$ and the loss decreases exponentially hence $\sum |R_\lambda(t)| = O(1 /m)\cdot \log O(1) = o(1)$. See Appendix~\ref{sec:sub_crit} for a detailed discussion.

% when training with sub-critical learning rates, $R_g(t)$ and $R_\lambda(t)$ keep the order of $O(1/m)$ which is negligible, hence the loss converges monotonically and the tangent kernel is nearly constant, which are nearly linear dynamics; but 
% with super-critical learning rates, $R_g(t)$ and $R_\lambda(t)$ can be of the order $\Theta(1)$ and it will give rise to the catapult phase phenomenon.
% especially the catapult dynamics.

% We will analytically show that with sub-critical learning rates, the training dynamics are linear during the whole optimization process; with super-critical learning rates, the catapult dynamics dominate a period of training dynamics hence the catapult phase happens; with learning rates larger than maximum learning rate, the optimization simply diverges.

% unlike ... sub-critical... monotonic convergence .. terms small
% super.... catapult phase.. terms large

% We start with the most interesting result that the NQM is trained with super-critical learning rates. The following theorem shows that the catapult phase phenomenon happens when we train NQM on single training example with super-critical learning rates: 
\vspace{-5pt}
\paragraph{Catapult convergence: super-critical learning rates ($\etc = 2/\lambda_0 <\eta < 4/\lambda_0 = \etm$).} The training dynamics are given
by the following theorem.
\begin{thm}[Catapult dynamics on a single training example]\label{thm:single}
Consider training the NQM Eq.~(\ref{eq:nn_quad_relu}) with squared
loss on a single training example by GD.
% \chaoyue{We should clarify what is $m$ here. This can be done by incorporating it into the notation of NQM. Basically, one $m$ correspond to one NQM, so it is reasonable to have $m$ in the notation of NQM.}. 
With a super-critical learning rate $\eta \in \left[\frac{2+ \epsilon}{\lambda_0}, \frac{4-\epsilon}{\lambda_0} \right]$ where $\epsilon =  \Theta\round{\frac{\log m}{\sqrt{m}}}$,  the catapult happens: with high probability over random initialization, the loss increases to the order of $\Omega\round{\frac{m(\eta\lambda_0-2)^2}{\log m}}$ then decreases to $O(1)$.
\vspace{-5pt}
\end{thm}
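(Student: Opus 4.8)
The plan is to recast the coupled dynamics equations Eq.~(\ref{eq:g_evolve})--(\ref{eq:k_evolve}) in the variables $z(t) := g(t)-y$ (the residual), $P(t) := \eta\lambda(t)$, and $S(t) := c\eta^2 z(t)^2$ with $c := \|\vx\|^2/(md) = \Theta(1/m)$, so that the loss is $\L(t) = \tfrac12 z(t)^2 = S(t)/(2c\eta^2)$. In these variables $\mu(t) = 1 - P(t) + c\eta^2 z(t)g(t)$, and once $|z(t)|$ is large enough that $g(t)\approx z(t)$ one has the clean two-dimensional map $\mu(t) \approx 1 - P(t) + S(t)$, $S(t+1) = \mu(t)^2 S(t)$, and $P(t+1) \approx P(t) + S(t)\round{P(t)-4}$. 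I would condition throughout on the high-probability event (Appendix~\ref{proof:lower_bound_k_single}) that $\lambda_0 = \Theta(1)$ and $|g(0)| = O(1)$, so that $P(0)=\eta\lambda_0 \in [2+\epsilon,\,4-\epsilon]$ and $S(0)=\Theta(1/m)$. The argument then proceeds in three phases: an initial exponential increase, a turnaround at the peak, and convergence.

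For the increase phase, the hypothesis $\eta \ge (2+\epsilon)/\lambda_0$ gives $P(0)\ge 2+\epsilon$, so $\mu(0)=1-P(0)+S(0) < -1$ (strictly, since $\epsilon=\Theta(\log m/\sqrt m)\gg S(0)$) and hence $|z(t)|$ grows. While $S(t)$ is small the corrections $R_g,R_\lambda$ are negligible and $S$ grows geometrically at rate $\mu(t)^2 = \round{P(t)-1-S(t)}^2$. The upper bound $\eta\le(4-\epsilon)/\lambda_0$ ensures $P(t)<4$, so together with $S(t)>0$ the kernel update forces $P$ to decrease monotonically, which is exactly the mechanism that will eventually halt the growth. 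The central quantitative step is a self-consistent bookkeeping of this feedback: summing $P(0)-P(t) \approx \round{4-P}\sum_{\tau<t}S(\tau)$ and using that the geometric sum is dominated by its largest term, $\sum_{\tau<t}S(\tau) = \Theta\round{S(t)/(P(0)-2)}$.

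The turnaround occurs when $\mu(t)$ first reaches $-1$, i.e. when $S(t)=P(t)-2$; this is precisely the peak of $S$ (and of $\L$), since $|\mu|<1$ immediately afterwards. Writing $a := P(0)-2 = \eta\lambda_0-2$ and $b := S_{\max} = P_{\mathrm{peak}}-2$ and combining this peak identity with the accumulation estimate gives a relation of the form $a - b = \Theta(b/a)$, so $b = \Theta(a^2)$ up to the logarithmic loss incurred while controlling the geometric sum and the error terms across the phase. Translating back, $\L_{\max} = S_{\max}/(2c\eta^2) = \Omega\round{m(\eta\lambda_0-2)^2/\log m}$, which is the claimed lower bound. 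For the convergence phase I would argue that after the peak $P(t)$ keeps decreasing (still $P<4$, $S>0$) so that $\mu(t)$ remains in $(-1,1)$; hence $S(t)$ and the loss decrease, the corrections shrink back to negligible as $|z|$ returns to $O(1)$, $P$ stabilizes in $(0,2)$, and the loss contracts to $O(1)$.

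The main obstacle is the rigorous control of this coupled feedback during the increase phase: one must show that $\lambda$ decreases at just the right rate — fast enough that the loss does not diverge (which is what would happen if $P$ stayed above $2$), yet slow enough that the peak is as large as claimed (an overly rapid decrease would cut the growth off early and shrink $S_{\max}$). This calls for propagating two-sided inductive bounds on $S(t)$ and $P(t)$ simultaneously, carefully bounding the error between $g(t)$ and $z(t)$ before $|z|$ is large, controlling the deviation of $\mu(t)$ from $1-P(t)+S(t)$, and estimating the geometric sum $\sum_\tau S(\tau)$; it is exactly this accounting, together with the need to keep all the high-probability bounds valid over the many steps of the phase, that produces the $\log m$ factor in the lower bound.
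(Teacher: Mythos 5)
Your increase-phase analysis is essentially the paper's: your $(S,P)$ are exactly the variables $(u,v)$ used in the proof of Theorem~\ref{thm:single} (you drop the paper's $w(t)=\Theta\round{\sqrt{u(t)}/\sqrt{m}}$ term, which the paper keeps because its sign is that of $(g-y)y$ and it is not negligible before the residual gets large), and your geometric-sum bookkeeping with the $\log m$ loss is precisely the content of Lemma~\ref{lemma:increase}. Two remarks there: (i) your claim that the kernel variable ``decreases monotonically'' during growth is not quite right because of $w(t)$ --- the paper instead proves the two-sided drift bound $\sup_t|v(t)-v(0)|=O(\delta/\log m)$; (ii) your peak identity ($\mu=-1$, i.e.\ $S=P-2$, hence $S_{\max}=\Theta(\delta^2)$ by self-consistency) is a refinement the paper does not attempt: the paper simply runs the induction until $u$ exceeds the threshold $C_u'\delta^2/\log m$, which already yields the claimed $\Omega\round{m(\eta\lambda_0-2)^2/\log m}$.

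The genuine gap is the convergence phase. You assert that after the peak ``$P(t)$ keeps decreasing (still $P<4$, $S>0$) so that $\mu(t)$ remains in $(-1,1)$,'' but this is a non-sequitur: $P$ decreasing does not keep $\mu=1-P+S$ inside $(-1,1)$. Staying above $-1$ requires $S>P-2$, and after the peak $S$ contracts at rate $\mu^2$ while $P$ moves only by $S(4-P)$ per step, so nothing in your argument prevents $S$ from falling back below $P-2$ while $P$ is still above $2$ --- at which point $|\mu|>1$ and a new catapult cycle begins; symmetrically, $\mu<1$ requires $S<P$, which must also be argued, since $P$ keeps bleeding downward while $S$ stalls whenever $\mu$ is near $\pm 1$. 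This is exactly what the paper spends Lemma~\ref{lemma:kappa_t} and Propositions~\ref{prop:v_4}--\ref{prop:v_decreasing} on: it does \emph{not} claim monotone post-peak contraction, but instead shows that across possibly many repeated increasing phases the kernel variable strictly decreases ($v(T'+2)<v(T)$ in Proposition~\ref{prop:v_decreasing}), first until $v\leq 3$ (Proposition~\ref{prop:v_close_4}), after which $4-v\geq 1$ and Proposition~\ref{prop:v_when_decrease} forces $v$ to keep decreasing until $u=O(1/m)$; the lower bound $v\geq 0$ then yields the terminal time $T^*$ of Lemma~\ref{lemma:kappa_t}. Without an argument of this kind --- one that rules out, or tames, recurrent catapults --- your proof of the ``then decreases to $O(1)$'' half of the theorem does not go through.
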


% \begin{remark}
% Due to the complexity of non-linear dynamics, our result proves the catapult dynamics in a smaller domain of super-critical learning rates. We leave the proof for whole super-critical learning rates as the future work.
% \end{remark}

\begin{proof}[Proof of Theorem~\ref{thm:single}]
% For simplicity, we assume $y=0$. For $y\neq 0$, the results also hold with minor modification of the proof. We defer the proof for the case of $y\neq 0$ to Section~\ref{sec:y_neq_0}.

We use the following transformation of the variables to simplify notations. 
\begin{align*}
    u(t) = \frac{\norm{\vx}^2}{md}\eta^2 (g(t)-y)^2,~~~w(t) =\frac{\|\vx\|^2}{md}\eta^2(g(t)-y)y,~~~v(t) = \eta\lambda(t).
\end{align*}
Then the Eq.~(\ref{eq:g_evolve}) and Eq.~(\ref{eq:k_evolve}) are reduced to
\begin{align}
    u(t+1) &= (1-v(t)+u(t)+w(t))^2 u(t):=\kappa(t)u(t),\label{eq:u}\\
    v(t+1) &=v(t) - u(t)(4-v(t))-4w(t)\label{eq:v}.
\end{align}
At initialization, since $|g(0)| = O(1)$, we have $u(0) = O\left(\frac{1}{m}\right)$ and $w(0) = O\left(\frac{1}{m}\right)$. Note that by definition, for all $t\geq 0$, $u(t)\geq 0$ and  we have $v(t)\geq 0$ since $\lambda(t)$ is the tangent kernel for a single training example.

In the following, we will analyze the above dynamical equations. To make the analysis more understandable, we separate the dynamics during training into increasing phase and decreasing phase.  We denote $\delta:= (\eta-\etc)\lambda_0  = \eta\lambda_0-2$.
% Note that this separation is only for the illustration of the analysis and  our analysis is for the whole training process.
\vspace{-10pt}
\paragraph{Increasing phase.} In this phase, $|u(t)|$ increases exponentially from $ O\left(\frac{1}{m}\right)$ to $\Theta\left(\frac{\delta^2}{\log m}\right)$ and $|v(t) - v(0)| = O\round{\frac{\delta}{\log m}}$. This can be shown by the following lemma. 

\begin{lemma}\label{lemma:increase}
For $T>0$ such that $\sup_{t\in[0,T]}u(t) = O\round{\frac{\delta^2}{\log m}}$, $u(t)$ increases exponentially with  $\inf_{t\in[0,T]}\kappa(t) \geq \round{1+\delta -O\round{\frac{\delta}{\log m}}}^2 >1$ and $\sup_{t\in[0,T]}|v(t)-v(0)| = O\round{\frac{\delta}{\log m}}$.
\end{lemma}
\begin{proof}
\vspace{-5pt}
See the proof in Section~\ref{proof:increase}.
\vspace{-5pt}
\end{proof}
After the increasing phase, based on the order of $u(t)$ we can infer the order of loss is $\Theta\round{\frac{m\delta^2}{\log m}}$.
\vspace{-10pt}
\paragraph{Decreasing phase.} When $u(t)$  is sufficiently large, $v(t)$ will have non-negligible decrease which leads to the decreasing of $\kappa(t)$, hence in turn making $u(t)$ decrease as well. Consequently, we have:   

\begin{lemma}\label{lemma:kappa_t}
There exists $T^*>0$ such that $u(T^*) = O\round{\frac{1}{m}}$.
\end{lemma}

% We will show that there exists $T^*$ such that $\kappa(t)$ keeps less than $1$ for $t>T^*$.  More specifically, we have the following lemma:
% \begin{lemma}\label{lemma:kappa_t}
% For dynamic equations Eq.~(\ref{eq:u}) and (\ref{eq:v}), there exits $T^*$ such that $\sup_{t\in[T^*,\infty)}\kappa(t)<1$.
% \end{lemma}
\begin{proof}
\vspace{-5pt}
See the proof in Section~\ref{proof:kappa_t}.
\vspace{-5pt}
\end{proof}
Then accordingly, the loss is of the order $O(1)$.
\end{proof}
\vspace{-10pt}
Once the loss decreases to the order of $O(1)$, the catapult finishes and we in general have $\eta < 2/\lambda(t)$ as $|\mu(t)| = |1-\eta\lambda(t) + R_\rvg(t)|<1$ where $|R_\rvg(t)| =O(\L(t)/m)= O(1/m)$. Therefore the training dynamics fall into linear dynamics, and we can use the same analysis for sub-critical learning rates for the remaining training dynamics. The stableness of the steady-state equilibria of dynamical equations can be guaranteed by the following:
% If instead $\eta > 2/\lambda(t)$ after a catapult, in principle it is possible to have another catapult to further drive down the scale of $\lambda(t)$ following the same analysis. 

\begin{thm}\label{thm:equi}
For dynamical equations Eq.~(\ref{eq:g_evolve}) and (\ref{eq:k_evolve}), the stable steady-state equilibria satisfy $g(t)=y$ (i.e.,loss is $0$), and $\lambda(t)\in[\epsilon,2/\eta-\epsilon]$ with $\epsilon = \Theta(\log m/\sqrt{m})$.
\end{thm}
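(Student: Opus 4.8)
The plan is to read Eq.~(\ref{eq:g_evolve})--(\ref{eq:k_evolve}) as a single two-dimensional map $\Phi\colon(g,\lambda)\mapsto(g',\lambda')$ and to classify its fixed points and their transverse stability. Writing $c:=\norm{\vx}^2/(md)=\Theta(1/m)$, a point is fixed iff $g'=g$ and $\lambda'=\lambda$. From Eq.~(\ref{eq:g_evolve}) we get $(g-y)(\mu-1)=0$, so either $g=y$ or $\mu=1$; from Eq.~(\ref{eq:k_evolve}) we get $R_\lambda=0$, so either $g=y$ or $4g/(g-y)=\eta\lambda$. First I would enumerate the solutions: the entire line $\{g=y\}$ (on which $\L=0$), and a ``spurious'' branch with $g\neq y$, where combining $\mu=1$ with $4g/(g-y)=\eta\lambda$ forces $(g-y)^2=4/(c\eta^2)=\Theta(m)$, i.e. $|g-y|=\Theta(\sqrt m)$ (together with the isolated point $(g,\lambda)=(0,0)$).

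Step two is to discard the spurious branch. Since $\eta\lambda_0=\Theta(1)$, these equilibria lie at distance $\Theta(\sqrt m)$ from the diagonal $g=y$, far outside the $O(1)$ neighbourhood the trajectory reaches after the catapult (Theorem~\ref{thm:single}, Lemma~\ref{lemma:kappa_t}); I would compute the Jacobian there (and at $(0,0)$, where $\partial g'/\partial g=1+c\eta^2y^2>1$) and exhibit an eigenvalue of modulus strictly larger than $1$, so they are unstable and cannot be the steady state of a converging run.

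Step three, the core, is linearisation along the fixed line. Differentiating $\Phi$ at $(y,\lambda)$ and using that every term carrying a surviving factor of $(g-y)$ vanishes at $g=y$ gives the lower-triangular Jacobian
\begin{equation*}
  J(y,\lambda)=\begin{pmatrix} 1-\eta\lambda & 0 \\ -4\eta c\,y & 1 \end{pmatrix},
\end{equation*}
whose eigenvalues are $1-\eta\lambda$ (transverse to the line) and $1$ (tangent to the continuum of equilibria, hence necessarily neutral). Normal attraction holds exactly when $|1-\eta\lambda|<1$, i.e. $\lambda\in(0,2/\eta)$, which is already the interval in the statement. I would corroborate this transverse picture with the Lyapunov relation $\L(t+1)=\mu(t)^2\L(t)$ following from Eq.~(\ref{eq:g_evolve}): near $(y,\lambda^*)$ one has $\mu=1-\eta\lambda^*+O(1/m)$, so the loss contracts geometrically precisely when $|1-\eta\lambda^*|<1$, while $\lambda$ itself converges because $\sum_t|R_\lambda(t)|$ is summable once $\L$ decays exponentially.

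The main obstacle, and the reason for the cushion $\epsilon=\Theta(\log m/\sqrt m)$, is that the tangential multiplier is exactly $1$, so linearisation cannot by itself certify Lyapunov stability, and near the endpoints $\lambda\to 0,\,2/\eta$ the transverse multiplier approaches $\pm1$, where the $\Theta(1/m)$ quadratic terms $R_g,R_\lambda$ decide the outcome. To close this I would set $g=y+a$, $\lambda=\lambda^*+b$ and track the perturbation map: the $b$-drift is $-4\eta c\,y\,a+O(c\,a^2)=O(a/m)$, so $\{g=y\}$ is a normally attracting invariant manifold on which trajectories merely slide to a nearby $\lambda$, while transversally $a'=(1-\eta\lambda^*)a+c\eta^2 y\,a^2+O(ab)$ contracts only if $1-\eta\lambda^*$ beats the nonlinear push. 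At $\lambda^*=0$ and $\lambda^*=2/\eta$ the sign of the quadratic coefficient $c\eta^2y$ renders the fixed point one-sided unstable, so a strictly positive margin is forced; matching this margin to the $\Theta(1/m)$ size of the nonlinearity over the $\Theta(\sqrt m/\sqrt{\log m})$ excursions permitted during the catapult (and to the $\Theta(\log m/\sqrt m)$-scale high-probability fluctuations of $\lambda_0$ at initialisation) yields $\epsilon=\Theta(\log m/\sqrt m)$. I expect this endpoint bookkeeping, rather than the interior linear analysis, to be the technically delicate part.
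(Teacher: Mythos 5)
Your proposal is sound and shares the overall skeleton of the paper's proof---enumerate the fixed points of the two-dimensional map, linearize, and treat the neutral tangential direction separately---but it differs in two substantive ways. First, coordinates: the paper analyzes the squared variables $u=\frac{\|\vx\|^2}{md}\eta^2(g-y)^2$, $v=\eta\lambda$ (with $w$ treated as a function of $u$), so its equilibrium conditions $|1-v^*+u^*+w^*|=1$, $u^*(4-v^*)+4w^*=0$ also pick up the $\mu=-1$ sign-flip states, which in your $(g,\lambda)$ coordinates are period-two orbits rather than fixed points; your enumeration (the line $g=y$, the branch $(g-y)^2=4/(c\eta^2)$ with $c=\|\vx\|^2/(md)$, and the isolated point $(0,0)$) is the complete one for Eqs.~(\ref{eq:g_evolve}) and (\ref{eq:k_evolve}) as written, and both you and the paper dispose of the spurious states by exhibiting a Jacobian eigenvalue of modulus larger than one. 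Second, and more importantly, the stability certificate on the line $g=y$: your triangular Jacobian with eigenvalues $1-\eta\lambda$ and $1$ agrees with the paper's eigenvalues $(1-v^*)^2$ and $1$, and since linearization is inconclusive at the neutral eigenvalue, the paper introduces the explicit Lyapunov function $V(u,v)=u+(v-v^*)^2$ on a domain of radii $\Theta(1/m)$ and $\Theta(1/\sqrt{m})$ and invokes a discrete-time Lyapunov stability theorem of Bof, Carli and Schenato, whereas you argue directly via transverse geometric contraction of the loss combined with summability of the kernel drift $\sum_t|R_\lambda(t)|$. Your route is self-contained and recycles the bootstrap already used for sub-critical learning rates (Appendix~\ref{sec:sub_crit}); the paper's route packages the same estimates into a citable stability theorem. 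One caveat in your endpoint analysis: at $\lambda^*=0$ the multiplier is $+1$ and the quadratic coefficient $c\eta^2 y$ indeed forces one-sided instability, but at $\lambda^*=2/\eta$ the multiplier is $-1$, and stability of a flip point is decided by the second iterate (a cubic normal-form coefficient), not by the sign of the quadratic term; this does not undermine your claim, since the theorem only asserts stability on $[\epsilon,2/\eta-\epsilon]$, and your accounting for the $\Theta(\log m/\sqrt{m})$ margin is in any case no vaguer than the paper's, which imposes $\epsilon$ with essentially no derivation.
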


\vspace{-10pt}
\paragraph{Divergence ($\eta > \eta_{\max} = 4/\lambda_0$).}  Initially, it follows the same dynamics with that in the increasing phase in catapult convergence: $|g(t)-y|$ increases exponentially as $|\mu(t)|>1$ and the $\lambda(t)$ almost does not change as $R_\lambda(t)$ is small. However, note that  $R_\lambda(t)>0$ since 1) $g(t)/(g(t)-y)\approx 1$ when $g(t)$ becomes large and 2) $\eta>4/\lambda(t)$.  Therefore, $\lambda(t)$ keeps increasing during training, which consequently 
leads to the divergence of the optimization. See Appendix~\ref{sec:max_lr} for a detailed discussion.

% when $|g(t)|$ grows to the order of $\Theta(\sqrt{m})$, corresponding to the peak phase in catapult convergence, $\lambda(t)$ does not decrease but increases significantly. Specifically, since $\eta > 4/\lambda_0$, we approximately have $\eta > 4/\lambda(t)$ at the end of the increasing phase by the same analysis in catapult convergence. By Proposition~\ref{prop:linear},  $R_{\lambda}(t) >0$, then $\lambda(t)$ increases as $\lambda(t+1) = \lambda(t) + R_{\lambda}(t) >\lambda(t)$. Larger $\lambda(t)$ leads to the faster increase on $\lambda(t)$, hence $ \left|\mu(t)\right|$ becomes even larger.  As a result, $|g(t) - y|$ grows faster, therefore the loss diverges.

\vspace{-10pt}    
\subsection{Catapult dynamics with multiple training examples}\label{subsec:multiple}
\vspace{-5pt}    
In this subsection we show the catapult phase will happen for NQMs Eq.~(\ref{eq:2relu-nn}) with  multiple training examples. We assume unidimensional input data, which is common in the literature and simplifies the analysis for neural networks (see for example~\cite{williams2019gradient,savarese2019infinite}).
\begin{assumption}\label{assump:input}
The input dimension $d=1$ and not all $x_i$ is $0$, i.e., $\sum |x_i|>0$.
\vspace{-5pt}
\end{assumption}
% Similarly, we optimize the square loss $\L(\rvu,\rvv) = \frac{1}{2}\sum_{i=1}^n(g(\rvu,\rvv;x_i) - y_i)^2$ by GD with constant learning rate $\eta$,  where $g$ is the NQM that approximates to two-layer fully connected ReLU activated neural networks, i.e., Eq.~(\ref{eq:2relu-nn}). 

% For multiple training examples, the tangent kernel is a matrix. Note that the critical learning rate is determined approximately by the largest eigenvalue of the tangent kernel (see Proposition~\ref{prop:norm_L_K}). Therefore, we will focus on top eigenvalues and the corresponding eigenvectors of the tangent kernel (the number of eigenvalues taken into consideration depends on the choice of learning rates), which is sufficient to show when and why the catapult phase happens.

We similarly analyze the dynamics equations with different learning rates for multiple training examples (see the derivation of Eq.~(\ref{eq:g_evolve_multi_ori}) and (\ref{eq:k_evolve_multi_ori}) in Appendix) which are update equations of $\rvg(t)-\vy$ and $K(t)$. And similarly, we show there are three training dynamics: monotonic convergence, catapult convergence and divergence. 

In the analysis, we consider the training dynamics projected to two orthogonal eigenvectors of the tangent kernel, i.e., $\vp_1$ and $\vp_2$, and we show with different learning rates, the catapult phase can occur only in the direction of $\vp_1$, or occur in both directions. We consider the case where $2/\lambda_2(0) < 4/\lambda_1(0)$ hence the catapult can occur in both directions. The analysis for the other case can be directly obtained from our results. We denote the loss projected to $\vp_i$ by $\Pi_i \L := \frac{1}{2}\inner{\rvg-\vy,\vp_i}^2$ for $i=1,2$. We have $\Pi_i \L(0) = O(1)$ with high probability over random initialization of weights.

We formulate the result for the catapult dynamics, which happens when training with super-critical learning rates, into the following theorem, and defer the proof of it and the full discussion of training dynamics to Appendix~\ref{sec:multi}.

% The idea of analysis is similar to the case for a single training example: due to the low rank structure of the tangent kernel, we only need to consider directions $\vp_1$ and $\vp_2$. It turns out that we can analyze the dynamics separately for each direction as we show the training dynamics in these two directions are almost independent to each other.  add discussion top eigendirections??? introduction??

% Specifically, we can find $\etc = 2/\lambda_1(0)$ such that 
% \begin{itemize}
%     \item (Monotonic convergence) the loss decreases monotonically when $\eta<\etc$;
%     \item (Catapult convergence) when $\eta\in(2/\lambda_1(0), 2/\lambda_2(0))$, the catapult dynamics only occur in direction $\vp_1$; when $\eta\in(2/\lambda_1(0), 4/\lambda_1(0))$ and $2/\lambda_2(0) < 4/\lambda_1(0)$, the catapult dynamics occur in both directions;
%     \item  (Divergence) when $\eta >4/\lambda_1(0)$ the loss keeps increasing and the gradient descent diverges.
% \end{itemize}

\begin{thm}[Catapult dynamics on multiple training examples]\label{thm:multi}
Supposing Assumption~\ref{assump:input} holds, consider training the NQM Eq.~(\ref{eq:nn_quad_relu}) with squared
loss on multiple training examples by GD. Then, with high probability over random initialization we have \vspace{-5pt}
\begin{enumerate}
    \item with  $\eta \in \left[\frac{2+\epsilon}{\lambda_1(0)}, \frac{2-\epsilon}{\lambda_2(0)} \right]$ , the catapult only occurs in eigendirection $\vp_1$: $\Pi_1\L$  increases  to the order of $\Omega\round{\frac{m(\eta\lambda_1(0)-2)^2}{\log m}}$ then decreases to $O(1)$;\vspace{-5pt}
    \item with $\eta \in \left[\frac{2+\epsilon}{\lambda_2(0)},  \frac{4-\epsilon}{\lambda_1(0)}\right]$, the catapult occurs in both eigendirections $\vp_1$ and $\vp_2$: $\Pi_i\L$ for $i=1,2$ increases to the order of $\Omega\round{\frac{m(\eta\lambda_i(0)-2)^2}{\log m}}$ then decreases to $O(1)$,
    \vspace{-10pt}
\end{enumerate} 
where $\epsilon = \Theta\round{\frac{\log m}{\sqrt{m}}}$.
\end{thm}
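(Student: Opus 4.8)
The plan is to reduce the multi-example problem to two essentially decoupled copies of the scalar system analyzed in Theorem~\ref{thm:single}, one for each of the two eigendirections $\vp_1,\vp_2$ of the tangent kernel, and then to apply the single-example phase analysis (Lemmas~\ref{lemma:increase} and \ref{lemma:kappa_t}) in each eigendirection separately. The choice of the two learning-rate windows in the statement will then correspond exactly to which of $\eta\lambda_1(0)$, $\eta\lambda_2(0)$ fall in the super-critical band $(2,4)$.

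The key structural observation that makes this reduction work is Assumption~\ref{assump:input}, i.e.\ $d=1$. In the NQM Eq.~(\ref{eq:nn_quad_relu}) every activation indicator is frozen at initialization as $\mathbbm{1}_{\{u_{0,i}x_j\geq 0\}}$, so for scalar inputs it depends only on the signs of $u_{0,i}$ and $x_j$. Consequently the data split into a positive group ($x_j>0$, served only by neurons with $u_{0,i}\geq 0$) and a negative group ($x_j<0$, served only by neurons with $u_{0,i}\leq 0$), these two neuron sets being a.s.\ disjoint. First I would use this to show that the tangent kernel $K(t)$ is block-diagonal across the two groups for all $t$, since any cross entry carries the factor $\mathbbm{1}_{\{u_{0,i}\geq 0\}}\mathbbm{1}_{\{u_{0,i}\leq 0\}}=0$, and that within each group every prediction is proportional to $x_j$, so the restricted kernel is rank one. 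This identifies $\vp_1,\vp_2$ as the two group directions $\propto (x_j)$, which are fixed in time (only the eigenvalues $\lambda_i(t)$ move), and shows both blocks are nonempty with $\lambda_i(0)=\Theta(1)$ with high probability over initialization by a concentration argument on the number of positive/negative neurons.

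With this in hand I would project the multi-example dynamics equations (Eq.~(\ref{eq:g_evolve_multi_ori}), (\ref{eq:k_evolve_multi_ori})) onto $\vp_i$ and verify that the scalar variables $a_i(t):=\inner{\rvg(t)-\vy,\vp_i}$ and $\lambda_i(t)$ satisfy a closed recursion of exactly the form of Eq.~(\ref{eq:g_evolve})--(\ref{eq:k_evolve}), with $\lambda_i(0)$ replacing $\lambda_0$ and $\sum_{j}x_j^2$ over the relevant group replacing $\norm{\vx}^2$. Introducing the rescaled variables $u_i,v_i,w_i$ as in the proof of Theorem~\ref{thm:single} reduces each direction to Eq.~(\ref{eq:u})--(\ref{eq:v}). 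Then in case~1 the window $\eta\in[\frac{2+\epsilon}{\lambda_1(0)},\frac{2-\epsilon}{\lambda_2(0)}]$ forces $\eta\lambda_1(0)\in(2,4)$ and $\eta\lambda_2(0)<2$, so Lemmas~\ref{lemma:increase}--\ref{lemma:kappa_t} give the catapult of $\Pi_1\L$ up to $\Omega\round{\frac{m(\eta\lambda_1(0)-2)^2}{\log m}}$ and back to $O(1)$ while the sub-critical analysis keeps $\Pi_2\L=O(1)$; in case~2 the window $\eta\in[\frac{2+\epsilon}{\lambda_2(0)},\frac{4-\epsilon}{\lambda_1(0)}]$ places both $\eta\lambda_1(0),\eta\lambda_2(0)$ in $(2,4)$, yielding the catapult of $\Pi_i\L$ for $i=1,2$.

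The main obstacle I anticipate is making the per-direction reduction rigorous rather than merely structural. Even granting block-diagonality, I must show that the nonlinear correction terms $R_g,R_\lambda$, after projection, depend on the residual only through its $\vp_i$-component (so that the $a_i$-recursion is genuinely closed and the inert null-space component of $\rvg-\vy$ merely shifts the effective target without catapulting), and that these terms match the single-example form closely enough for Lemmas~\ref{lemma:increase}--\ref{lemma:kappa_t} to transfer verbatim. The most delicate point is case~2 at the catapult peaks: when $\Pi_1\L$ and $\Pi_2\L$ simultaneously reach order $m\delta^2/\log m$, I would need to control any residual cross-influence (from finite-width fluctuations or imperfect orthogonality of the two blocks) and confirm it stays of strictly smaller order than the diagonal dynamics throughout both the increasing and decreasing phases, so that the two catapults evolve independently within their overlapping time windows.
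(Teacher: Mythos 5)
Your proposal is correct and follows essentially the same route as the paper's proof: the paper likewise uses the $d=1$ sign-split structure (Proposition~\ref{prop:eigenvalue_K}) to identify the fixed eigendirections $\vp_1,\vp_2$ of the rank-2 kernel, projects the dynamics equations (\ref{eq:g_evolve_multi_ori})--(\ref{eq:k_evolve_multi_ori}) onto each direction, rescales to recover exactly the single-example system Eq.~(\ref{eq:u})--(\ref{eq:v}), and then invokes the analysis of Theorem~\ref{thm:single} separately in each direction for the two learning-rate windows. The cross-influence you flag as the main obstacle vanishes exactly rather than approximately in the paper's setup: $\vp_1$ and $\vp_2$ have disjoint supports, the correction terms $R_{\rvg}(t)$ and $R_K(t)$ are sums of matrices proportional to $\vp_1\vp_1^T$ and $\vp_2\vp_2^T$, and the residual component orthogonal to both is frozen during training (Proposition~\ref{prop:rest_direc}), so $|w_i(t)|=\Theta\round{\sqrt{u_i(t)}/\sqrt{m}}$ and each per-direction recursion is genuinely closed.
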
\vspace{-10pt}
We verify the results for multiple training examples via the experiments in Figure~\ref{fig:multi_quad}.

\begin{figure}[h]
\vspace{-10pt}
\centering
        \begin{subfigure}[t]{0.32\textwidth}
        \includegraphics[width=\linewidth]{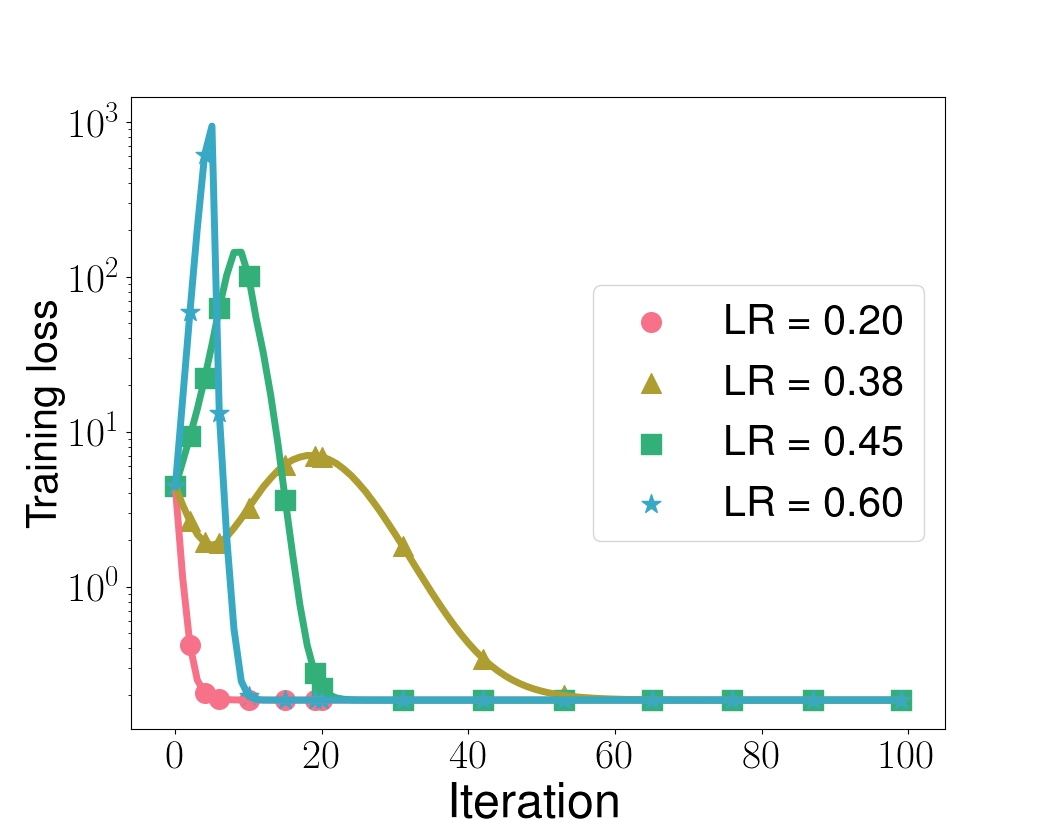} 
         \caption{Training loss}
     \end{subfigure}
     \begin{subfigure}[t]{0.32\textwidth}
        \includegraphics[width=\linewidth]{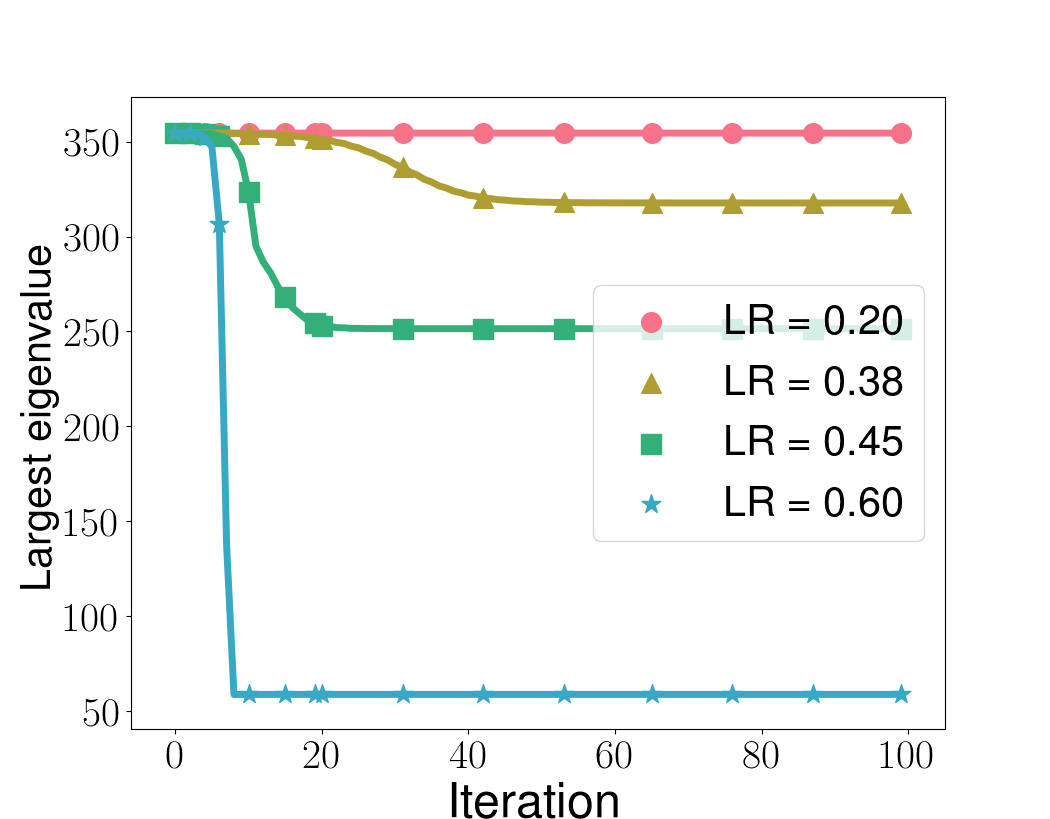} 
        \captionsetup{justification=centering}
         \caption{Largest eigenvalue of tangent kernel}
        % \caption{}
    \end{subfigure}
    \begin{subfigure}[t]{0.32\textwidth}
        \includegraphics[width=\linewidth]{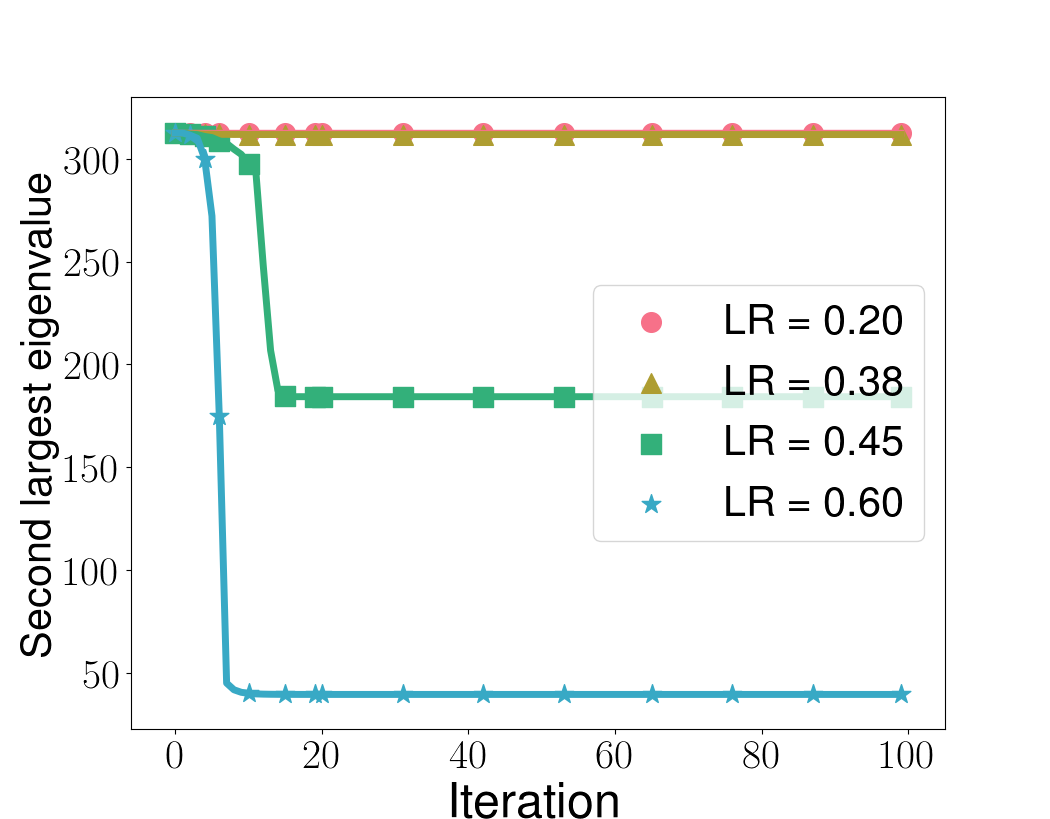} 
        \captionsetup{justification=centering}
        \caption{Second largest eigenvalue of tangent kernel}
    \end{subfigure}
     \caption{{\bf{Training dynamics of NQMs  for multiple examples case with different learning rates.}} By our analysis, two critical values are $2/\lambda_1(0) = 0.37$ and $2/\lambda_2(0) = 0.39$. When $\eta<0.37$, linear dynamics dominate hence the kernel is nearly constant; when $0.37<\eta<0.39$, the catapult phase happens in $\vp_1$ and only $\lambda_1(t)$ decreases; when $0.39<\eta<\etm$, the catapult phase happens in $\vp_1$ and $\vp_2$ hence both $\lambda_1(t)$ and $\lambda_2(t)$ decreases. The experiment details can be found in Appendix~\ref{subsec:multi_quad}.
     }\label{fig:multi_quad}
  \end{figure}

% \begin{align*}
%      \vp_1^T K(t+1) \vp_1 &\approx \vp_1^T K(t) \vp_1 + \frac{\eta^2 \|\vp_1\|^2}{m}\left((\rvg(t)-\vy)\odot \vm_+\right)^T K(t)\left((\rvg(t)-\vy)\odot \vm_+\right)^T -\frac{4\eta}{m}\|(\rvg(t)-\vy)\odot \vm_+\|^2 \|\vp_1\|^2\\
%     &= \vp_1^T K(t)\vp_1 + \frac{\eta\|\vp_1\|^2}{m}\left(\eta\left((\rvg(t)-\vy)\odot \vm_+\right)^T K(t)\left((\rvg(t)-\vy)\odot \vm_+\right)^T - 4\|(\rvg(t)-\vy)\odot \vm_+\|^2\right) \\
%     &> \vp_1^T K(t)\vp_1.
% \end{align*}

% Since $\eta>4/\lambda_1(0)$, $K(t)$ initially increases in the direction $\vp_1$ and $\vp_2$:
% \begin{align*}
%  \eta\left((\rvg(t)-\vy)\odot \vm_+\right)^T K(t)\left((\rvg(t)-\vy)\odot \vm_+\right)^T &\leq \eta\lambda_1(t)\|(\rvg(t)-\vy)\odot \vm_+\|^2< 4\|(\rvg(t)-\vy)\odot \vm_+\|^2.
% \end{align*}

\vspace{-3pt}
\subsection{Connection to  general quadratic models and wide neural networks}\label{subsec:summary_catapult}\vspace{-5pt}
% For multiple training examples, we consider scalar input which simplifies the analysis as the training dynamics are determined in two fixed directions.  

% If the input is vector, empirically the catapult phase still happens, but the analysis of the training dynamics will be more complicated since  more directions will get involved in the optimization. We leave it as the future work.

\paragraph{General quadratic models.} 
As mentioned in the introduction, NQMs are contained in a general class of quadratic models of the form given in Eq.~(\ref{eq:quadratic}). Additionally, we show that the two-layer linear neural network analyzed in~\cite{lewkowycz2020large} is a special case of Eq.~(\ref{eq:quadratic}), and we provide a more general condition for such models to have catapult dynamics in Appendix~\ref{sec:phi_x_0}. Furthermore, we empirically observe that a broader class of quadratic models  $g$ can have catapult dynamics simply by letting $\phi(\vx)$ and $\Sigma$ be random and assigning a small value to $\gamma$  (See Appendix~\ref{subsec:exp_gqm}). 
\vspace{-10pt}
\paragraph{Wide neural networks.} We have seen that NQMs, with fixed Hessian, exhibit the catapult phase phenomenon. Therefore, the change in the Hessian of wide neural networks during training is not required to produce the catapult phase. 
We will discuss the high-level idea of analyzing the catapult phase for a general NQM with large learning rates, and empirically show that this idea applies to neural networks.   We train an NQM Eq.~(\ref{eq:nn_quad}) $f_\Quad$ on $n$ data points $\{(\vx_i,y_i)\}_{i=1}^n\in\mathbb{R}^{d}\times \mathbb{R}$ with GD. The dynamics equations take the following form:
\vspace{-5pt}\begin{align}
    \rvf_\Quad(t+1)-\vy &=\left(I-\eta K(t) + \underbrace{\frac{1}{2}\eta^2 G(t) \nabla \rvf_\Quad(t)^T}_{R_{\rvf_\Quad}(t)}\right)(\rvf_\Quad(t)-\vy),\label{eq:gqm_g}\\
    K(t+1) &= K(t) - \underbrace{\frac{1}{4}\eta\left(4 G(t) \nabla\rvf_\Quad(t)^T- \eta G(t)G(t)^T \right) }_{R_K(t)}\label{eq:gqm_k},
\end{align}\vspace{-3pt}
where $G_{i,:}(t) =(\rvf_\Quad(t)-\vy)^T\nabla\rvf_\Quad(t)H_f(\vx_i) \in\mathbb{R}^m$ for $i\in[n]$.

In our analysis for $f_{\Quad}$ which approximates two-layer networks in Section~\ref{subsec:multiple}, we show that catapult dynamics occur in the top eigenspace of the tangent kernel. Specifically, we analyze the dynamics equations confined to the top eigendirection of the tangent kernel $\vp_1$ (i.e, $\Pi_1\L$ and $\lambda_1(t)$).  We show that $\vp_1^T R_{\rvf_\Quad}\vp_1$ and $\vp_1^T R_K\vp_1$ scale with the loss and remain positive when the loss becomes large, therefore $\vp_1^T K\vp_1$ (i.e., $\lambda_{\max}(K)$) as well as the loss will be driven down, and consequently we yield catapult convergence.

% Note that a large learning rate is typically super-critical in the top eigendirection of the tangent kernel, i.e., $\eta > 2/\lambda_i$, hence the loss projected to the top eigendirection, e.g., $\Pi_1 \L$ captures the most of the spike in the training loss during the catapult phase.  Therefore, the analysis can be confined to the top eigendirection, e.g., $\vp_1$.  If we further have  $\vp_1^T R_{\rvf_\Quad}\vp_1$ and $\vp_1^T R_K\vp_1$ scale with the loss and remain positive when the loss becomes large, then $\vp_1^T K\vp_1$ (i.e., $\lambda_{\max}(K)$) as well as the loss will be driven down, and consequently we yield catapult convergence.

% We consider the training dynamics confined to the each eigendirection of the tangent kernel. A large learning rate is typically super-critical in the top eigendirection of the tangent kernel, i.e., $\eta > 2/\lambda_i$, hence it is sufficient to analyze the catapult dynamics confined to the top eigenspace of $K(t)$, e.g., $\vp_1(t)$. Therefore, 

We empirically verify  catapults indeed happen in the top eigenspace of the tangent kernel for additional NQMs and wide neural networks in Appendix~\ref{subsec:exp_top_eig}. 
%Hence we believe we can apply this idea to analyze a general NQM or even neural networks.
Furthermore, a similar behaviour of top eigenvalues of the tangent kernel with the one for NQMs is observed for wide neural networks when training with different learning rates (See Figure~\ref{fig:multi_nn} in Appendix~\ref{sec:exp_add}).

\vspace{-5pt}
\section{Quadratic models parallel neural networks in generalization}\label{sec:exp_catapult}
\vspace{-5pt}

In this section, we empirically compare the test performance of three different models considered in this paper upon varying learning rate.  In particular, we consider (1) the NQM, $f_{\mathrm{quad}}$; (2) corresponding neural networks, $f$; and (3) the linear model, $f_{\mathrm{lin}}$.
% \chaoyue{no need to put the equation references, and put quadratic model first. such as ``...compare the generalization of the neural quadratic model $f_{quad}$ with the corresponding neural network $f$, and the linear approximation $f_{lin}$''.} 
%We will see that $f$ and $f_{\Quad}$ both achieve smaller test loss when training  with super-critical learning rates than the one with sub-critical learning rates,  which cannot be captured by $f_{\Lin}$.
% \chaoyue{be more specific. Not to say behaves similarly, but ``both achieves better test loss than in the sub-critical regime''.}

% Recall that as discussed in the introduction and Section~\ref{sec:catapult}, with sub-critical learning rates, the training dynamics of $f$ is close to $f_{\mathrm{lin}}$ and $f_{\mathrm{quad}}$;  with super-critical learning rates, the optimization of $f_{\Lin}$ will diverge but the training dynamics of $f(\rvw)$ are similar to $f_{\mathrm{quad}}(\rvw)$ in terms of the catapult phase. The similarity will be manifested in their generalization performance.
% \chaoyue{no need to recap with so many words. ``With a similar optimization dynamics, the neural quadratic model also resembles the neural network in its generalization performance''}

\begin{figure}[h]
\vspace{-20pt}
\centering
    \begin{subfigure}[t]{0.32\textwidth}
        \includegraphics[width=\linewidth]{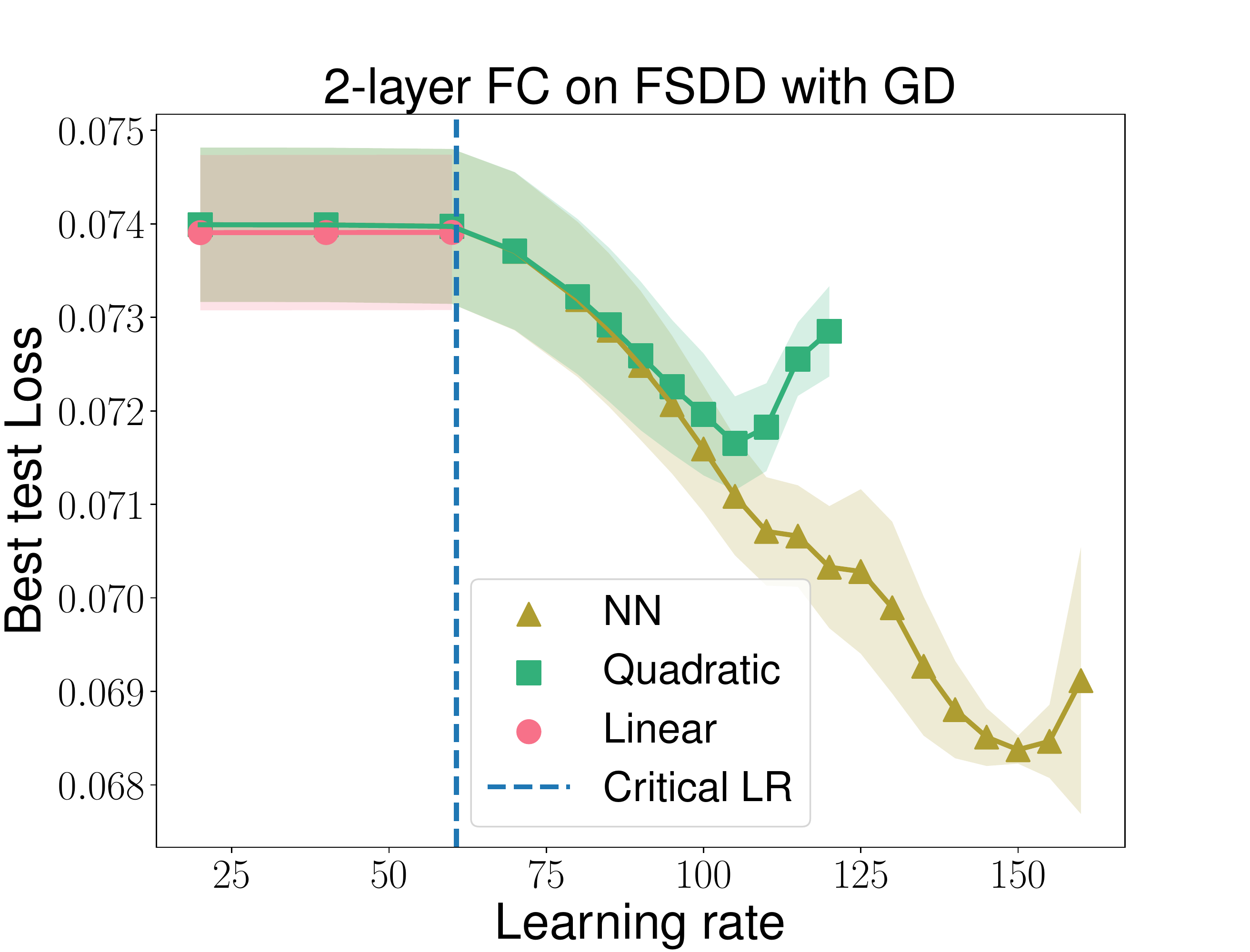}
    \end{subfigure}
    \begin{subfigure}[t]{0.32\textwidth}
        \includegraphics[width=\linewidth]{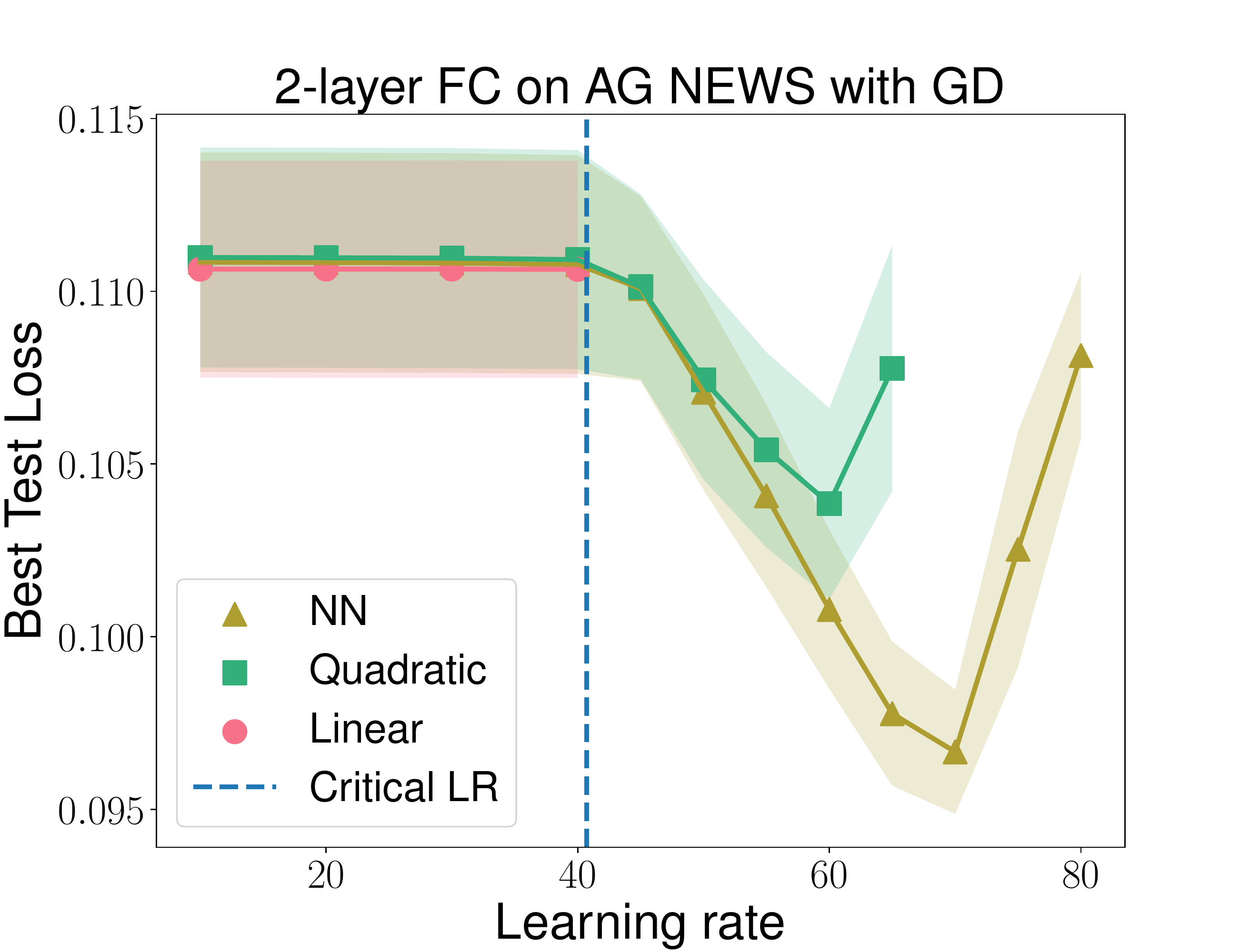} 
        % \caption{Loss (log scaled) of case 2}
    \end{subfigure}
        \begin{subfigure}[t]{0.32\textwidth}
        \includegraphics[width=\linewidth]{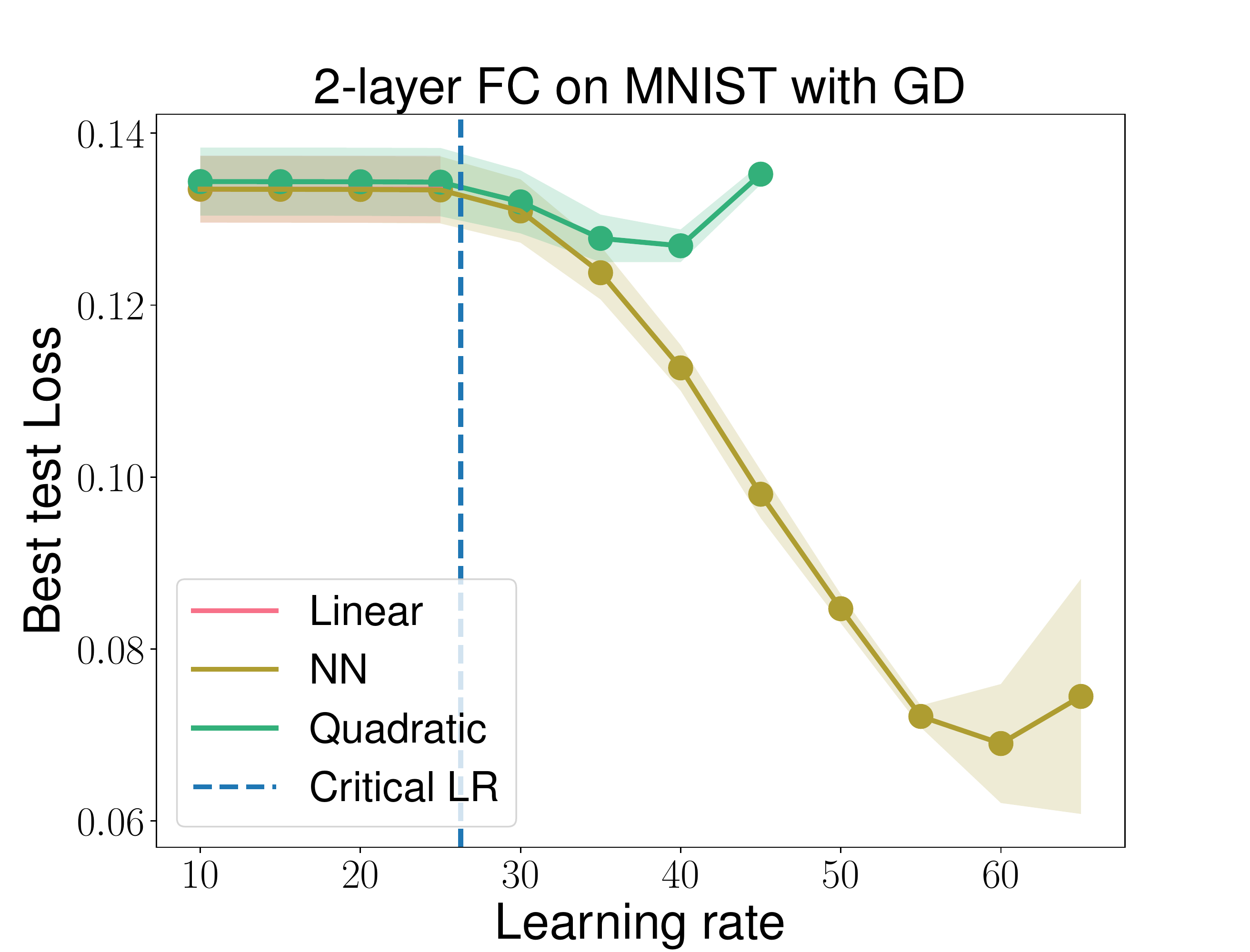} 
        % \caption{Tangent kernel norm of case 1}
    \end{subfigure}
    \begin{subfigure}[t]{0.32\textwidth}
        \includegraphics[width=\linewidth]{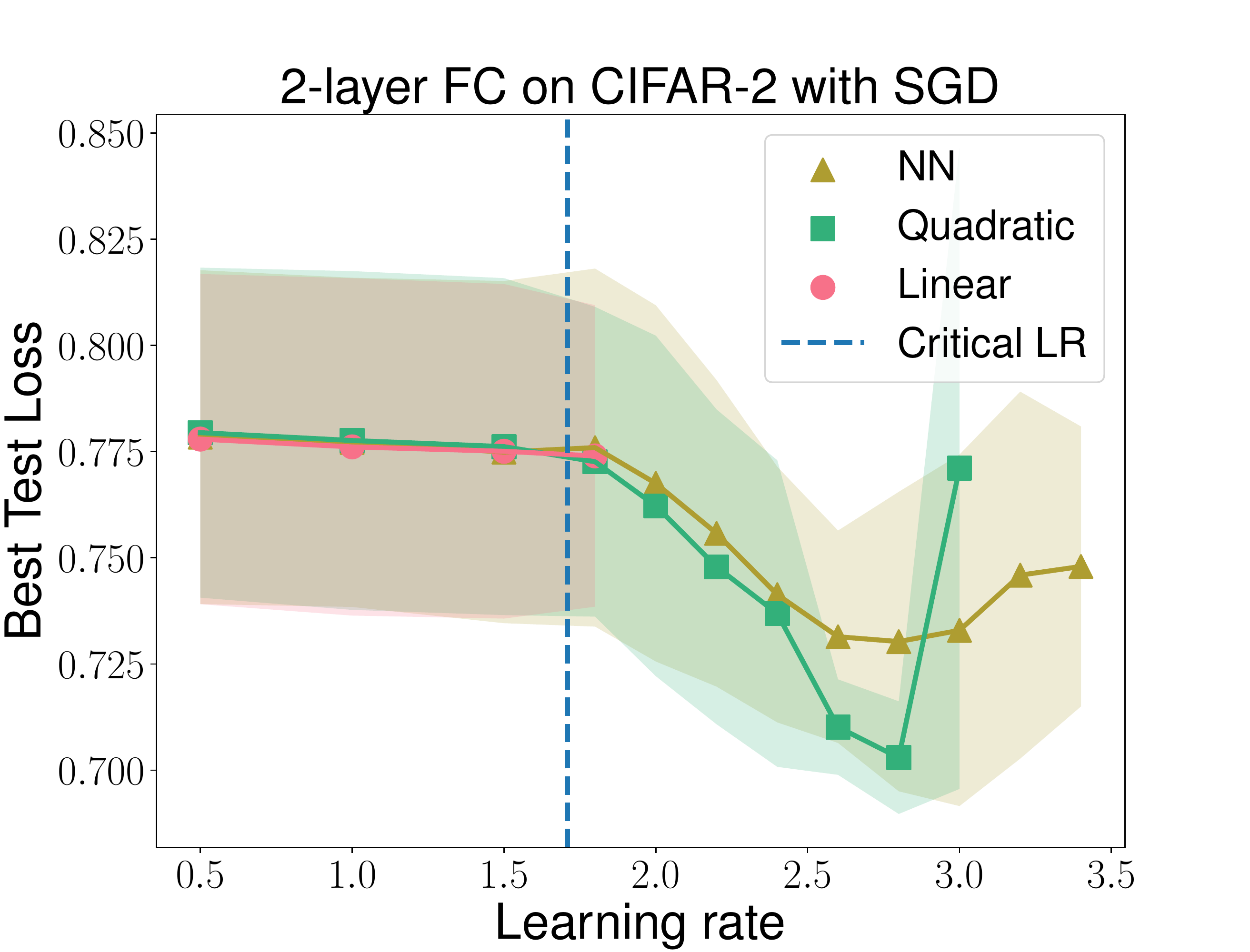} 
        % \caption{Tangent kernel norm of case 2}
    \end{subfigure}
        \begin{subfigure}[t]{0.32\textwidth}
        \includegraphics[width=\linewidth]{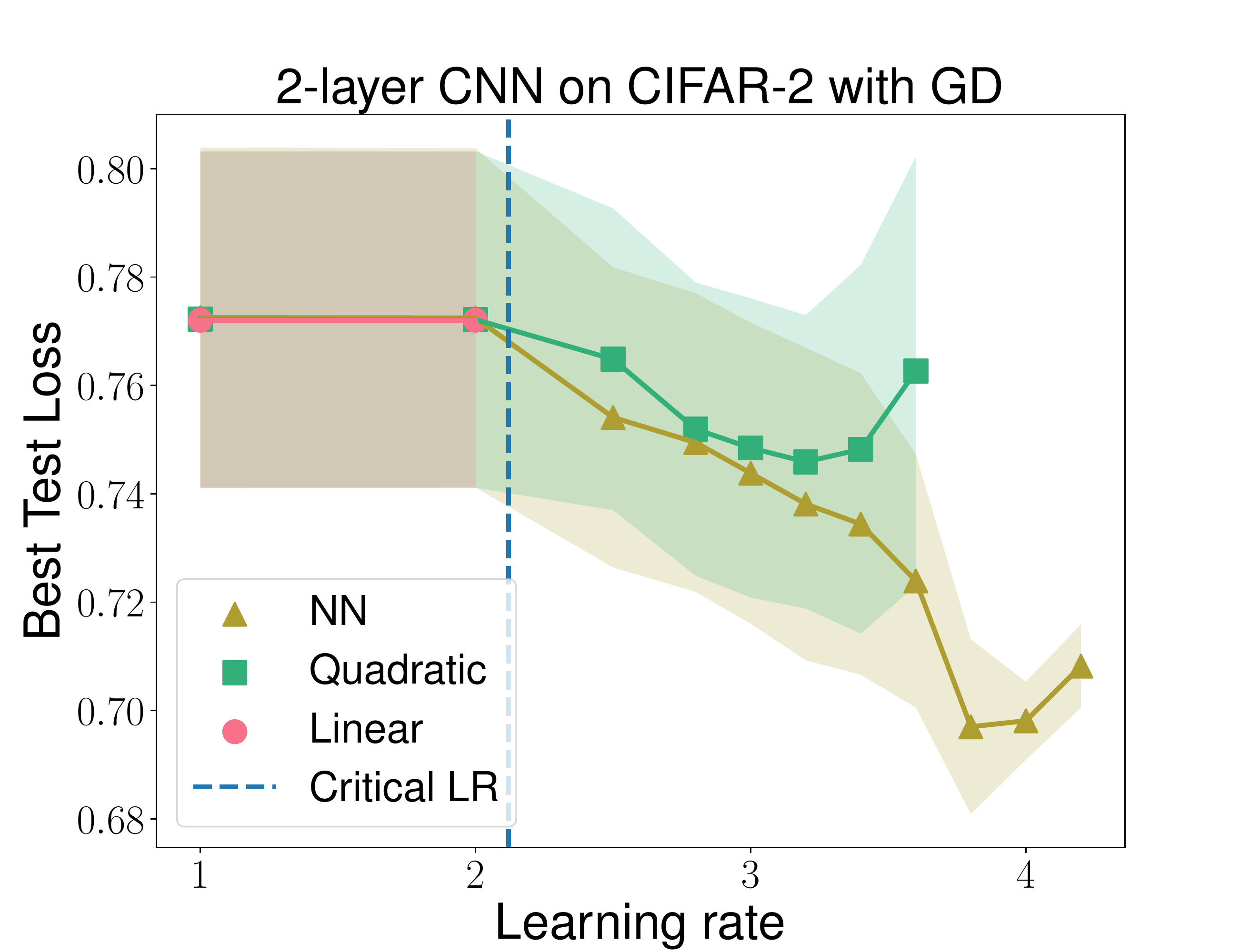} 
        % \caption{Loss (log scaled) of case 2}
    \end{subfigure}
    \begin{subfigure}[t]{0.32\textwidth}
        \includegraphics[width=\linewidth]{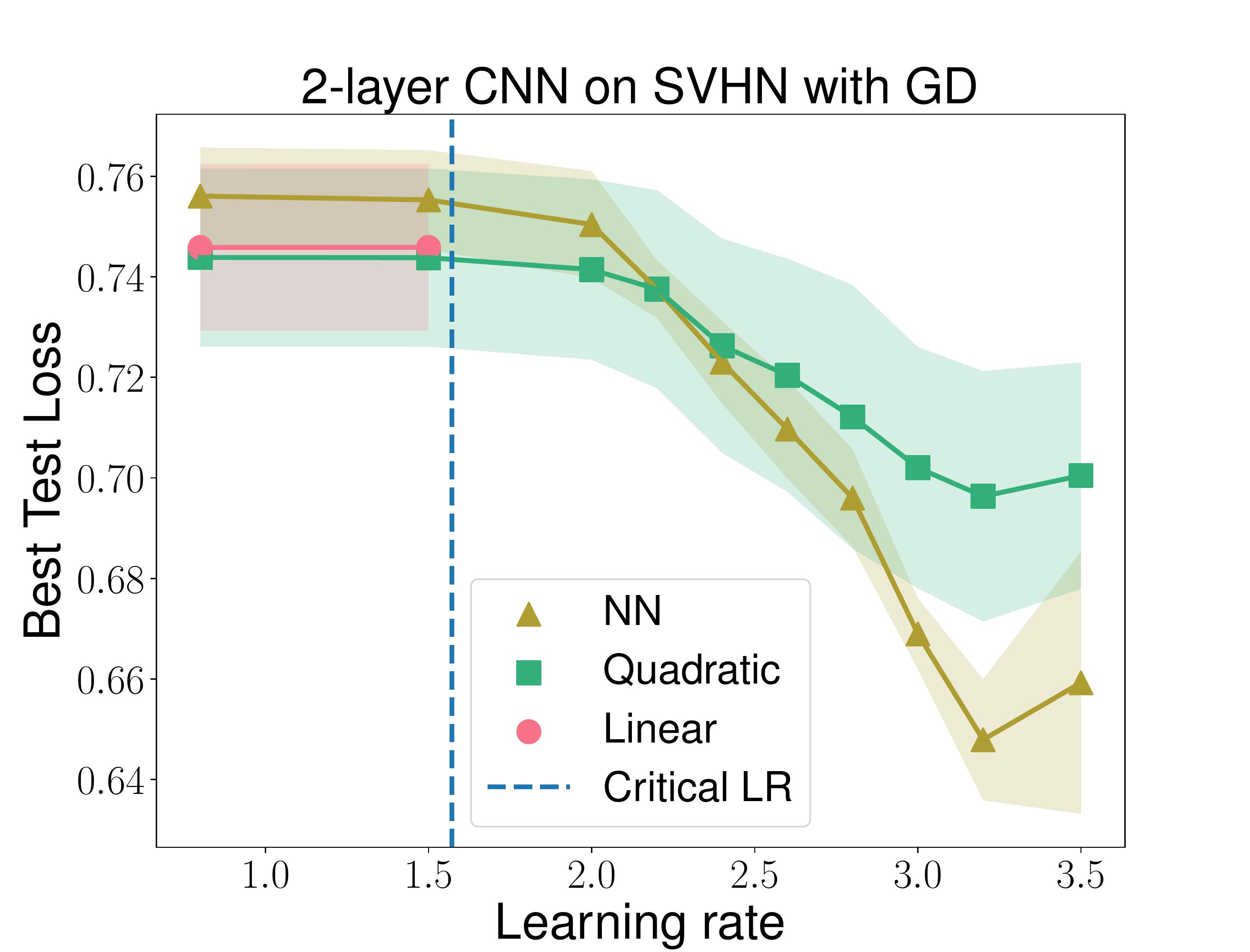} 
        % \caption{Tangent kernel norm of case 2}
    \end{subfigure}
     \caption{{\bf Best test loss plotted against different learning rates for $f(\rvw)$, $f_{\Lin}(\rvw)$ and $f_{\Quad}(\rvw)$ across a variety of datasets and network architectures.}}  \label{fig:generalization}
     \vspace{-12pt}
  \end{figure}

We implement our experiments on 3 vision datasets: CIFAR-2 (a 2-class subset of CIFAR-10~\citep{krizhevsky2009learning}), MNIST~\citep{lecun1998gradient}, and SVHN (The Street View House Numbers)~\citep{netzer2011reading}, 1 speech dataset: Free Spoken Digit dataset (FSDD)~\citep{fsdd} and 1 text dataset: AG NEWS~\citep{agnews}.
% \adit{Libin, I think you need to cite all these datasets.}

In all experiments, we train the models by minimizing the squared loss using
% , (1) $\L(\rvw) = \frac{1}{2}\|f(\rvw) - \vy \|^2$,
%     (2) $\L_{\mathrm{lin}}(\rvw) = \frac{1}{2}\|f_{\mathrm{lin}}(\rvw) - \vy \|^2$, 
%     (3) $\L_{\mathrm{quad}}(\rvw) = \frac{1}{2}\|f_\mathrm{quad}(\rvw) - \vy \|^2$, 
%     \chaoyue{I think there is no need to enumerate all the loss functions, since they are all square loss.}
 standard GD/SGD with constant learning rate $\eta$.  We report the best test loss achieved during the training process with each learning rate. Experimental details can be found in Appendix~\ref{subsec:exp_quad_setting}.   We also report the best test accuracy in Appendix~\ref{subsec:catapult_acc}. For networks with $3$ layers, see Appendix~\ref{subsec:3_layer_fc}.  From the experimental results,  we observe the following:
\vspace{-8pt}
\paragraph{Sub-critical learning rates.} 
In accordance with our theoretical analyses, we observe that all three models have nearly identical test loss for any sub-critical learning rate.
%\chaoyue{what do you mean by ``models are close''?}
Specifically, note that as the width $m$ increases, $f$ and $f_{\Quad}$ will transition to linearity in the ball $B(\rvw_0,R)$: 
\begin{align*}
    \|f - f_{\Lin}\| = \tilde{O}(1/\sqrt{m}),~~~ \|f_{\Quad} - f_{\Lin}\| = \tilde{O}(1/\sqrt{m}), 
\end{align*}
where $R>0$ is a constant which is large enough to contain the optimization path with respect to sub-critical learning rates.
%This transition is regardless of the input data, 
Thus, the generalization performance of these three models will be similar when $m$ is large, as shown in Figure~\ref{fig:generalization}.

\vspace{-5pt}
\paragraph{Super-critical learning rates.}
The best test loss of both $f(\rvw)$ and $f_{\Quad}(\rvw)$ is consistently smaller than the one with sub-critical learning rates, and decreases for an increasing learning rate in a range of values beyond $\etc$, which was observed for wide neural networks in~\cite{lewkowycz2020large}.

%when the learning rates become super-critical. 

%and keeps getting smaller with the learning rate increasing.

%For either $f(\rvw)$ or $f_{\Quad}(\rvw)$, beyond certain value of the learning rate, the best test loss will not become smaller by increasing the learning rate, and ultimately the optimization will diverge if $\eta > \etm$ ($\etm$ is in general different for $f$ and $f_{\Quad}$). 

%Note that as the width $m$ increases, $f$ and $f_{\Quad}$ will transition to linearity in the ball $B(\rvw_0,R)$: 
%\begin{align*}
%    \|f - f_{\Lin}\| = \tilde{O}(1/\sqrt{m}),~~~ \|f_{\Quad} - f_{\Lin}\| = \tilde{O}(1/\sqrt{m}), 
%\end{align*}
%where $R>0$ is a constant which is large enough to contain the optimization path with respect to sub-critical learning rates.
%This transition is regardless of the input data, thus the generalization performance of these three models will also be almost the same when $m$ is large, as shown in Figure~\ref{fig:generalization}).

As discussed in the introduction, with super-critical learning rates, both $f_{\Quad}$ and  $f$ can be observed to have catapult phase, while the loss of $f_{\Lin}$ diverges. Together with the similar behaviour of $f_{\Quad}$ and $f$ in generalization with super-critical learning rates, we believe NQMs are a better model to understand $f$ in training and testing dynamics, than the linear approximation $f_{\Lin}$.

% Additionally, we can see that among all the learning rates, the smallest best test loss for $f(\rvw)$ is typically less than that of $f_{\Quad}(\rvw)$. 

% And the turning point that the best test loss stops getting better with the increase of learning rates is larger for $f(\rvw)$ than that of $f_{\Quad}(\rvw)$. Note that the optimization of $f_{\Lin}$ diverges in this regime. 

In Figure~\ref{fig:generalization} we report the results for networks with ReLU activation function. We also implement the experiments using networks with Tanh and Swish~\citep{ramachandran2017searching} activation functions, and observe the same phenomena in generalization for $f$, $f_{\Lin}$ and $f_{\Quad}$ (See Appendix~\ref{subsec:tanh_swish}).

% In the literature, there have been some discussions about the generalization performance of wide neural networks and kernel machines~\cite{arora2019exact,lee2020finite}, where it is suggested that finite width neural networks possibly outperform corresponding kernel machines, but this conclusion has not been reached yet.  We believe the discrepancy is attributed to the higher order terms in  neural networks, therefore our NQM can be a useful tool to understand neural networks due to the similarity between them.
% \misha{Shorten}

% As implied by quadratic models, the second order term which kernel machines do not have can potentially improve their generalization performance in large learning rate regime. We infer that the superior generalization performance of wide neural networks possibly is attribute to their higher order terms. 
\vspace{-5pt}
\section{Summary and Discussion}
\vspace{-5pt}\paragraph{Summary.}In this paper, we use quadratic models as a tool to better understand optimization and generalization properties of finite width neural networks trained using large learning rates.  Notably, we prove that quadratic models exhibit properties of neural networks such as the catapult phase that cannot be explained using linear models, which importantly includes linear approximations to neural networks given by the neural tangent kernel.  Interestingly, we show empirically that quadratic models mimic the generalization properties of neural networks when trained with large learning rate, and that such models perform better than linearized neural networks.  
\vspace{-8pt}\paragraph{Future directions.}As quadratic models are more analytically tractable than finite width neural networks, these models open further avenues for understanding the good performance of finite width networks in practice.  In particular, one interesting direction of future work is to understand the change in the kernel corresponding to a trained quadratic model.  As we showed, training a quadratic model with large learning rate causes a decrease in the eigenvalues of the neural tangent kernel, and it would be interesting to understand the properties of this changed kernel that correspond with improved generalization.  Indeed, prior work~\cite{long2021properties} has analyzed the properties of the ``after kernel'' corresponding to finite width neural networks, and it would be interesting to observe whether similar properties hold for the kernel corresponding to trained quadratic models.  

Another interesting avenue of research is to understand whether quadratic models can be used for representation learning.  Indeed, prior work~\cite{yang2020feature} argues that networks in the neural tangent kernel regime do not learn useful representations of data through training.  As quadratic models trained with large learning rate can already exhibit nonlinear dynamics and better capture generalization properties of finite width networks, it would be interesting to understand whether such models learn useful representations of data as well.  
\newpage
\section*{Acknowledgements}
We thank Boris Hanin, Daniel A. Roberts and Sho Yaida for the discussion about quadratic models and catapults.
A.R. is funded by the George F. Carrier fellowship at Harvard School of Engineering and Applied Sciences. We are grateful for the support from the National Science Foundation (NSF) and the Simons Foundation for the Collaboration on the Theoretical Foundations of Deep Learning (\url{https://deepfoundations.ai/}) through awards DMS-2031883 and \#814639  and the TILOS institute (NSF CCF-2112665).
This work used NVIDIA V100 GPUs NVLINK and HDR IB (Expanse GPU) at SDSC Dell Cluster through allocation TG-CIS220009 and also, Delta system at the National Center for Supercomputing Applications through allocation bbjr-delta-gpu from the Advanced Cyberinfrastructure Coordination Ecosystem: Services \& Support (ACCESS) program, which is supported by National Science Foundation grants \#2138259, \#2138286, \#2138307, \#2137603, and \#2138296.
\bibliography{iclr2024_conference}
\bibliographystyle{iclr2024_conference}
\newpage

\appendix
\section*{Appendix}
\section{Derivation of NQM}\label{sec:derivation_nqm}
We will derive the NQM that approximate the two-layer fully connected ReLU activated neural networks based on Eq.~(\ref{eq:nn_quad}).

The first derivative of $f$ can be computed by:
\begin{align*}
    \frac{\partial f}{\partial \rvu_i} = \frac{1}{\sqrt{md}}v_i \mathbbm{1}_{\left\{\rvu_i^T \vx \geq 0\right\}}\vx^T, ~~~~\frac{\partial f}{\partial v_i} =\frac{1}{\sqrt{m}}\sigma\left(\frac{1}{\sqrt{d}}\rvu_i^T \vx\right),~~~ \forall i\in[m].
\end{align*}
% Hence gradient of $f$ takes the form $  \nabla f = \left(\frac{\partial f}{\partial \rvu_1},\cdots, \frac{\partial f}{\partial \rvu_m}, \frac{\partial f}{\partial v_1},\cdots,\frac{\partial f}{\partial v_m}\right)$.

And each entry of the Hessian of $f$, i.e., $H_f$, can be computed by
\begin{align*}
    \frac{\partial^2 f}{\partial \rvu_i^2} =\mathbf{0},~~\frac{\partial^2 f}{\partial v_i^2} =0,~~~\frac{\partial^2 f}{\partial \rvu_i v_i} = \frac{1}{\sqrt{md}} \mathbbm{1}_{\left\{\rvu_i^T \vx \geq 0\right\}}\vx^T,~~~~\forall i\in[m].
\end{align*}

Now we get $f_{\Quad}$ taking the following form
\begin{align}
    \mathbf{NQM:}~~f_{\mathrm{quad}}(\rvu,\rvv;\vx) 
    &= f(\rvu_0,\rvv_0;\vx) + \frac{1}{\sqrt{md}}\sum_{i=1}^m (\rvu_i - \rvu_{0,i})^T\vx \mathbbm{1}_{\left\{\rvu_{0,i}^T \vx \geq 0\right\}}v_{0,i}  +\frac{1}{\sqrt{m}}\sum_{i=1}^m (v_i - v_{0,i}) \sigma\left(\frac{1}{\sqrt{d}}\rvu_{0,i}^T\vx\right) \nonumber\\
    &~~~~+ \frac{1}{\sqrt{md}}\sum_{i=1}^m (\rvu_i - \rvu_{0,i})^T\vx \mathbbm{1}_{\left\{\rvu_{0,i}^T \vx \geq 0\right\}}(v_i-v_{0,i}).
\end{align}

\section{Derivation of dynamics equations}

For simplicity of notation, we denote $f_{\mathrm{quad}}$ by $g$.  Note that at initialization, the first and second derivatives of $f$ with respect to parameters are the same as those of $g$.

\subsection{Single training example}\label{subsec:deri:single}

The NQM can be equivalently written as:
\begin{align*}
    g(\rvu,\rvv;\vx) &= g(\rvu_0,\rvv_0;\vx) + \inner{\rvu - \rvu_0, \eval{\nabla_{\rvu}g(\rvu,\rvv;\vx)}_{\rvu = \rvu_0,\rvv = \rvv_0}} + \inner{\rvv - \rvv_0, \eval{\nabla_{\rvv}g(\rvu,\rvv;\vx)}_{\rvu = \rvu_0,\rvv = \rvv_0}}\\
    &~~~~+ \inner{\rvu - \rvu_0, \eval{\frac{\partial^2 g(\rvu,\rvv;\vx)}{\partial \rvu\partial \rvv}}_{\rvu = \rvu_0,\rvv = \rvv_0} (\rvv-\rvv_0)},
\end{align*}
since $\frac{\partial^2 g}{\partial \rvu^2} = 0$ and $\frac{\partial^2 g}{\partial \rvv^2} = 0$.

And the tangent kernel $\lambda(\rvu,\rvv;\vx)$ takes the form
\begin{align*}
    \lambda(\rvu,\rvv;\vx) &= \norm{\eval{\nabla_{\rvu}g(\rvu,\rvv;\vx)}_{\rvu = \rvu_0,\rvv = \rvv_0}+\eval{\frac{\partial^2 g(\rvu,\rvv;\vx)}{\partial \rvu\partial \rvv}}_{\rvu = \rvu_0}(\rvv-\rvv_0)}_F^2\\
    &~~~~+ \norm{\eval{\nabla_{\rvv}g(\rvu,\rvv;\vx)}_{\rvu = \rvu_0,\rvv = \rvv_0} + (\rvu-\rvu_0)^T\eval{\frac{\partial^2 g(\rvu,\rvv;\vx)}{\partial \rvu\partial \rvv}}_{\rvu = \rvu_0,\rvv = \rvv_0}}^2. 
\end{align*}

Here
\begin{align*}
   \eval{\nabla_{\rvu_i}g(\rvu,\rvv;\vx)}_{\rvu = \rvu_0,\rvv = \rvv_0} &= \frac{1}{\sqrt{md}}\sum_{i=1}^m v_{0,i} \mathbbm{1}_{\left\{\rvu_{0,i}^T \vx \geq 0\right\}}\vx, ~~~\forall i\in[m],\\
    \eval{\nabla_{\rvv}g(\rvu,\rvv;\vx)}_{\rvu = \rvu_0,\rvv = \rvv_0} &=\frac{1}{\sqrt{md}}\sigma\left(\rvu_0^T\vx\right).
\end{align*}

In the following, we will consider the dynamics of $g$ and $\lambda$ with GD, hence for simplicity of notations, we denote 
\begin{align*}
    \nabla_\rvu g(0) &:= \eval{\nabla_{\rvu}g(\rvu,\rvv;\vx)}_{\rvu = \rvu_0,\rvv = \rvv_0},\\
    \nabla_\rvv g(0) &:= \eval{\nabla_{\rvv}g(\rvu,\rvv;\vx)}_{\rvu = \rvu_0,\rvv = \rvv_0},\\
    \frac{\partial^2 g(0)}{\partial \rvu\partial \rvv} &:=\eval{\frac{\partial^2 g(\rvu,\rvv;\vx)}{\partial \rvu\partial \rvv}}_{\rvu = \rvu_0,\rvv = \rvv_0}.
\end{align*}

By gradient descent with learning rate $\eta$, at iteration $t$, we have the update equations for weights $\rvu$ and $\rvv$:
\begin{align*}
    \rvu(t+1) &= \rvu(t) - \eta(g(t)-y)\left(\nabla_{\rvu}g(0) + \frac{\partial^2 g(0)}{\partial \rvu\partial \rvv}(\rvv(t)-\rvv(0))\right),\\
    \rvv(t+1) &= \rvv(t) - \eta(g(t)-y)\left(\nabla_{\rvv}g(0) + (\rvu(t)-\rvu(0))^T\frac{\partial^2 g(0)}{\partial \rvu\partial \rvv}\right).
\end{align*}

Then we plug them in the expression of $\lambda(t+1)$ and we get
\begin{align*}
    \lambda(t+1) &= \norm{\nabla_{\rvu}g(0)+\frac{\partial^2 g(0)}{\partial \rvu\partial \rvv}(\rvv(t+1)-\rvv(0))}_F^2+ \norm{\nabla_{\rvv}g(0) + (\rvu(t+1)-\rvu(0))^T\frac{\partial^2 g(0)}{\partial \rvu\partial \rvv}}^2\\
    &= \norm{\nabla_{\rvu}g(0)+\frac{\partial^2 g(0)}{\partial \rvu\partial \rvv}\left(\rvv(t) - \eta(g(t)-y)\left(\nabla_{\rvv}g(0) + (\rvu(t)-\rvu(0))^T\frac{\partial^2 g(0)}{\partial \rvu\partial \rvv}\right)-\rvv(0)\right)}_F^2\\
    &~~~~+ \norm{\nabla_{\rvv}g(0) + \left(\rvu(t) - \eta(g(t)-y)\left(\nabla_{\rvu}g(0) + \frac{\partial^2 g(0)}{\partial \rvu\partial \rvv}(\rvv(t)-\rvv(0))\right)-\rvu(0)\right)^T\frac{\partial^2 g(0)}{\partial \rvu\partial \rvv}}^2 \\
    &= \lambda(t) + \eta^2(g(t)-y)^2\norm{\frac{\partial^2 g(0)}{\partial \rvu\partial \rvv}\left(\nabla_{\rvv}g(0) + (\rvu(t)-\rvu(0))^T\frac{\partial^2 g(0)}{\partial \rvu\partial \rvv}\right)}_F^2 \\
    &~~~~+\eta^2(g(t)-y)^2\norm{\left(\nabla_{\rvu}g(0) + \frac{\partial^2 g(0)}{\partial \rvu\partial \rvv}(\rvv(t)-\rvv(0))\right)^T\frac{\partial^2 g(0)}{\partial \rvu\partial \rvv} }^2\\
    &~~~~-2\eta(g(t)-y) \inner{ \nabla_{\rvu}g(0) + \frac{\partial^2 g(0)}{\partial \rvu\partial \rvv}(\rvv(t)-\rvv(0)), \frac{\partial^2 g(0)}{\partial \rvu\partial \rvv}\left(\nabla_{\rvv}g(0) + (\rvu(t)-\rvu(0))^T\frac{\partial^2 g(0)}{\partial \rvu\partial \rvv}\right)}\\
    &~~~~-2\eta(g(t)-y) \inner{\nabla_{\rvv}g(0) + (\rvu(t)-\rvu(0))^T\frac{\partial^2 g(0)}{\partial \rvu\partial \rvv}, \left(\nabla_{\rvu}g(0) + \frac{\partial^2 g(0)}{\partial \rvu\partial \rvv}(\rvv(t)-\rvv(0))\right)^T\frac{\partial^2 g(0)}{\partial \rvu\partial \rvv}}.
\end{align*}

Due to the structure of $\frac{\partial^2 g(0)}{\partial \rvu\partial \rvv}$, we have
\begin{align*}
   \norm{\frac{\partial^2 g(0)}{\partial \rvu\partial \rvv}\left(\nabla_{\rvv}g(0) + (\rvu(t)-\rvu(0))^T\frac{\partial^2 g(0)}{\partial \rvu\partial \rvv}\right)}_F^2  &= \frac{\|\vx\|^2}{md} \norm{\nabla_{\rvv}g(0) + (\rvu(t)-\rvu(0))^T\frac{\partial^2 g(0)}{\partial \rvu\partial \rvv}}^2\\
   &= \frac{\|\vx\|^2}{md}\norm{\nabla_{\rvv}g(t)}^2,
\end{align*}
and
\begin{align*}
    \norm{\left(\nabla_{\rvu}g(0) + \frac{\partial^2 g(0)}{\partial \rvu\partial \rvv}(\rvv(t)-\rvv(0))\right)^T\frac{\partial^2 g(0)}{\partial \rvu\partial \rvv} }^2 &=\frac{\|\vx\|^2}{md} \norm{\nabla_{\rvu}g(0) + \frac{\partial^2 g(0)}{\partial \rvu\partial \rvv}(\rvv(t)-\rvv(0))}_F^2\\
    &= \frac{\|\vx\|^2}{md}\norm{\nabla_{\rvu}g(t)}_F^2.
\end{align*}

Furthermore,
\begin{align*}
    &\inner{ \nabla_{\rvu}g(0) + \frac{\partial^2 g(0)}{\partial \rvu\partial \rvv}(\rvv(t)-\rvv(0)), \frac{\partial^2 g(0)}{\partial \rvu\partial \rvv}\left(\nabla_{\rvv}g(0) + (\rvu(t)-\rvu(0))^T\frac{\partial^2 g(0)}{\partial \rvu\partial \rvv}\right)}\\
    &~~= \frac{\|\vx\|^2}{md}\inner{\rvv(t)-\rvv(0),\nabla_\rvv g(0)} + \frac{\|\vx\|^2}{md}\inner{\nabla_\rvu g(0), \rvu(t)-\rvu(0)} + \inner{\nabla_\rvu g(0), \frac{\partial^2 g(0)}{\partial \rvu\partial \rvv}\nabla_{\rvv}g(0)} \\
    &~~~~+ \inner{\frac{\partial^2 g(0)}{\partial \rvu\partial \rvv}(\rvv(t)-\rvv(0)),\frac{\partial^2 g(0)}{\partial \rvu\partial \rvv}(\rvu(t)-\rvu(0))^T\frac{\partial^2 g(0)}{\partial \rvu\partial \rvv}} \\
    &~~= \frac{\|\vx\|^2}{md}\inner{\rvv(t)-\rvv(0),\nabla_\rvv g(0)} + \frac{\|\vx\|^2}{md}\inner{\nabla_\rvu g(0), \rvu(t)-\rvu(0)} + g(0)+ \frac{\|\vx\|^2}{md}\inner{\rvv(t)-\rvv(0),\frac{\partial^2 g(0)}{\partial \rvu\partial \rvv}(\rvu(t)-\rvu(0))^T} \\
    &~~=g(t){\|\vx\|^2}/{md} .
\end{align*}

Similarly, we have
\begin{align*}
     \inner{\nabla_{\rvv}g(0) + (\rvu(t)-\rvu(0))^T\frac{\partial^2 g(0)}{\partial \rvu\partial \rvv}, \left(\nabla_{\rvu}g(0) + \frac{\partial^2 g(0)}{\partial \rvu\partial \rvv}(\rvv(t)-\rvv(0))\right)^T\frac{\partial^2 g(0)}{\partial \rvu\partial \rvv}} = g(t){\|\vx\|^2}/{md} .
\end{align*}

As a result,
\begin{align*}
     \lambda(t+1) &= \lambda(t) + \frac{\|\vx\|^2}{md}\eta^2(g(t)-y)^2\lambda(t) - \frac{4\|\vx\|^2}{md}\eta(g(t)-y)g(t)\\
     &= \lambda(t) +\eta\frac{\|\vx\|^2}{md}(g(t)-y)^2\left(\eta\lambda(t) - 4\frac{g(t)}{g(t)-y}\right).
\end{align*}

For $g$, we plug the update equations for $\rvu$ and $\rvv$ in the expression of $g(t+1)$ and we can get
\begin{align*}
    g(t+1) &= g(0) + \inner{\rvu(t+1)-\rvu(0), \nabla_\rvu g(0)} + \inner{\rvv(t+1)-\rvv(0),\nabla_\rvv g(0)}\\
    &~~~+ \inner{\rvu(t+1)-\rvu(0), \frac{\partial^2 g(0)}{\partial \rvu\partial \rvv}(\rvv(t+1)-\rvv(0)}\\
    &= g(0) + \inner{\rvu(t) - \eta(g(t)-y)\left(\nabla_{\rvu}g(0) + \frac{\partial^2 g(0)}{\partial \rvu\partial \rvv}(\rvv(t)-\rvv(0))\right) - \rvu(0), \nabla_\rvu g(0)}\\
    &~~~+ \inner{\rvv(t) - \eta(g(t)-y)\left(\nabla_{\rvv}g(0) + (\rvu(t)-\rvu(0))^T\frac{\partial^2 g(0)}{\partial \rvu\partial \rvv}\right)-\rvv(0), \nabla_\rvv g(0)} \\
    &~~~+ \left\langle \rvu(t) - \eta(g(t)-y)\left(\nabla_{\rvu}g(0) + \frac{\partial^2 g(0)}{\partial \rvu\partial \rvv}(\rvv(t)-\rvv(0))\right) - \rvu(0)\right.,\\
    &~~~~~~\left.\frac{\partial^2 g(0)}{\partial \rvu\partial \rvv}\left(\rvv(t) - \eta(g(t)-y)\left(\nabla_{\rvv}g(0) + (\rvu(t)-\rvu(0))^T\frac{\partial^2 g(0)}{\partial \rvu\partial \rvv}\right)-\rvv(0) \right)\right\rangle\\
    &= g(t) - \eta(g(t)-y)\inner{\nabla_{\rvu}g(0) + \frac{\partial^2 g(0)}{\partial \rvu\partial \rvv}(\rvv(t)-\rvv(0)),\nabla_\rvu g(0) }\\
    &~~~-\eta(g(t)-y) \inner{\nabla_{\rvv}g(0) +(\rvu(t)-\rvu(0))^T\frac{\partial^2 g(0)}{\partial \rvu\partial \rvv},\nabla_\rvv g(0)}\\
    &~~~+\eta^2(g(t)-y)^2\inner{\nabla_{\rvu}g(0) + \frac{\partial^2 g(0)}{\partial \rvu\partial \rvv}(\rvv(t)-\rvv(0)),\frac{\partial^2 g(0)}{\partial \rvu\partial \rvv}\left(\nabla_{\rvv}g(0) + (\rvu(t)-\rvu(0))^T\frac{\partial^2 g(0)}{\partial \rvu\partial \rvv}\right) }\\
    &~~~-\eta(g(t)-y)\inner{\rvu(t)-\rvu(0),\frac{\partial^2 g(0)}{\partial \rvu\partial \rvv}\left(\nabla_{\rvv}g(0) + (\rvu(t)-\rvu(0))^T\frac{\partial^2 g(0)}{\partial \rvu\partial \rvv}\right) }\\
    &~~~~-\eta(g(t)-y)\inner{\nabla_{\rvu}g(0) + \frac{\partial^2 g(0)}{\partial \rvu\partial \rvv}(\rvv(t)-\rvv(0)), \frac{\partial^2 g(0)}{\partial \rvu\partial \rvv}(\rvv(t)-\rvv(0))} \\
    &= g(t) - \eta(g(t)-y)\lambda(t) \\
    &~~~+\eta^2(g(t)-y)^2\inner{\nabla_{\rvu}g(0) + \frac{\partial^2 g(0)}{\partial \rvu\partial \rvv}(\rvv(t)-\rvv(0)),\frac{\partial^2 g(0)}{\partial \rvu\partial \rvv}\left(\nabla_{\rvv}g(0) + (\rvu(t)-\rvu(0))^T\frac{\partial^2 g(0)}{\partial \rvu\partial \rvv}\right) }\\
    &= g(t) - \eta(g(t)-y)\lambda(t) + \frac{\|\vx\|^2}{md}\eta^2(g(t)-y)^2 g(t)
\end{align*}
Therefore,
\begin{align*}
    g(t+1)- y = \left(1-\eta\lambda(t) + \frac{\|\vx\|^2}{md}\eta^2(g(t)-y)g(t)\right)(g(t)-y).
\end{align*}

% By gradient descent with learning rate $\eta$, at step $t$, we have the update equations for weights $\rvu$ and $\rvv$:
% \begin{align*}
%     \rvu(t+1)  &=\rvu(t) -\eta(g(t)-y)\left(\frac{1}{\sqrt{md}}\sum_{i=1}^m v_i(t) \mathbbm{1}_{\left\{\rvu_i(0)^T \vx \geq 0\right\}}\vx\right),\\
%     \rvv(t+1) &=\rvv(t) -\eta(g(t)-y)\left(\frac{1}{\sqrt{md}}\sum_{i=1}^m (\rvu_i(t))^T\vx \mathbbm{1}_{\left\{\rvu_i(0)^T \vx \geq 0\right\}}\right).
% \end{align*}

% We substitute them in the expression of $\lambda(t+1)$ and we get
% \begin{align*}
%     \lambda(t+1) = \frac{1}{md}\sum_{i=1}^m \left(\rvu_i(t+1)^T\vx \mathbbm{1}_{\left\{\rvu_i(0)^T \vx \geq 0\right\}}\right)^2+ \frac{1}{md} \sum_{i=1}^m(v_i(t+1))^2 \|\vx\|^2 \left( \mathbbm{1}_{\left\{\rvu_i(0)^T \vx \geq 0\right\}}\right)^2.
% \end{align*}
% \begin{align}\label{eq:ntk_one_data}
%     \lambda(t) &= \|\nabla_\rvv g(t)\|^2 +\|\nabla_\rvu g(t)\|^2 \nonumber \\
%     &=\frac{1}{md}\sum_{i=1}^m \left(\rvu_i(t)^T\vx \mathbbm{1}_{\left\{\rvu_i(0)^T \vx \geq 0\right\}}\right)^2+ \frac{1}{md} \sum_{i=1}^m(v_i(t))^2 \|\vx\|^2 \left( \mathbbm{1}_{\left\{\rvu_i(0)^T \vx \geq 0\right\}}\right)^2.
% \end{align}
\subsection{Multiple training examples}\label{subsec:deri:multi}
We follow the similar notation on the first and second order derivative of $g$ with Appendix~\ref{subsec:deri:single}. Specifically, for $k\in[n]$, we denote
\begin{align*}
    \nabla_\rvu g_k(0) &:= \eval{\nabla_{\rvu}g(\rvu,\rvv;x_k)}_{\rvu = \rvu_0,\rvv = \rvv_0},\\
    \nabla_\rvv g_k(0) &:= \eval{\nabla_{\rvv}g(\rvu,\rvv;x_k)}_{\rvu = \rvu_0,\rvv = \rvv_0},\\
    \frac{\partial^2 g_k(0)}{\partial \rvu\partial \rvv} &:=\eval{\frac{\partial^2 g(\rvu,\rvv;x_k)}{\partial \rvu\partial \rvv}}_{\rvu = \rvu_0,\rvv = \rvv_0}.
\end{align*}

By GD with learning rate $\eta$, we have the update equations for weights $\rvu$ and $\rvv$ at iteration $t$:
\begin{align*}
    \rvu(t+1) &= \rvu(t) - \eta\sum_{k=1}^n (g_k(t) - y_k)\left(\nabla_\rvu g_k(0) + \frac{\partial^2 g_k(0)}{\partial \rvu \partial \rvv}(\rvv(t)-\rvv(0))\right),\\
    \rvv(t+1) &= \rvv(t) - \eta\sum_{k=1}^n (g_k(t) - y_k)\left(\nabla_\rvv g_k(0) + (\rvu(t)-\rvu(0))^T\frac{\partial^2 g_k(0)}{\partial \rvu \partial \rvv}\right).
\end{align*}

We consider the evolution of $K(t)$ first. 
\begin{align*}
    K_{i,j}(t+1) &= \inner{\nabla_\rvu g_i(0) + \frac{\partial^2 g_i(0)}{\partial \rvu \partial \rvv}(\rvv(t+1)-\rvv(0)),\nabla_\rvu g_j(0) + \frac{\partial^2 g_j(0)}{\partial \rvu \partial \rvv}(\rvv(t+1)-\rvv(0))}\\
    &~~~+ \inner{\nabla_\rvv g_i(0) + (\rvu(t+1)-\rvu(0))^T\frac{\partial^2 g_i(0)}{\partial \rvu \partial \rvv},\nabla_\rvv g_j(0) + (\rvu(t+1)-\rvu(0))^T\frac{\partial^2 g_j(0)}{\partial \rvu \partial \rvv}}\\
    &= K_{i,j}(t) - \left\langle\eta \frac{\partial^2 g_i(0)}{\partial \rvu \partial \rvv}\sum_{k=1}^n(g_k(t)-y_k)\left(\nabla_\rvv g_k(0) + (\rvu(t)-\rvu(0))^T\frac{\partial^2 g_k(0)}{\partial \rvu \partial \rvv}\right)\right.,\\
    &~~~~~~~~~~~~~~~~~~~~~~~\left.\nabla_\rvu g_j(0) + \frac{\partial^2 g_j(0)}{\partial \rvu \partial \rvv}(\rvv(t)-\rvv(0)) \right\rangle\\
    &~~~-\inner{\eta \frac{\partial^2 g_j(0)}{\partial \rvu \partial \rvv}\sum_{k=1}^n(g_k(t)-y_k)\left(\nabla_\rvv g_k(0) + (\rvu(t)-\rvu(0))^T\frac{\partial^2 g_k(0)}{\partial \rvu \partial \rvv}\right),  \nabla_\rvu g_i(0) + \frac{\partial^2 g_i(0)}{\partial \rvu \partial \rvv}(\rvv(t)-\rvv(0)) }\\
        &~~~+ \left\langle\eta \frac{\partial^2 g_i(0)}{\partial \rvu \partial \rvv}\sum_{k=1}^n(g_k(t)-y_k)\left(\nabla_\rvv g_k(0) + (\rvu(t)-\rvu(0))^T\frac{\partial^2 g_k(0)}{\partial \rvu \partial \rvv}\right),\right.\\
    &~~~~~~~~~\left.\eta \frac{\partial^2 g_j(0)}{\partial \rvu \partial \rvv}\sum_{k=1}^n(g_k(t)-y_k)\left(\nabla_\rvv g_k(0) + (\rvu(t)-\rvu(0))^T\frac{\partial^2 g_k(0)}{\partial \rvu \partial \rvv}\right)\right\rangle\\
     &~~~-\inner{\eta \frac{\partial^2 g_j(0)}{\partial \rvu \partial \rvv}\sum_{k=1}^n(g_k(t)-y_k)\left(\nabla_\rvu g_k(0) + \frac{\partial^2 g_k(0)}{\partial \rvu \partial \rvv}(\rvv(t)-\rvv(0))\right),  \nabla_\rvv g_i(0) + (\rvu(t)-\rvu(0))^T\frac{\partial^2 g_i(0)}{\partial \rvu \partial \rvv} }\\
    &~~~-\inner{\eta \frac{\partial^2 g_i(0)}{\partial \rvu \partial \rvv}\sum_{k=1}^n(g_k(t)-y_k)\left(\nabla_\rvu g_k(0) + \frac{\partial^2 g_k(0)}{\partial \rvu \partial \rvv}(\rvv(t)-\rvv(0))\right),  \nabla_\rvv g_j(0) + (\rvu(t)-\rvu(0))^T\frac{\partial^2 g_j(0)}{\partial \rvu \partial \rvv} }\\
 &~~~+ \left\langle\eta \frac{\partial^2 g_i(0)}{\partial \rvu \partial \rvv}\sum_{k=1}^n(g_k(t)-y_k)\left(\nabla_\rvu g_k(0) + \frac{\partial^2 g_k(0)}{\partial \rvu \partial \rvv}(\rvv(t)-\rvv(0))\right),\right.\\
    &~~~~~~~~~\left.\eta \frac{\partial^2 g_j(0)}{\partial \rvu \partial \rvv}\sum_{k=1}^n(g_k(t)-y_k)\left(\nabla_\rvu g_k(0) + \frac{\partial^2 g_k(0)}{\partial \rvu \partial \rvv}(\rvv(t)-\rvv(0))\right)\right\rangle.
    \end{align*}

We separate the data into two sets according to their sign:
\begin{align*}
\S_+ := \{i: x_i\geq 0, i\in[n]\},~~~~~\S_- := \{i: x_i<0, i\in[n]\}.
\end{align*}

We consider two scenarios: (1) $x_i$ and $x_j$ have different signs; (2) $x_i$ and $x_j$ have the same sign.

\paragraph{(1)}
With simple calculation, we get if $x_i$ and $x_j$ have different signs, i.e., $i\in\S_+,j\in\S_-$ or $i\in\S_-, j\in\S_+$, 
\begin{align*}
    \frac{\partial^2 g_i(0)}{\partial \rvu \partial \rvv}\frac{\partial^2 g_j(0)}{\partial \rvu \partial \rvv} = 0, ~~~\frac{\partial^2 g_i(0)}{\partial \rvu \partial \rvv}\nabla_\rvu g_j(0) = 0,~~~\frac{\partial^2 g_i(0)}{\partial \rvu \partial \rvv}\nabla_\rvv g_j(0) = 0.
\end{align*}

Without lose of generality, we assume $i\in\S_+$, $j\in\S_-$. Then we have
\begin{align*}
    K_{i,j}(t+1) &= K_{i,j}(t).
\end{align*}

\paragraph{(2)}If  $x_i$ and $x_j$ have the same sign, i.e., $i,j\in \S_+$ or $i,j\in \S_-$,
\begin{align*}
    \frac{\partial^2 g_i(0)}{\partial \rvu \partial \rvv}\frac{\partial^2 g_j(0)}{\partial \rvu \partial \rvv} = \frac{1}{\sqrt{m}} \frac{\partial^2 g_i(0)}{\partial \rvu \partial \rvv}x_j,~~~ \frac{\partial^2 g_i(0)}{\partial \rvu \partial \rvv}\nabla_\rvu g_j(0) = \frac{1}{\sqrt{m}}\nabla_\rvu g_i(0)x_j,~~~ \frac{\partial^2 g_i(0)}{\partial \rvu \partial \rvv}\nabla_\rvv g_j(0) = \frac{1}{\sqrt{m}}\nabla_\rvv g_i(0)x_j.
\end{align*}

For $i,j\in\S_+$, we have
\begin{align*}
    K_{i,j}(t+1) 
    &= K_{i,j}(t)-\frac{2\eta}{\sqrt{m}} \sum_{k\in\S_+} (g_k(t)-y_k)x_i\left\langle\nabla_\rvv g_k(0) + (\rvu(t)-\rvu(0))^T\frac{\partial^2 g_k(0)}{\partial \rvu \partial \rvv}\right.,\\
    &~~~~~~~~~~~~~~~~~~~~~~~~~~~~~~~~~~~~~~~~~~~~~~~~~~~~~~~~~~~~~~~~~\left.\nabla_\rvu g_j(0) + \frac{\partial^2 g_j(0)}{\partial \rvu \partial \rvv}(\rvv(t)-\rvv(0))\right\rangle\\
    &~~~ -\frac{2\eta}{\sqrt{m}} \sum_{k\in\S_+} (g_k(t)-y_k)x_i\inner{\nabla_\rvu g_k(0) + \frac{\partial^2 g_k(0)}{\partial \rvu \partial \rvv}(\rvv(t)-\rvv(0)),\nabla_\rvv g_j(0) + (\rvu(t)-\rvu(0))^T\frac{\partial^2 g_j(0)}{\partial \rvu \partial \rvv}}\\
    &~~~~+ \frac{\eta^2}{m} x_ix_j\norm{\sum_{k\in\S_+}(g_k(t)-y_k)\left(\nabla_\rvv g_k(0) + (\rvu(t)-\rvu(0))^T\frac{\partial^2 g_k(0)}{\partial \rvu \partial \rvv}\right)}^2 \\
    &~~~~+ \frac{\eta^2}{m} x_ix_j\norm{\sum_{k\in\S_+}(g_k(t)-y_k)\left(\nabla_\rvu g_k(0) + \frac{\partial^2 g_k(0)}{\partial \rvu \partial \rvv}(\rvv(t)-\rvv(0))\right)}^2 \\
    &= K_{i,j}(t) - \frac{4\eta}{m}x_ix_j\sum_{k\in\S_+}(g_k(t)-y_k) g_k(t) + \frac{\eta^2}{m}x_ix_j \left((\rvg(t)-\vy)\odot\vm_+\right)^T K(t) \left((\rvg(t)-\vy)\odot\vm_+\right)\\
    &= K_{i,j}(t) - \frac{4\eta}{m}x_i x_j \left((\rvg(t)-\vy)\odot \vm_+\right)^T\left(\rvg(t)\odot \vm_+\right)\\
    &~~~~+ \frac{\eta^2}{m}x_ix_j \left((\rvg(t)-\vy)\odot\vm_+\right)^T K(t) \left((\rvg(t)-\vy)\odot\vm_+\right).
\end{align*}

Similarly, for $i,j\in\S_-$, we have
\begin{align*}
     K_{i,j}(t+1) &= K_{i,j}(t) - \frac{4\eta}{m}x_i x_j \left((\rvg(t)-\vy)\odot \vm_-\right)^T\left(\rvg(t)\odot \vm_-\right)\\
     &~~~~~+ \frac{\eta^2}{m}x_ix_j \left((\rvg(t)-\vy)\odot\vm_-\right)^T K(t) \left((\rvg(t)-\vy)\odot\vm_-\right).
\end{align*}

Combining the results together, we have
\begin{align*}
    K(t+1) &= K(t) + \frac{\eta^2}{m} \left((\rvg(t)-\vy)\odot\vm_+\right)^T K(t) \left((\rvg(t)-\vy)\odot\vm_+\right) \vp_1\vp_1^T\\
    &~~~+ \frac{\eta^2}{m} \left((\rvg(t)-\vy)\odot\vm_-\right)^T K(t) \left((\rvg(t)-\vy)\odot\vm_-\right) \vp_2\vp_2^T\\
    &~~~- \frac{4\eta}{m} \left((\rvg(t)-\vy)\odot \vm_+\right)^T\left(\rvg(t)\odot \vm_+\right)\vp_1\vp_1^T\\
    &~~~- \frac{4\eta}{m}\left((\rvg(t)-\vy)\odot \vm_-\right)^T\left(\rvg(t)\odot \vm_-\right)\vp_2\vp_2^T.
\end{align*}

Now we derive the evolution of $\rvg(t)-\vy$. Suppose $i\in\S_+$. Then we have

\begin{align*}
     g_i(t+1) &= g_i(0) + \inner{\rvu(t+1)-\rvu(0), \nabla_\rvu g_i(0)} + \inner{\rvv(t+1)-\rvv(0),\nabla_\rvv g_i(0)} \\
     &~~~~~+\inner{\rvu(t+1)-\rvu(0), \frac{\partial^2 g_i(0)}{\partial \rvu\partial \rvv}(\rvv(t+1)-\rvv(0)}\\
     &= g_i(t) -\eta\inner{\sum_{k=1}^n (g_k(t) - y_k)\left(\nabla_\rvu g_k(0) + \frac{\partial^2 g_k(0)}{\partial \rvu \partial \rvv}(\rvv(t)-\rvv(0))\right),\nabla_\rvu g_i(0)}\\
     &~~~-\eta\inner{\sum_{k=1}^n (g_k(t) - y_k)\left(\nabla_\rvv g_k(0) + (\rvu(t)-\rvu(0))^T\frac{\partial^2 g_k(0)}{\partial \rvu \partial \rvv}\right),\nabla_\rvv g_i(0)}\\
     &~~~ -\eta\inner{\sum_{k=1}^n (g_k(t) - y_k)\left(\nabla_\rvu g_k(0) + \frac{\partial^2 g_k(0)}{\partial \rvu \partial \rvv}(\rvv(t)-\rvv(0))\right),\frac{\partial^2 g_i(0)}{\partial \rvu\partial \rvv}(\rvv(t)-\rvv(0)}\\
     &~~~ -\eta\inner{\sum_{k=1}^n (g_k(t) - y_k)\left(\nabla_\rvv g_k(0) + (\rvu(t)-\rvu(0))^T\frac{\partial^2 g_k(0)}{\partial \rvu \partial \rvv}\right),(\rvu(t)-\rvu(0)^T\frac{\partial^2 g_i(0)}{\partial \rvu\partial \rvv}}\\
     &~~~+\eta^2\left\langle\sum_{k=1}^n (g_k(t) - y_k)\left(\nabla_\rvu g_k(0) + \frac{\partial^2 g_k(0)}{\partial \rvu \partial \rvv}(\rvv(t)-\rvv(0))\right)\right.,\\
     &~~~~~~~~~~~\left.\frac{\partial^2 g_i(0)}{\partial \rvu\partial \rvv}\sum_{k=1}^n (g_k(t) - y_k)\left(\nabla_\rvv g_k(0) + (\rvu(t)-\rvu(0))^T\frac{\partial^2 g_k(0)}{\partial \rvu \partial \rvv}\right)\right\rangle\\
     &= g_i(t) - \eta\sum_{k\in\S_+}(g_k(t)-y_k)K_{k,i}(t)+ \frac{\eta^2}{m}\sum_{k\in\S_+}\sum_{j\in\S_+}(g_k(t)-y_k)(g_j(t)-y_j)g_j(t)x_k x_i.
\end{align*}

Similarly, for $i\in\S_-$, we have
\begin{align*}
      g_i(t+1) =  g_i(t) - \eta\sum_{k\in\S_-}(g_k(t)-y_k)K_{k,i}(t)+ \frac{\eta^2}{m}\sum_{k\in\S_-f}\sum_{j\in\S_-}(g_k(t)-y_k)(g_j(t)-y_j)g_j(t)x_k x_i.
\end{align*}

Combining the results together, we have
\begin{align*}
    \rvg(t+1) - \vy &= \left(I - \eta K(t) + \frac{\eta^2}{m}((\rvg(t)-\vy)\odot \vm_+)^T (\rvg(t)\odot \vm_+)\vp_1\vp_1^T\right.\\
    &~~~~~~~+ \left.\frac{\eta^2}{m}((\rvg(t)-\vy)\odot \vm_-)^T (\rvg(t)\odot \vm_-)\vp_2\vp_2^T\right)(\rvg(t)-\vy).
\end{align*}

\section{Optimization with sub-critical learning rates}\label{sec:sub_crit}
\begin{thm}
Consider training the NQM Eq.~(\ref{eq:nn_quad_relu}), with squared
loss on a single training example by GD. With a sub-critical learning rate  $\eta \in [\epsilon,\frac{2-\epsilon}{\lambda_0}]$ with $\epsilon = \Theta\round{\frac{\log m}{\sqrt{m}}}$, the loss decreases exponentially with 
\begin{align*}
    \L(t+1) \leq \round{1-\delta +O\round{\frac{1}{m\delta}} }^2\L(t) = (1-\delta + o(\delta))^2\L(t),
\end{align*}
where $\delta = \min(\eta\lambda_0,2-\eta\lambda_0)$.

Furthermore, $\sup_t|\lambda(t)-\lambda(0)| = O\round{\frac{1}{m\delta}}$.
\end{thm}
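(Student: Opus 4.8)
The plan is to reduce everything to controlling the single scalar multiplier $\mu(t) = 1-\eta\lambda(t)+R_g(t)$ from Eq.~(\ref{eq:g_evolve}), since $\L(t+1) = \mu(t)^2 \L(t)$, and to run a joint induction establishing both claims simultaneously. The algebraic fact that drives the estimate is that, because $\eta\lambda_0 \in [\epsilon, 2-\epsilon]\subset[0,2]$, one has the exact identity $|1-\eta\lambda_0| = 1-\min(\eta\lambda_0,\, 2-\eta\lambda_0) = 1-\delta$. Writing $\mu(t) = (1-\eta\lambda_0) + \eta(\lambda_0-\lambda(t)) + R_g(t)$, the spectral radius $|1-\eta\lambda_0|$ is exactly $1-\delta$, and the remaining two terms are the perturbations I must show are $O(1/(m\delta))$.

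The inductive hypothesis I would carry is: for all $\tau \le t$, (i) $\L(\tau) \le \rho^{2\tau}\L(0)$ with $\rho := 1-\delta + O(1/(m\delta))$, and (ii) $|\lambda(\tau)-\lambda_0| = O(1/(m\delta))$. The base case uses the high-probability initialization facts $\lambda_0 = \Theta(1)$ and $|g(0)|=O(1)$ established earlier. Claim (i) keeps $|g(\tau)-y| = \sqrt{2\L(\tau)} = O(\rho^\tau)$ bounded, hence $|g(\tau)| \le |g(\tau)-y|+|y| = O(1)$. For the loss step I combine the triangle inequality with (ii): $|\mu(t)| \le (1-\delta) + \eta|\lambda(t)-\lambda_0| + |R_g(t)|$; since $\eta=O(1)$ the middle term is $O(1/(m\delta))$, and $R_g(t) = \frac{\|\vx\|^2}{md}\eta^2(g(t)-y)g(t) = O(1/m)$, which is absorbed into $O(1/(m\delta))$ because $\delta\le 1$. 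This gives $|\mu(t)| \le 1-\delta + O(1/(m\delta))$ and closes (i).

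For the kernel step, I rewrite $R_\lambda(\tau)$ in Eq.~(\ref{eq:k_evolve}) as $\eta\frac{\|\vx\|^2}{md}\bigl(4(g(\tau)-y)g(\tau) - \eta\lambda(\tau)(g(\tau)-y)^2\bigr)$, which clears the apparent singularity of the $g(\tau)/(g(\tau)-y)$ factor. With all factors $O(1)$ this yields $|R_\lambda(\tau)| = O(|g(\tau)-y|/m) = O(\rho^\tau/m)$. Summing the geometric series through $\lambda(t) = \lambda_0 - \sum_{\tau<t} R_\lambda(\tau)$ gives $|\lambda(t)-\lambda_0| \le \sum_\tau |R_\lambda(\tau)| = O\round{\frac{1}{m(1-\rho)}} = O\round{\frac{1}{m\delta}}$, since $1-\rho = \Theta(\delta)$; this closes (ii) and also proves the final $\sup_t$ bound.

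The main obstacle is making the coupled bootstrap consistent: claim (i) needs (ii) to control $\lambda(t)$, while (ii) needs the exponential decay from (i) to make $\sum|R_\lambda|$ summable. The argument closes only if the error $O(1/(m\delta))$ is genuinely smaller than the gap $\delta$, i.e. if $1/(m\delta^2) = o(1)$. This is exactly where the lower endpoint $\eta \ge \epsilon = \Theta(\log m/\sqrt m)$ enters: it forces $\delta = \min(\eta\lambda_0,\, 2-\eta\lambda_0) = \Omega(\log m/\sqrt m)$, so that $m\delta^2 = \Omega((\log m)^2)\to\infty$, giving $O(1/(m\delta)) = o(\delta)$ and hence $\rho<1$ for large $m$. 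I would verify this scale separation first, as it underlies both the validity of the geometric-sum bound and the identification $\rho = 1-\delta+o(\delta)$ in the statement.
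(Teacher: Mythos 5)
Your proposal is correct and follows essentially the same route as the paper: the paper's appendix proof runs the identical coupled induction (per-step ratio $\kappa(t)=\mu(t)^2 \leq \round{1-\delta + C_\kappa/(m\delta)}^2$ together with kernel drift $|v(t)-v(0)|\leq C_v/(m\delta)$ controlled by summing the geometric series of loss values), and your scale-separation requirement $1/(m\delta^2)=o(1)$ appears there as the explicit conditions $m > \max\{12C_\kappa/\delta^2,\ldots\}$. The only cosmetic difference is that the paper first passes to the dimensionless variables $u(t),w(t),v(t)$, which is exactly your decomposition $R_g = u+w$, $\eta\lambda = v$ in disguise, so nothing substantive separates the two arguments.
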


% In our analysis, the $O$-terms are always positive and we explicitly write out their sign before $O(\cdot)$. 

We use the following transformation of the variables to simplify notations. 
\begin{align*}
    u(t) = \frac{\norm{\vx}^2}{md}\eta^2 (g(t)-y)^2,~~~w(t) =\frac{\|\vx\|^2}{md}\eta^2(g(t)-y)y,~~~v(t) = \eta\lambda(t).
\end{align*}
Then the Eq.~(\ref{eq:g_evolve}) and Eq.~(\ref{eq:k_evolve}) are reduced to
\begin{align*}
    u(t+1) &= (1-v(t)+u(t)+w(t))^2 u(t):=\kappa(t)u(t)\\
    v(t+1) &=v(t) - u(t)(4-v(t))-4w(t).
\end{align*}

At initialization, since $|g(0)| = O(1)$, we have $u(0) \leq C_u/m$ for some constant $C_u>0$. As  $|w(t)| =  \frac{C\sqrt{u(t)}}{\sqrt{m}}$.
where  $C := \frac{\eta \|\vx\||y|}{\sqrt{d}}>0$, we have $|w(0)| \leq C_u C / m^{3/2}$.  Note that by definition, for all $t\geq 0$, $u(t)\geq 0$  we have $v(t)\geq 0$ since $\lambda(t)$ is the tangent kernel for a single training example. From the definition of $\delta$, we can infer that $\delta<1$.

In the following, we will show that if $v(0)\in[\epsilon,2-\epsilon]$ with $\epsilon = \Theta\round{\frac{\log m}{\sqrt{m}}}$, then there exist constant $C_\kappa, C_v >0$ such that for all $t \geq 0$,  
\begin{align*}
    \kappa(t) \leq \left(1-\delta+\frac{C_\kappa}{m\delta}\right)^2<1,
    |v(t)-v(0)| \leq \frac{C_v}{m\delta},
\end{align*}

if $ C_\kappa \geq 9C_u + (C_u +C_uC)\delta$ and $m$ satisfies 
\begin{align*}
    m > \max\left\{ \frac{12C_\kappa}{\delta^2},\frac{\sqrt{6}C_\kappa}{\delta^{3/2}}, C^2 \right\}.
\end{align*}
% Note that as $\delta>\epsilon =\Theta\round{\frac{\log m}{\sqrt{m}}}$, g
Given the condition on $m$, we have $(1-\delta+\frac{C_\kappa}{m\delta})^2<1$.

We will prove the result by induction. When $t=0$, we have
\begin{align*}
    \kappa(0) = \round{1-v(0)+u(0) + w(0)}^2< \round{1-\delta + \frac{C_u}{m} + \frac{C_u C}{m^{3/2}}}^2 < \round{1-\delta+{\frac{C_\kappa}{m\delta}}}^2,
\end{align*}
where we use the assumption $ C_\kappa \geq (C_u +C_uC)\delta$.

Therefore the result holds at $t=0$.

Suppose when $t=T$ the results hold. Then at $t=T+1$, by the inductive hypothesis that $u(t)$ decreases exponentially with $\kappa(t) <\round{1-\delta+{\frac{C_\kappa}{m\delta}}}^2 $, we can bound the change of $v(T+1)$ from $v(0)$:
\begin{align*}
    |v(T+1) - v(0)| &= \sum_{t=0}^T|u(t)(4-v(t)) + 4w(t)|\\
    &\leq \sum_{t=0}^T|u(t)||4-v(t)| + \sum_{t=0}^T 4|w(t)| \\
    &\leq \max_{t\in[0,T]}|v(t)-4| \cdot  \frac{u(0) - u(T) \max_{t\in[0,T]}\kappa(t)}{1 - \max_{t\in[0,T]}\kappa(t)} + \frac{w(0) - w(T) \max_{t\in[0,T]}\sqrt{\kappa(t)}}{1 - \max_{t\in[0,T]}\sqrt{\kappa(t)}} \\
    &\leq 4 \cdot \frac{u(0)}{1 - \max_{t\in[0,T]}\kappa(t)} +\frac{w(0)}{1 - \max_{t\in[0,T]}\sqrt{\kappa(t)}} \\
    &\leq 4 \cdot \frac{C_u/m}{\delta/2} + \frac{C_uC/m^{3/2}}{\delta/2} \\
    &\leq \frac{9C_u}{m\delta}.
\end{align*}

% Here we use the relation that $|w(t)| =  \frac{C\sqrt{u(t)}}{\sqrt{m}}$.
% where  $C := \frac{\eta \|\vx\||y|}{\sqrt{d}}>0$. 
For the summation of the ``geometric sequence'' i.e., $\{u(0),u(1),\cdots,u(T)\}$  where $u(0)$ and $u(T)$ have the determined order but the ratio has an upper bound, we use the maximum ratio, i.e., $\max\kappa(t)$ in the denominator to upper bound the summation.

For $1 - \max_{t\in[0,T]}\kappa(t)$, we use the bound that
\begin{align*}
    1- \round{1-\delta+{\frac{C_\kappa}{m\delta}}}^2 &= 2\delta - \delta^2 -\frac{C_\kappa^2}{m^2\delta^2} -\frac{2C_\kappa}{m\delta} +\frac{2C_\kappa}{m} \\
    &\geq \delta - \frac{\delta}{6} - \frac{\delta}{6}- \frac{\delta}{6}\\
    &\geq \frac{\delta }{2},
\end{align*}
where we use the assumption on $m$.

Furthermore, 
\begin{align*}
    \kappa(T+1) &= (1-v(T+1)+u(T+1)+w(T+1))^2\\
    &=(1-v(0) + v(0) - v(T+1)+u(T+1)+w(T+1))^2 \\
    &\leq \round{1 - \delta + \frac{9C_u}{m\delta} + \frac{C_u}{m}+ \frac{C_u C}{m^{3/2}}}^2\\
    &\leq \round{1-\delta+ {\frac{C_\kappa}{m\delta}}}^2.
\end{align*}

Here we use the assumption $C_\kappa \geq 9C_u + (C_u +C_uC)\delta$.
% Here we use the bound that $|v(T+1) - v(0)| =  O\round{\frac{1}{m\delta}}$. 
% Also note that $u(t)$ and $|w(t)|$ decreases therefore $u(t)<u(0) = O(1/m)$ which is of a smaller order than $O\round{\frac{1}{m\delta}}$ hence can be omitted. The same works for $w(T+1)$.

Therefore, we finish the inductive step hence finishing the proof.

\section{Proof of Lemma~\ref{lemma:increase}}\label{proof:increase}
We present the formal statement of Lemma~\ref{lemma:increase}:
\paragraph{Lemma~1.}\textit{Consider constants $C_u, C'_u, C_v, C_\kappa, C_\epsilon >0$ which satisfies $C_\kappa \geq 28 C'_u + C'_u\delta + C\sqrt{C'_u}$ where $C = \eta \norm{\vx}|y|/\sqrt{d}$, and $C_v \geq 28C'_u$. If $m$ satisfies 
\begin{align*}
    m \geq \max\left\{\frac{C_\kappa \delta}{C_u(C+1)},
    % \frac{C_v \delta}{C_u(C+2)}, 
    \left(\frac{2C_u +4C\sqrt{C_u}}{C_\kappa C_\epsilon}\right)^2, \frac{576 C^2}{C'_u \delta^2}, \exp(C_v\delta), \exp(4C_\kappa)\right\},
\end{align*} then with high probability over random initialization of the weights, the following holds: for $T>0$ such that $\sup_{t\in[0,T]}u(t) \leq \frac{C'_u \delta^2}{\log m}$, $u(t)$ increases exponentially with ratio $\inf_{t\in[0,T]}\kappa(t) \geq \round{1+\delta - \frac{C_\kappa \delta}{\log m}}^2 >1$ and $\sup_{t\in[0,T]}|v(t) - v(0)| \leq \frac{C_v \delta}{\log m}$.}

\begin{proof}
% In our analysis, the $O$-terms are always positive and we explicitly write out their sign before $O(\cdot)$. 

% Note that as $\delta > \epsilon = \Theta\round{\frac{\log m}{\sqrt{m}}}$, i.e., $\epsilon \geq C_\epsilon \frac{\log m}{\sqrt{m}}$, for some constant $C_\epsilon>0$.  we have $\round{1+\delta -O\round{\frac{\delta}{\log m}}}^2>1$.

Due to the random initialization of the weights, we have with probability, there exists constant $C_u > 0$ such that $|u(0)| \leq C_u/m$. As  $|w(t)| =  \frac{C\sqrt{u(t)}}{\sqrt{m}}$,
where  $C := \frac{\eta \|\vx\||y|}{\sqrt{d}}>0$, we have $|w(0)| \leq \frac{C\sqrt{C_u}}{m^{3/2}}$.

We prove the results by induction. 

% When $t=0$, we have
% \begin{align*}
%     u(1) &= \kappa(0)u(0) = \round{1+\delta-u(0) - w(0)}^2u(0),\\
%     v(1) &= v(0)- u(0)\cdot (2-\delta) + 4 w(0).
% \end{align*}

Recall that $\delta := \eta\lambda_0 - 2 \in [\epsilon, 2-\epsilon]$ where $\epsilon \in [C_\epsilon\log m /\sqrt{m}, C_\epsilon'\log m /\sqrt{m}]$ for some constant $0<C_\epsilon<C_\epsilon'$. 

When $t=0$, as $v(0) =\eta\lambda_0 =  \delta +2$, we have
\begin{align*}
    \kappa(0)  &= \round{1-v(0)+u(0) + w(0)}^2\\
    &= \round{ 1 - (\delta+2) +u(0) + w(0)}^2\\
    &= \round{1+\delta-u(0) - w(0)}^2.\\
\end{align*}
Based on the condition on $m$ that $m \geq \frac{C_\kappa\delta}{C_u(C+1)}$, we have, 
\begin{align*}
    \round{1+\delta-u(0) - w(0)}^2&\geq \round{1+\delta-\frac{C_u}{m} - \frac{C_uC}{m^{3/2}}}^2\\
    &\geq \round{1+\delta-\frac{C_u}{m} - \frac{C_uC}{m}}^2\\
    &\geq \round{1+\delta - \frac{C_\kappa \delta}{\log m}}^2.
\end{align*}

And by the condition $m \geq \left(\frac{2C_u +4c\sqrt{C_u}}{C_\kappa C_\epsilon}\right)^2$, we get
\begin{align*}
    |v(1) - v(0)|\leq |u(0)||2-\delta| + 4|w(0)| \leq 2C_u/m +\frac{4C\sqrt{C_u}}{m^{3/2}} \leq C_\kappa \delta /\log m.
\end{align*}
% $\kappa(0) > \round{1+\delta-O\round{\frac{1}{m}}}^2  > \round{1+\delta -O\round{\frac{\delta}{\log m}}}^2 $ and $|v(1)-v(0)| = O\round{\frac{1}{m}}$.  
Therefore the results hold at $t=0$.

Suppose when $t=T'$ the results hold. Then at $t=T'+1$, by the inductive hypothesis that $u(t)$ increases exponentially with a rate at least $\round{1+\delta -\frac{C_\kappa\delta}{\log m}}^2$ from $u(0) \leq C_u/m$  to $u(T') \leq \frac{C_u' \delta^2}{\log m}$, we can bound the change of $v(t)$:
% \begin{align*}
%     T = O\round{\log\round{\frac{1}{\log m} \frac{m}{\log^2m}}} = O\round{\log\round{\frac{m}{\log^3 m} }}.
% \end{align*}
\begin{align}
    |v(T'+1)-v(0)| &= \left|\sum_{t=1}^{T'} u(t)(v(t)-4)+4w(t)\right| \nonumber \\
    &\leq  \max_{t\in[0,T']}|v(t)-4| \sum_{t=1}^{T'} u(t) + 4\sum_{t=1}^{T'} |w(t)|\nonumber\\
    &\leq \max_{t\in[0,T']}|v(t)-4| \frac{u(T') \min_{t\in[0,T']}\kappa(t) }{\min_{t\in[0,T']}\kappa(t) -1}  + 4\frac{|w(T')| \min_{t\in[0,T']}\sqrt{\kappa(t)}}{\min_{t\in[0,T']}\sqrt{\kappa(t)} -1}\nonumber\\
    &\leq \round{\max_{t\in[0,T']}|v(t)-v(0)| + |v(0)-4|}\frac{\round{\frac{C_u'\delta^2}{\log m}}\cdot (1+\delta)^2 }{\round{1+\delta -\round{\frac{C_\kappa\delta}{\log m}}}^2-1}\nonumber\\
    &~~~~~+ \frac{4\round{\frac{C\sqrt{C'_u}\delta}{\sqrt{m\log m}}}\cdot (1+\delta) }{\round{1+\delta -\round{\frac{C_\kappa\delta}{\log m}}}-1} \nonumber\\
    &\leq \round{2-\delta + {\frac{C_v\delta}{\log m}}}\cdot  \round{\frac{9C_u'\delta}{\log m}} +{\frac{24C\sqrt{C_u'}}{\sqrt{m\log m}}}\nonumber \\
    &\leq \frac{28C_u'\delta}{\log m}\label{eq:bound_v_change}.
\end{align}
% Here we use the relation that $|w(t)| =  \frac{C\sqrt{u(t)}}{\sqrt{m}}$.
% where  $C := \frac{\eta \|\vx\||y|}{\sqrt{d}}>0$. 

Here are the techniques we used for the above inequalities:  for the summation of the ``geometric sequence'' i.e., $\{u(0),u(1),\cdots,u(T')\}$  where $u(0)$ and $u(T')$ have the determined order but the ratio has a lower bound, we use the smallest ratio, i.e., $\inf\kappa(t)$ to upper bound the summation. Specifically, we apply the following inequality to bound the summation:
\begin{align*}
    \sum_{t=1}^{T'} u(t) \leq \sum_{t=1}^{T'} \frac{u(T')}{\round{\min_{t\in[0,T']}\kappa(t)}^{t-1}} = u(T') \sum_{t=1}^{T'} \frac{1}{\round{\min_{t\in[0,T']}\kappa(t)}^{t-1}} \leq u(T') \frac{\min_{t\in[0,T']}\kappa(t)}{\min_{t\in[0,T']}\kappa(t)-1}.
\end{align*}

Additionally, sine $m \geq \exp(4C_\kappa \delta)$, we used the inequality
\begin{align*}
    \round{1+\delta - \frac{C_\kappa \delta}{\log m}}^2 - 1 &= \round{1 - \frac{C_\kappa \delta}{\log m}}^2\delta^2 + 2 \round{1 - \frac{C_\kappa \delta}{\log m}}\delta\\
    &\geq 2 \round{1 - \frac{C_\kappa \delta}{\log m}}\delta \geq \delta,
\end{align*}
and $ \round{1+\delta -\round{\frac{C_\kappa\delta}{\log m}}}-1 \geq \frac{\delta}{2}$ to bound the denominator of the summation of the geometric sequence.

And we further used the inequality $0<\delta<2$ and ${\frac{24C\sqrt{C_u'}}{\sqrt{m\log m}}} \leq \frac{C'_u \delta}{\log m}$ by the condition on $m$ to get the final upper bound.

Consequently, by the assumption $ C_v \geq 28 C'_u \delta$, we have $|v(T'+1) - v(0)| \leq \frac{C_v \delta}{\log m}$.

% As $u(T)\leq {\frac{C_u'\delta^2}{\log m}}$,
Now we bound the ratio $\kappa(T'+1)$. By our assumption,  $u(T'+1) \leq u(T) \leq \frac{C_u' \delta^2}{\log m}$, and we can similarly bound $|w(T'+1)| \leq \frac{C\sqrt{C'_u}\delta}{\sqrt{m\log m}}$ as $|w(T'+1)| = \frac{C\sqrt{u(T'+1)}}{\sqrt{m}}$.
% \begin{align*}
%     u(T+1)&=  \round{1- v(T)+ u(T) + w(T)}^2 u(T)\\
%     &\leq \round{1+\delta +\frac{C_v\delta}{\log m} + \frac{C_u' \delta^2}{\log m} + \frac{CC_u' \delta}{\sqrt{m\log m}}}^2u(T)\\
%     &\leq \round{1+\delta +\frac{C_\kappa \delta }{\log m}}^2 u(T)\\
%     &\leq \round{1+2\delta}^2 u(T).
% \end{align*}

% We can similarly bound $w(T+1)$.

% Note that if $\delta$ is of the order smaller than $\Theta(1)$, then $1+\delta$ is dominated by $1$ hence being absorbed in $O(\cdot)$.

And the rate $\kappa(T'+1)$ satisfies
\begin{align*}
    \kappa(T'+1) &= (1-v(T'+1)+u(T'+1) + w(T'+1))^2 \\
    &= \round{1- v(0) + v(0) - v(T'+1) + u(T'+1) + w(T'+1)}^2\\
    &= \round{1+\delta + v(T'+1) - v(0) - u(T'+1) - w(T'+1)}^2.
\end{align*}
Note that $|v(T'+1) - v(0)|\leq \frac{28 C_u'\delta}{\log m}$ by Eq.~(\ref{eq:bound_v_change}). By the assumption that $m \geq \exp(4C_\kappa)$ and $C_\kappa \geq 28 C'_u + C'_u\delta +  C\sqrt{C'_u}$, we have $\delta > |v(T'+1) - v(0)| + u(T'+1) + |w(T'+1)|$.

Consequently, we can get
\begin{align*}
     \kappa(T'+1)  &= \round{1+\delta + v(T'+1) - v(0) - u(T'+1) - w(T'+1)}^2 \\
     &\geq \round{1+\delta - |v(T'+1) - v(0)| - u(T'+1) - |w(T'+1)|}^2\\
    &\geq \round{1 + \delta - \frac{28C_u' \delta}{\log m } -\frac{C_u'\delta^2}{\log m} -\frac{C\sqrt{C'_u}\delta}{\sqrt{m\log m}} }^2\\
    &\geq \round{1+\delta -\frac{C_\kappa\delta}{\log m}}^2.
\end{align*}
% where we used the assumption that $C_\kappa \geq 14 C'_u + C'_u\delta +  C\sqrt{C'_u}$.

Since $m\geq \exp(4C_\kappa)$, we have $\round{1+\delta -\frac{C_\kappa\delta}{\log m}}^2 \geq \round{1 + \frac{3}{4}\delta}^2 >1$.

Then we finish the inductive step hence finishing the proof.

\end{proof}

\section{Proof of Lemma~\ref{lemma:kappa_t}}\label{proof:kappa_t}

A formal statement of Lemma~\ref{lemma:kappa_t}
 is as follows:
\paragraph{Lemma 2:} Under the condition of Lemma~\ref{lemma:increase}, if we further assume
that $m$ satisfies
\begin{align*}
    m > \max \left\{\exp(2C_v\delta), \frac{256C^2}{(C_\epsilon - C'_v)^2 {C'_u}^2}, \exp(5(C'_u + 4C\sqrt{C'_u})), \exp\round{\frac{C'_u(C_\epsilon -2C'_v) - 8C\sqrt{C'_u}}{20CC'_u}} \right\},
\end{align*}
where $C'_v := 18C_u' + 2 C_v$, and $C_v \geq 4C\sqrt{C'_u}$, $C_\epsilon > 2C'_v$, then with high probability over random initialization of the weights, the following holds: there exists $T^*>0$ such that $u(T^*) = O\round{\frac{1}{m}}$.

\begin{proof}

The main idea of the proof is the following: as $u(t)$ increases, $v(t)$ decreases since  $u(t)(4-v(t)) \gg w(t) = \Theta(\sqrt{u(t)}/\sqrt{m})$ in Eq.~(\ref{eq:v}) and $u(t)(4-v(t))<0$. Furthermore, the increase of $u(t)$ speeds up the decrease of $v(t)$. However, $v(t)$ cannot decrease infinitely as $v(t)\geq 0$ by definition. Therefore, $u(t)$ has to stop increasing at some point and decrease to a small value.

We first show that by the choice of the learning rate that $4-v(0) \geq \epsilon$ where $\epsilon = \Theta\round{\frac{\log m}{\sqrt{m}}}$, we will have $4-v(t) >0$ for all $t$ in the increasing phase. Recall that $\delta:=\eta\lambda_0-2$.

\begin{proposition}\label{prop:v_4}
Under the condition in  Lemma~\ref{lemma:increase}, if we further assume $m > \exp\round{\frac{48C\sqrt{C'_u}}{C_\epsilon}}^{2/3}$, then for $T>0$ such that $\sup_{t\in[0,T]}u(t) \leq \frac{C_u'\delta^2}{\log m}$, we have $v(T)<4 - \frac{C_\epsilon \log m}{2\sqrt{m}}$.
\end{proposition}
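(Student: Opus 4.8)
The plan is to bound the total increase of $v$ over the increasing phase by summing the $w$-contributions along the geometric growth of $u$ guaranteed by Lemma~\ref{lemma:increase}, since the coarse bound $\sup_t|v(t)-v(0)|\leq C_v\delta/\log m$ from that lemma is far too weak here (it is of order $1/\log m$, whereas we must stay within $C_\epsilon\log m/(2\sqrt m)$ of the initial gap $4-v(0)$). Recall $v(0)=\eta\lambda_0=2+\delta$ with $\delta\leq 2-\epsilon$, so $v(0)\leq 4-\epsilon\leq 4-C_\epsilon\log m/\sqrt m$; the entire difficulty is to rule out $v(t)$ drifting back up to within $C_\epsilon\log m/(2\sqrt m)$ of $4$.

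First I would analyze the increment $v(t+1)-v(t)=-u(t)(4-v(t))-4w(t)$ on the set where $4-v(t)\geq 0$. There the drift $-u(t)(4-v(t))$ is nonpositive, so $v(t+1)-v(t)\leq -4w(t)\leq 4|w(t)|=4C\sqrt{u(t)}/\sqrt m$ regardless of the sign of $w(t)$. Thus the only mechanism by which $v$ can grow is the $w$-term, and it suffices to control $\sum_t\sqrt{u(t)}$.

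Next I would sum this geometrically. By Lemma~\ref{lemma:increase}, throughout $[0,T]$ the sequence $u$ increases with ratio at least $\kappa_{\min}:=\round{1+\delta-C_\kappa\delta/\log m}^2$, so $u(t)\leq u(T)\,\kappa_{\min}^{-(T-t)}$ and hence $\sum_{t=0}^{T}\sqrt{u(t)}\leq\sqrt{u(T)}\,\frac{\sqrt{\kappa_{\min}}}{\sqrt{\kappa_{\min}}-1}$. Since $m\geq\exp(4C_\kappa)$ gives $\sqrt{\kappa_{\min}}-1=\delta(1-C_\kappa/\log m)\geq\delta/2$ and $\sqrt{\kappa_{\min}}\leq 1+\delta\leq 3$, this ratio is at most $6/\delta$; combined with $u(T)\leq C'_u\delta^2/\log m$ it yields $\sum_{t=0}^{T}\sqrt{u(t)}\leq 6\sqrt{C'_u}/\sqrt{\log m}$, so the cumulative increase of $v$ is at most $4C/\sqrt m\cdot 6\sqrt{C'_u}/\sqrt{\log m}=24C\sqrt{C'_u}/\sqrt{m\log m}$.

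Finally I would close the loop with a first-crossing argument to remove the circular assumption $4-v(t)\geq 0$. Letting $\tau$ be the first time in $[0,T]$ at which $4-v(\tau)<C_\epsilon\log m/(2\sqrt m)$, the bound $4-v(t)\geq 0$ holds for all $t<\tau$, so the geometric estimate applies up to $\tau$ and gives $v(\tau)\leq v(0)+24C\sqrt{C'_u}/\sqrt{m\log m}\leq 4-C_\epsilon\log m/\sqrt m+24C\sqrt{C'_u}/\sqrt{m\log m}$. The stated lower bound on $m$ (equivalently $(\log m)^{3/2}>48C\sqrt{C'_u}/C_\epsilon$) forces the last term below $C_\epsilon\log m/(2\sqrt m)$, whence $v(\tau)<4-C_\epsilon\log m/(2\sqrt m)$, contradicting the definition of $\tau$. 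Therefore no such $\tau$ exists, and in particular $v(T)<4-C_\epsilon\log m/(2\sqrt m)$. The main obstacle is exactly this bootstrapping: one must extract the increase from the $w$-term alone (the $v$-drift cannot help prove an upper bound on $v$), and the $w$-only increment bound itself presupposes $4-v(t)\geq 0$, a circularity resolved by the minimal counterexample.
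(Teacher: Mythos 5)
Your proof is correct and follows essentially the same route as the paper's: both isolate the $-4w(t)$ term as the only possible source of increase in $v$ (valid while $v(t)<4$), bound its cumulative contribution by $24C\sqrt{C'_u}/\sqrt{m\log m}$ via the geometric growth of $u(t)$ guaranteed by Lemma~\ref{lemma:increase}, and compare this against the initial gap $4-v(0)\geq C_\epsilon\log m/\sqrt{m}$ using the stated condition on $m$. The only difference is organizational: the paper resolves the circularity (needing $v(t)<4$ for the drift term to be nonpositive) by induction on $T$, whereas you use a logically equivalent first-crossing (minimal counterexample) argument.
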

See the proof in Appendix~\ref{proof:v_4}

Given the constant $C_u'$ in Lemma~\ref{lemma:increase}, we define  the end of the increasing phase by $T_1$, i.e., 
\begin{align}\label{eq:end_increase}
    T_1:= \sup \left\{t:u(t) \leq \frac{C'_u\delta^2}{\log m}\right\}.
\end{align}

We further show that there exists $T_2 \geq T_1$ such that $v(T_2) \leq 3$. 

Note that we indeed can show that there exists $T_2$ such that $v(T_2) < \overline{C}$ where $\overline{C}\in(2,4)$ is a constant independent of $m$. Here for the simplicity of the presentation, we take $C$ as $3$. Furthermore, we note that $T_1, T_2$ depends on $m$.

Before that, we present a useful result that controls the decrease of $v(t)$:
\begin{proposition}\label{prop:v_when_decrease}
For $t$ such that $v(t)<4$,  if $u(t) > \frac{4C}{m (4-v(t))^2}$, then $v(t+1) <v(t)$.
\end{proposition}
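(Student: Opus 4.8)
The plan is to read the one-step increment of $v$ directly off the dynamics equation~\eqref{eq:v}. Writing
\[
v(t)-v(t+1) = u(t)\round{4-v(t)} + 4w(t),
\]
it suffices to show that the right-hand side is strictly positive. By definition $u(t)\geq 0$, and the hypothesis $v(t)<4$ gives $4-v(t)>0$, so the first summand is nonnegative; the only term whose sign is not controlled is the cross term $4w(t)$. In the favourable case $w(t)\geq 0$ the conclusion is immediate, so the real work is the case $w(t)<0$.

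Next I would control $w(t)$ using the identity $|w(t)| = C\sqrt{u(t)}/\sqrt{m}$ with $C=\eta\norm{\vx}|y|/\sqrt{d}$, which was established when the variables $u,v,w$ were introduced. In the worst case $w(t)<0$ this gives $4w(t)\geq -4C\sqrt{u(t)}/\sqrt{m}$, whence
\[
v(t)-v(t+1) \geq u(t)\round{4-v(t)} - \frac{4C\sqrt{u(t)}}{\sqrt{m}} = \sqrt{u(t)}\round{\sqrt{u(t)}\round{4-v(t)} - \frac{4C}{\sqrt{m}}}.
\]
Factoring out $\sqrt{u(t)}$ is the key move: it reduces the claim to a single linear comparison. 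The bracketed factor is strictly positive exactly when $\sqrt{u(t)}\round{4-v(t)} > 4C/\sqrt{m}$, i.e. when $u(t)$ exceeds a threshold of the form $\Theta\round{\frac{C^2}{m\round{4-v(t)}^2}}$, which is precisely the scaling appearing in the hypothesis $u(t) > \frac{4C}{m\round{4-v(t)}^2}$. Since the hypothesis also forces $u(t)>0$, the prefactor $\sqrt{u(t)}$ is positive, so the whole expression is strictly positive and $v(t+1)<v(t)$, as desired.

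I expect no genuine obstacle here: once the $|w(t)|$ identity is invoked, the statement is an elementary sign analysis of a quadratic in $\sqrt{u(t)}$. The only points requiring care are taking the worst-case sign $w(t)<0$ and the bookkeeping of the constant factor produced when the comparison $\sqrt{u(t)}\round{4-v(t)} > 4C/\sqrt{m}$ is squared to recover the $\round{4-v(t)}^2$ in the denominator. Unlike the increasing- and decreasing-phase lemmas, this is a deterministic pointwise statement about the map $(u,v,w)\mapsto v'$, so it needs none of the probabilistic or inductive machinery used elsewhere in the argument.
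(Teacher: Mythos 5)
Your proposal is correct and follows essentially the same route as the paper's proof: both reduce the claim to the sufficient condition $u(t)\round{4-v(t)} > 4|w(t)| = 4C\sqrt{u(t)}/\sqrt{m}$ and then check that the hypothesis on $u(t)$ implies it. You are in fact slightly more careful than the paper, since you explicitly flag the constant bookkeeping (the exact threshold from squaring is $16C^2/\round{m\round{4-v(t)}^2}$ rather than $4C/\round{m\round{4-v(t)}^2}$, a slack the paper's one-line proof glosses over).
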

See the proof in Appendix~\ref{proof:v_when_decrease}.

Now we are ready to show the existence of $T_2$ such that $v(T_2) \leq 3$.

\begin{proposition}\label{prop:v_close_4}
Under the condition of Lemma~\ref{lemma:increase}, if we further assume that $m$ satisfies
\begin{align*}
    m> \max \left\{\exp\round{\frac{768^2 C^2}{C'_u C_\epsilon^2}}, \exp\round{2C'_u+C_\epsilon},\exp\round{\frac{48C\sqrt{C'_u}}{C_\epsilon}}^{2/3},\frac{16C^2}{{C'_u}^2}\right\},
\end{align*}
and $C'_u \geq 4C^2$, there exists $T_2\geq T_1$ such that $v(T_2) \leq 3$.
\end{proposition}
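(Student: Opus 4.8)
The plan is to argue by contradiction, assuming $v(t) > \overline{C} = 3$ for every $t \geq T_1$, and to show that the feedback between $u$ and $v$ then forces $v$ strictly below $3$ in finitely many steps. First I would dispose of the trivial case: if $v(T_1) \leq 3$ there is nothing to prove and we take $T_2 = T_1$, so assume $v(T_1) > 3$. Since $T_1$ lies in the increasing phase, Lemma~\ref{lemma:increase} gives $v(T_1) \leq v(0) + C_v\delta/\log m = 2 + \delta + C_v\delta/\log m$; combined with $v(T_1) > 3$ this forces $\delta$ to be bounded below by a positive constant for $m$ large. In particular $u(T_1) \geq C'_u\delta^2/\log m = \Omega(1/\log m)$ by the definition of $T_1$ in Eq.~(\ref{eq:end_increase}), and Proposition~\ref{prop:v_4} gives $4 - v(T_1) > C_\epsilon\log m/(2\sqrt{m}) > 0$.

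The main device is to track $s(t) := 4 - v(t)$, whose update, read off directly from Eq.~(\ref{eq:v}), is
\begin{align*}
    s(t+1) = s(t)\round{1 + u(t)} + 4w(t).
\end{align*}
Thus $s$ is driven upward by the multiplicative factor $1 + u(t) > 1$, the only adverse term being $4w(t)$, with $|4w(t)| = 4C\sqrt{u(t)}/\sqrt{m}$. I would run a coupled induction maintaining, while $v(t) > 3$, the two invariants $u(t) \geq u(T_1)$ and $s(t) \geq s(T_1)$. The first holds because, as long as $v(t) > 3$ and $u(t) + w(t) \leq v(t) - 2$, one has $\kappa(t) = \round{v(t) - 1 - u(t) - w(t)}^2 \geq 1$, so $u(t+1) = \kappa(t)u(t) \geq u(t)$; Proposition~\ref{prop:v_when_decrease}, whose threshold $u(t) > 4C/(m\,s(t)^2)$ holds since $u(T_1) = \Omega(1/\log m)$ dominates $4C/(m\,s(T_1)^2) = O(1/(\log m)^2)$ under the stated lower bounds on $m$, then gives $s(t+1) > s(t)$, closing the second invariant. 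Under these invariants the $w$-term is swamped, $s(t)u(t) \geq s(T_1)u(T_1) \gg 4C\sqrt{u(t)}/\sqrt{m}$, so $s(t+1) \geq s(t) + \tfrac{1}{2}s(T_1)u(T_1)$; hence $s$ increases by at least a fixed positive amount each step, and since it suffices to reach $s(t) \geq 1$ (i.e. $v(t) \leq 3$), this occurs after finitely many steps, contradicting the standing hypothesis that $v(t) > 3$ for all $t \geq T_1$.

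The step I expect to be the main obstacle is controlling the two-way coupling so that neither invariant breaks before the crossing: one must keep $u(t)$ above the Proposition~\ref{prop:v_when_decrease} threshold (so $v$ genuinely decreases and the correction $4w(t)$ is dominated) while simultaneously preventing $u(t)$ from growing so large that $u(t) + w(t)$ exceeds $v(t) - 2$, which would push $\kappa(t)$ below $1$ and stall the descent of $v$ at a value still above $3$. This is most delicate in the regime $\delta \uparrow 2$, where $v(T_1) \uparrow 4$ and $s(T_1)$ is only $\Theta(\log m/\sqrt{m})$, so the early increments $s(t)u(t)$ are tiny and one must instead exploit the geometric growth of $u(t)$ (via $\kappa(t) > 1$) to show that $s$ escapes the $O(\log m/\sqrt{m})$ scale before $u$ reaches the ceiling $v(t) - 2$; the hypotheses on $m$ and the requirement $C'_u \geq 4C^2$ in the statement are exactly what render these competing estimates compatible. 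Once $v(T_2) \leq 3$ is reached, the claim with $\overline{C} = 3$ follows.
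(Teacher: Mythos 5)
Your reduction to $s(t) = 4 - v(t)$ with the update $s(t+1) = s(t)(1+u(t)) + 4w(t)$ is a correct reading of Eq.~(\ref{eq:v}), and your initial case split at $T_1$ matches the paper's. The gap is in the coupled induction. The invariant $u(t) \geq u(T_1)$ is only maintained while the side condition $u(t)+w(t) \leq v(t)-2$ holds, and that side condition cannot survive long enough: while $v(t) \in (3,4)$ and $u,w$ are small, $\kappa(t) = (1-v(t)+u(t)+w(t))^2 \approx (v(t)-1)^2 \geq 4$, so $u$ grows by a factor of roughly $4$ to $9$ per step and reaches the ceiling $v(t)-2 = O(1)$ within $O(\log\log m)$ steps starting from $u(T_1) = \Theta(\delta^2/\log m)$. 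Over those steps $s$ multiplies by only $\prod_j \bigl(1+u(T_1)c^j\bigr) \leq \exp(O(1))$, a constant factor, so in the delicate regime $\delta \uparrow 2$ the quantity $s$ is still $\Theta(\log m/\sqrt{m})$ when $u$ hits the ceiling. Your proposed rescue --- that the geometric growth of $u$ lets $s$ escape the $O(\log m/\sqrt{m})$ scale before $u$ reaches $v-2$ --- is therefore quantitatively false; the race goes the other way, since $u$ needs $O(\log\log m)$ steps while $s$ needs $\Omega(\log m)$ multiplicative (or $\Omega(\sqrt{m})$ additive) steps.

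Worse, once $u(t)+w(t)$ enters the window $\bigl(v(t)-2,\ v(t)\bigr)$ you have $\kappa(t)<1$, and if $u(t)+w(t)$ lands near $v(t)-1$ then $\kappa(t)\approx 0$ and $u$ collapses in a single step, possibly below the threshold $4C/\bigl(m(4-v(t))^2\bigr)$ of Proposition~\ref{prop:v_when_decrease}; at that point both of your invariants are dead, and the descent of $v$ genuinely stalls (the $w$-term can even push $v$ back up). This collapse scenario is exactly what the paper's proof is organized around: arguing by contradiction, it separates (i) the case where $u(t)$ always stays above the threshold, so $v$ keeps decreasing by Proposition~\ref{prop:v_when_decrease}, from (ii) the case where $u$ drops below the threshold while $v>3$, in which Proposition~\ref{prop:v_decreasing} shows that a fresh increasing phase begins and that each such catapult cycle yields a \emph{net} decrease of $v$ of order $C'_u C_\epsilon/(32\sqrt{m})$, dominating the within-cycle increase $O\bigl(1/\sqrt{m\log m}\bigr)$; iterating over cycles forces $v$ below $3$. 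Your single-pass argument has no mechanism for case (ii), so as written the proof does not go through; it needs the cycle/renewal argument (or some substitute handling the collapse of $u$ when $u+w \approx v-1$), which is precisely the content of the paper's Proposition~\ref{prop:v_decreasing}.
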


See the proof in Appendix~\ref{proof:v_close_4}

Since $v(T_2) <3$ hence $4-v(T_2) \geq 1$. Simply using Proposition~\ref{prop:v_when_decrease}, we get
\begin{proposition}
 $v(t)$ keeps decreasing after $T_2$ until $u(t) = O\round{\frac{1}{m}}$.
\end{proposition}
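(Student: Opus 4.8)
The plan is to derive the monotone decrease of $v$ directly from Proposition~\ref{prop:v_when_decrease} by induction, and then to show that this forced decrease, combined with the lower bound $v(t)\ge 0$, drives $u$ down to the $O(1/m)$ scale in finitely many steps. I work on iterates $t\ge T_2$, where Proposition~\ref{prop:v_close_4} gives $v(T_2)\le 3$, so $4-v(T_2)\ge 1$.

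First I would fix a constant $K>\max\{4C,16C^2\}$ (recall $C=\eta\norm{\vx}|y|/\sqrt d = O(1)$) and define the stopping time $T^*:=\inf\{t\ge T_2: u(t)\le K/m\}$, i.e. the first iterate at which $u$ has entered the $O(1/m)$ band. The structural observation is that the threshold $4C/\round{m(4-v(t))^2}$ appearing in Proposition~\ref{prop:v_when_decrease} is itself $O(1/m)$ and only shrinks as $v$ decreases: whenever $4-v(t)\ge 1$ we have $4C/\round{m(4-v(t))^2}\le 4C/m<K/m$. Hence, as long as I maintain $4-v(t)\ge 1$ and $u(t)>K/m$, the hypothesis of Proposition~\ref{prop:v_when_decrease} is automatically met.

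The monotonicity then follows by an induction carrying the two invariants $v(t)\le 3$ and ``$v$ strictly decreasing so far.'' The base case is $v(T_2)\le 3$. For the step, suppose $v$ is strictly decreasing on $[T_2,t]$ with $v(t)\le 3$ and $t<T^*$; then $4-v(t)\ge 1$ and $u(t)>K/m\ge 4C/\round{m(4-v(t))^2}$, so Proposition~\ref{prop:v_when_decrease} gives $v(t+1)<v(t)\le 3$, reestablishing both invariants. This proves $v$ is strictly decreasing throughout $[T_2,T^*]$, which is exactly the claimed ``$v$ keeps decreasing after $T_2$.''

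The remaining, and I expect hardest, step is to show $T^*<\infty$, i.e. that $u$ really reaches the $O(1/m)$ band; the monotonicity above is essentially bookkeeping, whereas here the two coupled sequences must be controlled together. I would use the exact update $v(t)-v(t+1)=u(t)\round{4-v(t)}+4w(t)$ with $|w(t)|=C\sqrt{u(t)}/\sqrt m$. For $T_2\le t<T^*$ we have $4-v(t)\ge 1$ and $u(t)>K/m\ge 16C^2/m$, so $\sqrt{u(t)}>4C/\sqrt m$ and
\[ v(t)-v(t+1)\ \ge\ u(t)-\frac{4C\sqrt{u(t)}}{\sqrt m}\ =\ \sqrt{u(t)}\round{\sqrt{u(t)}-\frac{4C}{\sqrt m}}\ >\ 0. \]
Since $v\ge 0$, these nonnegative increments are summable, hence tend to $0$; as $\sqrt{u(t)}>4C/\sqrt m$ this forces $\sqrt{u(t)}\to 4C/\sqrt m$, i.e. $u(t)\to 16C^2/m<K/m$. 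If $T^*$ were infinite this limit would contradict $u(t)>K/m$ for all $t$, so $T^*<\infty$ and $u(T^*)=O(1/m)$. The delicate point to get right is precisely this squeeze: confirming that the decrement lower bound tending to $0$ genuinely pins $u(t)$ to the $O(1/m)$ band rather than merely letting it hover above the threshold.
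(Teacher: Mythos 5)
Your proof is correct and takes essentially the same route as the paper's: once $v(T_2)\le 3$, the threshold in Proposition~\ref{prop:v_when_decrease} is at most $4C/m$, so that proposition applies at every step to keep $v$ decreasing, and the lower bound $v(t)=\eta\lambda(t)\ge 0$ forces this phase to terminate at a finite $T^*$ with $u(T^*)=O\round{\frac{1}{m}}$. The paper compresses this into one line (with the nonnegativity remark appearing in the surrounding proof of Lemma~\ref{lemma:kappa_t}); your explicit stopping time, induction invariant, and per-step decrement lower bound $\sqrt{u(t)}\bigl(\sqrt{u(t)}-4C/\sqrt{m}\bigr)$ simply make that bookkeeping rigorous.
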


By definition $v(t)=\eta\lambda(t)$ where $\lambda(t)\geq 0$, $v(t)$ will not keep decreasing for $t\rightarrow \infty$ hence there exists $T^*$ such that $u(T^*) = O\round{\frac{1}{m}}$. And it indicates that the loss will decrease to the order of $O(1)$.

\section{Proof of Theorem~\ref{thm:equi}}
We compute the steady-state equilibria of Eq.~(\ref{eq:u}) and (\ref{eq:v}). By letting $u(t+1)=u(t)$ and $v(t+1)=v(t)$, we have the steady-state equilibria $(u^*,v^*)$ satisfy one of the following:
\begin{enumerate}[label={(\arabic*)}]
    \item $u^* = 0$, $v^* \in \mathbb{R}$;
    \item $|1-v^*+u^*+w^*| = 1$, $u^*(4-v^*)+4w^* = 0$.
\end{enumerate}
As $w(t)^2 = \frac{C^2u(t)}{{m}} $
where  $C := \frac{\eta \|\vx\||y|}{\sqrt{d}}>0$, we write $w$ as a function of $u$ for simplicity, hence $w^* = w(u^*)$.

As the dynamics equations are non-linear, we analyze the local stability of the steady-state equilibria. We consider the Jacobian matrix of the dynamical systems:
\begin{align*}
    J(u,v) = \begin{bmatrix}
  2(1-v+u+w)(1+\frac{dw}{du})u + (1-v+u+w)^2 & -2(1-v+u+w)u\\
v-4-4\frac{dw}{du} & 1+u
\end{bmatrix}.
\end{align*}

We analyze the stability of two equilibria separately. 

For Scenario (1), we evaluate $J(u,v)$ at the steady-state equilibrium $(u^*,v^*)$ then we get
\begin{align*}
    J(u^*,v^*) = \begin{bmatrix}
  (1-v^*)^2 & 0\\
v^*-4-4\frac{dw}{du} & 1
\end{bmatrix}.
\end{align*}
We get the two eigenvalues of $ J(u^*,v^*)$ are $1$ and $(1-v^*)^2$. We will show the Lyapunov stability of the equilibrium $(u^*,v^*)$. Specifically, we apply Theorem 1.2 in~\cite{bof2018lyapunov}. We find the domain 
\begin{align*}
    D =\{(u,v): u\leq C_1, |v-v^*|\leq \min(|C_2-v^*|,|2-C_2-v^*|\},
\end{align*}
where $C_1 = \Theta(1/m)$ and $C_2 = \Theta( 1/\sqrt{m})$, and the Lyapunov function $V(u,v) = u+(v-v^*)^2$. It is not hard to verify that $V$ is locally Lipschitz in D as $V$ is continuous in a compact domain. Furthermore, we can see that  $(u^*,v^*)$ with $u^* = 0, v^*\in[\epsilon,2-\epsilon]$ where $\epsilon = \Theta(\log m/\sqrt{m})$ satisfies the condtions Eq.~(3,4) in Theorem 1.2 in~\cite{bof2018lyapunov}. Therefore, $(u^*,v^*)$ with $u^*=0$ and $v^*\in[\epsilon,2-\epsilon]$ is a stable equilibrium point.

% We will verify that $(u^*,v^*)$ is locally stable. Using Theorem 1.2 in~\cite{bof2018lyapunov}, we can find a Lyapunov function $V(u,v) = u$ in the domain $D =\{(u,v): u\leq C_1, |v-v^*|\leq \min(|C_2-v^*|,|2-C_2-v^*|\}$, where $C_1 = \Theta(1/m)$ and $C_2 = \Theta( 1/\sqrt{m})$, and $v^*\in[\epsilon,2-\epsilon]$ where $\epsilon = \Theta(\log m/\sqrt{m})$. Applying the analysis in Appendix~\ref{sec:sub_crit}, the Lyapunov function can be validated. Therefore, $(u^*,v^*)$ with $u^*=0$ and $v^*\in[\epsilon,2-\epsilon]$ is locally stable in $D$.

For Scenario (2), we again  evaluate $J(u,v)$ at the steady-state equilibrium $(u^*,v^*)$ then we get
\begin{align*}
    J(u^*,v^*) = \begin{bmatrix}
  -2u^*+\frac{C\sqrt{u^*}}{\sqrt{m}}+1 & 2u^*\\
-\frac{2C}{\sqrt{mu^*}} & 1+u^*
\end{bmatrix},
\end{align*}
where we replace $v^*$ by $4 + 4w^*/u^*$ based on the second equality in Scenario (2).  Note that $u^*(4-v^*) >0$ since $v<4$ during the whole training process, therefore we have $w^*<0$ to achieve the equilibrium.

We can compute the eigenvalue of $J(u^*,v^*)$ then we get
\begin{align*}
    \lambda_J = 1 + \frac{C}{2\sqrt{m}}\sqrt{u^*} - \frac{u^*}{2} \pm \frac{1}{2}(u^*)^{1/4}\sqrt{16\frac{C}{\sqrt{m}} - \frac{C^2\sqrt{u^*}}{m} + 6\frac{Cu^*}{\sqrt{m}} - 9(u^*)^{3/2}}i.
\end{align*}

% Note that $u^*>|w^*|$, i.e., $u^*\geq \frac{C^2}{m}$ otherwise $v* = 4+4w^*/u^* \leq0$ which contradicts the fact $v^*>0$. 
Note that when Scenario (2) holds, there are only two possible cases

\begin{enumerate}[label=(2.\arabic*)]
  \item $u^* = \Theta(1/m)$, $|w^*| = \Theta(1/m) $ and $v^* = \Theta(1)$;
  \item $u^* = \Theta(1/m)$, $|w^*| = \Theta(1/m) $ and $v^* = \Theta(1/m)$.
\end{enumerate}

For (2.1), by the first equality  $v^* = 2-u^*+w^* \in(1,2)$. Then plugging $v^*$ into the second equality yields $u^*\in\round{\frac{4}{3}\frac{C^2}{m}, 2\frac{C^2}{m}}$.

For (2.2), by the second equality that $u^*(4-v^*) + 4w^*=0$, we have $u^* = \frac{C^2}{m} + o(1/m)$.

By computing the modulo of $\lambda_J$, we have
\begin{align*}
    |\lambda_J| = 1 + \frac{5C}{\sqrt{m}}\sqrt{u^*} -u^*  + o\round{\frac{1}{m}}.
\end{align*}

Therefore, for both (2.1) and (2.2) we have $|\lambda_J|>1$ which indicates $(u^*,v^*)$ is unstable.

\end{proof}

\section{Optimization with \texorpdfstring{$\eta>\etm$}{}}\label{sec:max_lr}

\begin{thm}\label{thm:max_lr}
Consider training the NQM Eq.~(\ref{eq:nn_quad_relu}), with squared
loss on a single training example by GD. If the learning rate satisfies $\eta \in \left[\frac{4+\epsilon}{\lambda_0},\infty\right)$ with $\epsilon = \Theta\round{\frac{\log m}{\sqrt{m}}}$, then GD diverges.
\end{thm}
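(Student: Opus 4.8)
The plan is to work in the reduced coordinates $(u,v,w)$ of Eqs.~(\ref{eq:u})--(\ref{eq:v}), where the defining feature of the regime $\eta \geq (4+\epsilon)/\lambda_0$ is that $v(0) = \eta\lambda_0 = \delta+2 \geq 4+\epsilon$, i.e.\ $\delta := \eta\lambda_0 - 2 \geq 2+\epsilon$. As in the earlier proofs I condition on the high-probability event that $\lambda_0 = \Theta(1)$ and $|g(0)| = O(1)$, so that $u(0),\,|w(0)| = O(1/m)$ and $C := \eta\norm{\vx}|y|/\sqrt{d} = \Theta(1)$; the goal is to show $u(t)\to\infty$, equivalently $\L(t)\to\infty$, so GD does not converge. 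The central object is the shifted kernel variable $\xi(t) := v(t) - 4$, whose recursion follows from Eq.~(\ref{eq:v}):
\[
\xi(t+1) = \xi(t)\bigl(1+u(t)\bigr) - 4w(t),\qquad \xi(0)=\eta\lambda_0-4 \geq \epsilon.
\]
The key structural contrast with the catapult regime is the sign of $\xi$: here $\xi(0)>0$ and the factor $1+u(t)\geq 1$ makes the recursion multiplicatively \emph{expanding}, whereas for $v(0)<4$ the analogous factor contracts $\xi$ toward $0$ and turns the loss around. The only term that can decrease $\xi$ is the additive perturbation $-4w(t)$, with $|w(t)| = C\sqrt{u(t)}/\sqrt m$.

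First I would establish, by a joint induction, the invariant $\xi(t) \geq \epsilon/2$ for all $t$ (hence $v(t)>4$ throughout), together with exponential growth of $u(t)$ out of its initial scale $O(1/m)$. While $u(t)$ is below the threshold $\tau_0 := 64C^2/(\epsilon^2 m) = \Theta(1/\log^2 m)$, the bound $v(t)\geq 4+\epsilon/2$ forces $\kappa(t) = (v(t)-1-u(t)-w(t))^2 \geq (3+\epsilon/2 - o(1))^2 > 9$, so $u$ grows geometrically and leaves $\{u<\tau_0\}$ after $O(\log m)$ steps; the cumulative downward drift of $\xi$ across this short stretch is $\sum 4|w(t)| = O(C/(\sqrt m\,\log m)) = o(\epsilon)$, so $\xi$ cannot fall below $\epsilon/2$. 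Once $u(t)\geq\tau_0$ one has $\xi(t)u(t)\geq 4|w(t)|$, whence $\xi(t+1)\geq\xi(t)$ and the invariant is maintained. This reuses the increasing-phase mechanism of Lemma~\ref{lemma:increase}, adapted to $\delta\geq 2+\epsilon$ (where the driving ratio is now $\kappa(0)\approx(1+\delta)^2\geq 9$).

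It then remains to convert the persistent super-criticality $v(t)>4$ into genuine divergence, which I would do by contradiction. If $u(t)$ stayed bounded, then summing $\xi(t+1)-\xi(t) = \xi(t)u(t)-4w(t)\geq \tfrac{\epsilon}{2}u(t) - 4C\sqrt{u(t)}/\sqrt m$ along the trajectory forces $\sum_t u(t)<\infty$ (the terms with $u(t)\geq\tau_0$ are each $\geq\tfrac\epsilon4 u(t)>0$), so $u(t)<\tau_0$ for all large $t$; but $u<\tau_0$ with $v>4$ gives $\kappa>9$ and hence $u(t+1)>9u(t)$, contradicting $u(t)<\tau_0$. The case where $v(t)$ is instead unbounded is even easier, since $v(t)\to\infty$ with $u(t)$ bounded makes $\kappa(t)=(v(t)-1-u(t)-w(t))^2\to\infty$, again blowing $u$ up. Hence $u(t)$ is unbounded and in fact $u(t)\to\infty$, i.e.\ the loss diverges; this is consistent with Theorem~\ref{thm:equi}, which admits no stable equilibrium with $v^*>4$.

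I expect the main obstacle to be the ``valley'' $u(t)+w(t)\in(v(t)-2,\,v(t))$ in which $\kappa(t)<1$ and $u$ momentarily contracts, so that $u$ is \emph{not} globally monotone. The delicate point is to show the coupling never lets the orbit settle into this valley: because $\xi$ is simultaneously expanded by the factor $1+u(t)$, the valley location $v(t)-2$ is driven upward at least as fast as $u$ approaches it, uniformly over the whole range $\delta\in[2+\epsilon,\infty)$ (including $\delta$ growing with $m$, where the geometric ratios and the threshold $\tau_0$ must be tracked carefully). Making this uniform control precise — rather than relying only on the clean contradiction argument sketched above — is where the real work lies.
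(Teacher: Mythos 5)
Your proposal follows essentially the same route as the paper's own proof of this theorem: the same reduced system in $(u,v,w)$, the same opening phase in which $u$ grows geometrically with ratio $>9$ while the total drop in $v$ is bounded by $\sum_t 4|w(t)| = o(\epsilon)$ so that $v$ stays above $4$, the same observation that once $u$ exceeds the balance scale $\Theta\!\left(1/(\epsilon^2 m)\right)=\Theta(1/\log^2 m)$ the term $u(t)(v(t)-4)$ dominates $4|w(t)|$ and $v$ increases, and the same barrier argument that $u$ cannot pass back down through the region where $\kappa(t)=(1-v(t)+o(1))^2>1$, so $u$ and $v$ reinforce each other and GD diverges. Your $\xi = v-4$ recursion, explicit threshold $\tau_0$, and contradiction endgame are a cleaner packaging of the paper's estimates rather than a different method, and the ``valley'' gap you flag ($\kappa<1$ when $u+w\in(v-2,v)$) is present, and glossed over, in the paper's proof as well.
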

\begin{proof}
We similarly use the transformation transformation of the variables to simplify notations. 
\begin{align*}
    u(t) = \frac{\norm{\vx}^2}{md}\eta^2 (g(t)-y)^2,~~~w(t) =\frac{\|\vx\|^2}{md}\eta^2(g(t)-y)y,~~~v(t) = \eta\lambda(t).
\end{align*}
Then the Eq.~(\ref{eq:g_evolve}) and Eq.~(\ref{eq:k_evolve}) are reduced to
\begin{align*}
    u(t+1) &= (1-v(t)+u(t)+w(t))^2 u(t):=\kappa(t)u(t)\\
    v(t+1) &=v(t) - u(t)(4-v(t))-4w(t).
\end{align*}

We similarly consider the interval $[0,T]$ such that $\sup_{t\in[0,T]}u(t) = O\round{\frac{1}{\log m}}$. By Lemma~\ref{lemma:increase}, in $[0,T]$, $u(t)$ increases exponentially with a rate $\sup_{t\in[0,T]}\kappa(t) > 9$. We assume $|w(t)| > |u(t)(4-v(t))|$ for all $t\in[0,T]$, which is the worst case as $v(t)$ will increase the least. By  Lemma~\ref{lemma:increase},  we have $\sum_{t=0}^T |w(t)| = O\round{\frac{1}{\sqrt{m\log m}}}$, which is less than $\epsilon$. Therefore, we have $v(T)>4$.

Then at the end of the increasing phase, we have $|u(T_1)(4-v(T_1))| = \Omega(1/\sqrt{m})$ is of a greater order than $|w(T_1)| = O(1/\sqrt{m\log m})$, hence $v(t)$ will increase at $T_1$. Note that $\kappa(T_1) = (1-4+o(1))^2 = 9+o(1)$, hence $u((t)$ also increases at $T_1$.

It is not hard to see that $v(t)$ will keep increasing unless $u(t)$ decreases to a smaller order. Specifically, if $|u(t)(4-v(t))|= |4w(t)|$, it requires $u(t)$ to be of the order at least $O(1/\sqrt{\log m})$ (by letting $\epsilon u(t) = \Theta(w(t)) =  \Theta(\sqrt{u(t)/m})$), which will not happen as $\kappa(t) = (1-v(t) +o(1))^2 >1$ and it contradicts the decrease of $u(t)$.

% Then the above dynamics equations will be approximated by 
% \begin{align*}
%     u(t+1) &= (1-v(t)+u(t)+o(u(t)))^2 u(t):=\kappa(t)u(t)\\
%     v(t+1) &=v(t) - u(t)(4-v(t))+o(u(t)|4-v(t)|).
% \end{align*}

Therefore, both $u(t)$ and $v(t)$ keep increasing which leads to the divergence of GD.

\end{proof}

\section{Proof of propositions}

\subsection{Proof of Proposition~\ref{prop:v_4}}\label{proof:v_4}

\begin{proof}
Note that $4-v(0) = 2-\delta \geq \frac{C_\epsilon \log m}{\sqrt{m}} $ by definition, where $C_\epsilon>0$ is a constant. To show $4 - v(T) >\frac{C_\epsilon \log m}{2\sqrt{m}}$, a sufficient condition is $v(T) - v(0) < \frac{C_\epsilon \log m}{2\sqrt{m}}$. 

Specifically, we will prove for $T>0$ such that $\sup_{t\in[0,T]}u(t) \leq \frac{C'_u \delta^2}{\log m}$,  the following holds:
\begin{align*}
    v(T) - v(0) < 4\sum_{t=0}^T |w(t)| \leq \frac{24C\sqrt{C'_u}}{\sqrt{m\log m}},
\end{align*}
 where $C,C'_u$ are the same constants defined in Lemma~\ref{lemma:increase}. 
 Then by the condition that $m > \exp\round{\frac{48C\sqrt{C'_u}}{C_\epsilon}}^{2/3}$, we have $v(T) - v(0) < \frac{C_\epsilon \log m}{2\sqrt{m}}$.  
 
 We will prove the result by induction.

When $T = 0$, the result holds trivially.

Suppose $T=T'$ the result holds. When $T = T'+1$, since $v(T') -v(0)<0$, we have $v(T') < 4$. Therefore, by the update equation of $v(t)$ Eq.~(\ref{eq:v}), we have
\begin{align*}
    v(T'+1) &= v(T') - u(T')(4 - v(T')) - 4w(T')\\
    &\leq v(T') - 4w(T')\\
    &\leq v(T') + 4|w(T')|.
\end{align*}

Then $ v(T'+1) - v(0) =v(T'+1) - v(T') + v(T') - v(0) \leq \sum_{t=0}^{T'+1}|w(t)|$.

By  Lemma~\ref{lemma:increase}, we have $\sum_{t=0}^{T'+1}|w(t)| \leq \frac{24C\sqrt{C'_u}}{\sqrt{m\log m}}$. Indeed, this inequality holds for any $T'+1$ such that $\sup_{t\in[0,T'+1]}u(t) \leq \frac{C'_u \delta^2}{\log m}$.

Therefore, we finish the inductive step hence finish the proof.

 % Therefore, given $C_\epsilon > 2C_v$, we have $\frac{C_\epsilon}{\log m} > \frac{C_v \delta}{\log m}$. Consequently,  $v(T) = v(0) + (v(T)- v(0)) \leq  4- \frac{C_\epsilon}{\log m} + \frac{C_v\delta}{\log m} < 4$. 

% For the case that $v(0)$ is not close to $4$, i.e., $4-v(0) = 2-\delta = C_0$ where $0<C_0 <2$ is a constant, then if $m> \exp\round{\frac{C_v(2-C_0)}{C_0}}$, the change of $v(t)$ is less than $C_0$. Then we can get $v(T)<4$. 

% For the case where $v(0)$ is close to $4$, that is, $4-v(0) =C_0$. Here by definition $\epsilon \geq \frac{C_\epsilon}{\log m}$ where $C_\epsilon>0$ is a constant. Since the change of $v(t)$ is upper bounded by $\frac{2C_v}{\log m}$ as $\delta \leq 2$, if $C_\epsilon > 2C_v$, we have $v(T)<4$.

% As long as $C_$  In the following we will show that if $4-v(0) = \Omega(\epsilon)$ where $\epsilon =\Theta\round{\frac{\log m}{\sqrt{m}}}$, then $v(T)<4$ can be guaranteed.  

% We consider the interval $[0,T]$ such that $\sup_{t\in[0,T]}u(t) = O\round{\frac{1}{\log m}}$. We assume $|w(t)| > |u(t)(4-v(t))|$ for all $t\in[0,T]$, which is the worst case as $v(t)$ will decease the least. By  Lemma~\ref{lemma:increase},  we have $\sum_{t=0}^T |w(t)| = O\round{\frac{1}{\sqrt{m\log m}}}$, which is less than $\epsilon$. Therefore, we have $v(T)<4$.

\end{proof}

\subsection{Proof of Proposition~\ref{prop:v_when_decrease}}\label{proof:v_when_decrease}

\begin{proof}
    A sufficient condition for $v(t)$ to decrease is
    \begin{align*}
        u(t)(4-v(t)) > 4|w(t)| = \frac{4C\sqrt{u(t)}}{\sqrt{m}}.
    \end{align*}
    If $u(t) > \frac{4C}{m (4-v(t))^2}$, then the above condition is satisfied.
\end{proof}

\subsection{Proof of Proposition~\ref{prop:v_close_4}}\label{proof:v_close_4}

\begin{proof}
Note that for $t \in [0,T_1]$, the change of $v(t)$ satisfies $\sup_t|v(t)-v(0)| \leq \frac{C_v \delta}{\log m}$ by Lemma~\ref{lemma:increase}.

For $\delta < 1-\frac{C_v \delta}{\log m}$, i.e., $v(0) < 3 -\frac{C_v \delta}{\log m}$,  we have $v(T_1) < v(0) + |v(T_1) - v(0)| = 2+\delta + |v(0) - v(T_1)| < 3$. Therefore, the existence of $T_2$ can be guaranteed by simply letting $T_2 = T_1$.

For $\delta \geq 1-\frac{C_v \delta}{\log m}$, i.e., $v(0) \geq 3 -\frac{C_v \delta}{\log m}$, we will show  there exists $T_2 \geq T_1$ which depends on $m$ such that $v(T_2)<3$.

We prove the existence of $T_2$ by contradiction. Suppose that for all $t \geq T_1+1$ we have $v(t)\geq 3$. 

For the simple case that if all $u(t) > \frac{4C}{m(4-v(t))^2}$, then by Proposition~\ref{prop:v_when_decrease}, $v(t)$ keeps decreasing which will ultimately lead to $v(t)<3$.

Suppose there is an iteration $t \geq T_1+1$ such that $u(t) \leq  \frac{4C}{m(4-v(t))^2}$. The following Proposition guarantees that $v(t)$ will decrease to a smaller value after $t$ once such $t$ occurs.  Therefore, we can find $T_2$.

\begin{proposition}\label{prop:v_decreasing}
    Under the condition of Lemma~\ref{lemma:increase}, suppose  $m$ further satisfies 
    \begin{align*}
       m> \max\left\{\exp\round{\frac{768^2 C^2}{C'_u C_\epsilon^2}},\frac{16C^2}{{C'_u}^2},\exp\round{2C'_u+C_\epsilon}\right\},
    \end{align*}where
      $C'_u \geq 4C^2$. 
      
      Then   if there is $T\geq 0$ such that $u(T) \leq \frac{4C}{m(4-v(T))^2}$ and $v(T) >3$, we have $v(T'+2) < v(T)$ and $u(T'+1) > \frac{4C}{m(4-v(T'))^2}$, where $T'$ is the end of the increasing phase starting from $T$. 
\end{proposition}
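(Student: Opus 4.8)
The plan is to treat the time $T$ as the start of a fresh increasing phase and to show that the regrowth of $u$ eventually produces a single decrease step for $v$ that overwhelms the tiny upward drift accumulated while $u$ was small. Write $\delta_T := v(T)-2$, so $\delta_T\in(1,2)$ because $3<v(T)<4$; throughout I will use that $v(t)<4$ is maintained, and more quantitatively that $4-v(t)=\Omega\round{\frac{\log m}{\sqrt m}}$ by Proposition~\ref{prop:v_4} (applied with $\delta$ replaced by $\delta_T$). Since $v(t)$ stays above $2$ we have $1-v(t)+u(t)+w(t)<-1$, hence $\kappa(t)>1$, so starting from $T$ the iterate $u$ grows geometrically; let $T'$ be the end of this increasing phase, i.e.\ the last time $u(t)\le \frac{C'_u\delta_T^2}{\log m}$, in analogy with Eq.~(\ref{eq:end_increase}). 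The estimates of Lemma~\ref{lemma:increase} transfer verbatim to this phase (that proof used only $v(\text{start})=2+\delta$ and a small starting value of $u$, both reproduced here with $\delta_T$ and $u(T)\le\frac{4C}{m(4-v(T))^2}$, which one first checks is below $\frac{C'_u\delta_T^2}{\log m}$ under the lower bounds on $m$). This yields $u(T'+1)=\Theta\round{\frac{\delta_T^2}{\log m}}$ and $\inf_{t\in[T,T']}\kappa(t)\ge\round{1+\delta_T-O\round{\frac{\delta_T}{\log m}}}^2$.

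The first claim, $u(T'+1)>\frac{4C}{m(4-v(T'))^2}$, then follows from $u(T'+1)>\frac{C'_u\delta_T^2}{\log m}$ together with $4-v(T')>\frac{C_\epsilon\log m}{2\sqrt m}$ from Proposition~\ref{prop:v_4}. Indeed the latter gives $m(4-v(T'))^2>\frac{C_\epsilon^2\log^2 m}{4}$, so the desired inequality reduces (using $\delta_T>1$) to $C'_uC_\epsilon^2\log m>16C$, which holds by the assumed lower bound on $m$.

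For the second claim, $v(T'+2)<v(T)$, I would split the net change of $v$ into the upward drift over $[T,T'+1]$ and the single large decrease at step $T'+1$. Since every term $-u(t)(4-v(t))$ is nonpositive (as $v(t)<4$), the only source of increase is the $w$-term, so $v(T'+1)-v(T)\le 4\sum_{t=T}^{T'}|w(t)|$; because $|w(t)|=\frac{C\sqrt{u(t)}}{\sqrt m}$ inherits the geometric growth of $\sqrt{u(t)}$ with ratio $\ge 1+\delta_T/2$, this sum is controlled by its last term, giving $4\sum_{t=T}^{T'+1}|w(t)|=O\round{\frac{C\sqrt{C'_u}}{\sqrt{m\log m}}}$. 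On the other hand the final step contributes $v(T'+1)-v(T'+2)=u(T'+1)(4-v(T'+1))+4w(T'+1)\ge u(T'+1)(4-v(T'+1))-4|w(T'+1)|$. Collecting terms, $v(T'+2)-v(T)\le 4\sum_{t=T}^{T'+1}|w(t)|-u(T'+1)(4-v(T'+1))$, so it suffices to prove
\[
u(T'+1)\,(4-v(T'+1)) > 4\sum_{t=T}^{T'+1}|w(t)|.
\]
Lower-bounding the left side by $\frac{C'_u\delta_T^2}{\log m}\round{4-v(T'+1)}$ and invoking $4-v(T'+1)=\Omega\round{\frac{\log m}{\sqrt m}}$ (Proposition~\ref{prop:v_4} again, the negligible drift not having pushed $v$ past the barrier), this reduces once more to a bound of the form $\sqrt{\log m}>\Theta\round{\frac{C}{C_\epsilon\sqrt{C'_u}}}$, i.e.\ $m>\exp\round{\frac{768^2C^2}{C'_uC_\epsilon^2}}$, exactly one of the hypotheses.

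The main obstacle is the regime where $v(T)$ is close to $4$, where $4-v$ is small and one worries the per-step decrease of $v$ is too weak. The crux of the resolution is a separation of scales: the upward drift is of order $\frac{1}{\sqrt{m\log m}}$ because it comes solely from the $w$-terms, whereas the single decrease step contributes $\gtrsim \frac{\delta_T^2}{\log m}\cdot\frac{\log m}{\sqrt m}=\frac{\delta_T^2}{\sqrt m}$, a factor $\Theta(\sqrt{\log m})$ larger. Thus one decrease step at $T'+1$ already dominates the entire accumulated increase, and the remainder is bookkeeping to make the constants $C'_u, C_v, C_\epsilon$ and the three lower bounds on $m$ mutually consistent.
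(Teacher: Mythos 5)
Your overall strategy is the same as the paper's: treat $T$ as the start of a fresh increasing phase, run the machinery of Lemma~\ref{lemma:increase} on that phase to get $u(T'+1)=\Theta\round{\frac{1}{\log m}}$, keep $4-v(t)$ above a barrier of order $\frac{\log m}{\sqrt m}$ via Proposition~\ref{prop:v_4}, and then beat the accumulated $w$-drift $O\round{\frac{C\sqrt{C'_u}}{\sqrt{m\log m}}}$ by the single decrease step $\gtrsim \frac{C'_uC_\epsilon}{\sqrt m}$ at time $T'+1$; you even isolate the identical $\sqrt{\log m}$ separation of scales and recover the same condition $m>\exp\round{\frac{768^2C^2}{C'_uC_\epsilon^2}}$. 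The decomposition $v(T'+2)-v(T)\le 4\sum_{t=T}^{T'+1}|w(t)|-u(T'+1)(4-v(T'+1))$ and the derivation of the first claim also match the paper's computations step for step.

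There is, however, one genuine gap: your quantitative barrier $4-v(t)=\Omega\round{\frac{\log m}{\sqrt m}}$ for the fresh phase is obtained by ``Proposition~\ref{prop:v_4} applied with $\delta$ replaced by $\delta_T$,'' and this application is circular. Proposition~\ref{prop:v_4} for the original phase rests on $4-v(0)=2-\delta\ge \frac{C_\epsilon\log m}{\sqrt m}$, which comes from the learning-rate constraint $\eta\le\frac{4-\epsilon}{\lambda_0}$; transplanting it to the phase starting at $T$ requires $4-v(T)\ge \frac{C_\epsilon\log m}{\sqrt m}$, i.e.\ $\delta_T\le 2-\epsilon$, which is exactly the barrier you are trying to invoke. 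The hypothesis $3<v(T)<4$ gives only $\delta_T<2$, and this really matters: if, say, $4-v(T)=\Theta(1/\sqrt m)$ with no $\log m$ factor, the decrease step is only $\sim\frac{C'_u}{\log m}\cdot\frac{1}{\sqrt m}=\frac{C'_u}{\sqrt m\,\log m}$, which is \emph{smaller} than the drift $\Theta\round{\frac{1}{\sqrt{m\log m}}}$ by a factor $\sqrt{\log m}$, so your crux inequality reverses and the argument collapses. The paper closes this hole by induction over successive phases: for the first phase ($T'=T_1$) the barrier is inherited from the learning-rate choice via Proposition~\ref{prop:v_4} and Eq.~(\ref{eq:v_t_1+1}), and the conclusion $v(\widetilde T_1+2)<v(\widetilde T)$ of each phase guarantees the next phase starts at a strictly smaller $v$, so the barrier never erodes. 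Your generic-phase treatment needs this same history-tracking argument (or an equivalent one) spelled out; once it is added, the rest of your bookkeeping goes through.
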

See the proof in Appendix~\ref{proof:v_decreasing}.

\end{proof}

\subsection{Proof of proposition~\ref{prop:v_decreasing}}\label{proof:v_decreasing}
\begin{proof}
    Since $u(T) \leq \frac{C'_u}{\log m}$ by the assumption that $m > \frac{16C^2}{{C'_u}^2}$, the training dynamics falls into the increasing phase from $T$. We denote the end of the increasing phase starting from $T$ by $T'$, i.e.,
    \begin{align*}
        T' = \sup\left\{t: u(t)\leq \frac{C'_u \delta^2}{\log m}, t \geq T\right\}.
    \end{align*}
    We will prove the result by induction. 
    
    Suppose $T'$ is the end of the first increasing phase, i.e., $T' = T_1$. By proposition~\ref{prop:v_4}, $v(T_1) < 4 - \frac{C_\epsilon \log m}{2\sqrt{m}}$. And the magnitude of $u(T_1+1)$ can be lower bounded by 
\begin{align}\label{eq:lower_bound_u}
   u(T_1+1)\geq \frac{C'_u}{4 \log m},
\end{align}
where  we use $\delta \geq \frac{1}{2}$ by the assumption on $m$ that  and plug $\delta$ in $\frac{C'_u \delta^2}{\log m}$.

Note that when $m > \exp\round{28 C'_u}$, $\delta \geq \frac{1}{2}$ is necessary for $v(T_1) > 3$ as $|v(T_1) - v(0)|\leq \frac{28C_u'\delta}{\log m}$ by Inequality~(\ref{eq:bound_v_change}).

Furthermore, with the above bound on $u(T_1+1)$, we have

\begin{align}
    v(T_1+1) &= v(T_1) - u(T_1)(4-v(T_1))- 4w(T_1) \nonumber\\
    &\leq  4 - \frac{C_\epsilon \log m}{2\sqrt{m}} + \frac{C'_u}{4 \log m}\frac{C_\epsilon \log m}{2\sqrt{m}} + 4|w(T_1)| \nonumber\\
    &\leq 4 - \frac{C_\epsilon \log m}{2\sqrt{m}} + \frac{C'_u C_\epsilon }{8\sqrt{m}} + \frac{2\sqrt{C'_u}}{\sqrt{m\log m}}\nonumber\\
    &\leq 4 - \frac{C_\epsilon \log m}{4\sqrt{m}} \label{eq:v_t_1+1}.
\end{align}

% Note that $v(T_3) \leq v(T_1+1)$ as $v(t)$ decreasing during this period, hence by Eq.(\ref{eq:v_t_1+1}), we have $4 - v(T_3) \geq  \frac{C_\epsilon \log m}{4\sqrt{m}}$ as well.

% Combined with the bound on $v(T_1'+1) - v(T_3)$, we have $4 - v(T_1'+1) \geq \frac{C_\epsilon \log m}{4\sqrt{m}} - \frac{24C\sqrt{C'_u}}{\sqrt{m \log m}}$.

Consequently, when  $C'_u \geq 4C^2$ and $m>\exp\round{2C'_u+C_\epsilon}$, we have
\begin{align*}
    u(T_1+1) &= \kappa(T_1) u(T_1) = (1 - v(T_1) + u(T_1) +w(T_1))^2 u(T_1)\\
    &\leq  \round{1 - 4 + \frac{C_\epsilon \log m}{4 \sqrt{m}} + \frac{C'_u }{4 \log m} + \frac{C\sqrt{C'_u}}{2\sqrt{m\log m}}}^2\frac{C'_u }{4 \log m} \\
    &\leq  \frac{ 9C'_u}{4\log m}.
\end{align*}

Therefore, at $T_1+1$, we have
\begin{align*}
    v(T_1+1) - v(T_1+2) &\geq u(T_1+1)(4-v(T_1+1)) - \frac{4C\sqrt{u(T_1+1)}}{\sqrt{m}} \\
    &\geq \frac{C'_u}{4\log m} {\frac{C_\epsilon \log m}{4\sqrt{m}}}  - \frac{8C\sqrt{C'_u}}{\sqrt{m\log m}}\\
    &= \frac{C'_u C_\epsilon}{16\sqrt{m}}- \frac{6C\sqrt{C'_u}}{\sqrt{m\log m}}\\
    &\geq \frac{C'_u C_\epsilon}{32\sqrt{m}},
\end{align*}
where we use the assumption that $m>\exp\round{\frac{256C^2}{9C'_uC_\epsilon^2}}$.

Note that the increase is caused by the term $w(t)$ since for all $t\in[T,T_1+1]$ we have $u(t)(4-v(t))<0$.  Then we have the maximum increase during $[T,T_1+1]$ be bounded by
\begin{align*}
    v(T_1+1) - v(T) \leq \sum_{t = T}^{T_1+1}4|w(t)| \leq {\frac{24C\sqrt{C_u'}}{\sqrt{m\log m}}},
\end{align*}
where we use Eq.~(\ref{eq:bound_v_change}) in the proof of Lemma~\ref{lemma:increase}.

By the assumption on $m$ that $m> \exp\round{\frac{768^2 C^2}{C'_u C_\epsilon^2}}$, we have $v(T_1+2) < v(T)$.

If there is  $\widetilde{T}>0$ such that $u(\widetilde{T}) \leq \frac{4C}{m(4-v(\widetilde{T}))^2}$ while $v(\widetilde{T}) >3$, there is another increasing phase.  Since $v(\widetilde{T}) < v(T)$, we can apply the same analysis under the same condition to show $v(\widetilde{T}_1+2) < v(\widetilde{T})$, where $\widetilde{T}_1$ is the end of the increasing phase starting from $\widetilde{T}$. Therefore, we finish the inductive step hence finish the proof.

\end{proof}

\subsection{Proof of Proposition~\ref{prop:eigenvalue_K}}\label{proof:eigenvalue_K}
\paragraph{Restate Proposition~\ref{prop:eigenvalue_K}:}
{\it For any $\rvu,\rvv \in \mathbb{R}^m$, $\mathrm{rank}(K)\leq2$. Furthermore, $\vp_1$, $\vp_2$ are eigenvectors of $K$, where $p_{1,i} =  x_i \mathbbm{1}_{\{i\in \S_+\}}$, $p_{2,i} =  x_i \mathbbm{1}_{\{i\in \S_-\}},$ for $i\in[n]$.}

\begin{proof}
By Definition~\ref{def:ntk}, 
\begin{align*}
    K_{i,j} = \frac{1}{m}\sum_{k=1}^m (v_k^2+u_k^2) x_ix_j \mathbbm{1}_{\{u_kx_i\geq 0\}}\mathbbm{1}_{\{u_kx_j\geq 0\}},~~~ i,j\in[n].
\end{align*}
By definition of eigenvector, we can see
\begin{align*}
    \sum_{j=1}^n K_{i,j}p_{1,j} &=  \frac{1}{m} \sum_{j=1}^n \sum_{k=1}^m (v_k^2+u_k^2) x_ix_j^2 \mathbbm{1}_{\{u_kx_i\geq 0\}}\mathbbm{1}_{\{u_kx_j\geq 0\}} \mathbbm{1}_{\{j\in\S_+\}}\\
    &=   \sum_{j=1}^n x_j^2 \mathbbm{1}_{\{j\in\S_+\}} \frac{1}{m}\sum_{k=1}^m (v_k^2+u_k^2) x_i \mathbbm{1}_{\{u_kx_i\geq 0\}}\mathbbm{1}_{\{u_kx_j\geq 0\}} \\
    &= x_i\mathbbm{1}_{\{x_i\in S_+\}}\sum_{j=1}^n x_j^2 \mathbbm{1}_{\{j\in\S_+\}} \frac{1}{m}\sum_{k=1}^m (v_k^2+u_k^2)\mathbbm{1}_{\{u_kx_j\geq 0\}},
\end{align*}
where we use the fact  that if $x_ix_j<0$, $K_{i,j} = 0$.\\
As $p_{1,i} =  x_i\mathbbm{1}_{\{x_i\in S_+\}}$ and $\sum_{j=1}^n x_j^2 \mathbbm{1}_{\{j\in\S_+\}} \frac{1}{m}\sum_{k=1}^m (v_k^2+u_k^2)\mathbbm{1}_{\{u_kx_j\geq 0\}}$ does not depend on $i$, we can see $\vp_1$ is an eigenvector of $K$ with corresponding eigenvalue  $\lambda_1 = \sum_{j=1}^n x_j^2 \mathbbm{1}_{\{j\in\S_+\}} \frac{1}{m}\sum_{k=1}^m (v_k^2+u_k^2)\mathbbm{1}_{\{u_kx_j\geq 0\}}$.\\
The same analysis can be applied to show $\vp_2$ is another eigenvector of $K$ with corresponding $\lambda_2 = \sum_{j=1}^n x_j^2 \mathbbm{1}_{\{j\in\S_-\}} \frac{1}{m}\sum_{k=1}^m (v_k^2+u_k^2)\mathbbm{1}_{\{u_kx_j\geq 0\}} $.\\
For the rank of $K$, it is not hard to verify that $K = \lambda_1\vp_1 \vp_1^T + \lambda_2\vp_2 \vp_2^T$ hence the rank of $K$ is at most $2$.
\end{proof}

\section{Scale of the tangent kernel for single training example}\label{proof:lower_bound_k_single}
\begin{proposition}[Scale of tangent kernel]\label{prop:lower_bound_k_single}
For any $\delta\in(0,1)$, if $m \geq c'\log(4/\delta)$ where $c'$ is an absolute constant,  with probability at least $1-\delta$, $ \|\vx\|^2/(2d)\leq \lambda(0)\leq 3\|\vx\|^2/(2d)$.
\end{proposition}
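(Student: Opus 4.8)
The plan is to write $\lambda(0)$ explicitly in terms of the random initialization, split it into two nonnegative pieces that each have mean $\|\vx\|^2/(2d)$, and then argue concentration for each piece separately. Since at initialization the Hessian contribution to $\nabla g$ vanishes (the increments $\rvu(0)-\rvu_0$ and $\rvv(0)-\rvv_0$ are zero), the tangent kernel reduces to $\lambda(0) = \norm{\nabla_\rvu g(0)}_F^2 + \norm{\nabla_\rvv g(0)}^2$, and using the gradient formulas from Appendix~\ref{subsec:deri:single} together with $v_{0,i}^2 = 1$ gives
\begin{align*}
  \lambda(0) = \underbrace{\frac{\|\vx\|^2}{md}\sum_{i=1}^m \mathbbm{1}_{\left\{\rvu_{0,i}^T\vx\geq 0\right\}}}_{A} + \underbrace{\frac{1}{md}\sum_{i=1}^m \round{\rvu_{0,i}^T\vx}^2\mathbbm{1}_{\left\{\rvu_{0,i}^T\vx\geq 0\right\}}}_{B}.
\end{align*}

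First I would normalize by setting $z_i := \rvu_{0,i}^T\vx/\norm{\vx}$, which are i.i.d. $\mathcal{N}(0,1)$ because $\rvu_{0,i}\sim\mathcal{N}(0,I_d)$. Then $A = \frac{\|\vx\|^2}{d}\cdot\frac{1}{m}\sum_{i=1}^m \mathbbm{1}_{\{z_i\geq 0\}}$ and $B = \frac{\|\vx\|^2}{d}\cdot\frac{1}{m}\sum_{i=1}^m z_i^2\mathbbm{1}_{\{z_i\geq 0\}}$. By symmetry of the standard Gaussian, $\mathbb{E}[\mathbbm{1}_{\{z_i\geq 0\}}]=\tfrac12$ and $\mathbb{E}[z_i^2\mathbbm{1}_{\{z_i\geq 0\}}] = \tfrac12\mathbb{E}[z_i^2]=\tfrac12$, so $\mathbb{E}[A]=\mathbb{E}[B]=\|\vx\|^2/(2d)$ and $\mathbb{E}[\lambda(0)]=\|\vx\|^2/d$. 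The target is exactly the statement that $A$ and $B$ each stay within a factor-of-two band around their means; a sufficient condition is that each empirical average lies in $[1/4,3/4]$, since then $A,B\in[\tfrac{\|\vx\|^2}{4d},\tfrac{3\|\vx\|^2}{4d}]$ and hence $\lambda(0)=A+B\in[\tfrac{\|\vx\|^2}{2d},\tfrac{3\|\vx\|^2}{2d}]$.

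The concentration of the two averages is then handled by two different inequalities. For $A$, the summands $\mathbbm{1}_{\{z_i\geq 0\}}$ are i.i.d. Bernoulli$(1/2)$ and bounded in $[0,1]$, so Hoeffding's inequality gives $\Pr\round{\big|\tfrac{1}{m}\sum_i \mathbbm{1}_{\{z_i\geq 0\}}-\tfrac12\big|\geq \tfrac14}\leq 2e^{-m/8}$. For $B$, the summands $z_i^2\mathbbm{1}_{\{z_i\geq 0\}}$ are i.i.d.\ sub-exponential with a constant sub-exponential norm (each is dominated by $z_i^2$, a $\chi^2_1$ variable), so Bernstein's inequality yields $\Pr\round{\big|\tfrac{1}{m}\sum_i z_i^2\mathbbm{1}_{\{z_i\geq 0\}}-\tfrac12\big|\geq\tfrac14}\leq 2e^{-cm}$ for an absolute constant $c$. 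A union bound then bounds the probability that either average leaves $[1/4,3/4]$ by $2e^{-m/8}+2e^{-cm}\leq 4e^{-c''m}$ for an absolute constant $c''$; requiring this to be at most $\delta$ gives $m\geq c'\log(4/\delta)$ with $c'=1/c''$, which is precisely the claimed condition.

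The main obstacle is the concentration of $B$: unlike $A$, its summands are heavy-tailed (chi-square type) rather than bounded, so Hoeffding does not apply and one must invoke a sub-exponential (Bernstein) tail bound. The one point requiring care is verifying that $z_i^2\mathbbm{1}_{\{z_i\geq 0\}}$, and hence its centered version, has a sub-exponential norm bounded by an absolute constant; this is what makes the exponent in Bernstein's bound linear in $m$ for the constant deviation $t=1/4$, yielding the desired $\log(1/\delta)$ dependence of $m$.
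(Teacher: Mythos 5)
Your proposal is correct and follows essentially the same route as the paper's proof: the identical decomposition of $\lambda(0)$ into the $\rvu$-gradient and $\rvv$-gradient contributions, each with mean $\|\vx\|^2/(2d)$, concentration of each sum into $[m/4,3m/4]$ (after normalization), and a union bound yielding the $1-4\exp(-m/c')$ probability and hence the condition $m\geq c'\log(4/\delta)$. The only divergence is in the $\rvv$-gradient term: you invoke the main text's initialization $v_{0,i}\in\mathrm{Unif}[\{-1,1\}]$, so that $v_{0,i}^2=1$ and this term becomes a bounded Bernoulli average handled by Hoeffding, whereas the paper's appendix instead takes $v_{0,i}\sim\mathcal{N}(0,1)$ (``NTK initialization,'' inconsistent with the main text) and must treat $\xi_i:=v_{0,i}\mathbbm{1}_{\{\rvu_{0,i}^T\vx\geq 0\}}$ as sub-gaussian, applying Bernstein to both sums; your variant is both more elementary on that term and more faithful to the initialization actually stated in Section~\ref{sec:catapult}, while the paper's version covers the Gaussian-$v$ case. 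For the heavy-tailed term $\sum_i(\rvu_{0,i}^T\vx)^2\mathbbm{1}_{\{\rvu_{0,i}^T\vx\geq 0\}}$ both arguments coincide: sub-exponential norm bounded by an absolute constant, Bernstein with constant relative deviation, exponent linear in $m$.
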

\begin{proof}
Note that when $t=0$,
\begin{align*}
       \lambda(0)=\frac{1}{md}\sum_{i=1}^m \left(\rvu_{0,i}^T\vx \mathbbm{1}_{\left\{\rvu_{0,i}^T \vx \geq 0\right\}}\right)^2+ \frac{1}{md} \sum_{i=1}^m(v_{0,i})^2 \|\vx\|^2 \left( \mathbbm{1}_{\left\{\rvu_{0,i}^T \vx \geq 0\right\}}\right)^2.
\end{align*}

% We will show $\frac{1}{md} \sum_{i=1}^m(v_{0,i})^2 \|\vx\|^2 \left( \mathbbm{1}\left\{\rvu_{0,i}^T \vx \geq 0\right\}\right)^2 = \Omega(1)$ with high probability, which implies $\lambda(0) = \Omega(1)$.

According to NTK initialization, for each $i\in[m]$, $v_{0,i} \sim \mathcal{N}(0,1)$ and $\rvu_{0,i} \sim \mathcal{N}(0,I)$. We consider the random variable
\begin{align*}
 \zeta_i := \rvu_{0,i}^T\vx \mathbbm{1}_{\left\{\rvu_{0,i}^T \vx \geq 0\right\}} ,~~~~\xi_i := v_{0,i}\mathbbm{1}_{\left\{\rvu_{0,i}^T \vx \geq 0\right\}}.
\end{align*}
it is not hard to see that $\zeta_i$ and $\xi_i$ are sub-guassian since $\rvu_{0,i}^T\vx$ and $v_{0,i}$ are sub-gaussian. Specifically, for any $t\geq0$,
\begin{align*}
    \mathbb{P}\{|\zeta_i|\geq t\} \leq \mathbb{P}\{|\rvu_{0,i}^T \vx|\geq t\} \leq 2\exp(-t^2/(2\|\vx\|^2)),
\end{align*}
\begin{align*}
    \mathbb{P}\{|\xi_i|\geq t\} \leq \mathbb{P}\{|v_{0,i}|\geq t\} \leq 2\exp(-t^2/2),
\end{align*}
where the second inequality comes from the definition of sub-gaussian variables. 

Since $\xi_i$ is sub-gaussian, by definition, $\xi^2$ is sub-exponential, and its sub-exponential norm is bounded:
\begin{align*}
    \|\xi_i^2\|_{\psi_1} \leq \|\xi_i\|_{\psi_2}^2 \leq C,
\end{align*}
where $C>0$ is a absolute constant. Similarly we have $\|\zeta_i\|_{\psi_2}^2 \leq C\|\vx\|^2$.

By Bernstein's inequality, for every $t\geq 0$, we have
\begin{align*}
    \mathbb{P}\left\{\left| \sum_{i=1}^m \xi_i^2 - \frac{m}{2}\right|\geq t\right\} \leq 2\exp\left(-c \min\left(\frac{t^2}{\sum_{i=1}^m \|\xi_i^2\|_{\psi_1}^2}, \frac{t}{\max_i \|\xi_i^2\|_{\psi_1}}\right)\right),
\end{align*}
where $c>0$ is an absolute constant.

Letting $t=m/4$, we have with probability at least $1-2\exp\left(-m/c'\right)$, 
\begin{align*}
   \frac{m}{4}\leq  \sum_{i=1}^m \xi_i^2 \leq \frac{3m}{4},
\end{align*}
where $c' =c/(4C)$.

Similarity, we have with probability at least $1-2\exp\left(-m/c'\right)$,
\begin{align*}
   \frac{m}{4}\|\vx\|^2\leq  \sum_{i=1}^m \zeta_i^2 \leq \frac{3m}{4}\|\vx\|^2.
\end{align*}

As a result, using union bound, we have probability at least $1-4\exp\left(-m/c'\right)$,
\begin{align*}
   \frac{\|\vx\|^2}{2d} \leq \lambda(0) \leq \frac{3\|\vx\|^2}{2d}.
\end{align*}

\end{proof}

\section{Scale of the tangent kernel for multiple training examples}\label{proof:lower_bound_k_multi}

\begin{proof}
As shown in Proposition~\ref{prop:eigenvalue_K}, $\vp_1$ and $\vp_2$ are eigenvectors of $K$, hence we have two eigenvalues:
\begin{align*}
    \lambda_1(0) = \frac{\vp_1^T K(0) \vp_1}{\|\vp_1\|^2},~~~~\lambda_2(0) = \frac{\vp_2^T K(0) \vp_2}{{\|\vp_2\|^2}}.
\end{align*}
Take $\lambda_1(0)$ as an example:
\begin{align*}
   \lambda_1(0) \|\vp_1\|^2 &=\sum_{i,j = 1}^n x_i x_j\mathbbm{1}_{\{x_i\geq 0\}}\mathbbm{1}_{\{x_j\geq 0\}} \sum_{k=1}^m (u_{0,k}^2 + v_{0,k}^2)x_i x_j \mathbbm{1}_{\left\{u_{0,k} x_i \geq 0\right\}}\mathbbm{1}_{\left\{u_{0,k} x_j \geq 0\right\}}\\
    &= \sum_{k=1}^m (u_{0,k}^2 + v_{0,k}^2) \left(\mathbbm{1}_{\left\{u_{0,k} \geq 0\right\}}\right)^2 \sum_{i,j=1}^n x_i^2x_j^2 \mathbbm{1}_{\{x_i\geq 0\}}\mathbbm{1}_{\{x_j\geq 0\}}.
\end{align*}

Similar to the proof of Proposition~\ref{prop:lower_bound_k_single}, we consider $\xi_k := v_{0,k}\mathbbm{1}_{\left\{u_{0,k} \geq 0\right\}} $ which is a sub-gaussian random variable. Hence $\xi_k^2$ is sub-exponential so that $\|\xi_k^2\|_{\psi_1} \leq C$ where $C>0$ is an absolute constant. By Bernstein's inequality, for every $t\geq 0$, we have

\begin{align*}
    \mathbb{P}\left\{\left| \sum_{i=1}^m \xi_i^2 - \frac{m}{2}\right|\geq t\right\} \leq 2\exp\left(-c \min\left(\frac{t^2}{\sum_{i=1}^m \|\xi_i^2\|_{\psi_1}^2}, \frac{t}{\max_i \|\xi_i^2\|_{\psi_1}}\right)\right),
\end{align*}
where $c>0$ is an absolute constant.  

Letting $t=m/4$, we have with probability at least $1-2\exp\left(-m/c'\right)$, 
\begin{align*}
   \frac{m}{4} \leq \sum_{i=1}^m \xi_i^2 \leq \frac{3m}{4},
\end{align*}
where $c' =c/(4C)$.

The same analysis applies to $\zeta_k := u_{0,k}\mathbbm{1}_{\left\{u_{0,k} \geq 0\right\}} $ as well and we have with probability at least $1-2\exp\left(-m/c'\right)$, 
\begin{align*}
   \frac{m}{4} \leq \sum_{i=1}^m \zeta_i^2 \leq \frac{3m}{4}.
\end{align*}

As a result, we have probability at least $1-4\exp\left(-m/c'\right)$,
\begin{align*}
    \lambda_1(0) \|\vp_1\|^2 &= \frac{1}{m} \sum_{i=k}^m(u_{0,k}^2 + v_{0,k}^2) \left(\mathbbm{1}_{\left\{u_k(0) \geq 0\right\}}\right)^2 \sum_{i,j=1}^n x_i^2x_j^2 \mathbbm{1}_{\{x_i\geq 0\}}\mathbbm{1}_{\{x_j\geq 0\}}\\
    &\in \left[ \frac{1}{2} \sum_{i,j=1}^n x_i^2x_j^2 \mathbbm{1}_{\{x_i\geq 0\}}\mathbbm{1}_{\{x_j\geq 0\}},  \frac{3}{2} \sum_{i,j=1}^n x_i^2x_j^2 \mathbbm{1}_{\{x_i\geq 0\}}\mathbbm{1}_{\{x_j\geq 0\}}\right].
\end{align*}

Applying the same analysis to $\lambda_2(0)$, we have with probability $1-4\exp\left(-m/c'\right)$,

\begin{align*}
    \lambda_2(0) \|\vp_2\|^2 &= \frac{1}{m} \sum_{i=k}^m(u_{0,k}^2+v_{0,k}^2) \left(\mathbbm{1}_{\left\{u_k(0) \leq 0\right\}}\right)^2 \sum_{i,j=1}^n x_i^2x_j^2 \mathbbm{1}_{\{x_i\leq 0\}}\mathbbm{1}_{\{x_j\leq 0\}}\\
    &\in \left[\frac{1}{2} \sum_{i,j=1}^n x_i^2x_j^2 \mathbbm{1}_{\{x_i\leq 0\}}\mathbbm{1}_{\{x_j\leq 0\}},\frac{3}{2} \sum_{i,j=1}^n x_i^2x_j^2 \mathbbm{1}_{\{x_i\leq 0\}}\mathbbm{1}_{\{x_j\leq 0\}}\right].
\end{align*}

The largest eigenvalue is $\max\{\lambda_1(0),\lambda_2(0)\}$.  Combining the results together, we have with probability at least $1-4\exp\left(-m/c'\right)$,
\begin{align*}
    \frac{1}{2} M \leq \|K(0)\| \leq \frac{3}{2}M, 
\end{align*}
where $M = \max\left\{\frac{\sum_{i,j=1}^n x_i^2x_j^2 \mathbbm{1}\{x_i\geq 0\}\mathbbm{1}\{x_j\geq 0\}}{\sum_{i=1}^{n} x_i^2 \mathbbm{1}\{x_i\geq 0\} },  \frac{\sum_{i,j=1}^n x_i^2x_j^2 \mathbbm{1}\{x_i\leq 0\}\mathbbm{1}\{x_j\leq 0\}}{\sum_{i=1}^{n} x_i^2 \mathbbm{1}\{x_i\leq 0\} }\right\}.$
\end{proof}

\section{Analysis on optimization dynamics for multiple training examples}\label{sec:multi}
In this section, we discuss the optimization dynamics for multiple training examples. We will see that by confining the dynamics into each eigendirection of the tangent kernel, the training dynamics is similar to that for a single training example.

Since $x_i$ is a scalar for all $i\in[n]$, with the homogeneity of ReLU activation function, we can compute the exact eigenvectors of $K(t)$ for all $t\geq0$.  To that end, we group the data into two sets $\S_+$ and $\S_-$ according to their sign:
\begin{align*}
    \S_+ := \{i: x_i\geq 0, i\in[n]\},~~~~~\S_- := \{i: x_i<0, i\in[n]\}.
\end{align*}
Now we have the proposition for the tangent kernel $K$(the proof is deferred to Appendix~\ref{proof:eigenvalue_K}):
\begin{proposition}[Eigenvectors and low rank structure of $K$]\label{prop:eigenvalue_K}
For any $\rvu,\rvv \in \mathbb{R}^m$, $\mathrm{rank}(K)\leq2$. Furthermore, $\vp_1$, $\vp_2$ are eigenvectors of $K$, where $p_{1,i} =  x_i \mathbbm{1}_{\{i\in \S_+\}}$, $p_{2,i} =  x_i \mathbbm{1}_{\{i\in \S_-\}},$ for $i\in[n]$.
\end{proposition}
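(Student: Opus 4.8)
The plan is to compute the tangent kernel $K$ explicitly from the gradient of the NQM $g$ and then exhibit $K$ directly as a sum of two rank-one matrices built from $\vp_1$ and $\vp_2$; both the rank bound and the eigenvector claim then fall out of that single decomposition. The conceptual crux is that the NQM in Eq.~(\ref{eq:nn_quad_relu}) freezes the ReLU activation pattern at initialization, so the relevant indicators are $\mathbbm{1}_{\{u_{0,k}x\geq 0\}}$ rather than $\mathbbm{1}_{\{u_k x\geq 0\}}$; this is precisely what lets the statement hold for \emph{any} current weights $(\rvu,\rvv)$, not just at $t=0$.

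First I would record the gradient of $g$. Differentiating Eq.~(\ref{eq:nn_quad_relu}) with $d=1$ gives, for every neuron $k\in[m]$ and example $x_i$,
\[
\frac{\partial g(x_i)}{\partial u_k}=\tfrac{1}{\sqrt m}\,v_k x_i\,\mathbbm{1}_{\{u_{0,k}x_i\geq 0\}},\qquad
\frac{\partial g(x_i)}{\partial v_k}=\tfrac{1}{\sqrt m}\,u_k x_i\,\mathbbm{1}_{\{u_{0,k}x_i\geq 0\}},
\]
so that, by Definition~\ref{def:ntk},
\[
K_{i,j}=\frac{1}{m}\sum_{k=1}^m (u_k^2+v_k^2)\,x_ix_j\,\mathbbm{1}_{\{u_{0,k}x_i\geq 0\}}\mathbbm{1}_{\{u_{0,k}x_j\geq 0\}}.
\]
Since only the fixed initialization $\rvu_0$ enters the indicators, this formula is valid for any $(\rvu,\rvv)$.

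Next I would use $d=1$ together with the homogeneity of ReLU to collapse the indicators into sign patterns of the data. Splitting the neuron sum according to the sign of $u_{0,k}$ (the event $u_{0,k}=0$ has probability zero under the Gaussian initialization and is discarded), I would use $\mathbbm{1}_{\{u_{0,k}x_i\geq 0\}}=\mathbbm{1}_{\{i\in\S_+\}}$ when $u_{0,k}>0$ and $\mathbbm{1}_{\{u_{0,k}x_i\geq 0\}}=\mathbbm{1}_{\{i\in\S_-\}}$ when $u_{0,k}<0$; the boundary case $x_i=0$ is irrelevant because of the overall factor $x_i$. This separates the two sign classes and yields
\[
K_{i,j}=c_1\,p_{1,i}p_{1,j}+c_2\,p_{2,i}p_{2,j},\qquad
c_1:=\frac{1}{m}\sum_{k:\,u_{0,k}>0}(u_k^2+v_k^2),\quad
c_2:=\frac{1}{m}\sum_{k:\,u_{0,k}<0}(u_k^2+v_k^2),
\]
that is, $K=c_1\vp_1\vp_1^{\T}+c_2\vp_2\vp_2^{\T}$.

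From this single identity the proposition is immediate: $K$ is a sum of two rank-one matrices, so $\mathrm{rank}(K)\le 2$; and since $\vp_1,\vp_2$ have disjoint supports and are therefore orthogonal, they are eigenvectors, with $K\vp_1=c_1\|\vp_1\|^2\vp_1$ and $K\vp_2=c_2\|\vp_2\|^2\vp_2$. The only real obstacle is getting the first step right — one must exploit that the NQM carries the \emph{initialization} activation pattern so that the indicators never move during training, which is exactly what makes $\vp_1,\vp_2$ eigenvectors of $K(t)$ for all $t$; the remaining sign and indicator bookkeeping, including the null cases $x_i=0$ and $u_{0,k}=0$, is routine.
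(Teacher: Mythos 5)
Your proof is correct and takes essentially the same approach as the paper's: both write out the kernel entries $K_{i,j}=\frac{1}{m}\sum_{k}(u_k^2+v_k^2)\,x_ix_j\,\mathbbm{1}_{\{u_{0,k}x_i\geq 0\}}\mathbbm{1}_{\{u_{0,k}x_j\geq 0\}}$ and exploit the sign structure forced by the frozen ReLU indicators on unidimensional data. The only difference is organizational: you establish the decomposition $K=c_1\vp_1\vp_1^T+c_2\vp_2\vp_2^T$ first (by splitting the neuron sum according to the sign of $u_{0,k}$) and read off both the rank bound and the eigenvector claim, whereas the paper verifies $\sum_j K_{i,j}p_{1,j}=\lambda_1 p_{1,i}$ entry-wise and then asserts that same decomposition; your explicit handling of the degenerate cases $u_{0,k}=0$ and $x_i=0$ is a minor bonus that the paper leaves implicit.
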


Note that when all $x_i$ are of the same sign,  $\mathrm{rank}(K)=1$ and  $K$ only has one eigenvector (either $\vp_1$ or $\vp_2$ depending on the sign). It is in fact a simpler setting since we only need to consider one direction, whose analysis is covered by the one for $\mathrm{rank}(K)=2$. Therefore, in the following we will assume $\mathrm{rank}(K) = 2$. We denote two eigenvalues of $K(t)$ by $\lambda_1(t)$ and $\lambda_2(t)$ corresponding to $\vp_1$ and $\vp_2$ respectively, i.e., $K(t)\vp_1 = \lambda_1(t)\vp_1$, $K(t)\vp_2 = \lambda_2(t)\vp_2$.  Without loss of generality, we assume $\lambda_1(0) \geq \lambda_2(0)$.

By Eq.~(\ref{eq:ntk}), the tangent kernel $K$ at step $t$ is defined as:
\begin{align*}
    K_{i,j}(t) &= \langle \nabla_\rvv g_i(t),\nabla_\rvv g_j(t)\rangle+\langle \nabla_\rvu g_i(t),\nabla_\rvu g_j(t)\rangle \nonumber \\
    &= \frac{1}{m}\sum_{k=1}^m \left( (u_k(t))^2+(v_k(t))^2\right) x_i x_j \mathbbm{1}_{\left\{u_k(0) x_i \geq 0\right\}}\mathbbm{1}_{\left\{u_k(0) x_j \geq 0\right\}}, ~~~~ \forall i,j\in[n].
\end{align*}

Similar to single example case, the largest eigenvalue of of tangent kernel is bounded from $0$:
\begin{proposition}\label{prop:lower_bound_k_multi}
For any $\delta\in(0,1)$, if $m \geq c'\log(4/\delta)$ where $c'$ is an absolute constant,  with probability at least $1-\delta$, $ M/2 \leq \lambda_{\max}(K(0))  \leq 3M/2$ where $M =\max\left\{\frac{\sum_{i,j=1}^n x_i^2x_j^2 \mathbbm{1}_{\{x_i\geq 0\}}\mathbbm{1}_{\{x_j\geq 0\}}}{\sum_{i=1}^{n} x_i^2 \mathbbm{1}_{\{x_i\geq 0\}} },  \frac{\sum_{i,j=1}^n x_i^2x_j^2 \mathbbm{1}_{\{x_i\leq 0\}}\mathbbm{1}_{\{x_j\leq 0\}}}{\sum_{i=1}^{n} x_i^2 \mathbbm{1}_{\{x_i\leq 0\}} }\right\}$.
\end{proposition}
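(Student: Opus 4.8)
The plan is to reduce the computation of $\lambda_{\max}(K(0))$ to two scalar Rayleigh quotients and then concentrate a single random prefactor. By Proposition~\ref{prop:eigenvalue_K}, $K(0)$ has rank at most $2$ with eigenvectors $\vp_1,\vp_2$, so $\lambda_{\max}(K(0)) = \max\{\lambda_1(0),\lambda_2(0)\}$, where $\lambda_1(0) = \vp_1^T K(0)\vp_1/\norm{\vp_1}^2$ and $\lambda_2(0) = \vp_2^T K(0)\vp_2/\norm{\vp_2}^2$. It therefore suffices to bound each $\lambda_\ell(0)$ above and below with high probability and combine by a union bound.

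The key algebraic step is to expand $\vp_1^T K(0)\vp_1$ and use Assumption~\ref{assump:input} ($d=1$) together with the positive homogeneity of ReLU to factor the data out of the randomness. Substituting the explicit form $K_{i,j}(0) = \frac{1}{m}\sum_{k}(u_{0,k}^2+v_{0,k}^2)x_ix_j\mathbbm{1}_{\{u_{0,k}x_i\ge 0\}}\mathbbm{1}_{\{u_{0,k}x_j\ge 0\}}$ and observing that for $i,j\in\S_+$ every indicator collapses to $\mathbbm{1}_{\{u_{0,k}\ge 0\}}$ (independent of the data index), I would obtain
\begin{align*}
\vp_1^T K(0)\vp_1 = \round{\frac{1}{m}\sum_{k=1}^m (u_{0,k}^2+v_{0,k}^2)\mathbbm{1}_{\{u_{0,k}\ge 0\}}}\cdot \sum_{i,j=1}^n x_i^2 x_j^2 \mathbbm{1}_{\{x_i\ge 0\}}\mathbbm{1}_{\{x_j\ge 0\}}.
\end{align*}
The second factor is deterministic, and dividing by $\norm{\vp_1}^2 = \sum_i x_i^2\mathbbm{1}_{\{x_i\ge 0\}}$ recovers exactly the first argument of $M$ multiplied by the random prefactor; the analogous computation on $\S_-$ produces the second argument of $M$.

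To control the random prefactor I would mirror the proof of Proposition~\ref{prop:lower_bound_k_single}. Setting $\xi_k := v_{0,k}\mathbbm{1}_{\{u_{0,k}\ge 0\}}$ and $\zeta_k := u_{0,k}\mathbbm{1}_{\{u_{0,k}\ge 0\}}$, the prefactor equals $\frac{1}{m}\sum_k(\zeta_k^2+\xi_k^2)$. Since $u_{0,k},v_{0,k}$ are sub-Gaussian at initialization, $\xi_k,\zeta_k$ are sub-Gaussian, hence $\xi_k^2,\zeta_k^2$ are sub-exponential with sub-exponential norms bounded by an absolute constant, and $\mathbb{E}[\xi_k^2]=\mathbb{E}[\zeta_k^2]=\tfrac12$ (using $\mathbb{E}[v_{0,k}^2]=\mathbb{E}[u_{0,k}^2]=1$, independence of $u,v$, and the symmetry of $u_{0,k}$). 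Applying Bernstein's inequality with $t=m/4$ to each sum gives $\sum_k\xi_k^2,\sum_k\zeta_k^2\in[m/4,3m/4]$, each with probability at least $1-2\exp(-m/c')$, so the prefactor lies in $[\tfrac12,\tfrac32]$ on the intersection. A union bound over the relevant deviation events yields failure probability at most $4\exp(-m/c')$, which is at most $\delta$ once $m\ge c'\log(4/\delta)$. Combining with the factorization gives $\lambda_1(0)\in[\tfrac12 M_1,\tfrac32 M_1]$ and $\lambda_2(0)\in[\tfrac12 M_2,\tfrac32 M_2]$ for the two arguments $M_1,M_2$ of the maximum, whence $M/2\le\max\{\lambda_1(0),\lambda_2(0)\}=\lambda_{\max}(K(0))\le 3M/2$.

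The main obstacle is not the concentration, which is essentially inherited verbatim from the single-example case, but verifying that the factorization is exact: one must confirm that the per-neuron indicator genuinely decouples from the data index within each sign class, so that a clean scalar prefactor emerges. This is precisely where Assumption~\ref{assump:input} and ReLU homogeneity are indispensable, since in higher dimension $\mathbbm{1}_{\{u_{0,k}^T x_i\ge 0\}}$ would depend on $i$ and the separation would fail.
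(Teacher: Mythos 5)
Your proposal is correct and follows essentially the same route as the paper's own proof: reduce to the two Rayleigh quotients via Proposition~\ref{prop:eigenvalue_K}, use $d=1$ and ReLU homogeneity to factor $\vp_\ell^T K(0)\vp_\ell$ into a deterministic data term times the scalar prefactor $\frac{1}{m}\sum_k (u_{0,k}^2+v_{0,k}^2)\mathbbm{1}_{\{u_{0,k}\geq 0\}}$ (resp.\ $\leq 0$), and concentrate that prefactor with the identical sub-Gaussian/sub-exponential Bernstein argument at $t=m/4$. The only blemish is the constant-factor union-bound bookkeeping (four sums across the two eigendirections, not two), which the paper itself elides and which is absorbed into the absolute constant $c'$.
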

The proof can be found in Appendix~\ref{proof:lower_bound_k_multi}.

% By gradient descent, at step $t$, we have the update equations for parameters $\rvu(t)$ and $\rvv(t)$:
% \begin{align*}
%     \rvu(t+1)  &=\rvu(t) -\eta\sum_{i=1}^n(g_i(t)-y_i)\left(\frac{1}{\sqrt{m}}\sum_{k=1}^m v_k(t) \mathbbm{1}\left\{u_k(0) x_i \geq 0\right\}x_i\right),\\
%     \rvv(t+1) &=\rvv(t) -\eta\sum_{i=1}^n(g_i(t)-y_i)\left(\frac{1}{\sqrt{m}}\sum_{k=1}^m u_k(t)\mathbbm{1}\left\{u_k(0) x_i \geq 0\right\}x_i\right).
% \end{align*}

% (unclear here???) We note that $H_f(x_i)$ are proportional to each other for $i\in \S_+$ and $\S_-$ respectively, due to the homogeneity of ReLU. The vectors $\vp_1$ and $\vp_2$ defined above are also the ratios among $H_f(x_i)$ where  $i\in \S_+$ and $\S_-$. Since the second order term $H_f$ controls the change of tangent kernel $K$, $\vp_1$ and $\vp_2$ play an important role in the evolution of $K$. it is not hard to see that $\langle \vp_1, \vp_2\rangle = 0$. 

For the simplicity of notation, given $\vp,\vm \in \mathbb{R}^n$, we define the matrices $K_{\vp,\vm}$and $Q_{\vp,\vm}$:
\begin{align*}
    K_{\vp,\vm}(t)&:= \left((\rvg(t)-\vy)\odot \vm\right)^T K(t)\left((\rvg(t)-\vy)\odot \vm\right)\vp \vp^T,\\
    Q_{\vp,\vm}(t)&:= \left((\rvg(t)-\vy)\odot \vm\right)^T\left(\rvg(t)\odot \vm\right)\vp \vp^T
\end{align*}
% \begin{proposition}\label{prop:krt}
It is not hard to see that for all $t$, $K_{\vp,\vm}$ and $Q_{\vp,\vm}$ are rank-1 matrices.  Specially, $\vp$ is the only eigenvector of $K_{\vp,\vm}$ and $Q_{\vp,\vm}$.
% \end{proposition}

With the above notations, we can write the update equations for $\rvg(t)-\vy$ and  $K(t)$ during gradient descent with learning rate $\eta$:
\paragraph{Dynamics equations.}
\begin{align}
    \rvg(t+1)-\vy = &\left( I - \eta K(t) + \underbrace{\frac{\eta^2}{m}\left(Q_{\vp_1,\vm_+}(t) +Q_{\vp_2,\vm_-}(t) \right)}_{R_{\rvg}(t)}\right)(\rvg(t)-\vy)\label{eq:g_evolve_multi_ori},
\end{align}
\begin{align}
      K(t+1) =  K(t)
  +\underbrace{\frac{\eta^2}{m}\left(K_{\vp_1,\vm_+}(t)+K_{\vp_2,\vm_-}(t)\right)- \frac{4\eta}{m}\left(Q_{\vp_1,\vm_+}(t) +Q_{\vp_2,\vm_-}(t) \right)}_{R_K(t)} \label{eq:k_evolve_multi_ori},
\end{align}
where  $\vm_+,\vm_-\in\mathbb{R}^n$ are mask vectors:
\begin{align*}
    m_{+,i}= \mathbbm{1}_{\{i\in\S_+\}},~~~~  m_{-,i}= \mathbbm{1}_{\{i\in\S_-\}}.
\end{align*}

Now we are ready to discuss different three optimization dynamics for multiple training examples case,  similar to the single training example case in the following. 

\paragraph{Monotonic convergence: sub-critical learning rates ($\eta < 2/\lambda_1(0)$).} We use the key observation that when $\|\rvg(t)\|$ is small, i.e., $O(1)$, and $\|K(t)\|$ is bounded, then $\|R_\rvg(t)\|$ and $\|R_K(t)\|$ are of the order $o(1)$. Then the dynamics equations approximately reduce to the ones of linear dynamics for multiple training examples:
\begin{align*}
    \rvg(t+1) - \vy &= \left(I-\eta K(t) +o(1)\right)(\rvg(t) - \vy),\\
    K(t+1) &= K(t)+o(1).
\end{align*}
At initialization, $\|\rvg(0)\|= O(1)$ with high probability over random initialization. 
% By Proposition~\ref{prop:linear_dynamics_multi}, when $\|\rvg(t)\| = O(1)$, the optimization follows linear dynamics. 
By the choice of the learning rate, we will have for all $t\geq 0$, $\|I-\eta K(t)\|< 2$, hence $\|\rvg(t)-\vy\|$ decreases exponentially. The cumulative change on the norm of tangent kernel is $o(1)$ since $\|R_K(t)\| = O(1/m)$ and the loss decreases exponentially hence $\sum \|R_K(t)\| = O(1/m)\cdot \log O(1) = o(1)$.

% Similar to one training data case, the terms containing $1/m$ in Eq.~(\ref{eq:g_evolve_multi}) and (\ref{eq:k_evolve_multi})  keep small during the whole optimization process thus does not affect the dynamics. As a result, the tangent kernel $K(t)$ is almost constant and the linear training dynamics dominates in the evolution of  $\rvg(t)-\vy$. 

\paragraph{Catapult convergence: super-critical learning rates ($2/\lambda_1(0) <\eta <  \min\{ 2/\lambda_2(0),4/\lambda_1(0)\}$).}

We summarize the catapult dynamics in the following:

\paragraph{Restate Theorem~\ref{thm:multi}}(Catapult dynamics on multiple training examples).
Supposing Assumption~\ref{assump:input} holds, consider training the NQM Eq.~(\ref{eq:nn_quad_relu}) with squared
loss on multiple training examples by GD. Then,
\begin{enumerate}
    \item with  $\eta \in \left[\frac{2+\epsilon}{\lambda_1(0)}, \frac{2-\epsilon}{\lambda_2(0)} \right]$ , the catapult only occurs in eigendirection $\vp_1$: $\Pi_1\L$  increases  to the order of $\Omega\round{\frac{m(\eta-2/\lambda_1(0))^2}{\log m}}$ then decreases to $O(1)$;
    \item with $\eta \in \left[\frac{2+\epsilon}{\lambda_2(0)},  \frac{4-\epsilon}{\lambda_1(0)}\right]$, the catapult occurs in both eigendirections $\vp_1$ and $\vp_2$: $\Pi_i\L$ for $i=1,2$ increases to the order of $\Omega\round{\frac{m(\eta-2/\lambda_i(0))^2}{\log m}}$ then decreases to $O(1)$,
\end{enumerate} 
where $\epsilon = \Theta\round{\frac{\log m}{\sqrt{m}}}$.

The proof can be found in Appendix~\ref{proof:multi}.

For the remaining eigendirections $\vp_3,\cdots,\vp_n$, i.e., the basis of the subspace orthogonal to $\vp_1$ and $\vp_2$, we can show that the loss projected to this subspace does not change during training in the following proposition. It follows from the fact that $K$, $R_{\rvg}(t)$ and $R_K(t)$ are orthogonal to $\vp_i\vp_i^T$ for $i = 3,\cdots,n$.
\begin{proposition}\label{prop:rest_direc}
 $\forall t\geq 0$, $\Pi_i \L(t) = \Pi_i \L(0)$  for $i = 3,\cdots,n$.
\end{proposition}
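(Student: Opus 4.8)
The plan is to prove the claim by induction on $t$, showing that the projection $\inner{\rvg(t)-\vy,\vp_i}$ onto each remaining eigendirection $\vp_i$, $i\geq 3$, is exactly conserved at every gradient step. Since $\Pi_i\L(t)=\frac12\inner{\rvg(t)-\vy,\vp_i}^2$, conservation of this inner product immediately yields conservation of $\Pi_i\L$. The whole argument rests on the $\rvg$-dynamics alone; the evolution of $K$ need not be tracked.

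First I would fix, once and for all, an orthonormal basis $\vp_3,\dots,\vp_n$ of the orthogonal complement of $\mathrm{span}(\vp_1,\vp_2)$. Such a basis is well defined and independent of $t$, because $\vp_1$ and $\vp_2$ are themselves orthogonal: their supports $\S_+$ and $\S_-$ partition $[n]$, so $\inner{\vp_1,\vp_2}=0$. The crucial structural input is Proposition~\ref{prop:eigenvalue_K}, which holds for \emph{arbitrary} parameters $\rvu,\rvv$: the kernel $K$ has rank at most $2$ with range contained in $\mathrm{span}(\vp_1,\vp_2)$. Applying it at the current parameters $\rvu(t),\rvv(t)$ gives $K(t)\vp_i=0$ for all $t\geq 0$ and all $i\geq 3$, and by symmetry of $K(t)$ also $\vp_i^T K(t)=0$.

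Next I would examine the residual $R_{\rvg}(t)=\frac{\eta^2}{m}\bigl(Q_{\vp_1,\vm_+}(t)+Q_{\vp_2,\vm_-}(t)\bigr)$ appearing in the dynamics equation~(\ref{eq:g_evolve_multi_ori}). By definition each $Q_{\vp,\vm}(t)$ is a scalar multiple of $\vp\vp^T$, so $R_{\rvg}(t)$ is a linear combination of $\vp_1\vp_1^T$ and $\vp_2\vp_2^T$; left-multiplying by $\vp_i^T$ with $i\geq 3$ and using $\vp_i^T\vp_1=\vp_i^T\vp_2=0$ gives $\vp_i^T R_{\rvg}(t)=0$. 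Projecting Eq.~(\ref{eq:g_evolve_multi_ori}) onto $\vp_i$ then yields
\begin{align*}
\inner{\rvg(t+1)-\vy,\vp_i}=\vp_i^T\bigl(I-\eta K(t)+R_{\rvg}(t)\bigr)(\rvg(t)-\vy)=\vp_i^T(\rvg(t)-\vy)=\inner{\rvg(t)-\vy,\vp_i}.
\end{align*}
Iterating from $t=0$ gives $\inner{\rvg(t)-\vy,\vp_i}=\inner{\rvg(0)-\vy,\vp_i}$ and hence $\Pi_i\L(t)=\Pi_i\L(0)$.

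This is essentially a direct computation, so there is no real obstacle; the only point requiring care is that Proposition~\ref{prop:eigenvalue_K} must be invoked at the running parameters $\rvu(t),\rvv(t)$, not merely at initialization — which is legitimate precisely because that proposition is stated for arbitrary $\rvu,\rvv$. I would remark that the companion fact that $R_K(t)$ is orthogonal to $\vp_i\vp_i^T$ is consistent with, but not needed for, the conclusion: the kernel's evolution never feeds back into the $\vp_i$-component of $\rvg-\vy$, so conservation of $\Pi_i\L$ follows from the $\rvg$-dynamics by itself.
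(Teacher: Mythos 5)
Your proof is correct and follows essentially the same route as the paper's: the paper justifies the proposition by noting that $K(t)$, $R_{\rvg}(t)$ and $R_K(t)$ are all orthogonal to $\vp_i\vp_i^T$ for $i\geq 3$, which is exactly the rank-two structure (Proposition~\ref{prop:eigenvalue_K} at the running parameters) that you use to show the projected update is the identity. Your additional observation that the orthogonality of $R_K(t)$ is not actually needed --- conservation follows from the $\rvg$-dynamics alone --- is a valid minor sharpening, not a different argument.
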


Once the catapult finishes as the loss decreases to the order of $O(1)$, we generally have $\eta>2/\lambda_1$ and $\eta>2/\lambda_2$.  Therefore the training dynamics fall into linear dynamics, and we can use the same analysis for sub-critical learning rates for the remaining training dynamics.

\paragraph{Divergence: ($\eta> \eta_{\max} = 4/\lambda_1(0)$).}
Similar to the increasing phase in the catapult convergence, initially $\|\rvg(t)-\vy\|$ increases in direction $\vp_1$ and $\vp_2$ since linear dynamics dominate and the learning rate is chosen to be larger than $\etc$. Also, we approximately have $\eta>4/\lambda_1(t)$ at the end of the increasing phase, by a similar analysis for the catapult convergence. We consider the evolution of $K(t)$ in the direction $\vp_1$. Note that when $\|\rvg(t)\|$ increases to the order of $\Theta(\sqrt{m})$, $\rvg(t)\odot \vm_+$ will be aligned with $\vp_1$, hence with simple calculation, we approximately have
\begin{align*}
    \vp_1^T R_K(t)\vp_1 \approx \frac{\|\rvg(t)\|^2\|\vp_1\|^2}{m}\eta(\lambda_1(t) - 4\eta) >0.
\end{align*}
Therefore, $\lambda_1(t)$ increases since $\vp_1^T K(t+1)\vp_1 = \vp_1^T K(t)\vp_1 +  \vp_1^T R_K(t)\vp_1 >\vp_1^T K(t)\vp_1$. As a result, $\|I - \eta K(t) + R_\rvg(t)\|$ becomes even larger which makes $\|\rvg(t)-\vy\|$ grows faster, and ultimately leads to divergence of the optimization.

\section{Proof of Theorem~\ref{thm:multi}}\label{proof:multi}

As the tangent kernel $K$ has rank 2 by Proposition~\ref{prop:eigenvalue_K}, the update of weight parameters $\rvw$ is in a subspace with dimension $2$. Specifically, 
\begin{align*}
    \rvw(t+1) = \rvw(t) - \eta\frac{\partial \rvg}{\partial \rvw}\frac{\partial \L}{\partial \rvg}(t),
\end{align*}
where $\partial \rvg/ \partial \rvw$ has rank $2$. Therefore, to understand the whole training dynamics, it is sufficient to analyze the dynamics of the loss in eigendirection $\vp_1$ and $\vp_2$. 

We will analyze the dynamics of the loss $\L$ and the tangent kernel $K$ confined to $\vp_1$ and $\vp_2$. It turns out that the dynamics in each eigen direction is almost independent on the other hence can be reduced to the same training dynamics for a single training example.

We start with eigendirection $\vp_1$. For dynamics equations Eq.~(\ref{eq:g_evolve_multi_ori}) and (\ref{eq:k_evolve_multi_ori}), we consider the training dynamics confined to direction $\vp_1$ and we have
\begin{align*}
    \Pi_1\L(t) &= \round{1-\eta\lambda_1(t) + \vp_1^T R_\rvg(t)\vp_1}^2\Pi_1\L(t) := \kappa_1(t)\Pi_1\L(t),\\
    \lambda_1(t+1) &= \lambda_1(t) + \vp_1^TR_K(t)\vp_1,
\end{align*}
where we use the notation $ \Pi_1\L(t) =\frac{1}{2}\inner{\rvg(t)-\vy,\vp_1}^2$.

% We further denote $\vp_1^T R_\rvg(t)\vp_1$ by $r_{1,\rvg}(t)$ and $\vp_1^T R_K(t)\vp_1$ by $r_{1,K}(t)$. Then the final dynamics equations are:
% \begin{align}
%     \Pi_1\L(t) &= \round{1-\eta\lambda_1(t) + r_{1,\rvg}(t)}^2\Pi_1\L(t) := \kappa_1(t)\Pi_1\L(t),\\
%     \lambda_1(t+1) &= \lambda_1(t) + r_{1,K}(t).
% \end{align}
We further expand $\vp_1^T R_\rvg(t)\vp_1$ and $\vp_1^T R_K(t)\vp_1$ and we have
\begin{align*}
    \vp_1^T R_\rvg(t)\vp_1 &= \frac{2\eta^2}{m}\Pi_1\L(t) + \frac{\eta^2}{m}\inner{(\rvg(t)-\vy)\odot \vm_+,\vy \odot \vm_+},\\
   \vp_1^T R_K(t)\vp_1 &= \frac{2\eta}{m}\Pi_1\L(t)(\eta\lambda_1(t) - 4) - \frac{4\eta}{m}\inner{(\rvg(t)-\vy)\odot \vm_+,\vy \odot \vm_+}.
\end{align*}
Analogous to the transformation for Eq.~(\ref{eq:g_evolve}) and (\ref{eq:k_evolve}) as we have done in the proof of Theorem~\ref{thm:single}, we let
\begin{align*}
   u_1(t) = \frac{2\eta^2}{m}\Pi_1\L(t),~~~w_1(t) = \frac{\eta^2}{m}\inner{(\rvg(t)-\vy)\odot \vm_+,\vy \odot \vm_+},~~~ v_1(t) = \eta \lambda_1(t).
\end{align*}
Then the dynamic equations can be written as:
\begin{align}
    u_1(t+1) &= (1-v_1(t) + u_1(t) + w_1(t))^2u_1(t),\\
    v_1(t+1) &= v_1(t) -u_1(t)(4-v_1(t)) - 4w_1(t).
\end{align}

Note that at initialization, $\|\rvg(t)\| = O(\sqrt{1})$ with high probability, hence we have $u_1(0) = O\round{\frac{1}{m}}$ and $w_1(0) = O\round{\frac{1}{m}}$ (we omit the factor $n$ as $n$ is a constant). Furthermore, $|w_1(t)| = \Theta\round{\frac{\sqrt{u_1(t)}}{\sqrt{m}}}$. Therefore, both the dynamic equations and the initial condition are exactly the same with the ones for a single training example (Eq.~(\ref{eq:u}) and (\ref{eq:v})). Then we can follow the same idea of the proof of Theorem~\ref{thm:single} to show the catapult in eigendirection $\vp_1$.

Similarly,  when we consider the training dynamics confined to $\vp_2$, we have 
\begin{align}
    u_2(t+1) &= (1-v_2(t) + u_2(t) + w_2(t))^2u_2(t),\\
    v_2(t+1) &= v_2(t) -u_2(t)(4-v_2(t)) - 4w_2(t),
\end{align}
where 
\begin{align*}
   u_2(t) = \frac{2\eta^2}{m}\Pi_2\L(t),~~~w_2(t) = \frac{\eta^2}{m}\inner{(\rvg(t)-\vy)\odot \vm_-,\vy \odot \vm_-},~~~ v_2(t) = \eta \lambda_2(t).
\end{align*}
Then the same analysis with Theorem~\ref{thm:single} can be used to show the catapult in direction $\vp_2$.

Note that when $2/\lambda_2(0)>4/\lambda_1(0)$, the learning rate is only allowed to be less than $4/\lambda_1(0)$ otherwise GD will diverge, therefore, there will be no catapult in direction $\vp_2$.

% \begin{lemma}\label{lemma:increase_multi}
% For $T>0$ such that $\sup_{t\in[0,T]}\Pi_1\L(t) = O\round{\frac{\delta^2 m}{\log m}}$, $\Pi_1\L(t)$ increases exponentially with  $\inf_{t\in[0,t]}|\kappa_1(t)| = \round{1+\delta_1 -O\round{\frac{\delta^2}{\log m}}}^2 $ and $\sup_{t\in[0,T]}|\lambda_1(t)-\lambda_1(0)| = O\round{\frac{\delta^2}{\log m}}$.
% \end{lemma}
% \begin{proof}\label{proof:increase_multi}
% We prove the result by induction. 

% Note that at initialization, $\|\rvg(t)\| = O(\sqrt{n}\log m)$ with high probability, hence we have $|r_{1,\rvg}(0)| = O\round{\frac{\log^2 m}{m}}$ and  $|r_{1,K}(0)| = O\round{\frac{\log^2 m}{m}}$ (we omit the factor $n$ for simplicity as $n$ is a constant). Then at iteration $t = 0$, we have
% \begin{align*}
%     \Pi_1\L(1) &= \round{1+\delta_1 - O\round{\frac{\log^2 m}{m}}}^2\Pi_1\L(0),\\
%     \lambda_1(1) &= \lambda_1(0) + O\round{\frac{\log^2 m}{m}}.
% \end{align*}

% Suppose when $t=T$ the results hold. Then at $t=T+1$, by the inductive hypothesis that $\Pi_1\L(t)$ increases exponentially with $\sup_{t\in[0,T]}\kappa_1(t) \geq (1+\delta_1 - O\round{\frac{\delta_1^2}{\log m}}^2$ from $O\round{\frac{\log^2m}{m}}$ to $O\round{\frac{\delta_1^2}{\log m}}$, we can bound the change of $ \lambda_1(t)$:
% \begin{align*}
%     |\lambda_1(T+1) - \lambda_1(0)| &= \left|\sum_{t=1}^T r_{1,K}(t)\right|\\
%     &\leq 
% \end{align*}

% \end{proof}

\section{Special case of quadratic models when \texorpdfstring{$\phi(\vx) = 0$}{}}\label{sec:phi_x_0}
In this section we will show under some special settings, the catapult phase phenomenon also happens and how two layer linear neural networks fit in our quadratic model.

We  consider one training example $(\vx,y)$ with label $y = 0$ and assume the initial tangent kernel $\lambda(0) = \Omega(1)$.  Letting the feature vector $\phi(\vx) = 0$, the quadratic model Eq.(\ref{eq:quadratic}) becomes:
\begin{align*}
    g(\rvw) = \frac{1}{2}\gamma \rvw^T \Sigma(\vx) \rvw.
\end{align*}

For this quadratic model, we have the following proposition:
\begin{proposition}\label{prop:special_catapult}
 With learning rate $\frac{2}{\lambda(0)}<\eta <\frac{4}{\lambda(0)}$, if $\Sigma(\vx)^2 = \|\vx\|^2 \cdot I$, $g(\rvw)$ exhibits catapult phase.
\end{proposition}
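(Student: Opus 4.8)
The plan is to show that, under $\phi(\vx)=0$ and $\Sigma(\vx)^2=\norm{\vx}^2 I$, the gradient-descent dynamics of $g$ collapse \emph{exactly} onto the single-example dynamics equations Eq.~(\ref{eq:g_evolve}) and Eq.~(\ref{eq:k_evolve}), so that the catapult phase is inherited directly from Theorem~\ref{thm:single}. Writing $\Sigma:=\Sigma(\vx)$ for brevity, I would first record the elementary objects. The gradient is $\nabla g(\rvw)=\gamma\Sigma\rvw$, so the scalar tangent kernel is $\lambda(\rvw)=\norm{\nabla g(\rvw)}^2=\gamma^2\rvw^T\Sigma^2\rvw=\gamma^2\norm{\vx}^2\norm{\rvw}^2$, where the last step uses $\Sigma^2=\norm{\vx}^2 I$. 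Since $y=0$ the loss is $\L=\tfrac12 g^2$, and the GD update is $\rvw(t+1)=\round{I-\eta\gamma g(t)\Sigma}\rvw(t)$.

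The key simplification is that $\Sigma$ commutes with $I-\eta\gamma g(t)\Sigma$, while $\Sigma^2=\norm{\vx}^2 I$ turns every quadratic form back into a multiple of $g$ or of $\norm{\rvw}^2$. Substituting the update into $g(t+1)=\tfrac12\gamma\,\rvw(t+1)^T\Sigma\,\rvw(t+1)$ and using the identity $\Sigma\round{I-\eta\gamma g\Sigma}^2=\round{1+\eta^2\gamma^2\norm{\vx}^2 g^2}\Sigma-2\eta\gamma\norm{\vx}^2 g\,I$ together with $\tfrac12\gamma\rvw^T\Sigma\rvw=g$ and $\tfrac12\gamma\norm{\rvw}^2=\tfrac{\lambda}{2\gamma\norm{\vx}^2}$, I would collapse the expression to obtain, after the analogous computation for $\norm{\rvw(t+1)}^2$ (using $\rvw^T\Sigma\rvw=2g/\gamma$),
\begin{align*}
    g(t+1) &= \round{1-\eta\lambda(t)+\eta^2\gamma^2\norm{\vx}^2 g(t)^2}g(t),\\
    \lambda(t+1) &= \lambda(t)-\eta\gamma^2\norm{\vx}^2 g(t)^2\round{4-\eta\lambda(t)}.
\end{align*}
These are precisely Eq.~(\ref{eq:g_evolve}) and Eq.~(\ref{eq:k_evolve}) with $y=0$ and with the coefficient $\tfrac{\norm{\vx}^2}{md}$ replaced by $\gamma^2\norm{\vx}^2$.

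With this reduction in hand, I would apply the transformation of Theorem~\ref{thm:single}, setting $u(t)=\eta^2\gamma^2\norm{\vx}^2 g(t)^2$ and $v(t)=\eta\lambda(t)$, and observe that the $w$-variable vanishes identically since $y=0$; hence the $(u,v)$ system is exactly the special case $w\equiv 0$ of Eq.~(\ref{eq:u}) and Eq.~(\ref{eq:v}). The super-critical choice $\tfrac{2}{\lambda(0)}<\eta<\tfrac{4}{\lambda(0)}$ gives $v(0)=\eta\lambda(0)\in(2,4)$, i.e.\ $\delta:=\eta\lambda(0)-2\in(0,2)$, so $\kappa(0)=\round{1-v(0)+u(0)}^2>1$ and $u$ grows (increasing phase, cf.\ Lemma~\ref{lemma:increase}), while $4-v(t)>0$ forces $v$ monotonically downward through the update $v(t+1)=v(t)-u(t)(4-v(t))$ until $v(t)<2$, at which point $\kappa(t)<1$ and $u$ (hence the loss) collapses (decreasing phase, cf.\ Lemma~\ref{lemma:kappa_t}). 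This is exactly the catapult.

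The main obstacle is conceptual rather than computational: I must identify $\gamma$ as the small parameter playing the role that the inverse width $1/\sqrt m$ plays in Theorem~\ref{thm:single}. The two-phase argument requires $u(0)$ to start small relative to the gap $\delta$ so that a genuine increasing phase exists; keeping $\lambda(0)=\gamma^2\norm{\vx}^2\norm{\rvw(0)}^2=\Omega(1)$ fixed while taking $\gamma$ small forces $\norm{\rvw(0)}=\Theta(1/\gamma)$ and makes $u(0)=\eta^2\gamma^2\norm{\vx}^2 g(0)^2$ controllably small, which is what lets the verbatim increasing/decreasing-phase bounds transfer. The pleasant feature is that $w\equiv 0$ removes the stochastic lower-order term that complicates the NQM analysis, so once the initial scaling is set up the argument is strictly simpler than that of Theorem~\ref{thm:single}.
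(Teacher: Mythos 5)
Your proposal is correct and follows essentially the same route as the paper's proof: derive the closed-form updates for $g(t)$ and $\lambda(t)$, observe that with $y=0$ (so $w\equiv 0$) they coincide with the single-example dynamics equations Eq.~(\ref{eq:g_evolve})--(\ref{eq:k_evolve}) once $\gamma$ plays the role of the inverse width, and then invoke the increasing/decreasing-phase analysis of Theorem~\ref{thm:single}. In fact your computation is slightly more careful than the paper's, which writes the coefficient as $\gamma\|\vx\|^2$ and omits an $\eta$ prefactor in the kernel update, whereas your $\eta\gamma^{2}\|\vx\|^{2}$ (i.e.\ $\gamma^{2}\leftrightarrow 1/(md)$) is the correct correspondence, and your explicit treatment of the initialization scaling $\norm{\rvw(0)}=\Theta(1/\gamma)$ fills in a point the paper leaves implicit.
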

\begin{proof}
With simple computation, we get
\begin{align*}
    g(t+1) &= \left(1 - \eta\lambda(t) + \gamma \eta^2 \|\vx\|^2 (g(t))^2\right)g(t),\\
    \lambda(t+1) &= \lambda(t) - \gamma \|\vx\|^2(g(t))^2(4 - \eta \lambda(t)).
\end{align*}
We note that the evolution of $g$ and $\lambda$ is almost the same with Eq.~(\ref{eq:g_evolve}) and  Eq.~(\ref{eq:k_evolve}) if we regard $\gamma = 1/m$. Hence we can apply the same analysis to show the catapult phase phenomenon.
\end{proof}

It is worth pointing out that the two-layer linear neural network with  input $\vx\in\mathbb{R}^d$ analyzed in \cite{lewkowycz2020large} that
\begin{align*}
    f(\rmU,\rvv;x) = \frac{1}{\sqrt{m}}\rvv^T \rmU \vx,
\end{align*}
where $\rvv \in \mathbb{R}^m, \rmU \in\mathbb{R}^{m\times d}$ is a special case of our model with $\rvw = \left[\mathrm{Vec}({\rmU})^T,\rvv^T\right]^T$, $\gamma = 1/\sqrt{m}$ and 
\begin{align*}
    \Sigma = \begin{pmatrix}
 0 & I_m \otimes \vx\\
 I_m \otimes \vx^T & 0
\end{pmatrix} \in \mathbb{R}^{md+m}.
\end{align*}

\section{Experimental settings and additional results}\label{sec:exp_add}

\subsection{Verification of non-linear training dynamics of NQMs, i.e., Figure~\ref{fig:multi_quad}}\label{subsec:multi_quad}
We train the NQM which approximates the two-layer fully-connected neural network with ReLU activation function  on $128$ data points where each input is drawn i.i.d. from $\mathcal{N}(-2,1)$ if the label is $-1$ or  $\mathcal{N}(2,1)$ if the label is $1$. The network width is $5,000$.

\subsection{Experiments for training dynamics of wide neural networks with multiple examples.}\label{subsec:multi_nn}
We train a two-layer fully-connected neural network with ReLU activation function  on $128$ data points where each input is drawn i.i.d. from $\mathcal{N}(-2,1)$ if the label is $-1$ or  $\mathcal{N}(2,1)$ if the label is $1$. The network width is $5,000$. See the results in Figure~\ref{fig:multi_nn}.

\begin{figure}[h]
\centering

        \begin{subfigure}[t]{0.32\textwidth}
        \includegraphics[width=\linewidth]{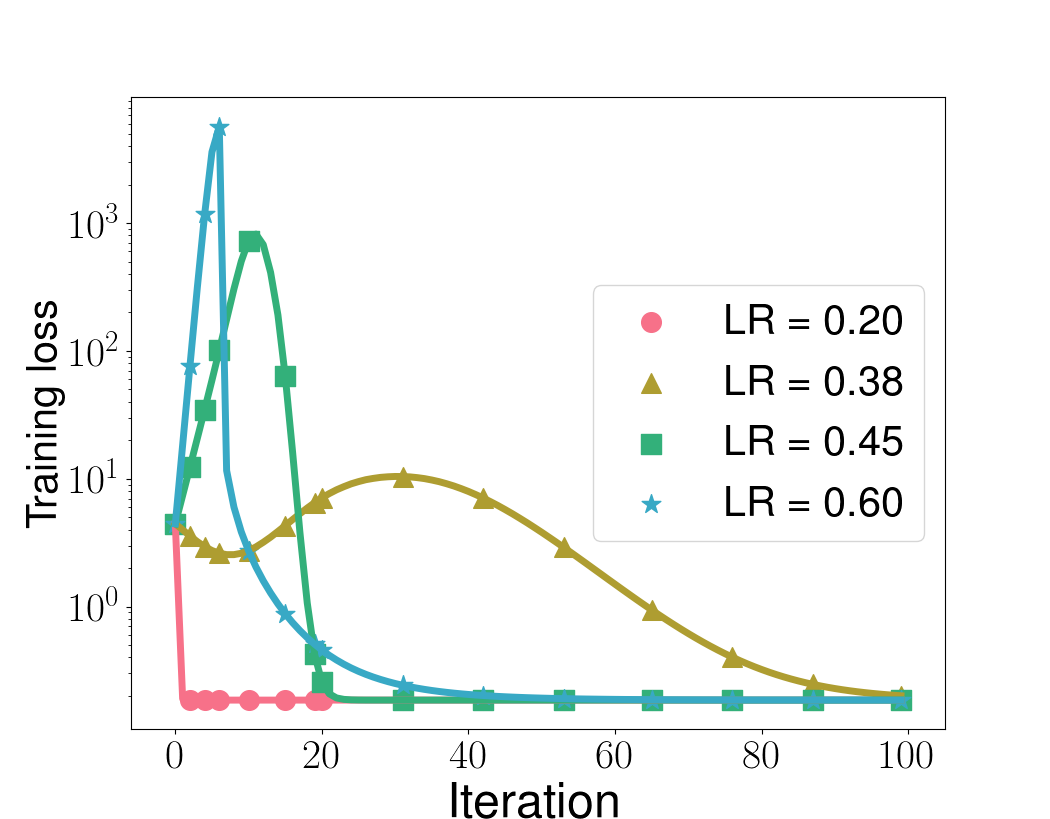} 
         \caption{Training loss}
     \end{subfigure}
     \begin{subfigure}[t]{0.32\textwidth}
        \includegraphics[width=\linewidth]{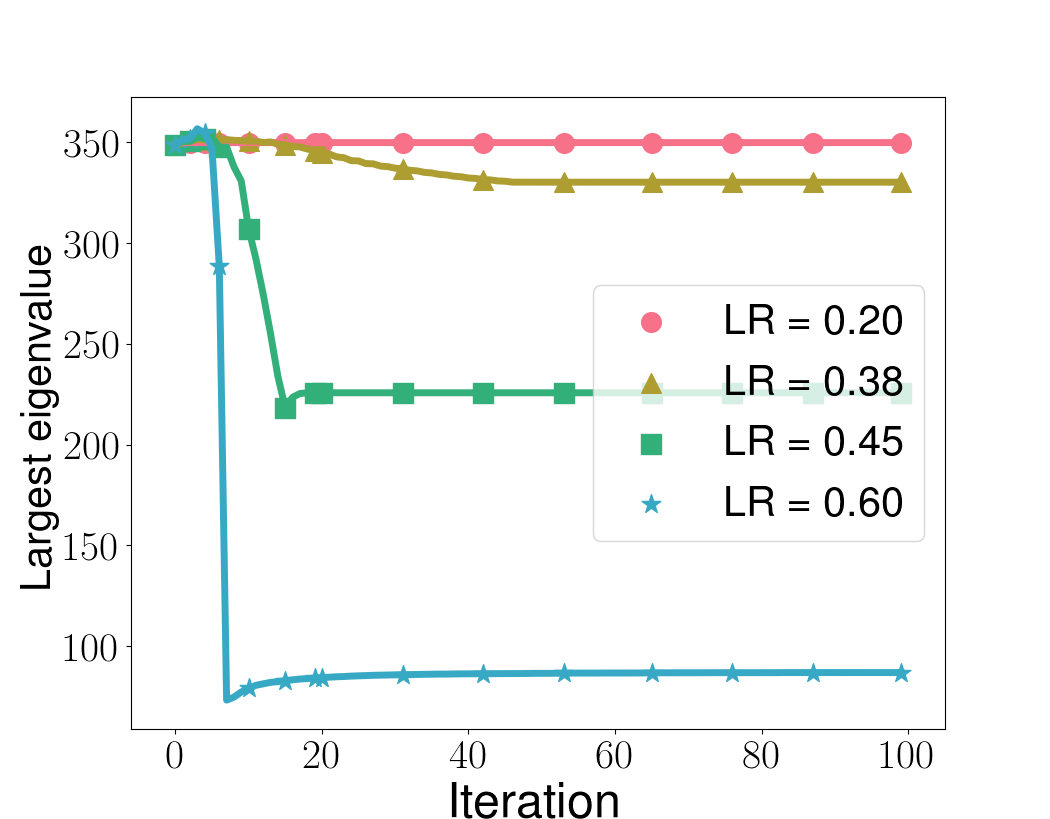} 
        \captionsetup{justification=centering}
         \caption{Largest eigenvalue of tangent kernel}
        % \caption{}
    \end{subfigure}
    \begin{subfigure}[t]{0.32\textwidth}
        \includegraphics[width=\linewidth]{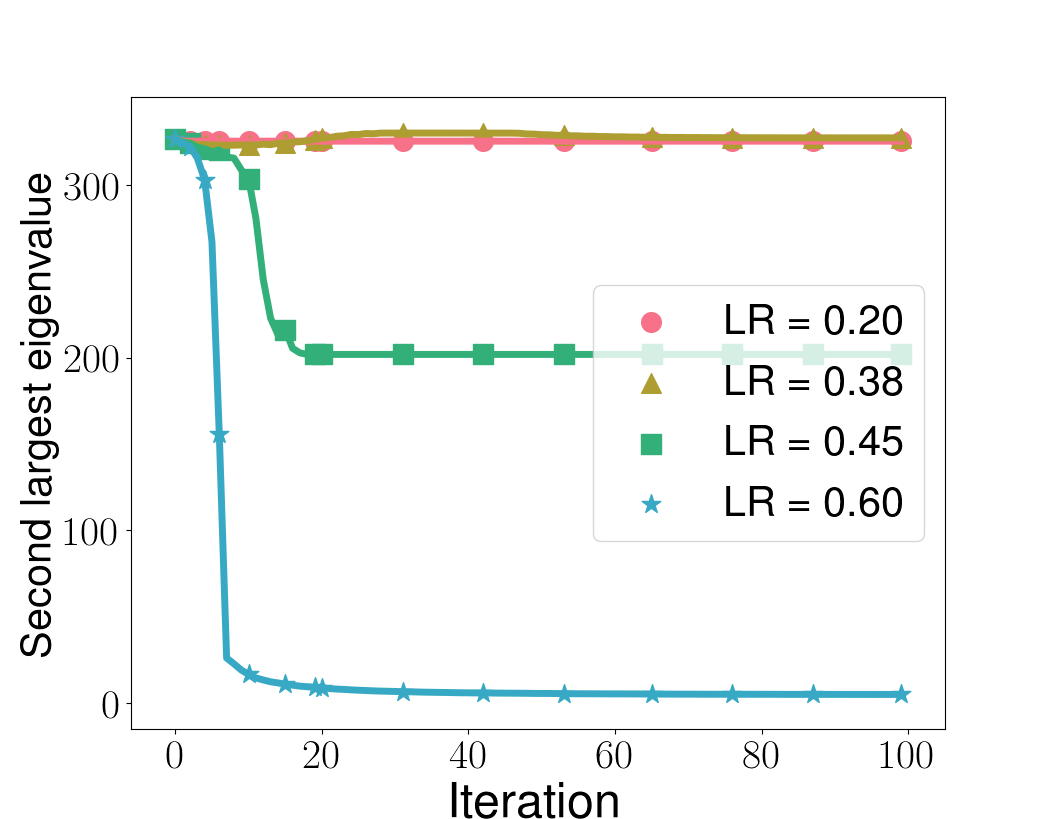} 
        \captionsetup{justification=centering}
        \caption{Second largest eigenvalue of tangent kernel}
    \end{subfigure}

     \caption{{\bf{Training dynamics of wide neural networks for multiple examples case with different learning rates.}} Compared to the training dynamics of NQMs, i.e., Figure~\ref{fig:multi_quad}, the behaviour of of top eigenvalues is almost the same with different learning rates: when $\eta<0.37$, the kernel is nearly constant; when $0.37<\eta<0.39$, only $\lambda_1(t)$ decreases; when $0.39<\eta<\etm$, both $\lambda_1(t)$ and $\lambda_2(t)$ decreases. See the experiment setting in Appendix~\ref{subsec:multi_nn}.
     }\label{fig:multi_nn}
  \end{figure}

\subsection{Training dynamics confined to top eigenspace of the tangent kernel}\label{subsec:exp_top_eig}
We consider the corresponding dynamics equations~(\ref{eq:gqm_g}) and  (\ref{eq:gqm_k}) for neural networks:
\begin{align}
    \rvf(t+1)-\vy &=\left(I-\eta K(t) + {R_{\rvf}(t)}\right)(\rvf(t)-\vy),\\
    K(t+1) &= K(t) - {R_K(t)}.
\end{align}

Note that for NQMs, $R_\rvf(t)$ and $R_K(t)$ have closed-form expressions but generally for neural networks they do not have.

We consider the training dynamics confined to the top eigenvector of the tangent kernel $\vp_1(t)$:

\begin{align*}
    \inner {\vp_1(t), \rvf(t+1)-\vy} &=\left(I-\eta \lambda_1(t) + \vp_1(t)^T{R_{\rvf}(t)}\vp_1(t)\right)\inner {\vp_1(t), \rvf(t)-\vy},\\
    \vp_1(t)^T K(t+1)\vp_1(t) &= \lambda_1(t) - \vp_1(t)^T{R_K(t)}\vp_1(t).
\end{align*}

We conduct experiments to show that $\vp_1(t)^T{R_{\rvf}(t)}\vp_1(t)$ and $\vp_1(t)^T{R_K(t)}\vp_1(t)$ scale with the loss and remain positive when the loss is large. Furthermore, the loss confined to $\vp_1$ can almost capture the spike in the training loss. 

In the experiments, we train a two-layer FC and CNN with width $2048$ and $1024$ respectively on $128$ points from CIFAR-2 (2 class subset of CIFAR-10) and SVHN-2 (2 class subset from SVHN-10). The results for NQM can be seen in Figure~\ref{fig:top_nqm} and for neural networks can be seen in Figure~\ref{fig:top_nn}.

\begin{figure}[h]
\centering
        \begin{subfigure}[t]{0.24\textwidth}
        \includegraphics[width=\linewidth]{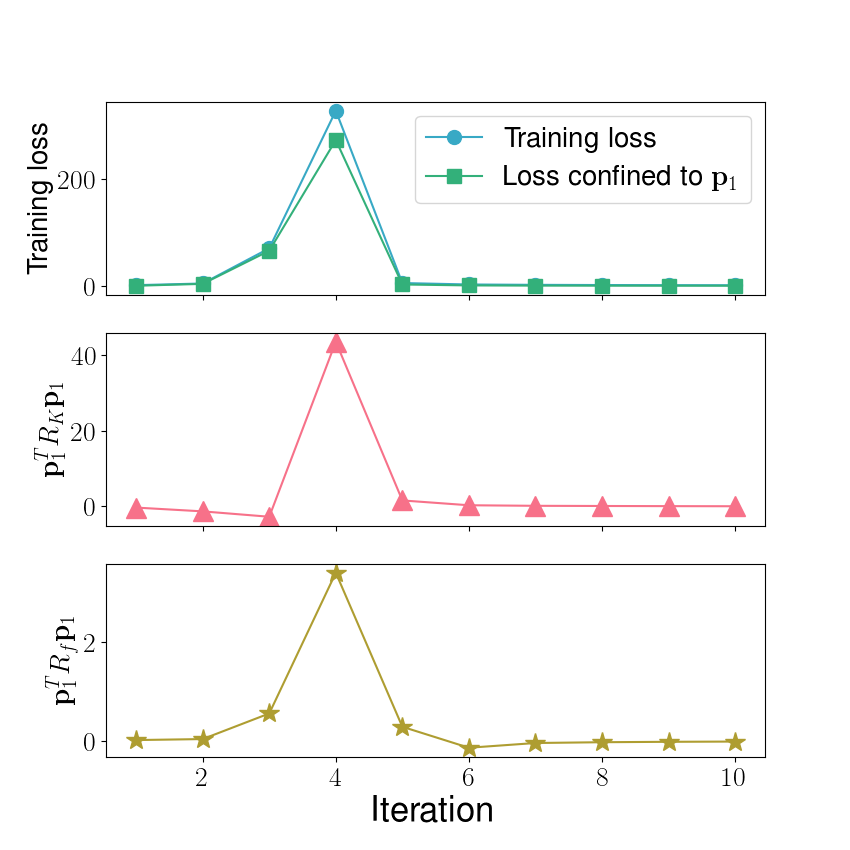} 
         \caption{FC on CIFAR-2}
     \end{subfigure}
        \begin{subfigure}[t]{0.24\textwidth}
        \includegraphics[width=\linewidth]{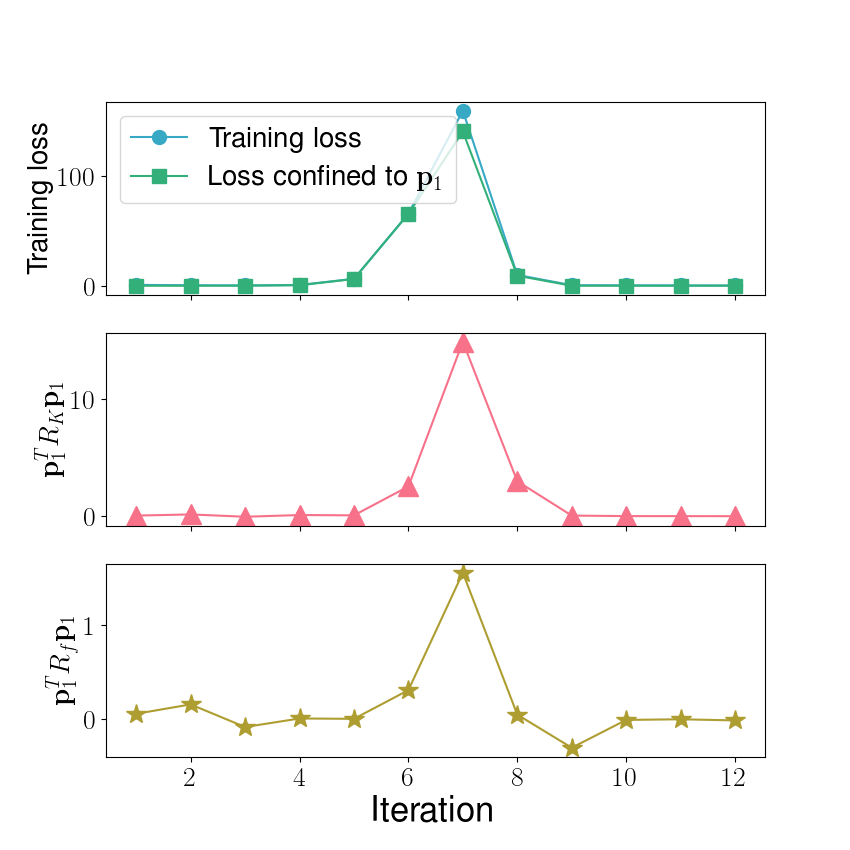} 
        % \captionsetup{justification=centering}
         \caption{CNN on CIFAR-2}
        % \caption{}
    \end{subfigure}
        \begin{subfigure}[t]{0.24\textwidth}
        \includegraphics[width=\linewidth]{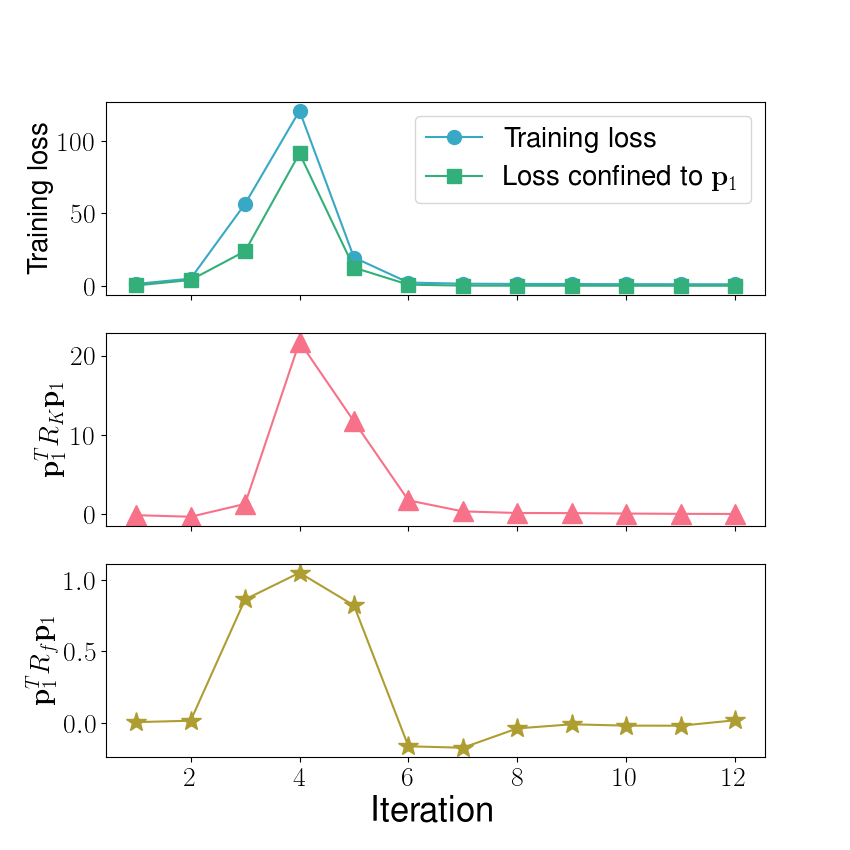} 
        % \captionsetup{justification=centering}
        \caption{FC on SVHN-2}
    \end{subfigure}
        \begin{subfigure}[t]{0.24\textwidth}
        \includegraphics[width=\linewidth]{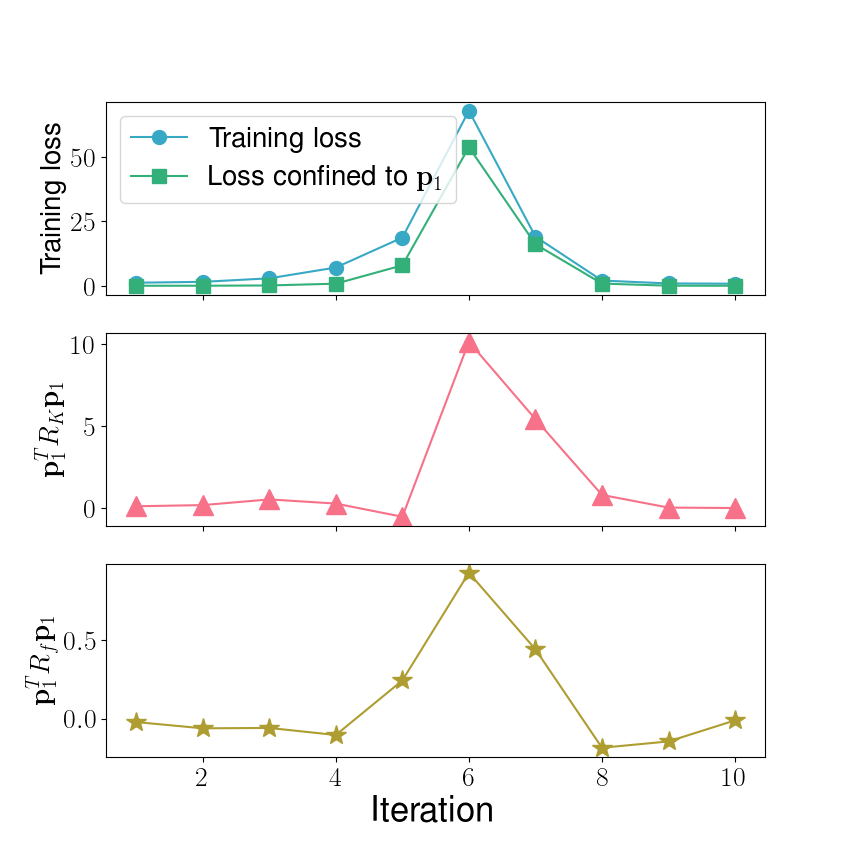} 
        % \captionsetup{justification=centering}
        \caption{CNN on SVHN-2}
    \end{subfigure}
     \caption{{\bf Training dynamics confined to the top eigenspace of the tangent kernel for NQMs.}\label{fig:top_nqm}}
  \end{figure}

\begin{figure}[h]
\centering
        \begin{subfigure}[t]{0.24\textwidth}
        \includegraphics[width=\linewidth]{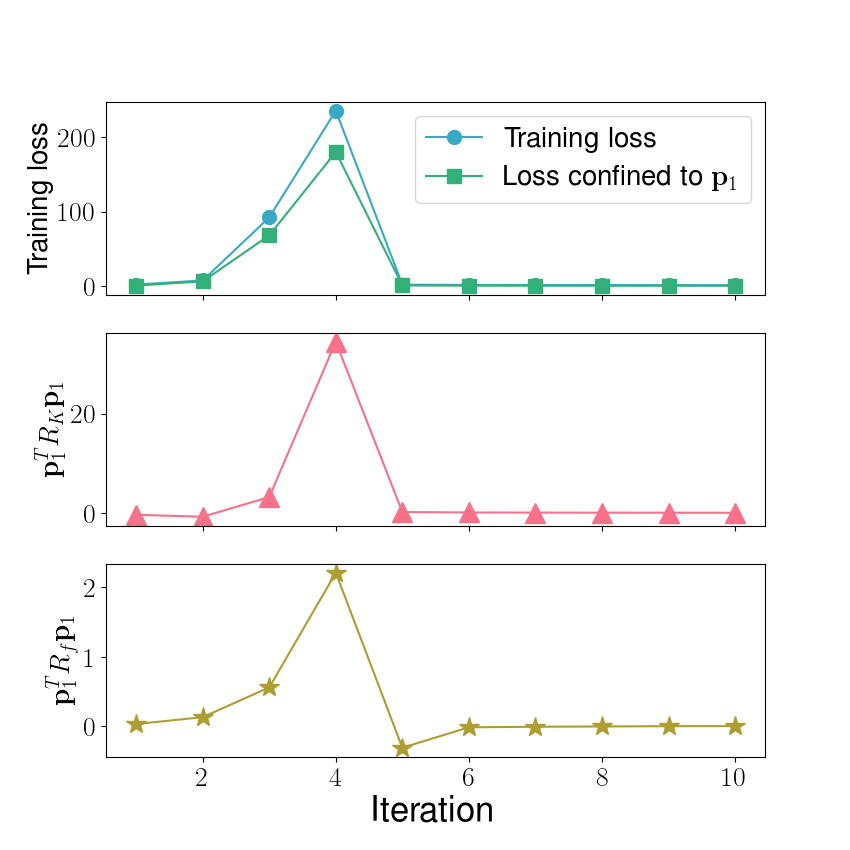} 
         \caption{FC on CIFAR-2}
     \end{subfigure}
        \begin{subfigure}[t]{0.24\textwidth}
        \includegraphics[width=\linewidth]{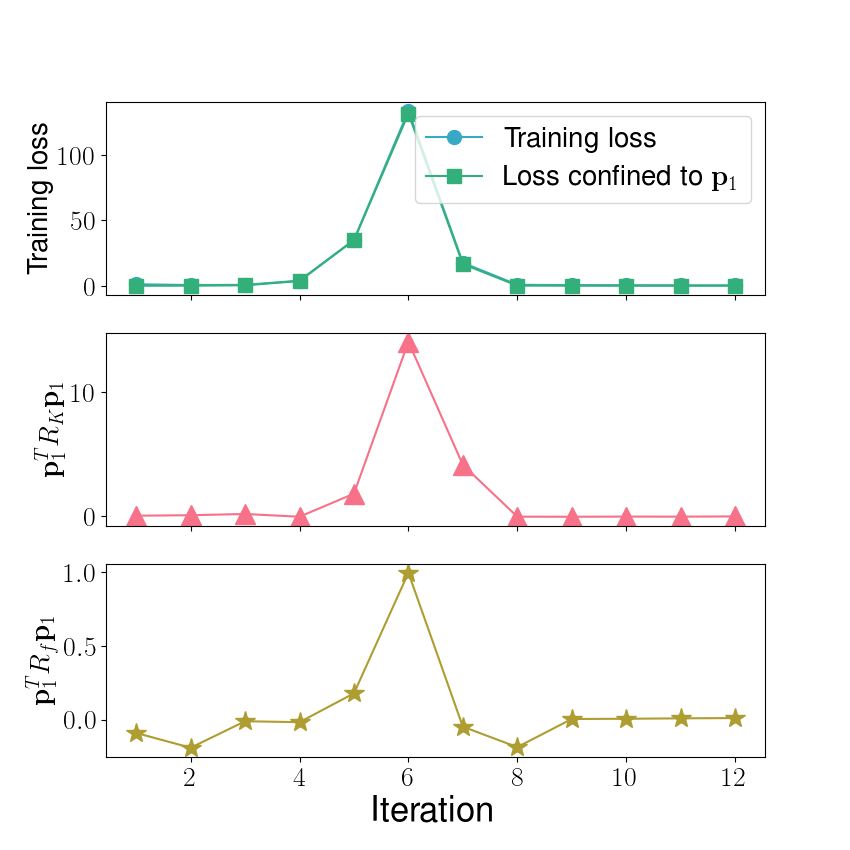} 
        % \captionsetup{justification=centering}
         \caption{CNN on CIFAR-2}
        % \caption{}
    \end{subfigure}
        \begin{subfigure}[t]{0.24\textwidth}
        \includegraphics[width=\linewidth]{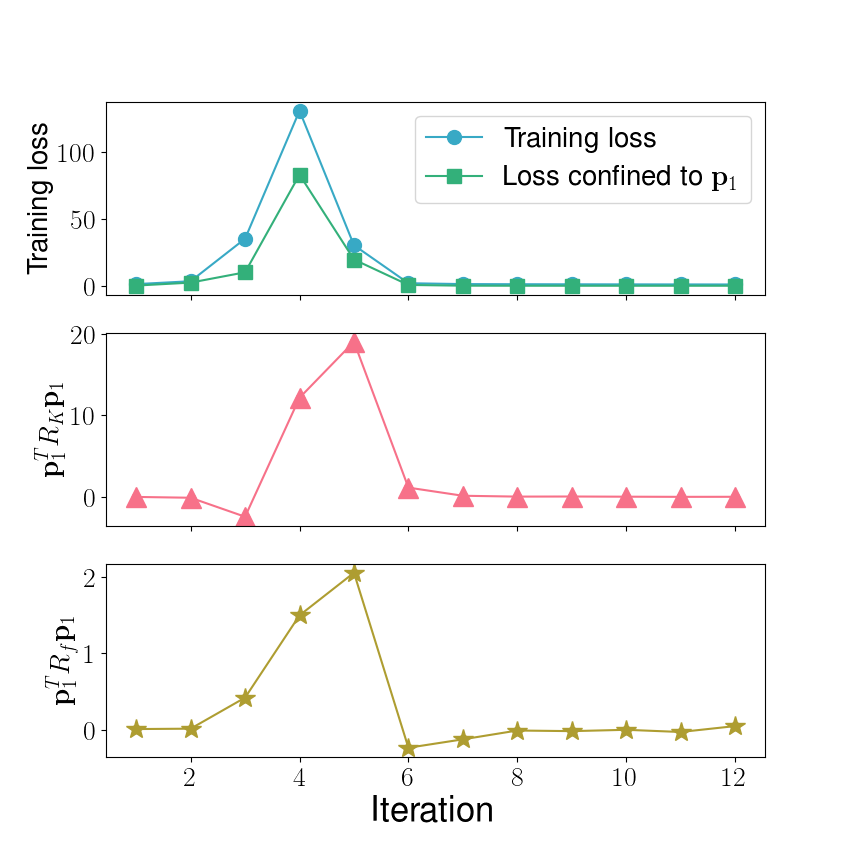} 
        % \captionsetup{justification=centering}
        \caption{FC on SVHN-2}
    \end{subfigure}
        \begin{subfigure}[t]{0.24\textwidth}
        \includegraphics[width=\linewidth]{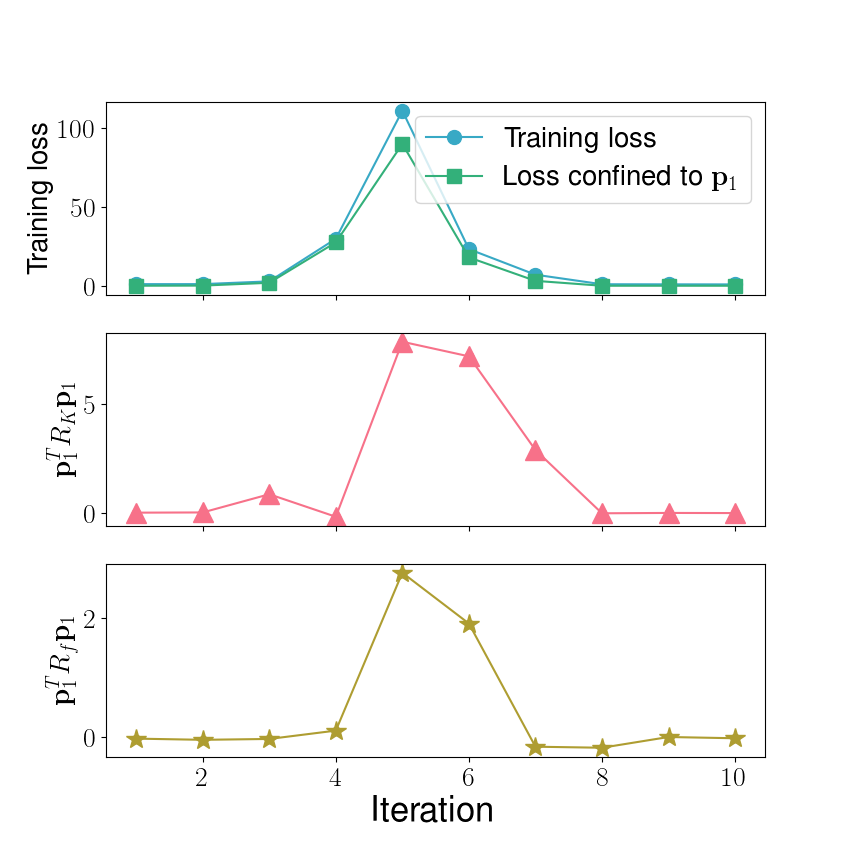} 
        % \captionsetup{justification=centering}
        \caption{CNN on SVHN-2}
    \end{subfigure}
     \caption{{\bf Training dynamics confined to the top eigenspace of the tangent kernel for wide neural networks.}\label{fig:top_nn}}
  \end{figure}

\subsection{Training dynamics of general quadratic models and neural networks.}\label{subsec:exp_gqm}

As discussed at the end of Section~\ref{sec:catapult}, a more general quadratic model can exhibit the catapult phase phenomenon. Specifically, we consider a general quadratic model:
\begin{align*}
    g(\rvw;\vx) = \rvw^T \phi(\vx) + \frac{1}{2}\gamma \rvw^T\Sigma(\vx) \rvw.
\end{align*}
We will train the general quadratic model with different learning rates, and different $\gamma$ respectively, to see how the catapult phase phenomenon depends on these two factors.
 For comparison, we also implement the experiments for neural networks. See the experiment setting in the following:

\paragraph{General quadratic models.} We set the dimension of the input $d=100$. We let the feature vector $\phi(\vx) = \vx/\|\vx\|$ where $x_i \sim \mathcal{N}(0,1)$ i.i.d. for each $i\in [d]$. We let $\Sigma$ be a diagonal matrix with $\Sigma_{i,i} \in \{-1,1\}$ randomly and independently. The weight parameters $\rvw$ are initialized by $\mathcal{N}(0,I_d)$. Unless stated otherwise, $\gamma = 10^{-3}$, and the learning rate is set to be $2.8$.

\paragraph{Neural networks.} We train a two-layer fully-connected neural networks with ReLU activation function on $20$ data points of CIFAR-2. Unless stated otherwise, the network width is $10^4$, and the learning rate is set to be $2.8$.

See the results in Figure~\ref{fig:quadratic_result}.

\begin{figure}[htb!]
    \centering

    (A)
    \begin{subfigure}[t]{0.23\textwidth}
        \centering
        \includegraphics[width=\linewidth]{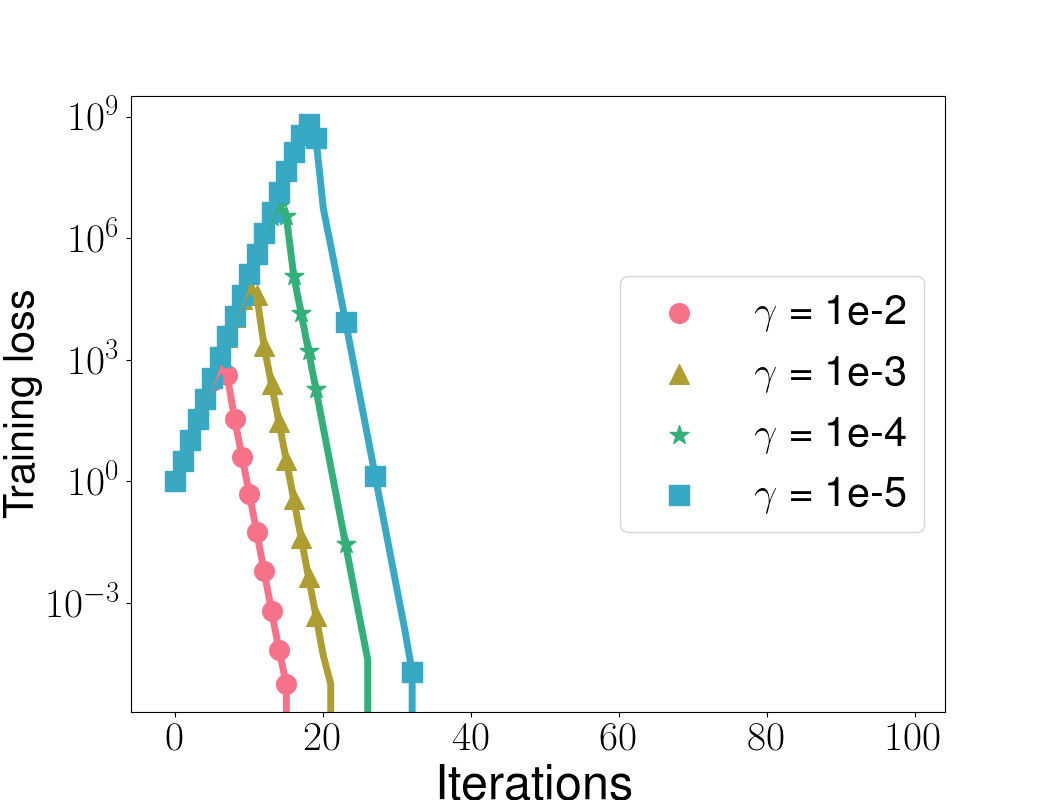}  
    \end{subfigure}
    \begin{subfigure}[t]{0.23\textwidth}
        \centering
        \includegraphics[width=\linewidth]{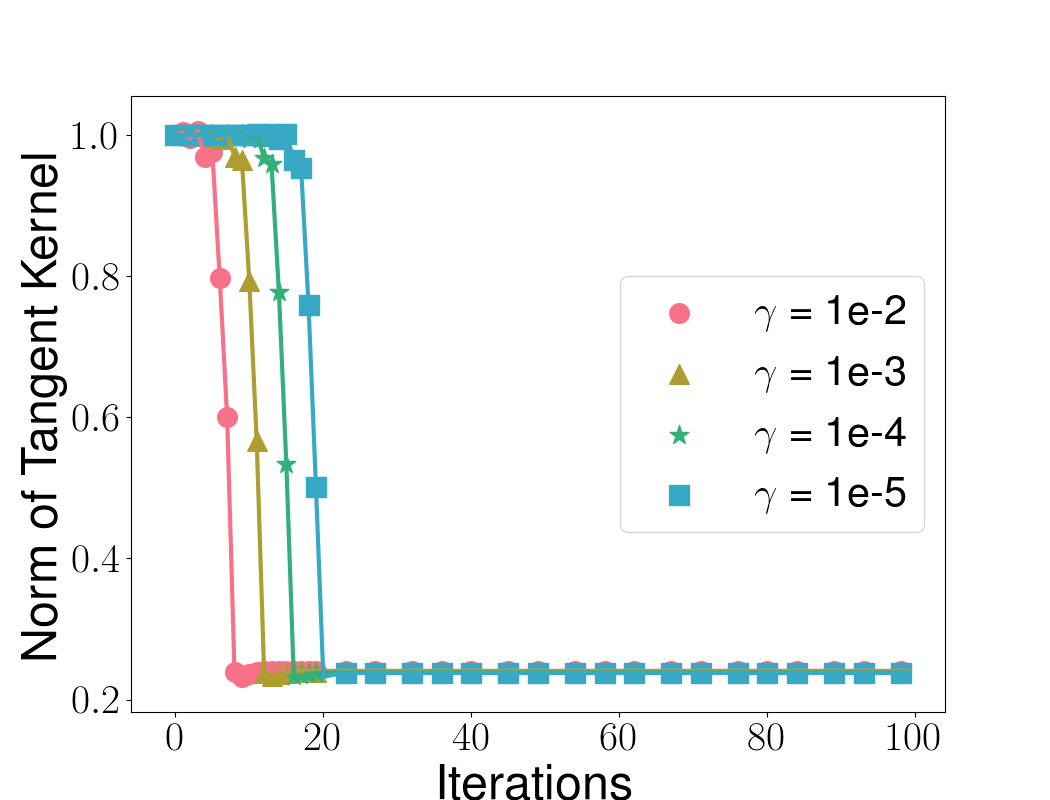}  
    \end{subfigure}
    \begin{subfigure}[t]{0.23\textwidth}
        \centering
        \includegraphics[width=\linewidth]{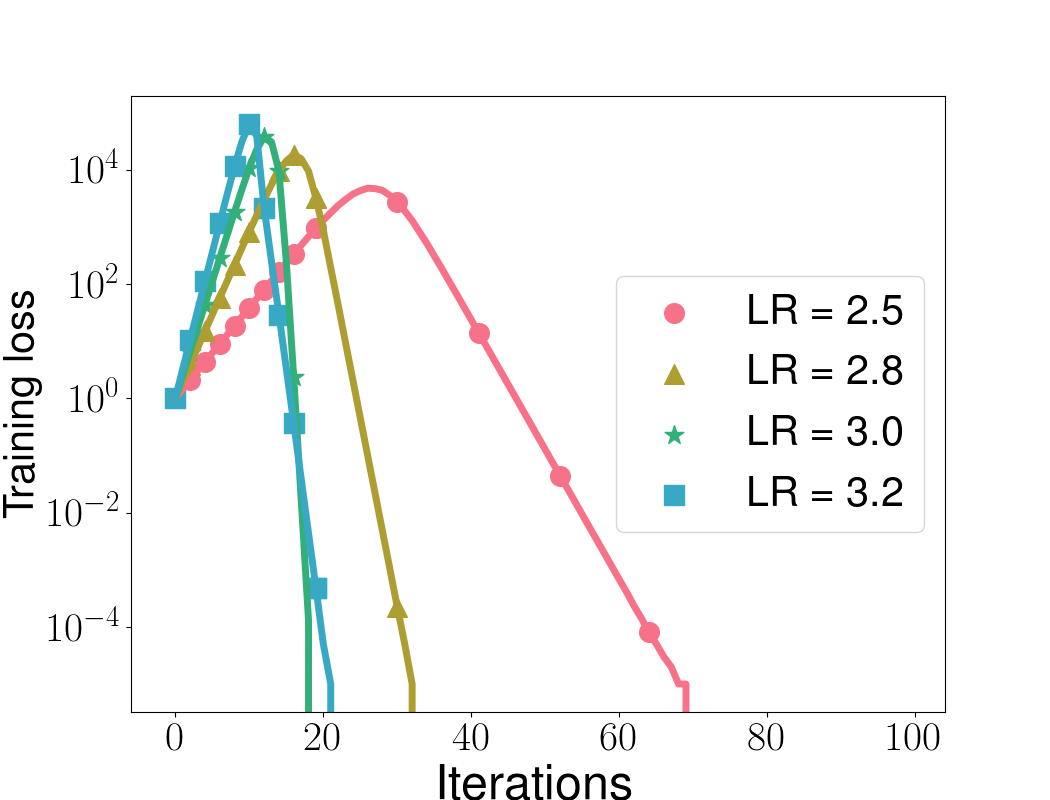}  
    \end{subfigure}
    \begin{subfigure}[t]{0.23\textwidth}
        \centering
        \includegraphics[width=\linewidth]{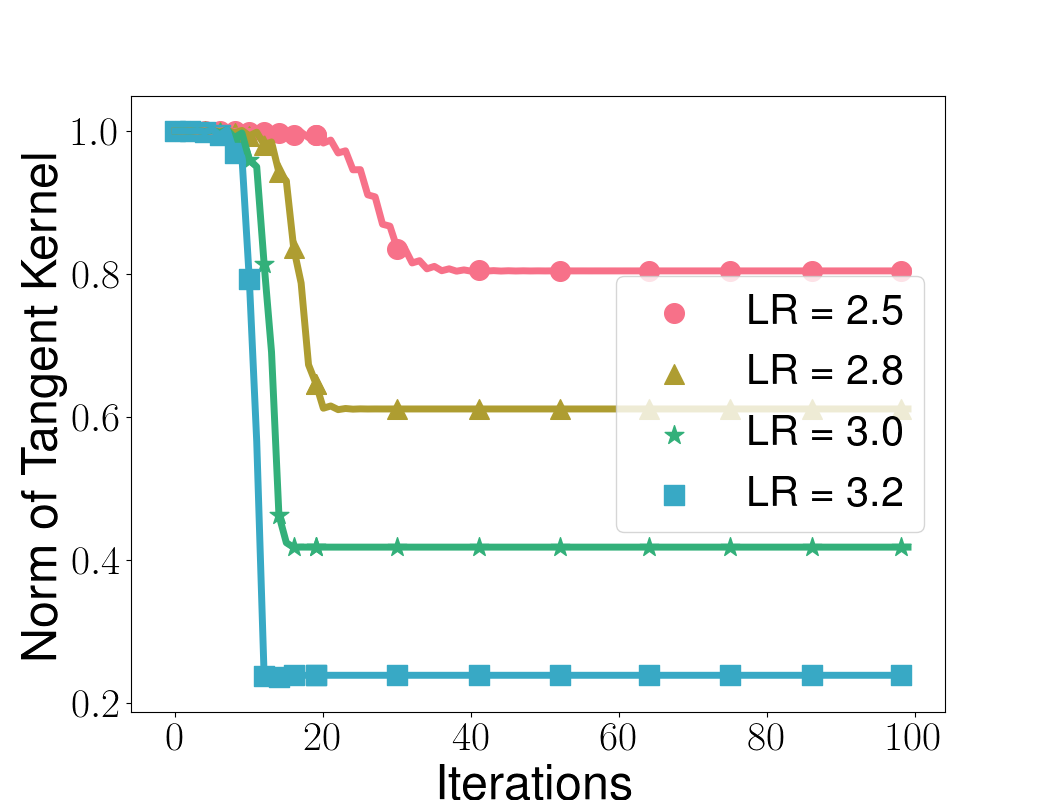}  
    \end{subfigure}
    (B)
    \begin{subfigure}[t]{0.23\textwidth}
        \centering
        \includegraphics[width=\linewidth]{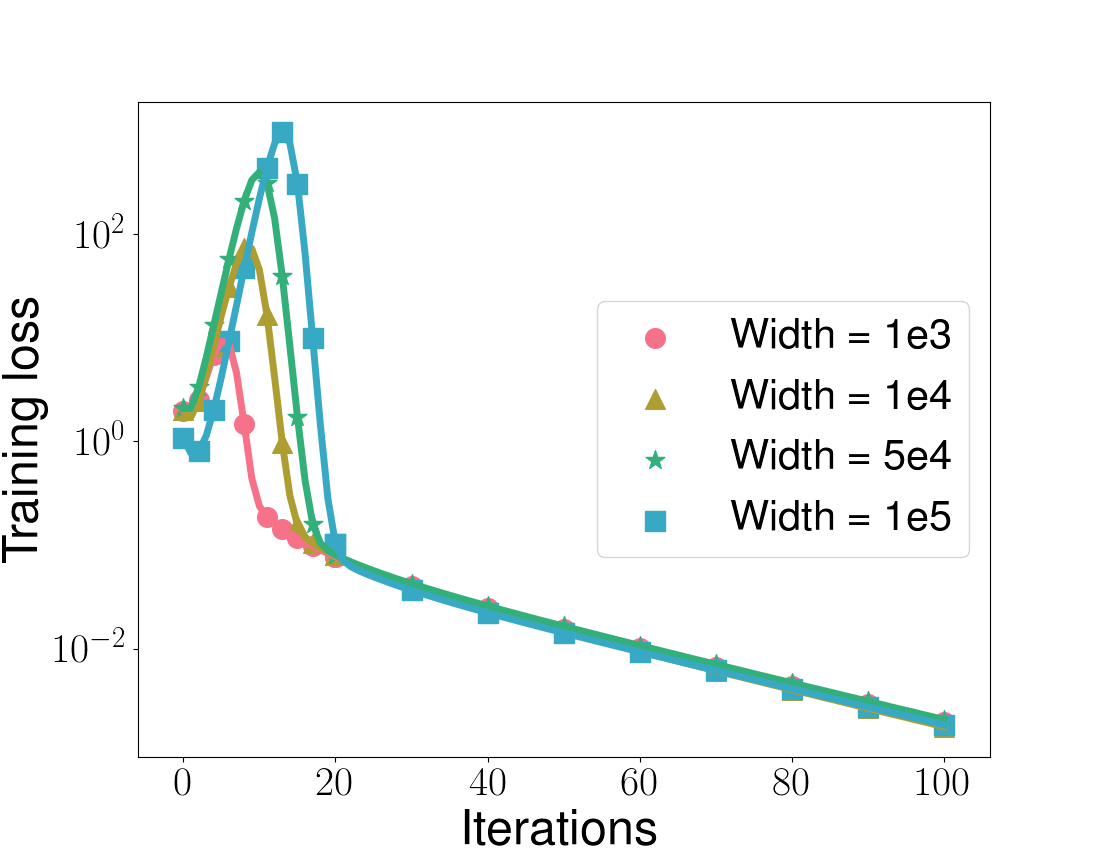} 
            \captionsetup{justification=centering}
        \caption{Loss (log scaled) vs. $\gamma$/width}
    \end{subfigure}
    \begin{subfigure}[t]{0.23\textwidth}
        \centering
        \includegraphics[width=\linewidth]{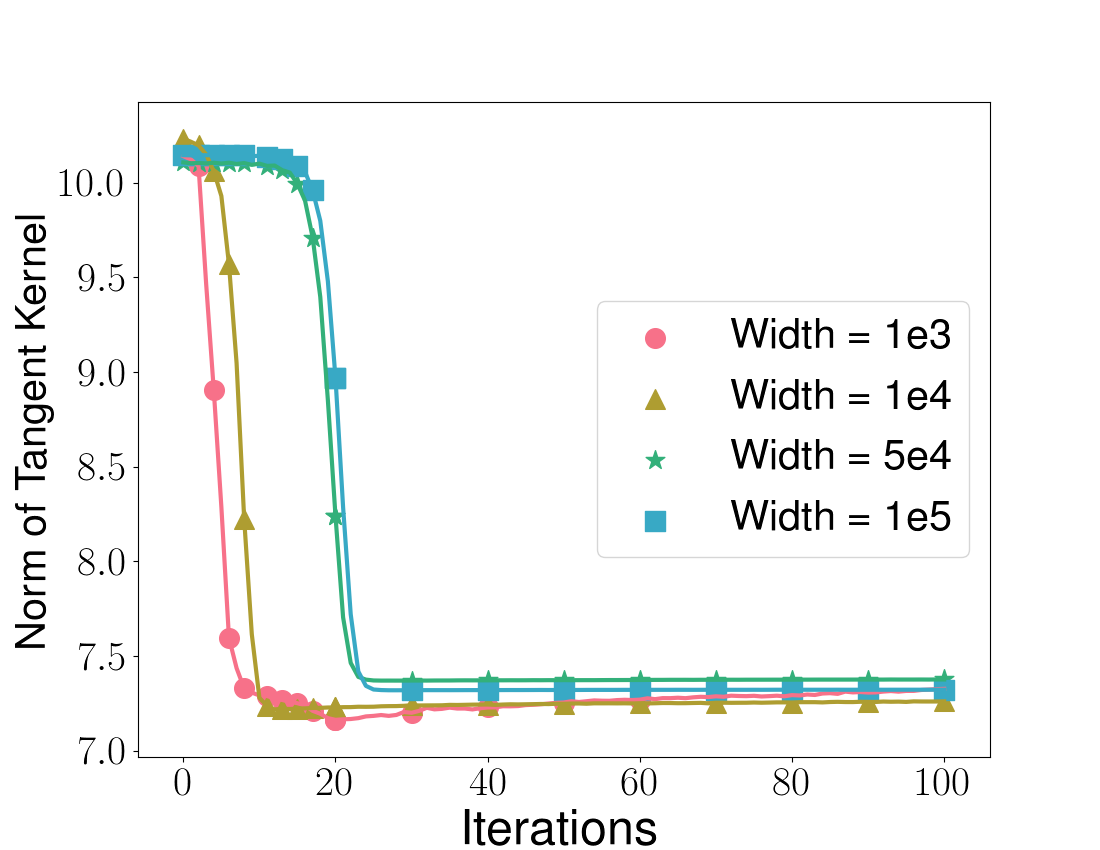} 
            \captionsetup{justification=centering}
        \caption{Tangent kernel norm vs. $\gamma$/width} 
    \end{subfigure}
    \begin{subfigure}[t]{0.23\textwidth}
        \centering
        \includegraphics[width=\linewidth]{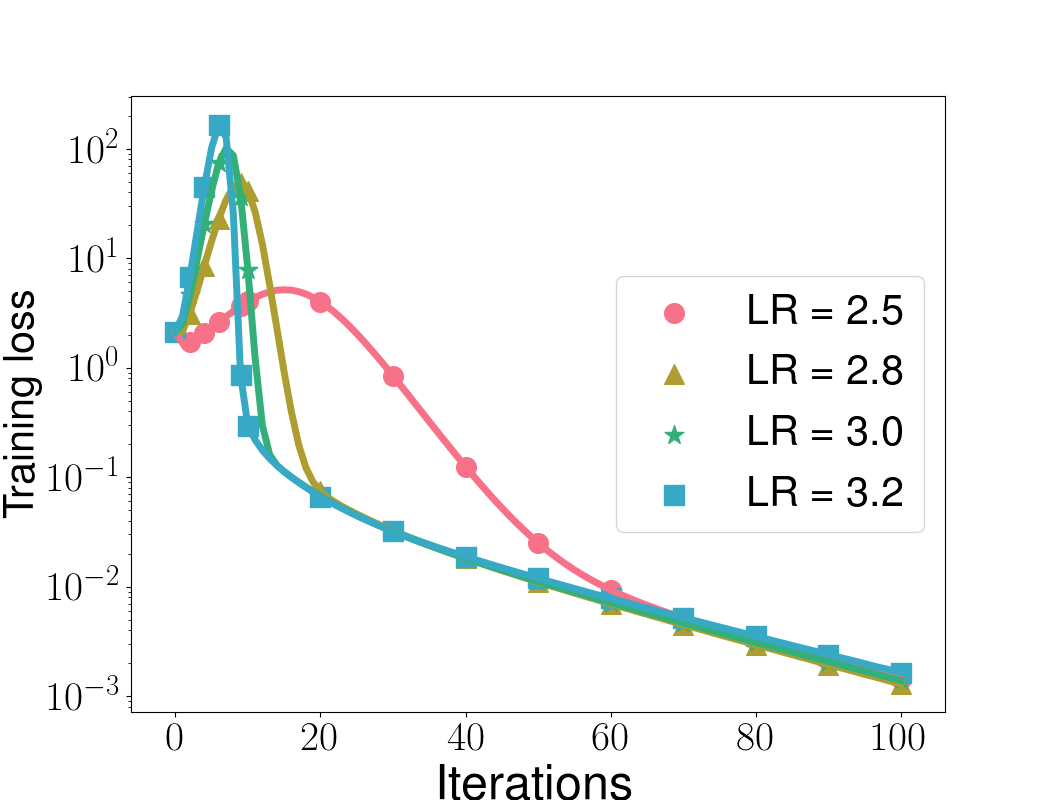} 
            \captionsetup{justification=centering}
        \caption{Loss vs. learning rate} 
    \end{subfigure}
    \begin{subfigure}[t]{0.23\textwidth}
        \centering
        \includegraphics[width=\linewidth]{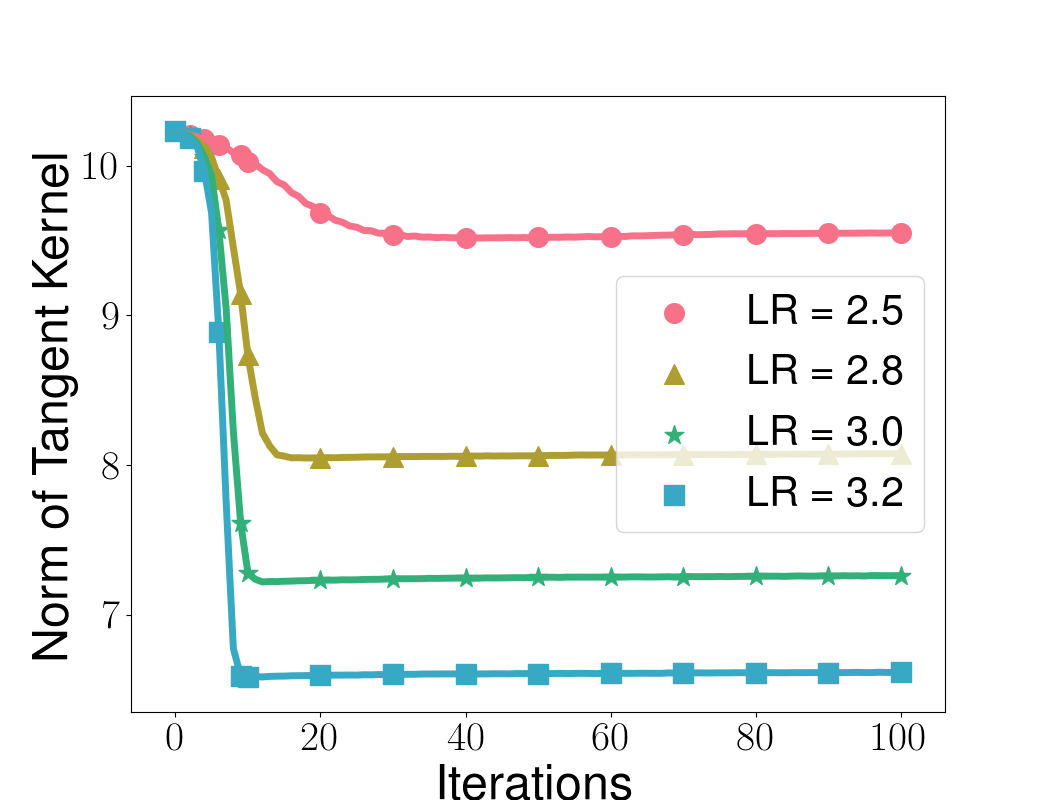} 
            \captionsetup{justification=centering}
        \caption{Tangent kernel norm vs. learning rate}
    \end{subfigure}    
     \caption{{\bf{General quadratic models have similar training dynamics with neural networks when trained with super-critical learning rates.}} Panel (A): experiments on general quadratic models. Smaller $\gamma$ or larger learning rates lead to larger training loss at the peak.  Larger learning rates make tangent kernel decrease more. Panel (B): experiments on two-layer neural networks.  Larger width (corresponding to smaller $\gamma$) and larger learning rates have similar effect on the training loss at the peak and decrease of tangent kernel norm with quadratic models.  Note that width or $\gamma$ seems to have no effect on the tangent kernel norm at convergence. }\label{fig:quadratic_result}
  \end{figure}

\subsection{Test performance of \texorpdfstring{$f$}{}, \texorpdfstring{$f_{\Lin}$}{} and \texorpdfstring{$f_{\Quad}$}{}, i.e., Figure~\ref{fig:ball_illustration}(b) and  Figure~\ref{fig:generalization}}\label{subsec:exp_quad_setting}
For the architectures of two-layer fully connected neural network and two-layer convolutional neural network, we set the width to be $5,000$ and $1,000$ respectively. Specific to Figure~\ref{fig:ball_illustration}(b), we use the architecture of a two-layer fully connected neural network.

Due to the large number of parameters in NQMs, we choose a small subset of all the datasets. We use the first class (airplanes) and third class (birds) of CIFAR-10, which we call CIFAR-2, and select $256$ data points out of it as the training set.
We use the number $0$ and $2$ of SVHN, and select $256$ data points as the training set.
We select $128$,  $256$, $128$ data points out of MNIST, FSDD and AG NEWS dataset respectively as the training sets. The size of testing set is $2,000$ for all. When implementing SGD, we choose batch size to be $32$.

For each setting, we report the average result of 5 independent runs.

\subsection{Test performance of \texorpdfstring{$f$}{}, \texorpdfstring{$f_{\Lin}$}{} and \texorpdfstring{$f_{\Quad}$}{} in terms of accuracy}\label{subsec:catapult_acc}
In this section, we report the best test accuracy for $f$, $f_{\Lin}$ and $f_{\Quad}$ corresponding to the best test loss in Figure~\ref{fig:generalization}. We use the same setting as in Appendix~\ref{subsec:exp_quad_setting}.  

\begin{figure}[htb!]
\centering
        \begin{subfigure}[t]{0.45\textwidth}
        \includegraphics[width=\linewidth]{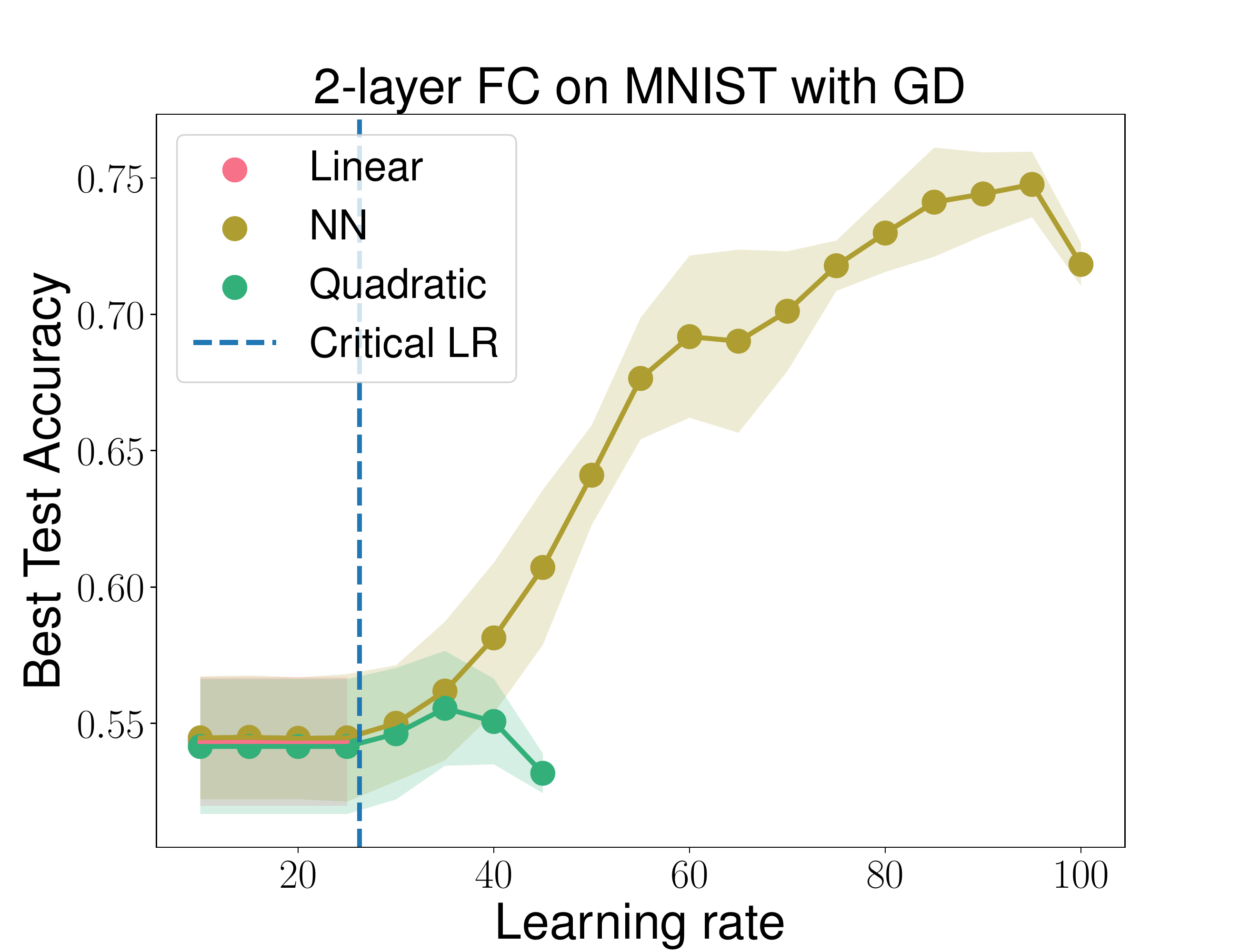} 
        %  \caption{MNIST}
    \end{subfigure}
    \begin{subfigure}[t]{0.45\textwidth}
        \includegraphics[width=\linewidth]{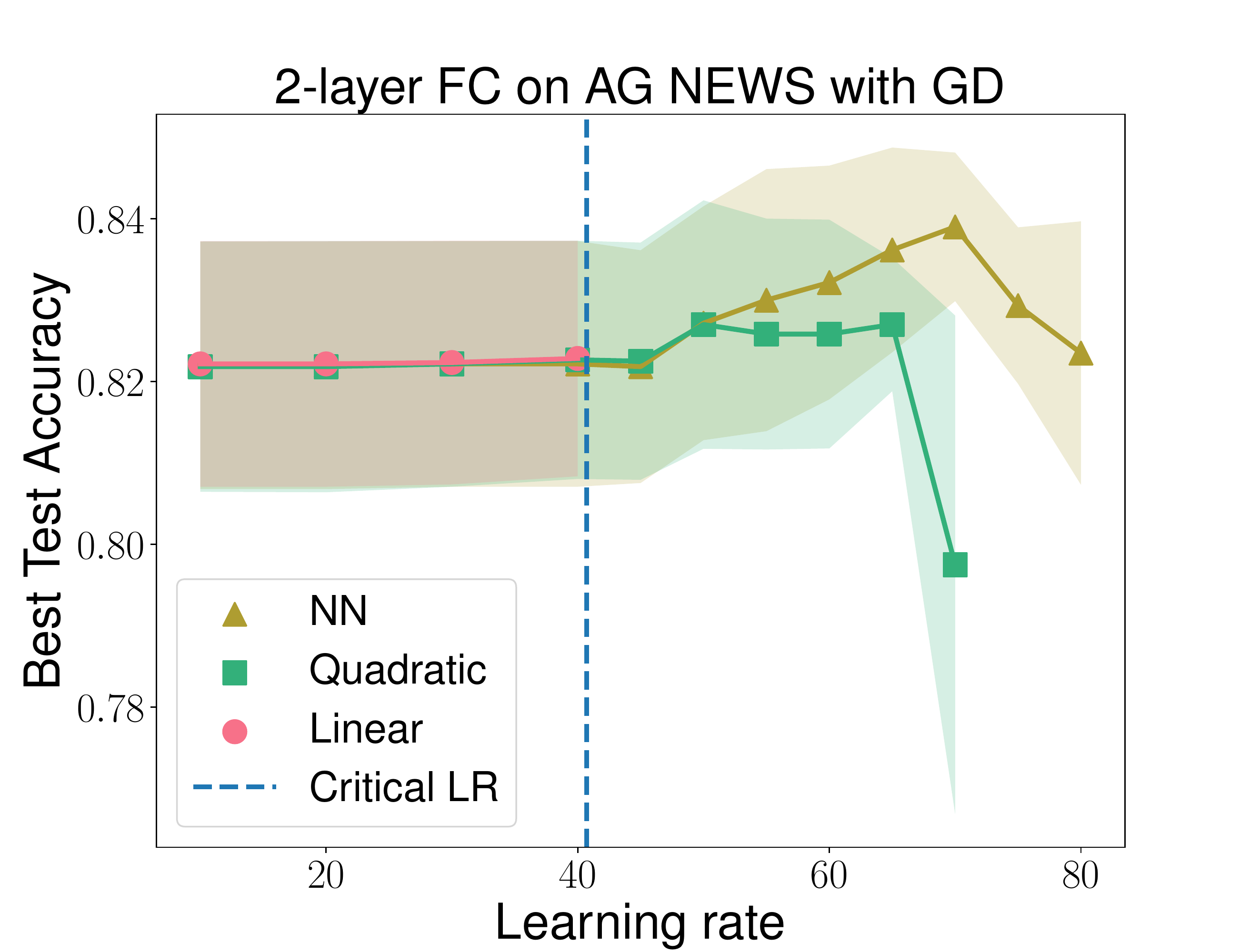}
        % \caption{Text}
    \end{subfigure}
     \caption{{\bf Best test accuracy plotted against different learning rates for $ f_{\Quad}$, $f$, and $f_{\Lin}$.} Left panel: 2-layer FC on MNIST trained with GD. Right panel: 2-layer FC on AG NEWS trained with GD.}
  \end{figure}

\begin{figure}[htb!]
\centering
        \begin{subfigure}[t]{0.45\textwidth}
        \includegraphics[width=\linewidth]{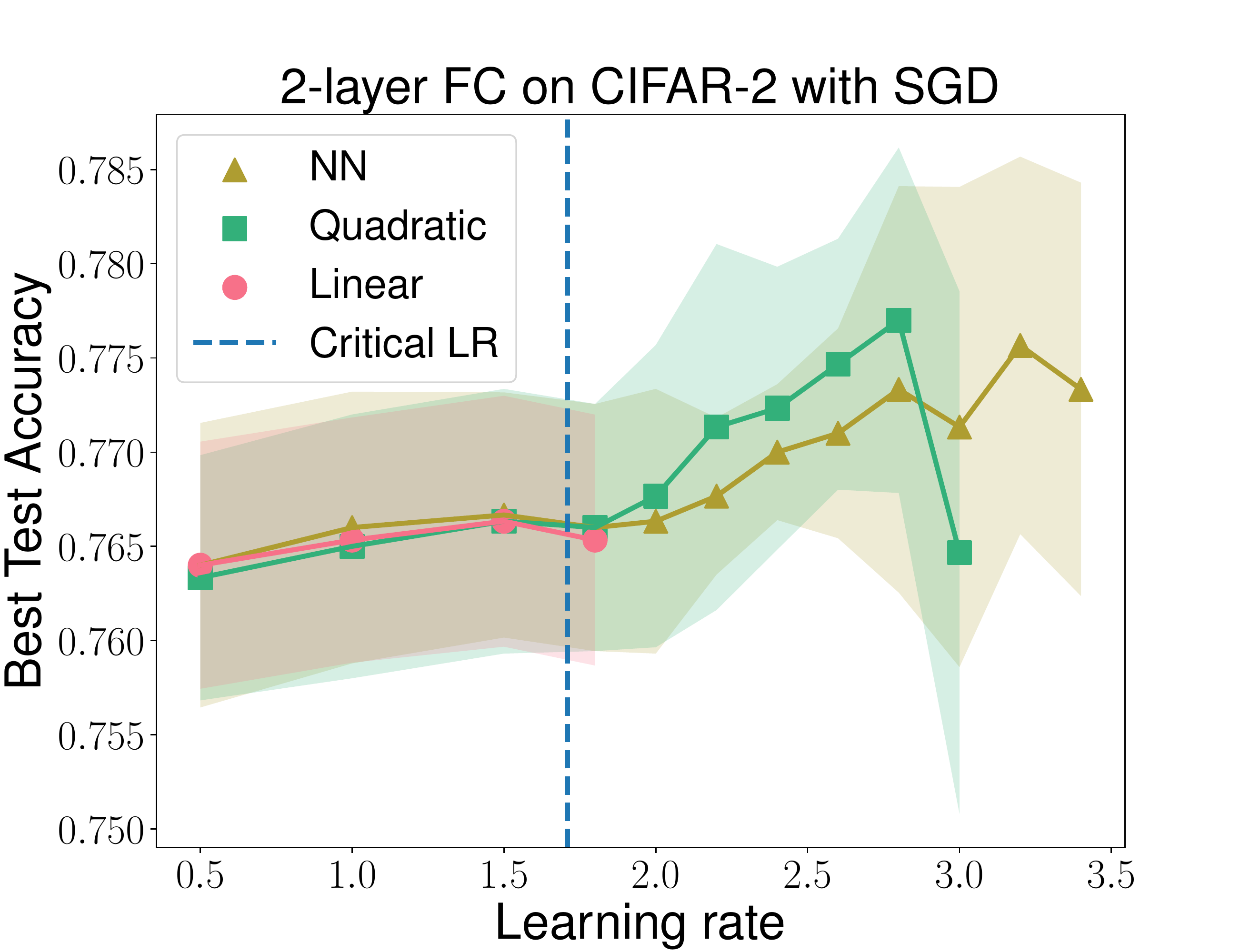} 
    \end{subfigure}
    \begin{subfigure}[t]{0.45\textwidth}
        \includegraphics[width=\linewidth]{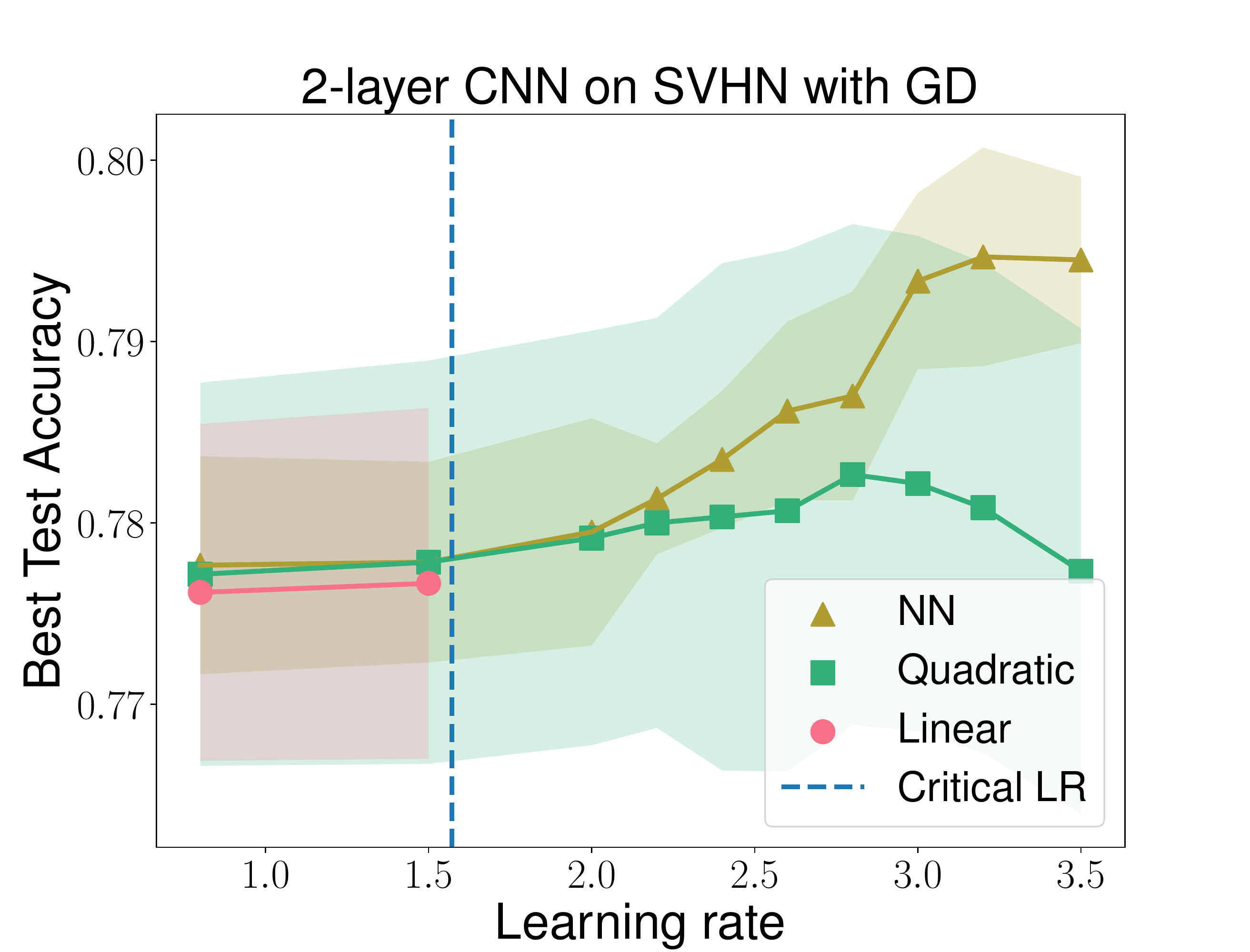}
    \end{subfigure}
     \caption{{\bf Best test accuracy plotted against different learning rates for $ f_{\Quad}$, $f$, and $f_{\Lin}$.} Left panel: 2-layer FC on CIFAR-2 trained with SGD. Right panel: 2-layer CNN on SVHN trained with GD.}
  \end{figure}

\begin{figure}[htb!]
\centering
        \begin{subfigure}[t]{0.45\textwidth}
        \includegraphics[width=\linewidth]{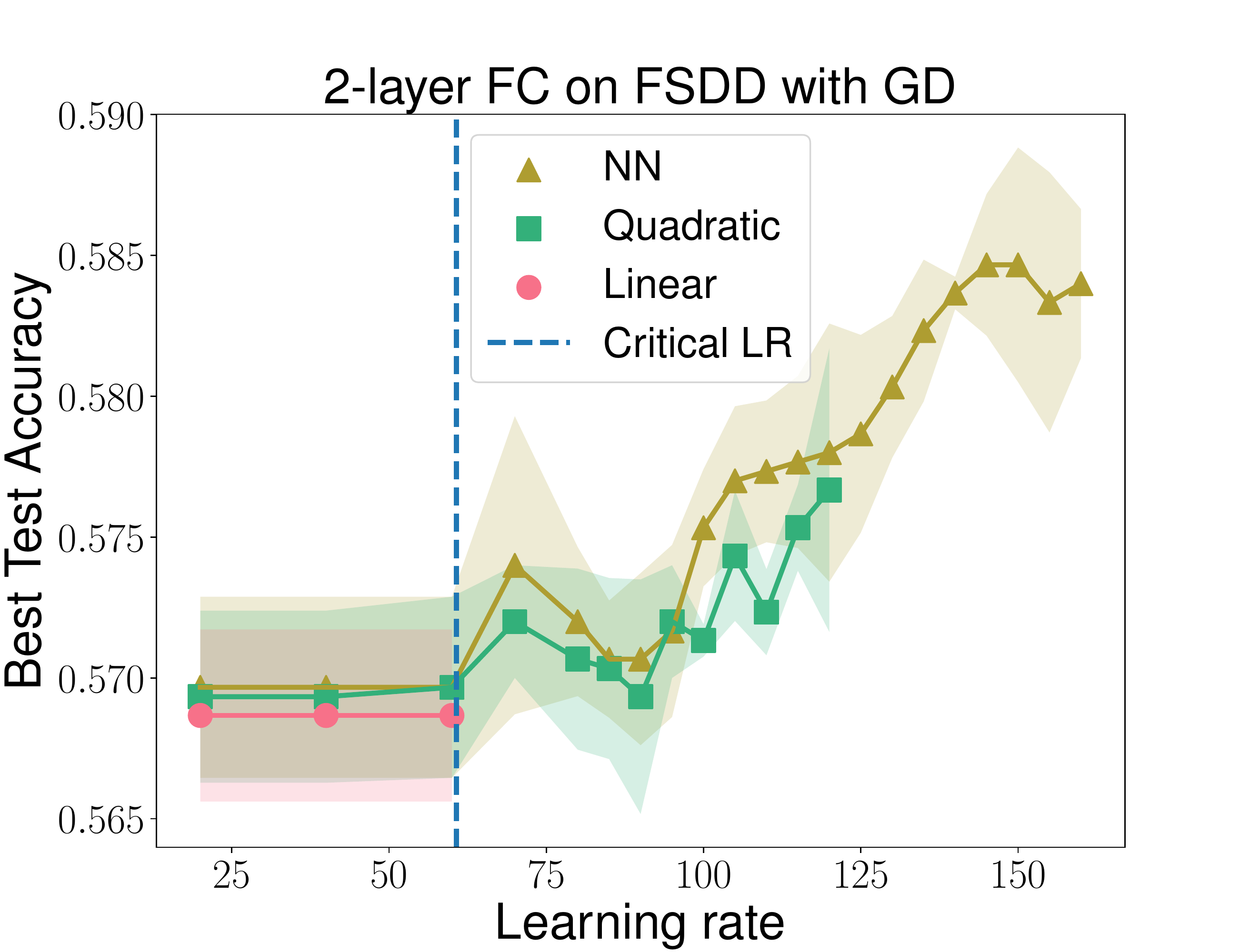} 
    \end{subfigure}
    \begin{subfigure}[t]{0.45\textwidth}
        \includegraphics[width=\linewidth]{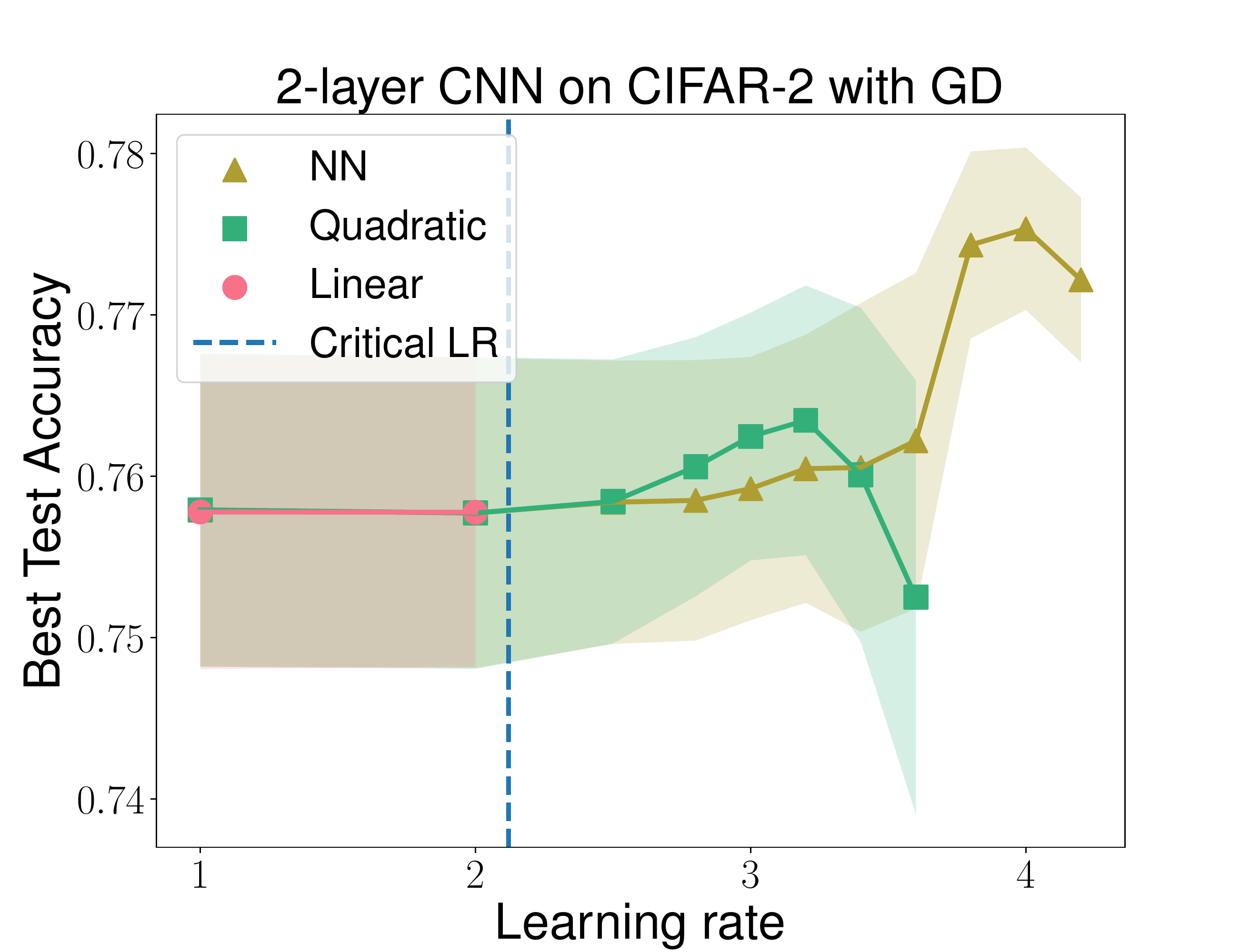}
    \end{subfigure}
     \caption{{\bf Best test accuracy plotted against different learning rates for $ f_{\Quad}$, $f$, and $f_{\Lin}$.} Left panel: 2-layer FC on FSDD trained with GD. Right panel: 2-layer CNN on CIFAR-2 trained with GD.}
  \end{figure}

 \subsection{Test performance of \texorpdfstring{$f$}{}, \texorpdfstring{$f_{\Lin}$}{} and \texorpdfstring{$f_{\Quad}$}{} with architecture of 3-layer FC}\label{subsec:3_layer_fc} 
In this section, we extend our results for shallow neural networks discussed in Section~\ref{sec:exp_catapult} to 3-layer fully connected neural networks. In the same way, we compare the test performance of three models, $f$, $f_{\Lin}$ and $f_{\Quad}$ upon varying learning rate. We observe the same phenomenon for 3-layer ReLU activated FC with shallow neural networks. See Figure~\ref{fig:3-layer-1}  and~\ref{fig:3-layer-2}.

We use the first class (airplanes) and third class (birds) of CIFAR-10, which we call CIFAR-2, and select $100$ data points out of it as the training set.
We use the number $0$ and $2$ of SVHN, and select $100$ data points as the training set. 
We select $100$ data points out of AG NEWS dataset as the training set. For the speech data set FSDD, we select $100$ data points in class $1$ and $3$ as the training set. The size of testing set is $500$ for all.

For each setting, we report the average result of 5 independent runs.

 \begin{figure}[htb!]
\centering
        \begin{subfigure}[t]{0.45\textwidth}
        \includegraphics[width=\linewidth]{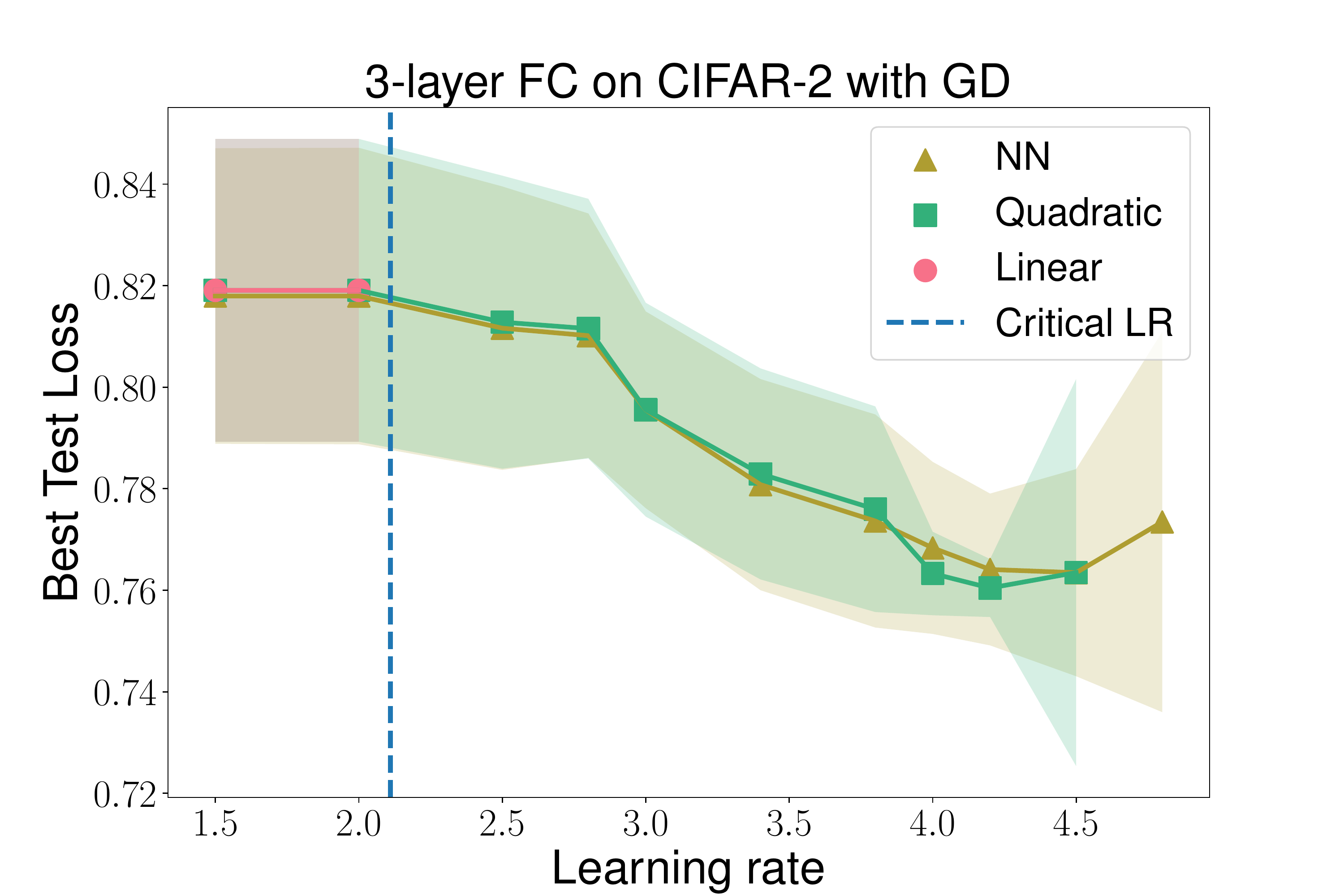} 
    \end{subfigure}
    \begin{subfigure}[t]{0.45\textwidth}
        \includegraphics[width=\linewidth]{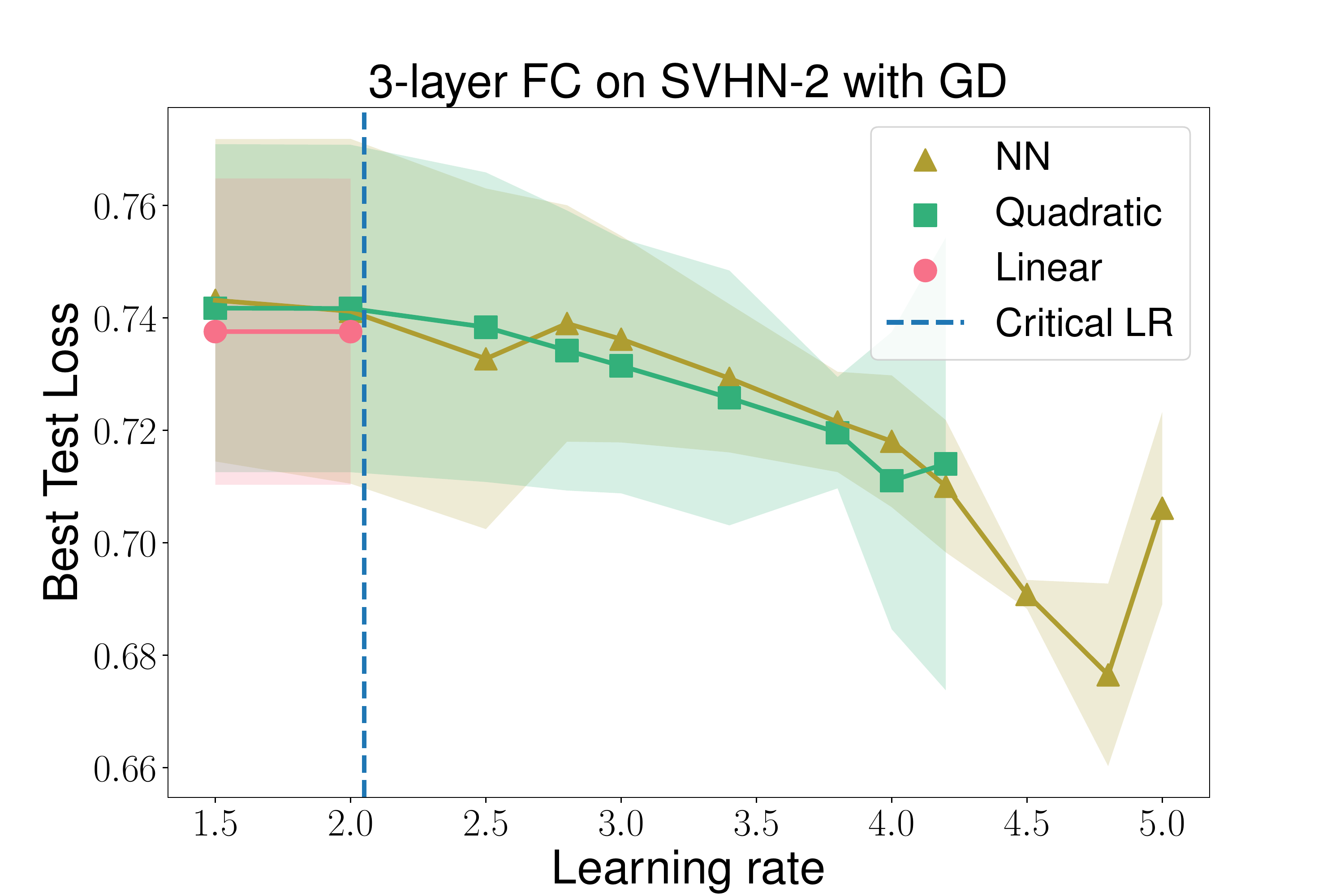}
    \end{subfigure}
          \caption{{\bf Best test accuracy plotted against different learning rates for $ f_{\Quad}$, $f$, and $f_{\Lin}$.} Left panel: 3-layer FC on CIFAR-2 trained with GD. Right panel: 3-layer FC on SVHN-2 trained with GD.}\label{fig:3-layer-1}
  \end{figure} 
  
 \begin{figure}[htb!]
\centering
        \begin{subfigure}[t]{0.45\textwidth}
        \includegraphics[width=\linewidth]{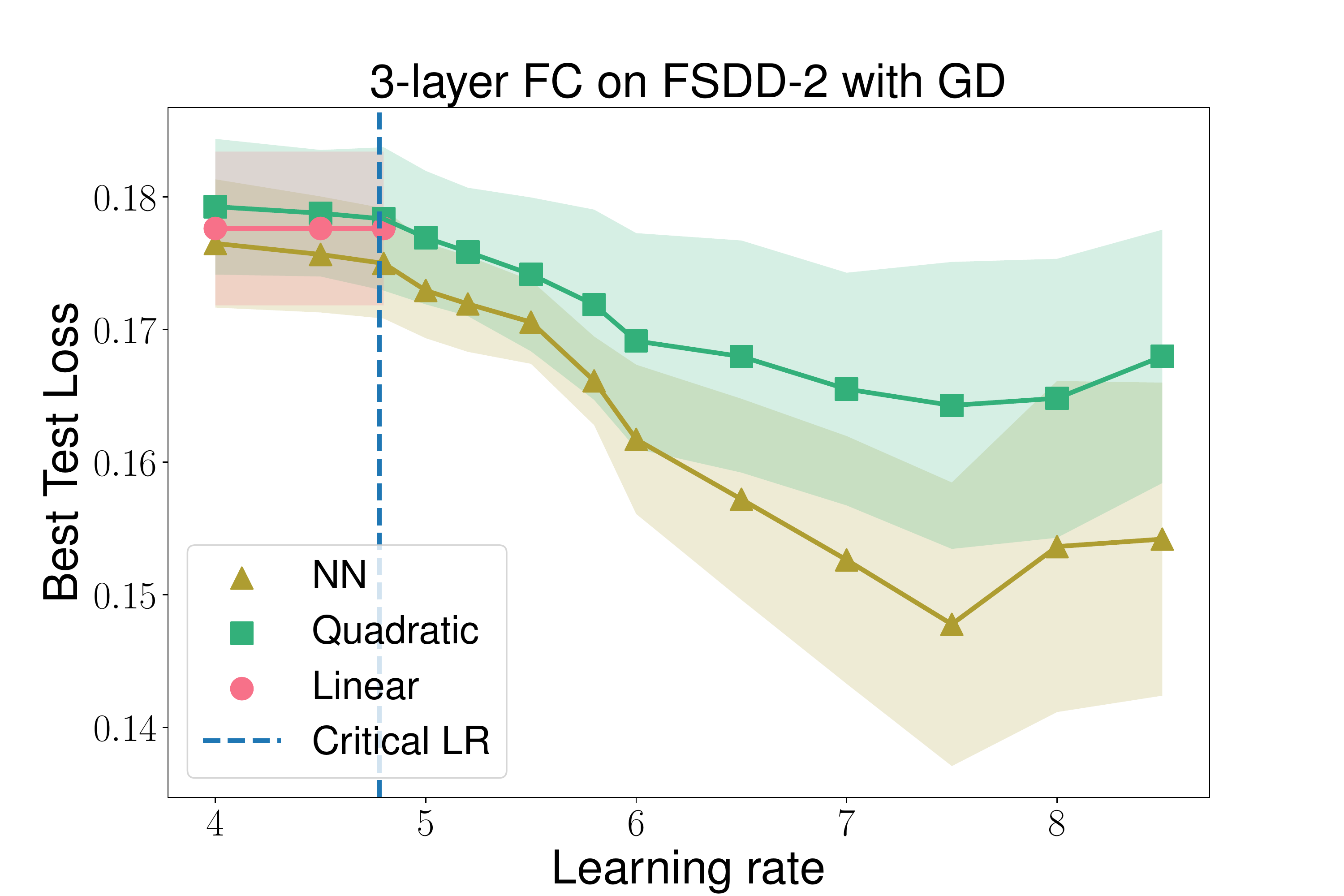} 
    \end{subfigure}
    \begin{subfigure}[t]{0.45\textwidth}
        \includegraphics[width=\linewidth]{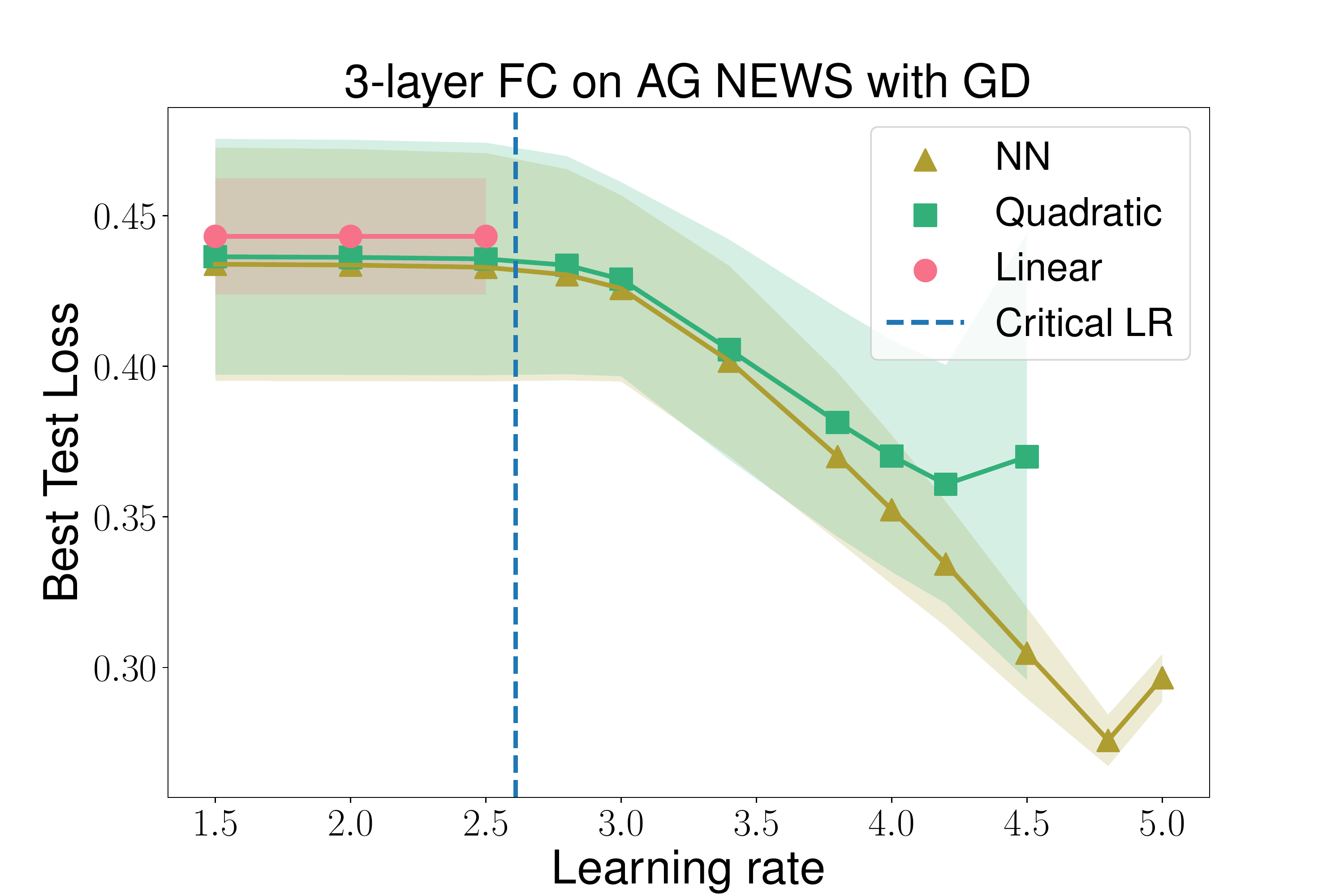}
    \end{subfigure}
          \caption{{\bf Best test accuracy plotted against different learning rates for $ f_{\Quad}$, $f$, and $f_{\Lin}$.} Left panel: 3-layer FC on FSDD-2 trained with GD. Right panel: 3-layer FC on AG NEWS trained with GD.}\label{fig:3-layer-2}
  \end{figure}  

\subsection{Test performance with Tanh and Swish activation functions}\label{subsec:tanh_swish}  

We replace ReLU by Tanh and Swish activation functions to train the models with the same setting as Figure~\ref{fig:generalization}. We observe the same phenomenon as we describe in Section~\ref{sec:exp_catapult}.

\begin{figure}[htb!]
\centering
        \begin{subfigure}[t]{0.45\textwidth}
        \includegraphics[width=\linewidth]{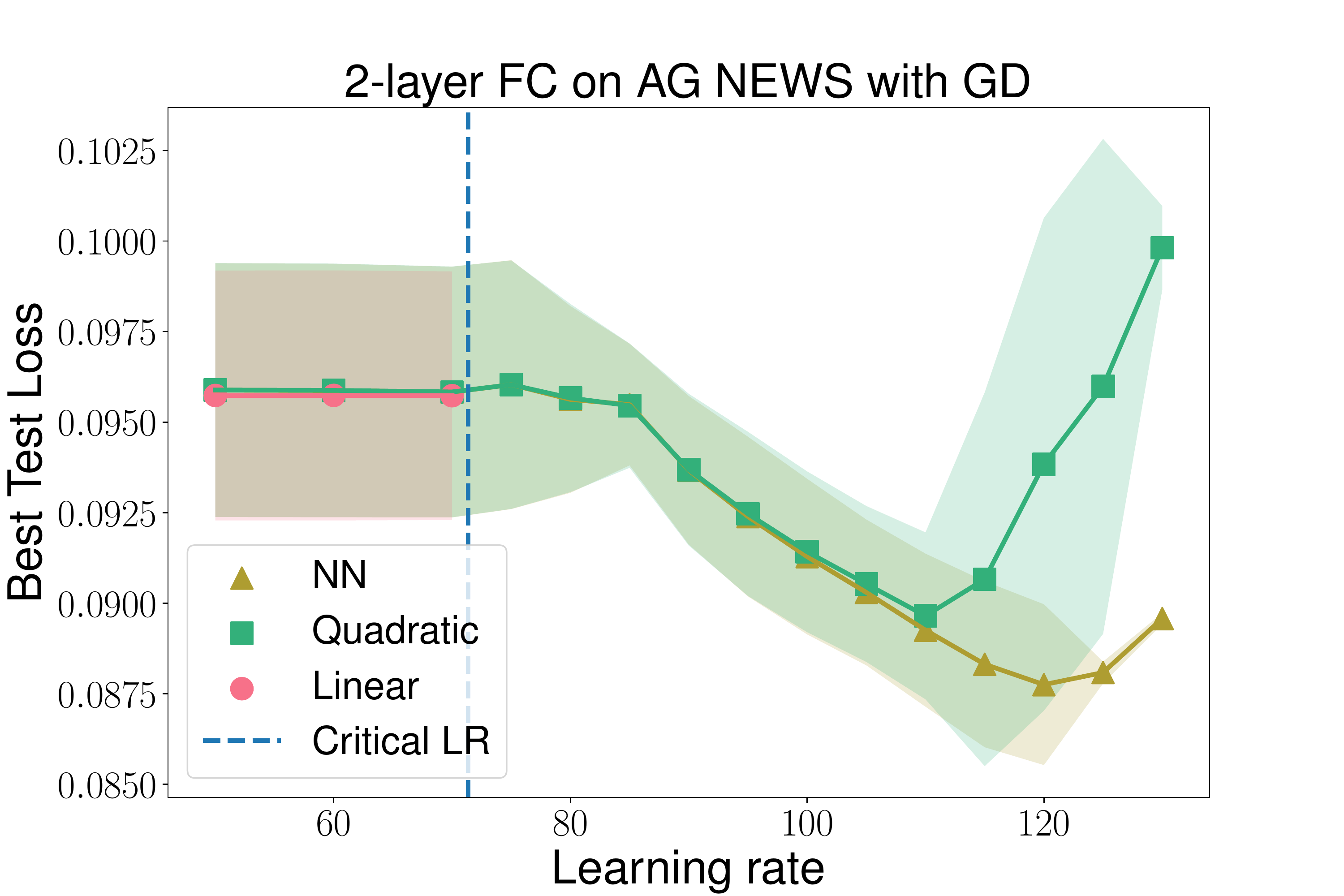} 
         \caption{Swish activation function}
    \end{subfigure}
    \begin{subfigure}[t]{0.45\textwidth}
        \includegraphics[width=\linewidth]{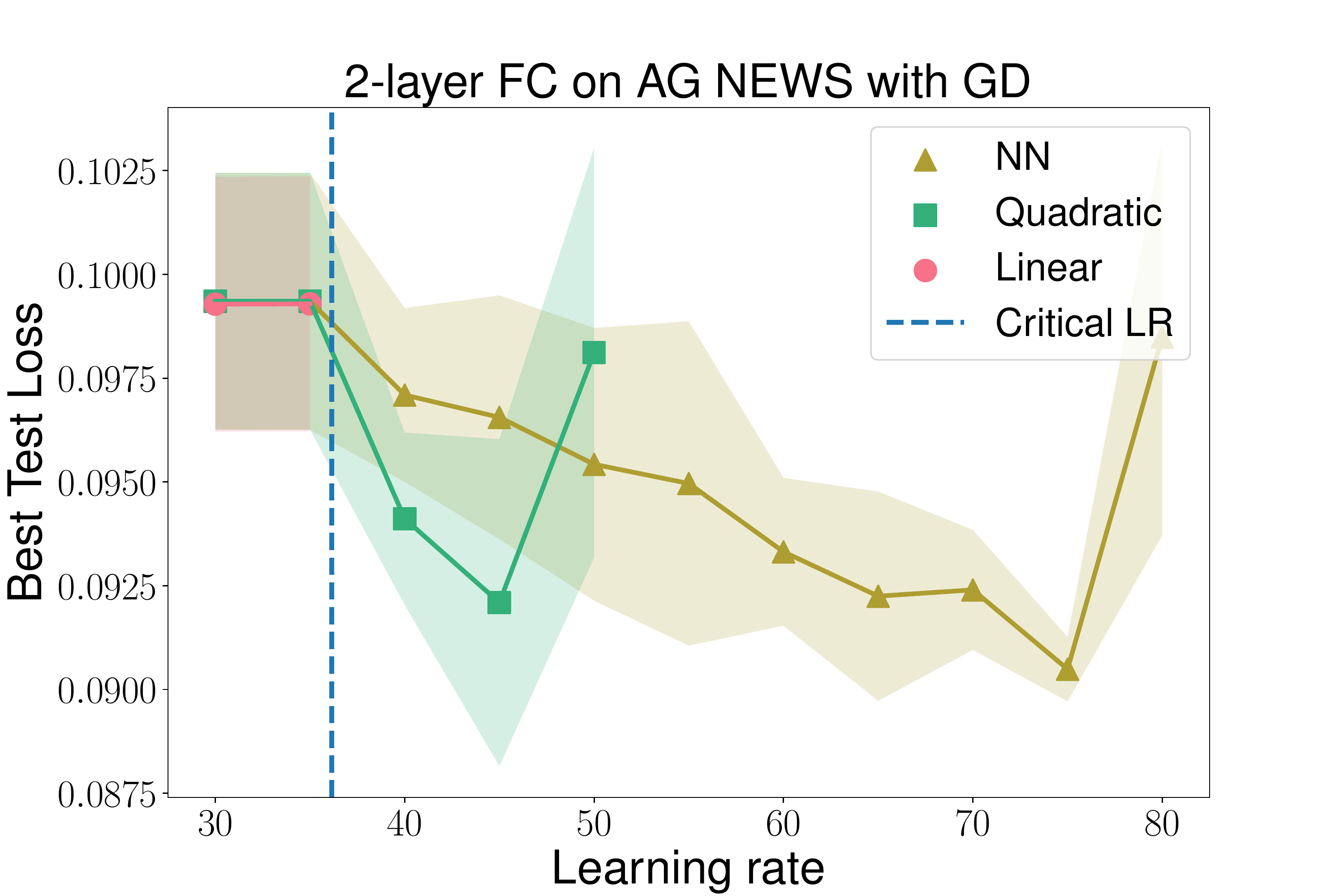}
        \caption{Tanh activation function}
    \end{subfigure}
     \caption{ {\bf Best test loss plotted against different learning rates for $ f_{\Quad}$, $f$, and $f_{\Lin}$.} We choose 2-layer FC as the architecture and train the models on AG NEWS with GD.}
  \end{figure}
  
\begin{figure}[htb!]
\centering
        \begin{subfigure}[t]{0.45\textwidth}
        \includegraphics[width=\linewidth]{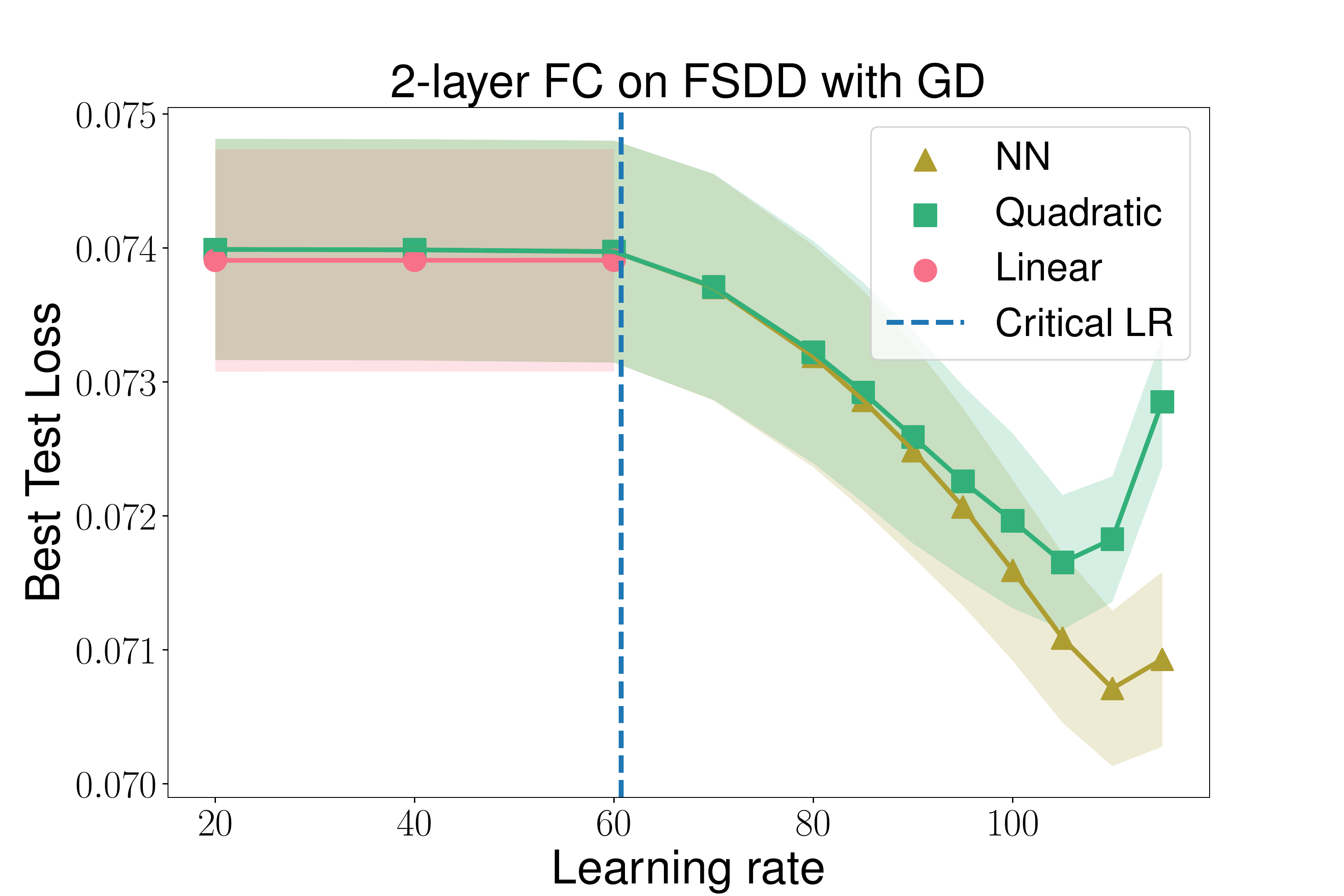} 
         \caption{Swish activation function}
    \end{subfigure}
    \begin{subfigure}[t]{0.45\textwidth}
        \includegraphics[width=\linewidth]{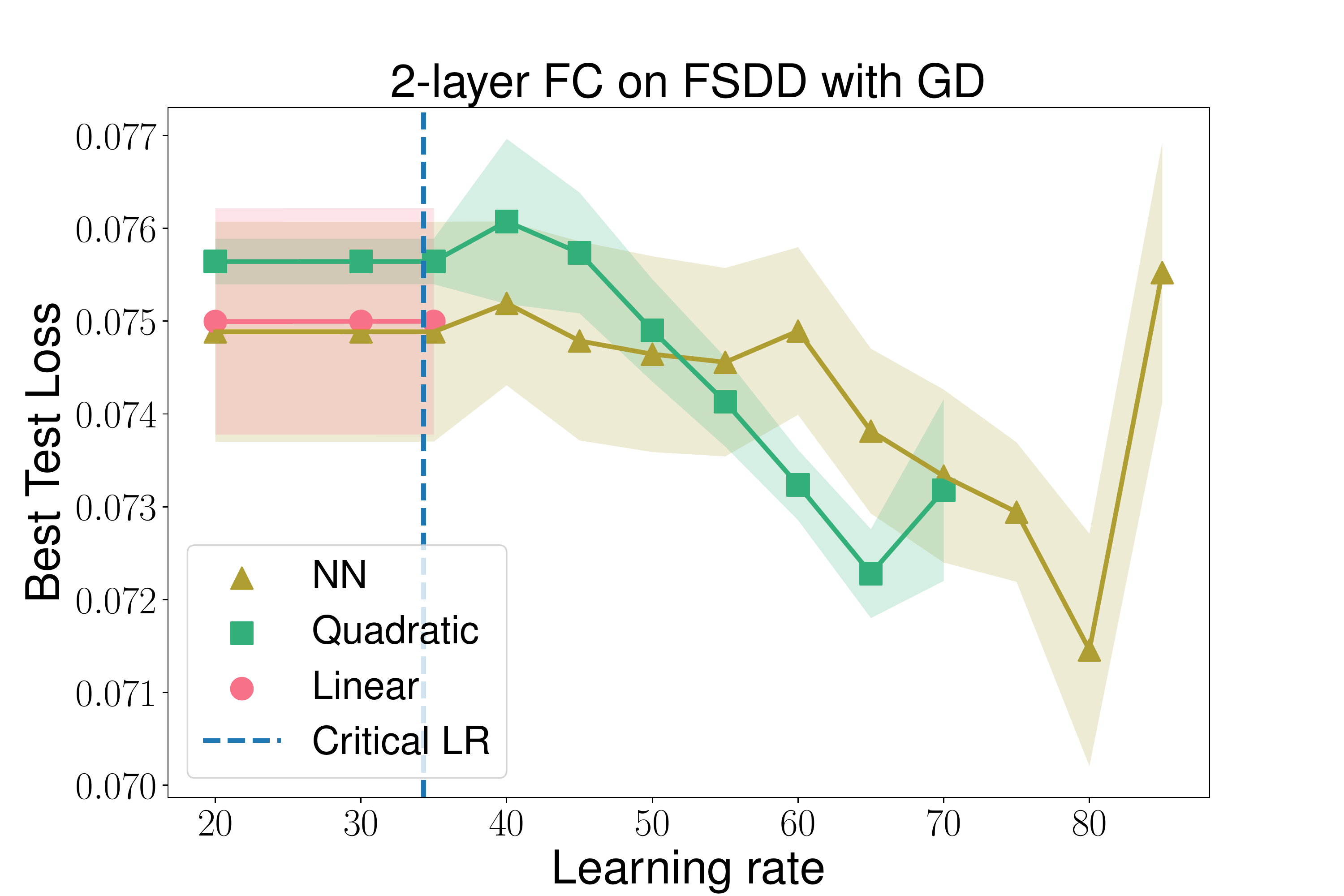}
        \caption{Tanh activation function}
    \end{subfigure}
     \caption{ {\bf Best test loss plotted against different learning rates for $ f_{\Quad}$, $f$, and $f_{\Lin}$.} We choose 2-layer FC as the architecture and train the models on FSDD with GD.}
  \end{figure}

\begin{figure}[htb!]
\centering
        \begin{subfigure}[t]{0.45\textwidth}
        \includegraphics[width=\linewidth]{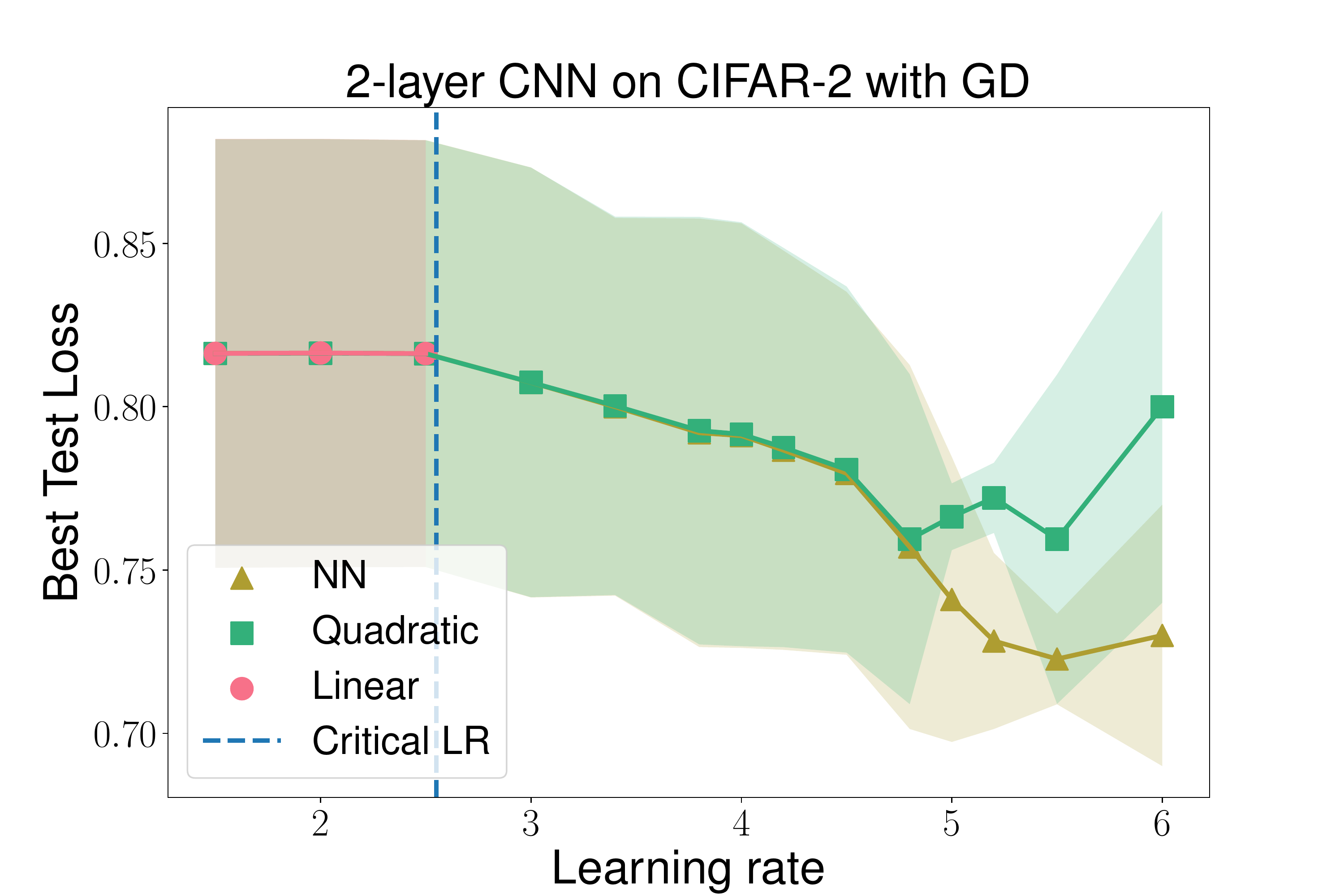} 
         \caption{Swish activation function}
    \end{subfigure}
    \begin{subfigure}[t]{0.45\textwidth}
        \includegraphics[width=\linewidth]{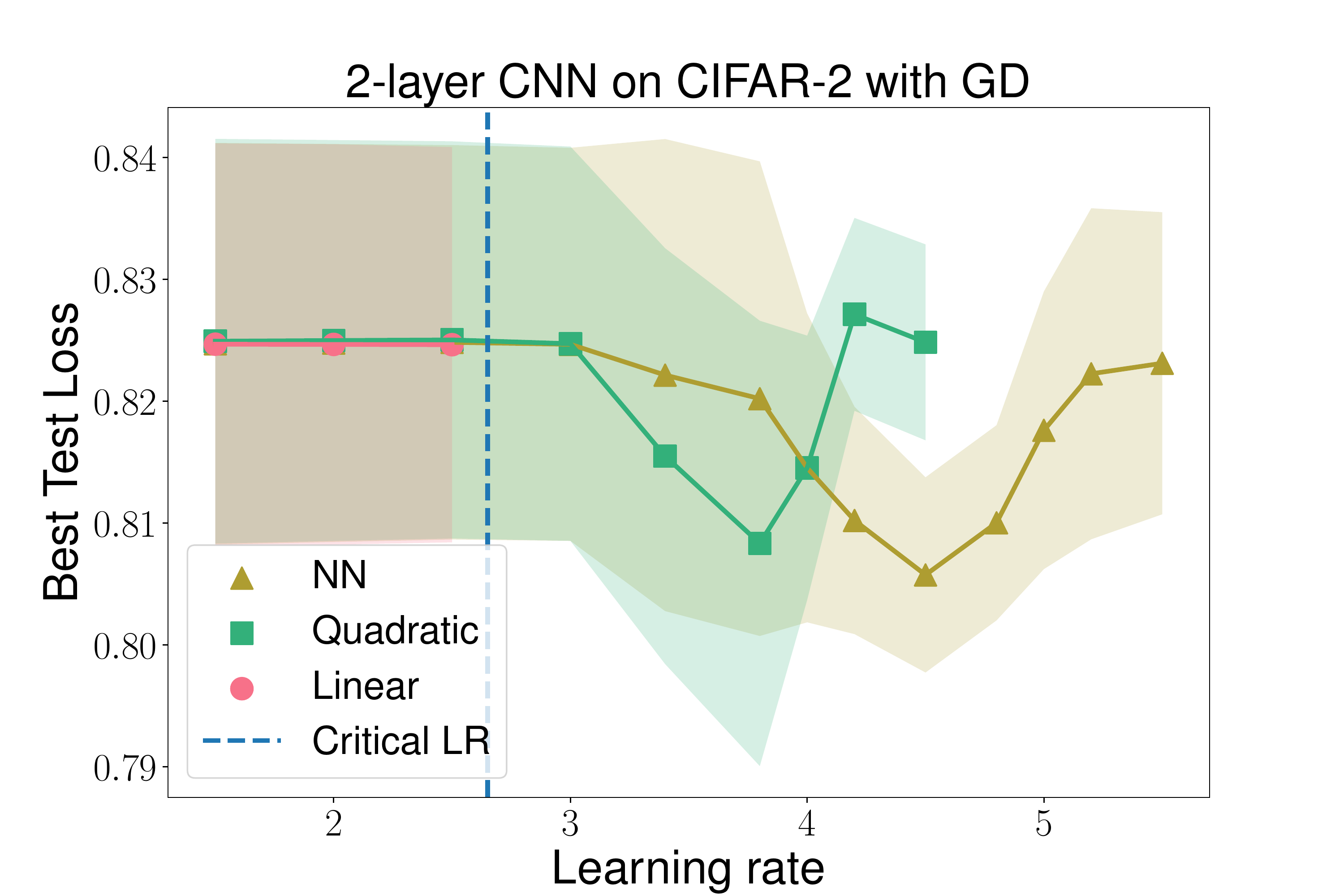}
        \caption{Tanh activation function}
    \end{subfigure}
     \caption{ {\bf Best test loss plotted against different learning rates for $ f_{\Quad}$, $f$, and $f_{\Lin}$.} We choose 2-layer CNN as the architecture and train the models on CIFAR-2 with GD.}
  \end{figure}   
  
\begin{figure}[htb!]
\centering
        \begin{subfigure}[t]{0.45\textwidth}
        \includegraphics[width=\linewidth]{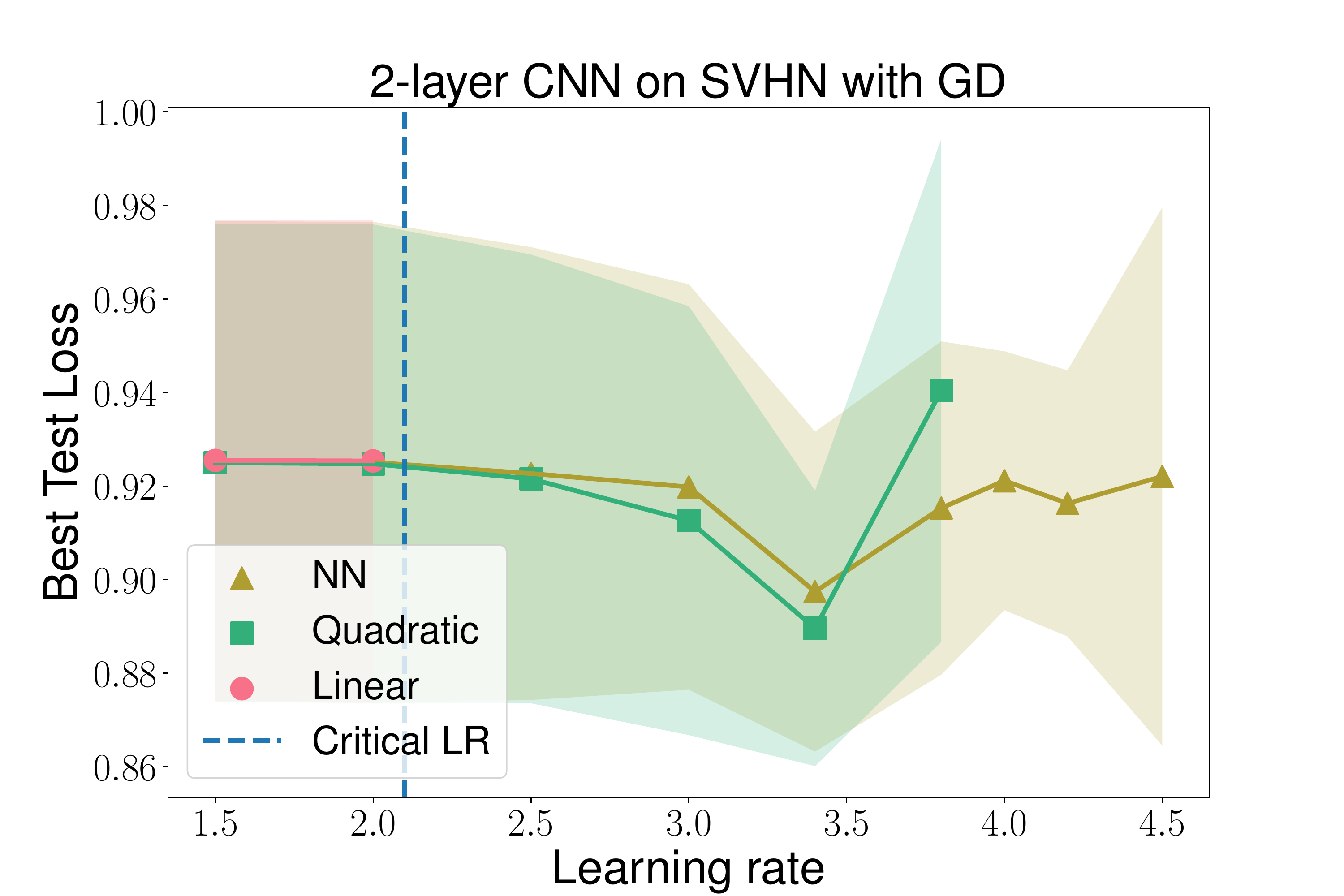} 
         \caption{Swish activation function}
    \end{subfigure}
    \begin{subfigure}[t]{0.45\textwidth}
        \includegraphics[width=\linewidth]{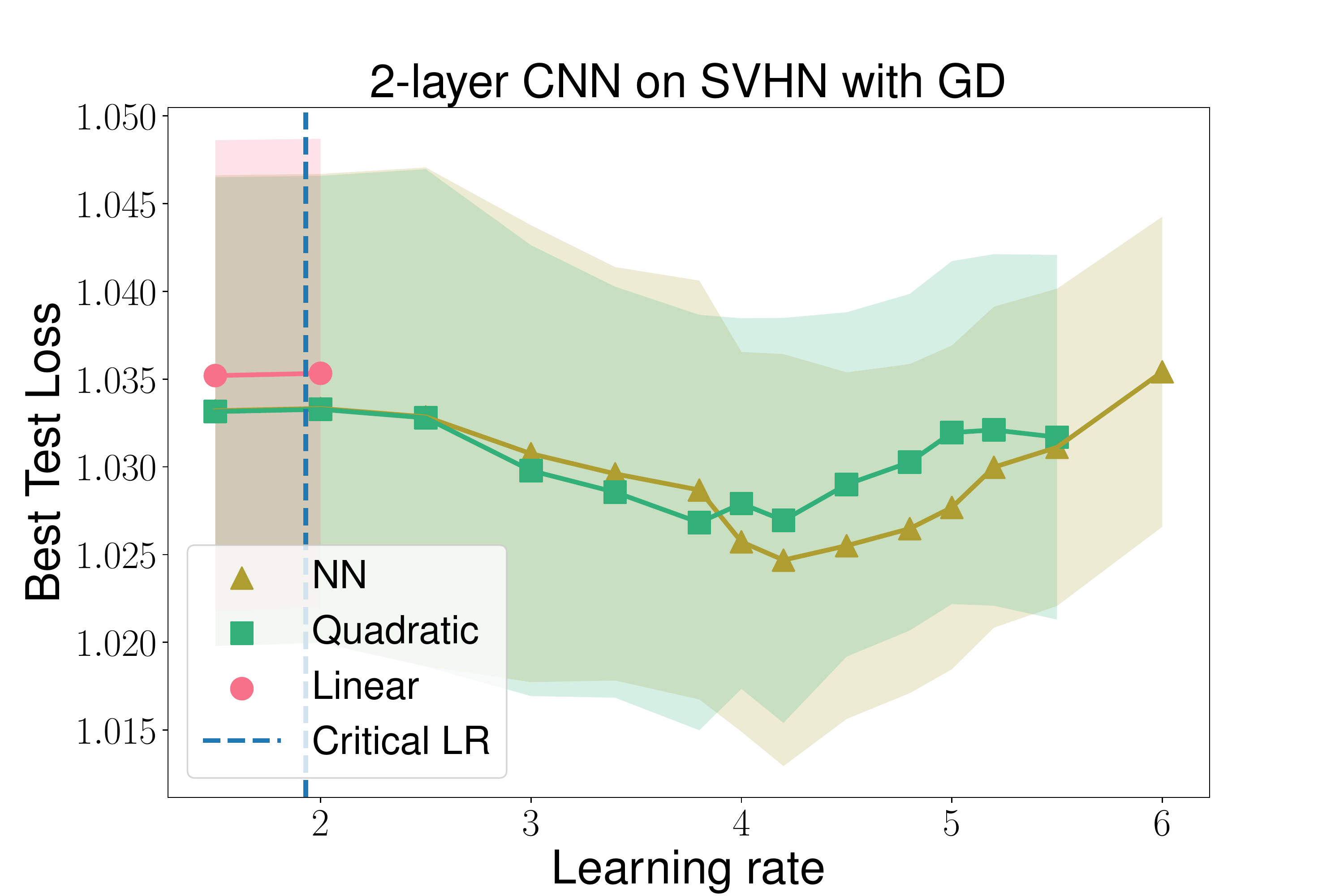}
        \caption{Tanh activation function}
    \end{subfigure}
     \caption{ {\bf Best test loss plotted against different learning rates for $ f_{\Quad}$, $f$, and $f_{\Lin}$.} We choose 2-layer CNN as the architecture and train the models on SVHN with GD.}
  \end{figure}

\end{document}